\documentclass[12pt,dvipsnames]{colt2024} 


\title[Physics-informed machine learning as a kernel method]{Physics-informed machine learning as a kernel method}
\usepackage{times}

\usepackage{mathrsfs}
\usepackage{comment}
\usepackage{wrapfig}

\numberwithin{theorem}{section} 
\newtheorem{prop}[theorem]{Proposition}

\newtheorem{rem}[theorem]{Remark}
\newtheorem{lem}[theorem]{Lemma}
\newtheorem{defi}[theorem]{Definition}




\coltauthor{%
 \Name{Nathan Doumèche} \Email{nathan.doumeche@sorbonne-universite.fr}\\
 \addr Laboratory of Probability, Statistics, and Modeling, Sorbonne University, France 
 \AND
 \Name{Francis Bach} \Email{francis.bach@inria.fr}\\
 \addr Inria, Ecole Normale Supérieure,
PSL Research University, France
 \AND
 \Name{Gérard Biau} \Email{gerard.biau@sorbonne-universite.fr}\\
 \addr Laboratory of Probability, Statistics, and Modeling, Sorbonne University, France 
 \AND
 \Name{Claire Boyer} \Email{claire.boyer@sorbonne-universite.fr}\\
 \addr IUF, Laboratory of Probability, Statistics, and Modeling, Sorbonne University, France 
}

\usepackage{ulem}

\newcommand{\gb}[1]{\textcolor{Orange}{GB: #1}}

\begin{document}

\maketitle

\begin{abstract} 
Physics-informed machine learning combines the expressiveness of data-based approaches with the interpretability of physical models. In this context, we consider a general regression problem where the empirical risk is regularized by a partial differential equation that quantifies the physical inconsistency. We prove that for linear differential priors, the problem can be formulated as a kernel regression task. Taking advantage of kernel theory, we derive convergence rates for the minimizer~$\hat f_n$ of the regularized risk and show that $\hat f_n$ converges at least at the Sobolev minimax rate. However, faster rates can be achieved, depending on the physical error. This principle is illustrated with a one-dimensional example, supporting the claim that regularizing the empirical risk with physical information can be beneficial to the statistical performance of estimators.
\end{abstract}

\begin{keywords}
  Physics-informed machine learning, Kernel methods, Rates of convergence, Physical regularization
\end{keywords}

\section{Introduction}

\paragraph{Physics-informed machine learning.} Physics-informed machine learning (PIML) refers to a subdomain of machine learning that combines physical knowledge and empirical data to enhance performance of tasks involving a physical mechanism. Following the influential work of \citet{raissi2019PINN}, the field has experienced a notable surge in popularity, largely driven by scientific computing and engineering applications. We refer the reader to the surveys by \citet{rai2020review}, \citet{karniadakis2021piml}, \citet{cuomo2022scientific}, and \citet{Hao2022review}.
In a nutshell, the success of PIML relies on the smart interaction between machine learning and physics. In its most standard form, this achievement is realized by integrating physical equations into the loss function. Three common use cases include solving systems of partial differential equations (PDEs), addressing inverse problems (e.g., learning the PDE governing an observed phenomenon), and further improving the statistical performance of empirical risk minimization. 
This article focuses on the latter approach, known as hybrid modeling \citep[e.g.,][]{rai2020review}. 

\paragraph{Hybrid modeling.} Consider the classical regression model $Y = f^\star(X) + \varepsilon$, where the function $f^\star: \mathbb R^d \to \mathbb R$ is unknown. The random variable $Y \in \mathbb{R}$ is the target, the random variable $X\in \Omega \subseteq [-L,L]^d$ the vector of features, and $\varepsilon$ a random noise. Given a sample $\{(X_1, Y_1), \hdots, (X_n, Y_n)\}$ of i.i.d.~copies of $(X,Y)$, the goal is to construct an estimator $\hat f_n$ of $f^\star$ based on these $n$ observations. The distinctive element of PIML is the inclusion of a prior on $f^\star$, asserting its compliance with a known PDE. Therefore, it is assumed that $f^\star$ is at least weakly differentiable, belonging to the Sobolev space $H^s(\Omega)$ for some integer $s > d/2$, and that there is a known differential operator $\mathscr{D}$ such that $\mathscr{D}(f^\star) \simeq 0$. For instance, if the desired solution $f^\star$ is intended to conform to the wave equation, then $\mathscr{D}(f)(x,t) = \partial^2_{t,t} f(x,t) - \partial^2_{x,x} f(x,t)$ for $(x,t)\in \Omega$. Overall, we are interested in the minimizer of the empirical risk function 
\begin{equation}
    R_n(f) = \frac{1}{n}\sum_{i=1}^n |f(X_i) - Y_i|^2 + \lambda_n \|f\|_{H^s_{\mathrm{per}}([-2L, 2L]^d)}^2 + \mu_n \|\mathscr{D}(f)\|_{L^2(\Omega)}^2 
    \label{eq:risk_function}
\end{equation} 
over the class $\mathscr{F} = H^s_{\mathrm{per}}([-2L, 2L]^d)$ of candidate functions, where $\lambda_n > 0$ and $\mu_n \geqslant 0$ are hyperparameters that weigh the relative importance of each term. We refer to the appendix for a precise definition of the periodic Sobolev space $H^s_{\mathrm{per}}([-2L, 2L]^d)$, as well as the continuous extension $H^s(\Omega) \hookrightarrow H^s_{\mathrm{per}}([-2L, 2L]^d)$. 
It is stressed that the $\|\cdot\|_{H^s_{\mathrm{per}}([-2L, 2L]^d)}$ norm is the standard $\|\cdot\|_{H^s([-2L, 2L]^d)}$ norm---the symbol ``$\mathrm{per}$'' highlights that we consider functions belonging to a \emph{periodic} Sobolev space.
The choice of the periodic Sobolev space $H^s_{\mathrm{per}}([-2L,2L]^d)$ is merely technical---the reader can be confident that all subsequent results remain applicable to the standard Sobolev space $H^s(\Omega)$, as will be stressed later.

The first term in \eqref{eq:risk_function} is the standard component of supervised learning, corresponding to a least-squares criterion that measures the prediction error over the training sample. The second term $\|f\|_{H^s_{\mathrm{per}}([-2L, 2L]^d)}^2$ corresponds to a Sobolev penalty for $s>d/2$, which enforces the regularity of the estimator. 
Finally, the $L^2$ penalty $\|\mathscr{D}(f)\|_{L^2(\Omega)}^2$ on $\Omega$ quantifies the physical inconsistency of $f$ with respect to the differential prior on $f^\star$: the more $f$ aligns with the PDE, the lower the value of $\|\mathscr{D}(f)\|_{L^2(\Omega)}^2$. It is this last term that marks the originality of the hybrid modeling problem.

In this context, beyond classical statistical analyses, an interesting question is to quantify the impact of the physical regularization $\|\mathscr{D}(f)\|_{L^2(\Omega)}^2$ on the empirical risk \eqref{eq:risk_function}, typically in terms of convergence rate of the resulting estimator. It is intuitively clear, for example, that if the target $f^{\star}$ satisfies $\mathscr{D}(f^\star) =0$ (i.e., $f^{\star}$ is a solution of the underlying PDE), then, under appropriate conditions, the estimator $\hat f_n$ should have better properties than a standard estimator of the empirical risk. This is the challenging problem that we address in this contribution.

\paragraph{Contributions.} 
We are interested in the statistical properties of the minimizer of \eqref{eq:risk_function} over the space $H^s_{\mathrm{per}}([-2L,2L]^d)$, denoted by
\begin{equation}
    \hat f_n = \mathop{\mathrm{argmin}}_{f \in H^s_{\mathrm{per}}([-2L, 2L]^d)}\; R_n(f).
    \label{eq:estimator_sob}
\end{equation}
We show in Section \ref{sec:kernel} that problem \eqref{eq:estimator_sob} can be formulated as a kernel regression task, with a kernel~$K$ that we specify. This allows us, in Section \ref{sec:bounds}, to use tools from kernel theory to determine an upper bound on the rate of convergence of $\hat f_n$ to $f^\star$ in $L^2(\Omega, \mathbb{P}_X)$, where $\mathbb{P}_X$ is the distribution of~$X$ on $\Omega$. In particular, this rate can be evaluated by bounding the eigenvalues of the integral operator associated with the kernel. 
The latter problem is studied in detail in Theorem \ref{eq:weak_pde}, where the corresponding eigenfunctions are characterized through a weak formulation. Overall, we show that~$\hat f_n$ converges to $f^\star$ \emph{at least} at the Sobolev minimax rate. The complete mechanics are illustrated in Section \ref{sec:experiment} for the  operator $\mathscr{D} = \frac{d}{dx}$ in dimension $d=1$, showcasing a simple but instructive case.
In such a setting, the convergence rate is shown to be 
\begin{align*}
        \mathbb{E}\int_{[-L,L]} |\hat f_n-f^\star|^2 d{\mathbb P}_X = &\|\mathscr{D}(f^\star)\|_{L^2(\Omega)}\;\mathcal{O}_n \big(  n^{-2/3}\log^3(n)\big) +\|f^\star\|_{H^s(\Omega)}^2\mathcal{O}_n \big( n^{-1} \log^3(n)\big) .
    \end{align*}
Thus, the lower the modeling error $\|\mathscr{D}(f^\star)\|_{L^2(\Omega)}$, the lower the estimation error. In particular, if $f^{\star}$ exactly satisfies the PDE, i.e., $\|\mathscr{D}(f^\star)\|_{L^2(\Omega)} = 0$, then the rate is $n^{-1}$ (up a to log factor), significantly better than the Sobolev rate of $n^{-2/3}$. This shows that the use of physical knowledge in the PIML framework has a quantifiable impact on the estimation error.

\section{Related works}  

\paragraph{Approximation classes and Sobolev spaces.} Since Sobolev spaces are often considered too expensive for practical implementation, various alternative classes of functions over which to minimize the empirical risk function \eqref{eq:risk_function} have been suggested in the literature. 
In the case of a second-order and coercive PDE in dimension $d=2$, and with  an additional prior on the boundary conditions, \citet{azzimonti2015blood}, \citet{arnone2022spatialRegression}, and \citet{ferraccioli2022some} propose finite-element-based methods to optimize the minimization over $H^2(\Omega)$. However, the most commonly used approach to minimize the risk functional involves neural networks, which leverage the backpropagation algorithm for efficient computation of successive derivatives and optimize \eqref{eq:risk_function} through gradient descent. The so-called PINNs (for physics-informed neural networks---\citealp{raissi2019PINN}) have been successfully applied to a diverse range of physical phenomena, including  sea temperature modeling \citep{bezenac2017processes},
image denoising \citep{wang2020superResolution},
turbulence \citep{wang2020turbulence},
blood streams \citep{arzani2021uncovering},
glacier dynamics \citep{riel2021glacier},
and heat transfers \citep{ramezankhani2022multifidelity}, among others.
The neural architecture of PINNs is often designed to be large \citep[e.g.,][]{arzani2021uncovering, krishnapriyan2021characterizing, xu2021extrapolation}, allowing it to approximate any function in $H^s(\Omega)$ \citep{ryck2021approximation, doumèche2023convergence}.

\paragraph{Sobolev regularization.} In the PIML literature, the Sobolev regularization is either directly implemented as such \citep{shin2020convergence, doumèche2023convergence} or in a more implicit manner, by assuming that the operator $\mathscr{D}$ is inherently regular (e.g., second-order elliptic, parabolic, or hyperbolic) and specifying boundary conditions \citep{azzimonti2015blood, shin2020convergence, arnone2022spatialRegression, ferraccioli2022some, wu2022convergence, mishra2022generalization, shin2023error}.  
It turns out, however, that the specific form taken by the Sobolev regularization is unimportant. This will be enlightened by our Theorem \ref{thm:eq_reg}, which shows that using equivalent Sobolev norms does not alter the convergence rate of the estimators. 
From a theoretical perspective, much of the literature delves into the properties of PINNs in the realm of PDE solvers, usually through the analysis of their generalization error \citep{shin2020convergence, ryck2022kolmogorov, wu2022convergence, doumèche2023convergence, mishra2022generalization, qian2023error, deryck2023operator, shin2023error}.
Overall, there are few theoretical guarantees available regarding hybrid modeling, with the exception of \citet{azzimonti2015blood}, \citet{shin2020convergence}, \citet{arnone2022spatialRegression}, and \citet{doumèche2023convergence}. 

\paragraph{PIML and kernels.} Other studies have revealed interesting connections between PIML and kernel methods. In noiseless scenarios, the use of kernel methods to construct meshless PDE solvers under the Sobolev regularity hypothesis has long been explored by, for example, \citet{schaback2006from}. Recently, \citet{batlle2023error} uncovered convergence rates under regularity assumptions on the differential operator equivalent to a Sobolev regularization. For inverse PIML problems, \citet{lu2022sobolev} and \citet{de2023convergence} take advantage of a kernel reformulation of PIML to establish convergence rates for differential operator learning. This generalizes results obtained by \citet{nickl2019convergence} using Bayesian inference methods. However, none of these works has specifically addressed hybrid modeling. To the best of our knowledge, the present study is the first to show that the  physical regularization term $\|\mathscr{D}(f)\|_{L^2(\Omega)}$ in the PIML loss \eqref{eq:risk_function} may lead to improved convergence rates.

\section{PIML as a kernel method}
\label{sec:kernel}
Throughout the article, we let $\Omega \subseteq [-L, L]^d$ ($L >0$) be a bounded Lipschitz domain. Assuming that $\Omega$ is Lipschitz allows a high level of generality regarding its regularity, encompassing $C^1$-manifolds (such as the Euclidean ball $\{ x \in \mathbb{R}^d \; | \; \|x\|_2 \leqslant L\}$), as well as domains with non-differentiable boundaries (such as the hypercube $[-L, L]^d$). (A summary of the mathematical notation and functional analysis concepts used in this paper is to be found in Appendix \ref{sec:appA}.) 
The target function $f^\star: \mathbb R^d\to \mathbb R$ is assumed to belong to the Sobolev space $H^s(\Omega)$ for some positive integer $s>d/2$. Furthermore, this function is assumed to approximately satisfy a linear PDE on $\Omega$ (the coefficients of which are potentially non-constant) with derivatives of order less than or equal to $s$. In other words, one has $\mathscr{D}(f^\star) \simeq 0$ for some known operator $\mathscr D$ of the following form:
\begin{defi}[Linear differential operator]
\label{def:linearoperator}
    Let $s \in \mathbb{N}$. An operator $\mathscr{D}: H^s(\Omega) \to L^2(\Omega)$ is a linear differential operator if, for all $f \in H^s(\Omega)$,
    \[\mathscr{D}(f) = \sum_{|\alpha|\leqslant s} p_\alpha \partial^\alpha f,\]
    where  $p_\alpha: \Omega \to \mathbb{R}$ are \emph{functions} such that $\max_\alpha \|p_\alpha\|_\infty < \infty$. (By definition, $\{|\alpha|\leqslant s\}=\{\alpha\in \mathbb{N}^d \;|\; \|\alpha\|_1 \leqslant s\}$ and $\|\cdot\|_{\infty}$ stands for the supremum norm of functions.) 
\end{defi}
Given $s$, the linear differential operator $\mathscr D$, and a training sample $\{(X_1, Y_1), \hdots, (X_n, Y_n)\}$, we consider the estimator $\hat f_n$ that minimizes the regularized empirical risk  \eqref{eq:risk_function} over the periodic Sobolev space $H^s_{\mathrm{per}}([-2L, 2L]^d)$. Recall that $H^s_{\mathrm{per}}([-2L, 2L]^d)$ is the subspace of $H^s([-2L, 2L]^d)$ consisting of functions whose $4L$-periodic extension is still $s$-times weakly differentiable. 
The important point to keep in mind is that any function of $ H^s(\Omega)$ can be extended to a function in $H^s_{\mathrm{per}}([-2L, 2L]^d)$ (see Proposition \ref{prop:dec_four_lip} in the appendix), which makes it equivalent to suppose that $f^\star \in H^s(\Omega)$ or $f^\star \in H^s_{\mathrm{per}}([-2L,2L]^d)$, as shown in Theorem \ref{thm:eq_reg}. The extension mechanism is illustrated in Figure \ref{fig:omega}.  
\begin{figure}
    \centering
    \includegraphics[width=0.8\linewidth]{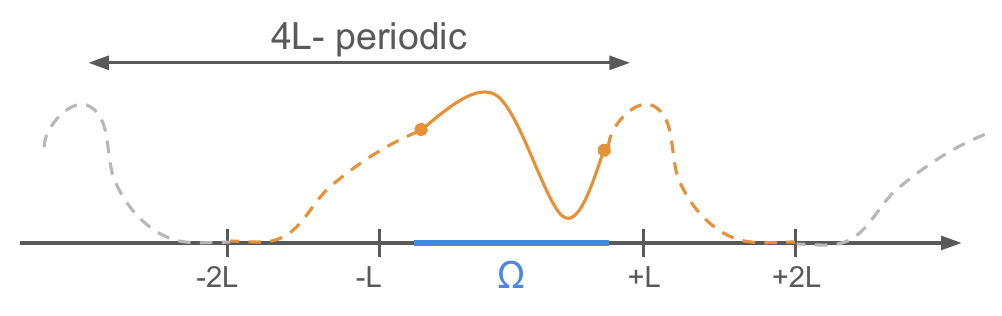}
    \caption{Illustration of a 4L-periodic extension of a function in $H^s(\Omega)$ to $H^s_{\mathrm{per}}([-2L,2L]^d)$ for $d=1$.}
    \label{fig:omega}
\end{figure}

The key step to turn the minimization of \eqref{eq:risk_function} into a kernel method is to observe that any function $f\in H^s_{\mathrm{per}}([-2L,2L]^d)$ can be linearly mapped in $L^2([-2L,2L]^d)$ in such a way that the norm $\|\cdot\|_{L^2([-2L,2L]^d)}$ of the embedding is equal to $\lambda_n \|f\|_{H^s_{\mathrm{per}}([-2L, 2L]^d)}^2 + \mu_n \|\mathscr{D}(f)\|_{L^2(\Omega)}^2$, i.e., the regularization term of \eqref{eq:risk_function}. Proposition \ref{prop:diff_op_main} below shows that this embedding takes the form of the inverse square root of a positive diagonalizable operator $\mathscr{O}_n$.
\begin{prop}[Differential operator]
    There exists a positive operator $\mathscr{O}_n$ on $L^2([-2L,2L]^d)$ such that $\mathscr{O}_n^{-1/2}: H^s_{\mathrm{per}}([-2L, 2L]^d) \to L^2([-2L,2L]^d)$ is well-defined and satisfies, for any $f \in H^s_{\mathrm{per}}([-2L,2L]^d)$, \[\|\mathscr{O}_n^{-1/2} (f)\|_{L^2([-2L,2L]^d)}^2 =  \lambda_n \|f\|_{H^s_{\mathrm{per}}([-2L, 2L]^d)}^2 + \mu_n \|\mathscr{D}(f)\|_{L^2(\Omega)}^2 .\]
    Moreover, there is an orthonormal basis of eigenfunctions $v_m \in H^s_{\mathrm{per}}([-2L,2L]^d)$ of $\mathscr{O}_n$ associated with eigenvalues $a_m > 0$ such that, for any $f \in L^2([-2L,2L]^d)$,
    \[\forall x \in [-2L,2L]^d, \quad \mathscr{O}_n(f)(x) = \sum_{m\in \mathbb{N}} a_m \langle f,  v_m\rangle_{L^2([-2L,2L]^d)} v_m(x).\]
    \label{prop:diff_op_main}
\end{prop}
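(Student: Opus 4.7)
The plan is to realize the regularization term as the squared $L^2$-norm of the image under the square root of an unbounded positive self-adjoint operator on $L^2([-2L,2L]^d)$, and to define $\mathscr{O}_n$ as its inverse.

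First, I would introduce the symmetric bilinear form
$$B(f,g) = \lambda_n \langle f,g\rangle_{H^s_{\mathrm{per}}([-2L,2L]^d)} + \mu_n \langle \mathscr{D}(f),\mathscr{D}(g)\rangle_{L^2(\Omega)}$$
with domain $H^s_{\mathrm{per}}([-2L,2L]^d) \subset L^2([-2L,2L]^d)$. Since $\lambda_n > 0$ and the coefficients $p_\alpha$ of $\mathscr{D}$ are bounded (Definition \ref{def:linearoperator}), there exist constants $0 < c \leqslant C$ such that $c\|f\|_{H^s_{\mathrm{per}}}^2 \leqslant B(f,f) \leqslant C\|f\|_{H^s_{\mathrm{per}}}^2$, so $B$ is equivalent to the $H^s_{\mathrm{per}}$ inner product. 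This makes $B$ densely defined (since $H^s_{\mathrm{per}}$ is dense in $L^2([-2L,2L]^d)$) and closed as a quadratic form on $L^2$: any sequence $(f_k)$ that is Cauchy for $B(\cdot,\cdot) + \|\cdot\|_{L^2}^2$ is Cauchy in $H^s_{\mathrm{per}}$ by coercivity, hence convergent there.

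Second, I would apply Kato's representation theorem for closed positive quadratic forms (equivalently, the Friedrichs construction) to obtain a unique positive self-adjoint operator $T$ on $L^2([-2L,2L]^d)$ with form domain $D(T^{1/2}) = H^s_{\mathrm{per}}([-2L,2L]^d)$ satisfying $B(f,g) = \langle T^{1/2}f, T^{1/2}g\rangle_{L^2}$ for all $f,g \in H^s_{\mathrm{per}}$. The coercivity $B(f,f) \geqslant \lambda_n \|f\|_{L^2}^2$ yields $T \geqslant \lambda_n I$, so $T$ is boundedly invertible. Setting $\mathscr{O}_n := T^{-1}$ produces a bounded positive self-adjoint operator on $L^2$, and $\mathscr{O}_n^{-1/2} = T^{1/2}: H^s_{\mathrm{per}} \to L^2$ satisfies the required identity $\|\mathscr{O}_n^{-1/2}f\|_{L^2}^2 = B(f,f)$.

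Third, for the spectral decomposition, I would show that $\mathscr{O}_n: L^2 \to L^2$ is compact. For $g \in L^2$, setting $f = \mathscr{O}_n g \in D(T)$ and using the identity $B(f,f) = \langle Tf,f\rangle_{L^2} = \langle g,f\rangle_{L^2}$ together with coercivity gives
$$c\|f\|_{H^s_{\mathrm{per}}}^2 \leqslant B(f,f) = \langle g,f\rangle_{L^2} \leqslant \|g\|_{L^2}\|f\|_{L^2} \leqslant \|g\|_{L^2}\|f\|_{H^s_{\mathrm{per}}},$$
hence $\|\mathscr{O}_n g\|_{H^s_{\mathrm{per}}} \leqslant c^{-1}\|g\|_{L^2}$. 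Composing this bounded map $L^2 \to H^s_{\mathrm{per}}$ with the Rellich--Kondrachov compact embedding $H^s_{\mathrm{per}}([-2L,2L]^d) \hookrightarrow L^2([-2L,2L]^d)$ (which holds on the bounded cube for $s>0$) gives that $\mathscr{O}_n$ is compact, positive, and self-adjoint on $L^2$. The spectral theorem then yields an orthonormal basis $(v_m)_{m\in\mathbb{N}}$ of $L^2$ consisting of eigenfunctions with eigenvalues $a_m > 0$, where strict positivity is inherited from the injectivity of $T$. Each eigenfunction satisfies $v_m = a_m^{-1}\mathscr{O}_n v_m \in H^s_{\mathrm{per}}$, and the expansion $\mathscr{O}_n(f)(x) = \sum_m a_m \langle f,v_m\rangle_{L^2} v_m(x)$ is standard for compact self-adjoint operators.

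The main obstacle is the functional-analytic bookkeeping: verifying that $B$ is closed on $L^2$ with form domain exactly $H^s_{\mathrm{per}}$, and establishing that $\mathscr{O}_n$ lands in $H^s_{\mathrm{per}}$ (and not merely in $D(T)$) so that one may invoke Rellich--Kondrachov. Once these points are pinned down, both the norm identity and the eigenexpansion reduce to standard consequences of Kato's representation theorem and the spectral theorem for compact operators.
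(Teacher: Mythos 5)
Your proposal is correct, and the route it takes is closely related to but structurally different from the paper's. Both of you start from the same coercive, bounded, symmetric bilinear form $B$ on $H^s_{\mathrm{per}}([-2L,2L]^d)$, verify the same coercivity/continuity estimates using $\lambda_n>0$ and $\max_\alpha\|p_\alpha\|_\infty<\infty$, and both of you use the compact embedding of $H^s_{\mathrm{per}}$ into $L^2$ plus the spectral theorem to obtain the orthonormal eigenbasis $(v_m)_m$ with $a_m>0$. The divergence is in how the operator and its fractional powers are constructed.

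The paper (Propositions B.1--B.5) uses the Lax--Milgram theorem on the \emph{ambient Sobolev space}: it defines $\mathscr{O}_n:L^2\to H^s_{\mathrm{per}}$ directly as the weak-solution map $f\mapsto w$ with $B[w,\phi]=\langle f,\phi\rangle_{L^2}$, establishes the bound $\|\mathscr{O}_n f\|_{H^s_{\mathrm{per}}}\leqslant\lambda_n^{-1}\|f\|_{L^2}$, invokes Rellich--Kondrachov and the spectral theorem, and then must do a nontrivial amount of work (Propositions B.3 and B.4) to show that $(v_m)_m$ is also a basis of $H^s_{\mathrm{per}}$ for the $B$-inner product, derive a Bessel inequality, and from it build $\mathscr{O}_n^{\pm 1/2}$ by hand as convergent spectral series whose domains and mapping properties must be verified. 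You instead work on the \emph{base} Hilbert space $L^2([-2L,2L]^d)$: you observe that $B$ is a closed, densely defined, non-negative quadratic form with form domain $H^s_{\mathrm{per}}$, and apply Kato's first representation theorem (equivalently the Friedrichs construction) to obtain, in a single stroke, an unbounded positive self-adjoint $T$ with $D(T^{1/2})=H^s_{\mathrm{per}}$ and $B(f,g)=\langle T^{1/2}f,T^{1/2}g\rangle_{L^2}$; then $\mathscr{O}_n:=T^{-1}$ and $\mathscr{O}_n^{-1/2}:=T^{1/2}$, and the norm identity in the statement is handed to you by the theorem with no further spectral bookkeeping. This is cleaner: the identification of the form domain with $D(T^{1/2})$ is exactly the statement the paper spends Propositions B.3--B.4 proving. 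The tradeoff is that you rely on a somewhat heavier theorem (Kato's representation theorem rather than Lax--Milgram plus elementary series manipulations), and you must verify closedness of the form, which you correctly reduce to the coercivity estimate $B(f,f)\geqslant c\|f\|_{H^s_{\mathrm{per}}}^2$. Your compactness argument ($\|\mathscr{O}_n g\|_{H^s_{\mathrm{per}}}\leqslant c^{-1}\|g\|_{L^2}$ composed with Rellich--Kondrachov) is the same as the paper's, and the remark that $\mathrm{Ran}(\mathscr{O}_n)=D(T)\subseteq D(T^{1/2})=H^s_{\mathrm{per}}$ disposes of the ``obstacle'' you flag at the end. In short: a valid and arguably more economical proof, trading the paper's hands-on construction of $\mathscr{O}_n^{\pm1/2}$ for the form-operator correspondence.
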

Denote by $\delta_x$ the Dirac distribution at $x$. Informally, the properties of the embedding $\mathscr{O}_n^{-1/2}: H^s_{\mathrm{per}}([-2L, 2L]^d)\to L^2([-2L,2L]^d)$ in Proposition \ref{prop:diff_op_main} suggest that something like
\[f(x) \; ``="\; \langle f, \delta_x\rangle = \langle \mathscr{O}_n^{-1/2}(f), \mathscr{O}_n^{1/2}(\delta_x)\rangle_{L^2([-2L,2L]^d)}\] 
should be true. In other terms, still informally, we may write $f(x) = \langle z, \psi(x)\rangle_{L^2([-2L,2L]^d)}$, 
with $z = \mathscr{O}_n^{-1/2}(f)$, $\psi(x) = \mathscr{O}_n^{1/2}(\delta_x)$, and $\|z\|_{L^2([-2L,2L]^d)}^2 =   \lambda_n \|f\|_{H^s_{\mathrm{per}}([-2L, 2L]^d)}^2 + \mu_n \|\mathscr{D}(f)\|_{L^2(\Omega)}^2$. 
We recognize a reproducing property, turning $\psi$ into a kernel embedding associated with the risk~\eqref{eq:risk_function}. 
This mechanism is formalized in the following theorem. 
\begin{theorem}[Kernel of linear PDEs]
    Assume that $s>d/2$, and let $\lambda_n >0, \mu_n \geqslant 0$. Let $a_m$ and $v_m$ be the eigenvalues and eigenfunctions of $\mathscr{O}_n$.
    Then the space $H^s_{\mathrm{per}}([-2L,2L]^d)$, equipped with the inner product $\langle f, g\rangle_{\mathrm{RKHS}} = \langle \mathscr{O}_n^{-1/2}f,  \mathscr{O}_n^{-1/2}g\rangle_{L^2([-2L,2L]^d)}$, is a reproducing kernel Hilbert space. 
    In particular, 
    \begin{enumerate}
        \item[$(i)$] The kernel $K: [-2L, 2L]^d \times [-2L, 2L]^d \to \mathbb{R}$ is defined by 
        \[K(x,y) = \sum_{m\in \mathbb{N}} a_m  v_m(x)v_m(y). \]
        \item[$(ii)$] For all $x \in [-2L, 2L]^d$, $K(x, \cdot) \in H^s_{\mathrm{per}}([-2L,2L]^d)$.
        \item[$(iii)$] For all $f \in H^s_{\mathrm{per}}([-2L, 2L]^d)$, \[\forall x \in [-2L,2L]^d, \quad f(x) = \langle f, K(x, \cdot)\rangle_{\mathrm{RKHS}}.\]
        \item[$(iv)$] For all $f\in H^s_{\mathrm{per}}([-2L, 2L]^d)$, 
        \[\|f\|^2_{\mathrm{RKHS}} = \lambda_n \|f\|_{H^s_{\mathrm{per}}([-2L, 2L]^d)}^2 + \mu_n \|\mathscr{D}(f)\|_{L^2(\Omega)}^2.\]
    \end{enumerate}
    \label{thm:PDE_kernel}
\end{theorem}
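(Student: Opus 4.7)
The plan is to view $H^s_{\mathrm{per}}([-2L,2L]^d)$ equipped with $\langle \cdot, \cdot\rangle_{\mathrm{RKHS}}$ as a Hilbert space of continuous functions on which point evaluation is bounded, then invoke the standard Riesz argument to get the kernel, and finally read off the explicit spectral formula from Proposition \ref{prop:diff_op_main}. Properties $(iv)$ and $(iii)$ will come essentially by definition once the RKHS structure is in place, and $(ii)$ and $(i)$ will follow by expanding $K(x,\cdot)$ in a carefully chosen orthonormal basis.

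First, I would establish that $\langle \cdot, \cdot\rangle_{\mathrm{RKHS}}$ is an inner product on $H^s_{\mathrm{per}}([-2L,2L]^d)$ whose norm is equivalent to $\|\cdot\|_{H^s_{\mathrm{per}}}$. By Proposition \ref{prop:diff_op_main},
\[
\|f\|_{\mathrm{RKHS}}^2 = \lambda_n \|f\|_{H^s_{\mathrm{per}}([-2L,2L]^d)}^2 + \mu_n \|\mathscr{D}(f)\|_{L^2(\Omega)}^2 \geqslant \lambda_n \|f\|_{H^s_{\mathrm{per}}([-2L,2L]^d)}^2,
\]
which gives positive-definiteness and one side of the equivalence; this already yields $(iv)$. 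The converse follows from Definition \ref{def:linearoperator}, since $\|\mathscr{D}(f)\|_{L^2(\Omega)} \leqslant C \|f\|_{H^s_{\mathrm{per}}([-2L,2L]^d)}$ with $C$ depending only on $\max_\alpha \|p_\alpha\|_\infty$. Completeness of $\big(H^s_{\mathrm{per}}, \langle \cdot, \cdot\rangle_{\mathrm{RKHS}}\big)$ is thereby inherited from that of $H^s_{\mathrm{per}}$. Since $s > d/2$, the Sobolev embedding $H^s_{\mathrm{per}}([-2L,2L]^d) \hookrightarrow C^0([-2L,2L]^d)$ yields a constant $C'$ with $|f(x)| \leqslant C' \|f\|_{H^s_{\mathrm{per}}} \leqslant (C'/\sqrt{\lambda_n}) \|f\|_{\mathrm{RKHS}}$, so evaluation at $x$ is a continuous linear functional on the RKHS. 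The Riesz representation theorem then furnishes a unique $K(x,\cdot) \in H^s_{\mathrm{per}}([-2L,2L]^d)$ with $f(x) = \langle f, K(x,\cdot)\rangle_{\mathrm{RKHS}}$, proving $(ii)$ and $(iii)$.

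To derive the explicit formula $(i)$, I would observe that the family $\big(a_m^{1/2} v_m\big)_{m\in\mathbb{N}}$ is an orthonormal basis of $\big(H^s_{\mathrm{per}}, \langle \cdot, \cdot\rangle_{\mathrm{RKHS}}\big)$. Indeed, Proposition \ref{prop:diff_op_main} gives $\mathscr{O}_n^{-1/2}(a_m^{1/2} v_m) = v_m$, so
\[
\langle a_m^{1/2} v_m, a_{m'}^{1/2} v_{m'}\rangle_{\mathrm{RKHS}} = \langle v_m, v_{m'}\rangle_{L^2([-2L,2L]^d)} = \delta_{m,m'},
\]
and the $v_m$ are already dense in $L^2$, which combined with the norm equivalence shows density in the RKHS. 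Applying the reproducing property to each basis vector yields $\langle K(x,\cdot), a_m^{1/2} v_m\rangle_{\mathrm{RKHS}} = a_m^{1/2} v_m(x)$, so the abstract Parseval expansion becomes
\[
K(x,\cdot) = \sum_{m\in\mathbb{N}} a_m^{1/2} v_m(x)\, a_m^{1/2} v_m = \sum_{m\in\mathbb{N}} a_m v_m(x)\, v_m,
\]
and evaluating at $y$ (using continuity of evaluation established above) gives $(i)$.

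The main technical care is needed to justify pointwise convergence and interchange of sum and evaluation. Convergence in the RKHS norm is automatic from Bessel, since $\sum_m a_m v_m(x)^2 = \|K(x,\cdot)\|_{\mathrm{RKHS}}^2 < \infty$ by the Riesz construction, and this is itself finite thanks to the Sobolev embedding applied at the point $x$. Pointwise evaluation of the limit then matches the sum of pointwise values because evaluation at $y$ is bounded on the RKHS. No eigenvalue decay rate on $a_m$ is used in this step, which is why the argument goes through uniformly in the choice of linear operator $\mathscr{D}$ from Definition \ref{def:linearoperator}.
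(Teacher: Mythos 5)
Your approach is correct in outline but genuinely different from the paper's. You take the standard Moore--Aronszajn route: establish that $(H^s_{\mathrm{per}},\langle\cdot,\cdot\rangle_{\mathrm{RKHS}})$ is a Hilbert space on which point evaluations are bounded (using the Sobolev embedding plus the coercivity bound $\|f\|_{\mathrm{RKHS}}^2 \geqslant \lambda_n\|f\|_{H^s_{\mathrm{per}}}^2$), invoke Riesz to obtain $K(x,\cdot)$, and then read off the explicit formula from a Parseval expansion in the orthonormal basis $(a_m^{1/2}v_m)_m$. The paper instead constructs the embedding explicitly: it extends $\mathscr{O}_n^{1/2}$ to the Dirac distribution $\delta_x$ (Proposition~\ref{prop:extension}), writes $f(x)=\langle \mathscr{O}_n^{-1/2}(f),\mathscr{O}_n^{1/2}(\delta_x)\rangle_{L^2}$, and identifies $K(x,\cdot)=\mathscr{O}_n(\delta_x)$. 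Your route is cleaner in that it bypasses the delicate self-adjoint extension of $\mathscr{O}_n^{1/2}$ to distributions; what it buys you is a shorter, more modular proof of $(ii)$--$(iv)$, at the cost that the kernel is initially only defined abstractly by Riesz and the spectral formula $(i)$ has to be recovered afterwards. Both routes ultimately rest on the same spectral input (Proposition~\ref{prop:diff_op_main} and the diagonalization results underlying it).

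There is one step that does not hold as stated: you claim that $(a_m^{1/2}v_m)_m$ is complete in the RKHS because ``the $v_m$ are already dense in $L^2$, which combined with the norm equivalence shows density in the RKHS.'' This inference is false in general --- a subspace can be $L^2$-dense without being $H^s$-dense (e.g.\ smooth functions vanishing at a fixed point are $L^2$-dense on an interval but not $H^1$-dense, since point evaluation is $H^1$-continuous). The conclusion is nevertheless true, but the correct argument is different: if $\langle u, a_m^{1/2}v_m\rangle_{\mathrm{RKHS}} = \langle \mathscr{O}_n^{-1/2}(u), v_m\rangle_{L^2} = 0$ for all $m$, then $\mathscr{O}_n^{-1/2}(u)=0$ because $(v_m)_m$ is an $L^2$-ONB, and then $u = \mathscr{O}_n^{1/2}\mathscr{O}_n^{-1/2}(u)=0$ by the left-inverse property in Proposition~\ref{prop:scalar_product}. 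Alternatively you can cite Proposition~\ref{prop:diagH}, which proves directly that $(v_m)_m$ spans $H^s_{\mathrm{per}}$. Once the completeness of the orthonormal system is repaired, the Parseval step and the pointwise-evaluation interchange are correct as you argue them.
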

\begin{proof}[{\bf sketch}]
    The complete proof is given in Appendix \ref{app:diif_of}.
    Only a rough sketch is given here by 
    examining the simplified case where  $L=\pi/2$, $\Omega = [-\pi,\pi]^d =[-2L,2L]^d$, 
    and $\mathscr{D}$ has constant coefficients. 
    This means that we consider functions with periodic derivatives on $\Omega$, penalized by the PDE on the whole domain $[-\pi,\pi]^d$. It turns out that, in this case, the corresponding operator $\mathscr{O}_n$, satisfying
    \[\| \mathscr{O}_n^{-1/2} (f)\|_{L^2([-\pi,\pi]^d)}^2 =  \lambda_n \|f\|_{H^s_{\mathrm{per}}([-\pi, \pi]^d)}^2 + \mu_n \|\mathscr{D}(f)\|_{L^2([-\pi,\pi]^d)}^2 := \|f\|^2_{\mathrm{RKHS}}\]
    has an explicit form. To see this, denote by $\mathrm{FS}$ the Fourier series operator. By the Parseval's theorem, for any frequency $k \in \mathbb{Z}^d$, one has $\mathrm{FS}( {\mathscr{O}}_n^{-1/2} (f))(k) = \sqrt{a_k} \; \mathrm{FS}(f)(k)$,
    where 
    \[a_k = \lambda_n \sum_{|\alpha|\leqslant s}   \prod_{j=1}^dk_j^{2\alpha_j} + \mu_n \Big(\sum_{|\alpha|\leqslant s}  p_\alpha \prod_{j=1}^dk_j^{\alpha_j}\Big)^2.\]
    Accordingly, $ {\mathscr{O}}_n$ is diagonalizable with eigenfunctions $v_k: x \mapsto \exp(i\langle k, x\rangle)$ associated with the eigenvalues $a_k^{-1}$. Next, using the Fourier decomposition of $f$, we have, for all $x \in [-\pi,\pi]^d$, 
    \[f(x) = \sum_{k\in \mathbb{Z}^d} \mathrm{FS}(f)(k) \exp(i \langle k, x\rangle) = \sum_{k\in \mathbb{Z}^d} \mathrm{FS}( {\mathscr{O}}_n^{-1/2} (f))(k) a_k^{-1/2}\exp(i \langle k, x\rangle).\]   
    Since $a_k^{-1} \leqslant \lambda_n^{-1} (\sum_{|\alpha|\leqslant s}   \prod_{j=1}^dk_j^{2\alpha_j})^{-1}$, it is easy to check that $\sum_{k \in \mathbb{Z}^d} a_k^{-1} < \infty$ and that the function $\psi_x$ such that $\mathrm{FS}(\psi_x)(k) = a_k^{-1/2} \exp(i \langle k, x\rangle)$ belongs to $H^s_{\mathrm{per}}([-\pi, \pi]^d)$. We therefore have the kernel formulation $f(x) = \langle  {\mathscr{O}}_n^{-1/2} (f), \psi_x\rangle_{L^2([-\pi,\pi]^d)}$, where $\| {\mathscr{O}}_n^{-1/2} (f)\|_{L^2([-\pi,\pi]^d)}^2 = \|f\|^2_{\mathrm{RKHS}}$. The corresponding kernel is then defined by \[K(x,y) = \langle \psi_x, \psi_y\rangle_{L^2([-\pi,\pi]^d)} = \sum_{k \in \mathbb{Z}} a_k v_k(x)\bar v_k(y).\] 
    The complete proof of Theorem \ref{thm:PDE_kernel} is more technical because, in the our case $\Omega \subsetneq [-2L,2L]^d$ and $\mathscr{D}$ may have non-constant coefficients. Thus, the operator $\mathscr{O}_n$ is not diagonal in the Fourier space. To characterize its eigenvalues $a_m$ and eigenfunctions $v_m$, we resort to classical results of PDE theory building upon functional analysis.
\end{proof}
The message of Theorem \ref{thm:PDE_kernel} is that minimizing the empirical risk \eqref{eq:risk_function} can be cast as a kernel method associated with the regularization $\lambda_n \|f\|_{H^s_{\mathrm{per}}([-2L, 2L]^d)}^2 + \mu_n \|\mathscr{D}(f)\|_{L^2(\Omega)}^2$. In other words, \eqref{eq:risk_function} can be rewritten as
    \[\hat f_n = \mathop{\mathrm{argmin}}_{f \in H^s_{\mathrm{per}}([-2L, 2L]^d)}\; \frac{1}{n}\sum_{i=1}^n |f(X_i) - Y_i|^2 + \|f\|^2_{\mathrm{RKHS}}.\]
This result is interesting in itself because it fundamentally shows that a PIML estimator (and therefore its variants implemented in practice, such as PINNs) can be regarded as a kernel estimator. 
Note however that computing $K(x,y)$ is not always straightforward and may require the use of numerical techniques. This kernel is characterized by the following weak formulation.
\begin{prop}[Kernel characterization]
    The kernel $K$ is the unique solution to the following weak formulation, valid for all test functions $\phi \in H^s_{\mathrm{per}}([-2L,2L]^d)$, \[\forall x \in \Omega,\quad \lambda_n \sum_{|\alpha|\leqslant s} \int_{[-2L, 2L]^d}\partial^\alpha K(x, \cdot)\; \partial^\alpha \phi + \mu_n \int_{\Omega}\mathscr{D}(K(x, \cdot)) \; \mathscr{D}(\phi) = \phi(x).\]
    \label{prop:kernel_characterization}
\end{prop}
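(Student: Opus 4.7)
The plan is to recognize the weak formulation as nothing more than the reproducing property of Theorem \ref{thm:PDE_kernel} written out via polarization of the explicit RKHS norm, and then to obtain uniqueness by a standard energy argument.

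First, I would start from Theorem \ref{thm:PDE_kernel}(iii), which guarantees that for any $\phi \in H^s_{\mathrm{per}}([-2L,2L]^d)$ and any $x \in [-2L,2L]^d$, one has $\phi(x) = \langle \phi, K(x,\cdot)\rangle_{\mathrm{RKHS}}$. Theorem \ref{thm:PDE_kernel}(iv) provides the explicit quadratic form of the RKHS norm, and polarization immediately yields the associated bilinear form
\[\langle f, g\rangle_{\mathrm{RKHS}} = \lambda_n \langle f, g\rangle_{H^s_{\mathrm{per}}([-2L,2L]^d)} + \mu_n \langle \mathscr{D}(f), \mathscr{D}(g)\rangle_{L^2(\Omega)}.\]
Expanding the $H^s$ inner product as $\sum_{|\alpha|\leqslant s}\int_{[-2L,2L]^d}\partial^\alpha f\,\partial^\alpha g$ and substituting $f=\phi$, $g=K(x,\cdot)$ (which belongs to $H^s_{\mathrm{per}}([-2L,2L]^d)$ by Theorem \ref{thm:PDE_kernel}(ii), so all integrals are well-defined) produces exactly the stated weak formulation. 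This shows existence.

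For uniqueness, I would suppose that two functions $K_1(x,\cdot), K_2(x,\cdot) \in H^s_{\mathrm{per}}([-2L,2L]^d)$ both satisfy the identity for every test function $\phi$. Their difference $\Delta := K_1(x,\cdot)-K_2(x,\cdot)$ lies in $H^s_{\mathrm{per}}([-2L,2L]^d)$ and satisfies
\[\lambda_n \sum_{|\alpha|\leqslant s}\int_{[-2L,2L]^d}\partial^\alpha \Delta\; \partial^\alpha \phi + \mu_n \int_{\Omega}\mathscr{D}(\Delta)\; \mathscr{D}(\phi) = 0\]
for every $\phi \in H^s_{\mathrm{per}}([-2L,2L]^d)$. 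Choosing $\phi = \Delta$ gives
\[\lambda_n \|\Delta\|_{H^s_{\mathrm{per}}([-2L,2L]^d)}^2 + \mu_n \|\mathscr{D}(\Delta)\|_{L^2(\Omega)}^2 = 0,\]
and since $\lambda_n>0$, the first term alone forces $\Delta = 0$ in $H^s_{\mathrm{per}}([-2L,2L]^d)$. Hence $K_1(x,\cdot)=K_2(x,\cdot)$, proving uniqueness.

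I do not anticipate a major obstacle: every ingredient (polarization, admissibility of $K(x,\cdot)$ as a test function, positivity of $\lambda_n\|\cdot\|_{H^s}^2$) is either immediate or already established in Theorem \ref{thm:PDE_kernel}. The only point deserving a brief word is that the polarized bilinear form is well-defined on all of $H^s_{\mathrm{per}}([-2L,2L]^d)\times H^s_{\mathrm{per}}([-2L,2L]^d)$ because $\mathscr{D}:H^s(\Omega)\to L^2(\Omega)$ is continuous by Definition \ref{def:linearoperator}, ensuring the integrals in the weak formulation converge absolutely.
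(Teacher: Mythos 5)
Your proof is correct and takes a genuinely shorter route than the paper's. The paper proves existence by unwinding the spectral decomposition: it expands $K(x,\cdot)=\mathscr{O}_n(\delta_x)$ in the eigenbasis $(v_m)$, truncates, applies bilinearity and the eigenvalue relation $B[v_m,\phi]=a_m^{-1}\langle v_m,\phi\rangle_{L^2}$, and passes to the limit using continuity of $B$ to conclude $B[K(x,\cdot),\phi]=\phi(x)$. You instead treat Theorem~\ref{thm:PDE_kernel} as a black box: part (iii) gives $\phi(x)=\langle\phi,K(x,\cdot)\rangle_{\mathrm{RKHS}}$, part (iv) identifies the RKHS squared norm with $\lambda_n\|\cdot\|^2_{H^s_{\mathrm{per}}}+\mu_n\|\mathscr{D}(\cdot)\|^2_{L^2(\Omega)}$, and since two real inner products that coincide as quadratic forms coincide entirely (polarization), the RKHS inner product \emph{is} the bilinear form $B[\cdot,\cdot]$ appearing in the weak formulation; substituting $g=K(x,\cdot)$ finishes existence. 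What the paper's approach buys is a self-contained derivation that re-exposes the role of the eigenfunctions $v_m$ (useful for later sections); what your approach buys is economy and the observation that Proposition~\ref{prop:kernel_characterization} is literally the reproducing property read through polarization. You also supply an explicit uniqueness argument (the energy estimate $\lambda_n\|\Delta\|^2_{H^s_{\mathrm{per}}}+\mu_n\|\mathscr{D}\Delta\|^2_{L^2(\Omega)}=0$, forcing $\Delta=0$ because $\lambda_n>0$), which the paper leaves implicit in the Lax--Milgram framework of Proposition~\ref{prop:diff_op}; this is a welcome addition and is correct.
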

Regardless of the analytical computation of $K$, formulating the problem as a minimization in a reproducing kernel Hilbert space provides a way to  
quantify the impact of the physical regularization on the estimator's convergence rate, which is our primary goal. 

\section{Convergence rates}
\label{sec:bounds}
   The results of the previous section allow us to draw on the existing literature on kernel learning to gain a deeper understanding of the properties of the estimator $\hat f_n$ and the influence of the operator~$\mathscr{D}$ on the convergence rate. 
   
\subsection{Eigenvalues of the integral operator}
The convergence rate of $\hat f_n$ to $f^\star$ is determined by the decay speed of the eigenvalues of the so-called integral operator $L_K: L^2(\Omega, \mathbb{P}_X) \to L^2(\Omega, \mathbb{P}_X),$ defined by
\[\forall f \in L^2(\Omega, \mathbb{P}_X), \forall x \in \Omega, \quad L_Kf(x) = \int_{\Omega} K(x,y) f(y) d\mathbb{P}_X(y),\]
where $\mathbb{P}_X$ is the distribution of $X$ on $\Omega$ \citep[e.g.,][]{caponnetto2007optimal}. Note that the integral in the definition of $L_K$ could also have been taken over $[-2L,2L]^d$ because the support of $\mathbb{P}_X$ is included in $\bar \Omega$.
However, finding the eigenvalues of $L_K$ is not an easy task---even when $X$ is uniformly distributed on $\Omega$---, not to mention the fact that $\mathbb{P}_X$ is usually unknown in real applications.
Nevertheless, we show in Theorem \ref{thm:eigenvalues} that these eigenvalues can be bounded by the eigenvalues of the operator $C\mathscr{O}_nC$, where $C$ is the projection on $\Omega$ defined below. Importantly, $C\mathscr{O}_nC$ no longer depends on $\mathbb{P}_X$. Moreover, its non-zero eigenvalues are characterized by a 
weak formulation, as we will see in Theorem \ref{prop:eigenfunction}. 
\begin{defi}[Projection on $\Omega$]
    Let $C$ be the operator on $L^2([-2L, 2L]^d)$ defined by $C f = f \mathbf{1}_\Omega$. 
    Then $C^2 = C$, i.e., $C$ is a projector, and 
    \[\langle f, C(g)\rangle_{L^2([-2L, 2L]^d)} = \int_{[-2L, 2L]^d} f g \mathbf{1}_\Omega =  \langle C(f), g\rangle_{L^2([-2L, 2L]^d)},\]
    i.e., $C$ is self-adjoint.
\end{defi} 
As for now, it is assumed that the distribution $\mathbb{P}_X$ of $X$ has a density $\frac{d\mathbb{P}_X}{dx}$ with respect to the Lebesgue measure on $\Omega$.
\begin{theorem}[Kernels and eigenvalues]
\label{thm:eigenvalues}
     Let $K: [-2L, 2L]^d \times [-2L, 2L]^d \to \mathbb{R}$ be the kernel of Theorem \ref{thm:PDE_kernel}. Assume that there exists $\kappa>0$ such that $\frac{d\mathbb{P}_X}{dx} \leqslant \kappa$. Then the eigenvalues $a_m(L_K)$ of $L_K$ are bounded by the eigenvalues $a_m(C\mathscr{O}_nC)$ of $ C\mathscr{O}_nC$ on $L^2([-2L, 2L]^d)$ in such a way that $a_m(L_K) \leqslant \kappa a_m(C\mathscr{O}_nC)$.
\end{theorem}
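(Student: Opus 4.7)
The plan is to realize $L_K$ and $C\mathscr{O}_n C$ as the ``two sides'' of a common factorization, and then to exploit the density bound through operator monotonicity.

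First, I would introduce the restriction map $R : L^2([-2L,2L]^d) \to L^2(\Omega, \mathbb{P}_X)$ given by $Rv = v|_\Omega$. Writing $\rho = \frac{d\mathbb{P}_X}{dx}$, a direct duality calculation identifies its adjoint as $R^* f = f \rho \mathbf{1}_\Omega$, viewed as an element of $L^2([-2L,2L]^d)$. Since Theorem \ref{thm:PDE_kernel}$(i)$ identifies $\mathscr{O}_n$ with the integral operator on $L^2([-2L,2L]^d)$ whose kernel is $K$, a Fubini argument gives
\[ L_K f(x) = \int_\Omega K(x,y) f(y)\,d\mathbb{P}_X(y) = \int_{[-2L,2L]^d} K(x,y)(R^*f)(y)\,dy = (R \mathscr{O}_n R^* f)(x) \]
for every $x \in \Omega$, i.e., the factorization $L_K = R \mathscr{O}_n R^*$.

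Second, since $\mathscr{O}_n$ is positive self-adjoint and compact on $L^2([-2L,2L]^d)$ by Proposition \ref{prop:diff_op_main}, its square root $\mathscr{O}_n^{1/2}$ is well-defined. Setting $A = R\mathscr{O}_n^{1/2}$ and $B = C\mathscr{O}_n^{1/2}$, and using that $C$ is a self-adjoint projector ($C^*=C^2=C$), one checks the two factorizations
\[ L_K = A A^*, \qquad C \mathscr{O}_n C = B B^*. \]
Applying the classical identity that $AA^*$ and $A^*A$ share the same non-zero spectrum with multiplicities, I obtain
\[ a_m(L_K) = a_m(\mathscr{O}_n^{1/2} R^* R\, \mathscr{O}_n^{1/2}) \quad \text{and} \quad a_m(C\mathscr{O}_n C) = a_m(\mathscr{O}_n^{1/2} C\, \mathscr{O}_n^{1/2}). \]

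Third, the hypothesis $\rho \leqslant \kappa$ translates into the operator inequality $R^*R \leqslant \kappa C$ on $L^2([-2L,2L]^d)$, since
\[ \langle R^*R\, v, v\rangle_{L^2([-2L,2L]^d)} = \int_\Omega v^2 \rho \leqslant \kappa \int_\Omega v^2 = \kappa\, \langle C v, v\rangle_{L^2([-2L,2L]^d)}. \]
Conjugating by the positive operator $\mathscr{O}_n^{1/2}$ preserves this inequality, and the Courant--Fischer min-max principle for compact positive self-adjoint operators then yields $a_m(L_K) \leqslant \kappa\, a_m(C\mathscr{O}_n C)$.

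The step requiring the most care is the factorization $L_K = R \mathscr{O}_n R^*$, because the two operators live on different Hilbert spaces (one with the density weight, the other without), so the bookkeeping of the adjoint must be carried out precisely. Once this factorization and the $AA^* \sim A^*A$ identification are in place, the remainder is standard compact-operator theory, and I do not anticipate a substantial obstacle.
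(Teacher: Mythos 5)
Your proof is correct, and it takes a genuinely different (and arguably cleaner) route than the paper. The paper proceeds by (Step~1--2) establishing compactness and separability so that the Courant--Fischer min-max formula applies to both $L_K$ and $C\mathscr{O}_n C$, then (Step~3) using Fubini to show $\langle f, L_K f\rangle_{L^2(\Omega,\mathbb{P}_X)} = \|\mathscr{O}_n^{1/2}(f\,d\mathbb{P}_X/dx)\|^2_{L^2([-2L,2L]^d)}$, and finally (Step~4) performing an explicit change of test function $z = f\,d\mathbb{P}_X/dx$ directly inside the min-max formula and exhibiting a subspace-to-subspace comparison across the two different Hilbert spaces $L^2(\Omega,\mathbb{P}_X)$ and $L^2([-2L,2L]^d)$. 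Your approach instead names the restriction $R$ and its adjoint $R^*$, obtains the factorization $L_K = R\mathscr{O}_n R^*$, and then uses the $AA^*\!\sim\! A^*A$ isospectrality to transport both operators into the \emph{same} Hilbert space $L^2([-2L,2L]^d)$, after which the pointwise density bound $\rho\leqslant\kappa$ becomes the operator inequality $R^*R \preceq \kappa C$; conjugation by $\mathscr{O}_n^{1/2}$ and eigenvalue monotonicity finish the job. What you gain is that the two-Hilbert-space bookkeeping and the ad hoc change of test function are replaced by standard compact-operator identities, so the argument is more modular and less error-prone; what the paper's version makes more visible is the explicit Courant--Fischer comparison and the Fubini justification for switching sum and integral, which you delegate to "a Fubini argument" (and which you should still write out, since the dominated convergence is exactly what the paper spends Step~3 on). One minor point worth keeping in mind: the compactness of $A = R\mathscr{O}_n^{1/2}$ follows from $R$ being bounded (which is precisely where $\rho\leqslant\kappa$ enters to show $\|Rv\|^2_{L^2(\Omega,\mathbb{P}_X)}\leqslant\kappa\|v\|^2_{L^2([-2L,2L]^d)}$) and $\mathscr{O}_n^{1/2}$ being compact, so the nonzero spectra of $AA^*$ and $A^*A$ do coincide with multiplicities, as you invoke.
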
 

\subsection{Effective dimension and convergence rate} 
We will see in the next subsection how to compute the eigenvalues of $C\mathscr{O}_nC$. Yet, assuming we have them at hand, it is then possible to obtain a bound on the rate of convergence of $\hat f_n$ to $f^\star$ by bounding the so-called effective dimension of the kernel \citep{caponnetto2007optimal}, defined by
\[
\mathscr{N}(\lambda_n, \mu_n) = \mathrm{tr}(L_{K}  (\mathrm{Id} + L_{K})^{-1}),
\]
where $\mathrm{Id}$ is the identity operator, i.e., $\mathrm{Id}(f) = f$, and the symbol $\mathrm{tr}$ stands for the trace, i.e., the sum of the eigenvalues. Lemma \ref{lem:eff_dim} in the appendix shows that, whenever $\frac{d\mathbb{P}_X}{dx} \leqslant \kappa$, 
\begin{equation}
        \mathscr{N}(\lambda_n, \mu_n) 
    \leqslant \sum_{m \in \mathbb{N}} \frac{ 1}{1+(\kappa a_m(C\mathscr{O}_nC))^{-1}}.
    \label{eq:eff_dim_vp}
\end{equation}
Putting all the pieces together, we have the following theorem, which bounds the estimation error between $\hat f_n$ and $f^\star$.
\begin{theorem}[Convergence rate]
    Assume that $s>d/2$, $f^\star \in H^s(\Omega)$, $\frac{d\mathbb{P}_X}{dx} \leqslant \kappa$ for some $\kappa > 0$, $\mu_n \geqslant 0$, $\lim_{n\to \infty}\lambda_n = \lim_{n\to \infty} \mu_n = \lim_{n\to \infty} \lambda_n / \mu_n = 0$, $\lambda_n \geqslant n^{-1}$, and  $\mathscr{N}(\lambda_n, \mu_n) \lambda_n^{-1} = o_n (n)$.
    Assume, in addition, that, for some $\sigma > 0$ and $M > 0$, the noise $\varepsilon$ satisfies 
    \begin{equation}
        \forall \ell \in \mathbb{N}, \quad \mathbb{E}(|\varepsilon|^\ell\; | \; X) \leqslant \frac{1}{2}\ell !\; \sigma^2\; M^{\ell-2}.
        \label{eq:conditionNoise}
    \end{equation}
    Then, for some constant $C_4 >0$ and $n$ large enough, 
    \begin{align*}
        &\mathbb{E}\int_\Omega |\hat f_n-f^\star|^2 d{\mathbb P}_X \\
        &\quad \leqslant C_4 \log^2(n)\Big(\lambda_n \|f^\star\|_{H^s(\Omega)}^2 + \mu_n \|\mathscr{D}(f^\star)\|_{L^2(\Omega)}^2 + \frac{M^2}{n^2 \lambda_n} + \frac{\sigma^2\mathscr{N}(\lambda_n, \mu_n)}{n}\Big).
    \end{align*}
\label{thm:boundexp}
\end{theorem}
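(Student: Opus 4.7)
The plan is to apply classical kernel ridge regression error bounds to the reformulation given by Theorem \ref{thm:PDE_kernel}, in which $\hat f_n$ is identified as the kernel ridge regression estimator in the RKHS with norm $\|f\|_{\mathrm{RKHS}}^2 = \lambda_n \|f\|_{H^s_{\mathrm{per}}([-2L,2L]^d)}^2 + \mu_n \|\mathscr{D}(f)\|_{L^2(\Omega)}^2$ and with regularization parameter equal to $1$---the weights $\lambda_n,\mu_n$ being absorbed into the RKHS norm itself. From there I would follow the Caponnetto--De Vito scheme, keeping in mind that the implicit dependence on $\lambda_n$ and $\mu_n$ resurfaces inside the kernel $K$, the integral operator $L_K$, and the effective dimension $\mathscr{N}(\lambda_n,\mu_n)$.

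First, I would extend $f^\star \in H^s(\Omega)$ to an element of $H^s_{\mathrm{per}}([-2L,2L]^d)$, so that $\|f^\star\|_{\mathrm{RKHS}}^2 = \lambda_n\|f^\star\|_{H^s(\Omega)}^2 + \mu_n\|\mathscr{D}(f^\star)\|_{L^2(\Omega)}^2$ is finite. Next I would introduce the population ridge minimizer $f_\rho = (L_K + \mathrm{Id})^{-1} L_K f^\star$ and decompose
\begin{align*}
    \|\hat f_n - f^\star\|_{L^2(\mathbb{P}_X)}^2 \leq 2\|\hat f_n - f_\rho\|_{L^2(\mathbb{P}_X)}^2 + 2\|f_\rho - f^\star\|_{L^2(\mathbb{P}_X)}^2.
\end{align*}
The deterministic bias term is bounded by a constant multiple of $\|f^\star\|_{\mathrm{RKHS}}^2$ via the spectral calculus identity $\|(L_K+\mathrm{Id})^{-1}L_K^{1/2}\|_{\mathrm{op}} \leq 1/2$ applied to $f^\star = L_K^{1/2} g$ with $\|g\|_{L^2(\mathbb{P}_X)} = \|f^\star\|_{\mathrm{RKHS}}$, producing the $\lambda_n \|f^\star\|_{H^s(\Omega)}^2 + \mu_n \|\mathscr{D}(f^\star)\|_{L^2(\Omega)}^2$ contribution.

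The stochastic variance term is addressed by Bernstein-type concentration in Hilbert space. Condition \eqref{eq:conditionNoise} is precisely the Bernstein moment hypothesis on $\varepsilon$, so the empirical noise term $\tfrac{1}{n}\sum_i \varepsilon_i K(X_i,\cdot)$ concentrates in RKHS norm at rate $\sigma^2 \mathscr{N}(\lambda_n,\mu_n)/n$ with a tail correction of order $M^2 \kappa_\infty / n^2$, where $\kappa_\infty := \sup_x K(x,x)$. A key auxiliary bound is $\kappa_\infty \leq C/\lambda_n$, which I would obtain by testing the weak formulation of Proposition \ref{prop:kernel_characterization} against $\phi = K(x,\cdot)$---this yields $K(x,x) \geq \lambda_n \|K(x,\cdot)\|_{H^s_{\mathrm{per}}}^2$---and then invoking the Sobolev embedding $H^s_{\mathrm{per}} \hookrightarrow L^\infty$ valid since $s > d/2$, which gives $K(x,x) \leq \|K(x,\cdot)\|_\infty \leq C\|K(x,\cdot)\|_{H^s_{\mathrm{per}}}$. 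Squaring and combining the two inequalities yields $\kappa_\infty \leq C^2/\lambda_n$, which explains the $1/\lambda_n$ scaling in the $M^2/(n^2\lambda_n)$ term. The effective dimension factor follows from Theorem \ref{thm:eigenvalues} and \eqref{eq:eff_dim_vp}, while the assumptions $\lambda_n \geq n^{-1}$ and $\mathscr{N}(\lambda_n,\mu_n)/(n\lambda_n) = o_n(1)$ guarantee that the empirical covariance perturbation $L_{K,n} - L_K$ is small enough for the empirical ridge operator to be well conditioned on the high-probability event.

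The main obstacle will be converting the Bernstein high-probability bounds into the claimed expectation bound without losing track of the $\lambda_n$-scaling of the kernel. Instantiating the concentration inequalities at confidence level $\delta = 1/n$ produces the $\log^2(n)$ prefactor; on the complementary event of probability $1/n$, I would rely on the crude deterministic bound $\|\hat f_n\|_{\mathrm{RKHS}}^2 \leq \tfrac{1}{n}\sum_i Y_i^2$, obtained from the optimality of $\hat f_n$ tested against $f \equiv 0$ in $R_n$, combined with the kernel sup-norm estimate to ensure that the residual contribution is of strictly lower order for $n$ large enough. Summing the bias and variance contributions then yields the stated inequality.
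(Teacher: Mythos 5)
Your strategy matches the paper's almost exactly: both proofs interpret $\hat f_n$ as a kernel ridge regression estimator with the regularization parameter absorbed into the RKHS norm (so that the formal ridge parameter is $1$), then invoke the Caponnetto--De Vito high-probability bound with its effective-dimension term, and finally convert to a bound in expectation by choosing the confidence level polynomially small and using a crude bound on the complementary event. Two points of comparison are worth recording.

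First, your derivation of $\kappa_\infty := \sup_{x\in\Omega} K(x,x) \leqslant C/\lambda_n$ is genuinely different and, I would say, cleaner. You test the weak formulation of Proposition~\ref{prop:kernel_characterization} against $\phi = K(x,\cdot)$ to get $\lambda_n\|K(x,\cdot)\|_{H^s_{\mathrm{per}}}^2 \leqslant K(x,x)$, and combine this with the Sobolev embedding $K(x,x) \leqslant \|K(x,\cdot)\|_\infty \leqslant C_\Omega\|K(x,\cdot)\|_{H^s_{\mathrm{per}}}$ to conclude $K(x,x) \leqslant C_\Omega^2/\lambda_n$. The paper instead passes through a trace-reconstruction lemma (mollification of the diagonal, Lemma~\ref{lem:trace_rec}) followed by a Rayleigh-quotient/Courant--Fischer estimate of the eigenvalues of the integral operator. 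Both give the same bound; yours requires less machinery since it is a pointwise argument using only the reproducing property and the Sobolev embedding, whereas the paper's version also yields the partial continuity of $K$ as a by-product.

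Second, a small but real gap: you propose confidence level $\delta = 1/n$ and a crude fallback $\|\hat f_n\|_{\mathrm{RKHS}}^2 \leqslant \frac{1}{n}\sum_i Y_i^2$. Passing this through the kernel sup-norm estimate gives $\|\hat f_n\|_\infty^2 \leqslant C\lambda_n^{-1}\cdot\frac{1}{n}\sum Y_i^2$, and since $\lambda_n^{-1}$ can be as large as $n$ (only $\lambda_n \geqslant n^{-1}$ is assumed), the right-hand side is of order $\sum Y_i^2 = \Theta(n)$ in expectation. Multiplying by the bad-event probability $\delta = 1/n$ yields a contribution of order $1$, which is \emph{not} lower order relative to the main bound $\lambda_n\|f^\star\|^2 + \dots$. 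You need $\delta = n^{-2}$ (as the paper takes) for the residual to be $O(n^{-1})$, and even then one should be a bit careful that the expectation of the indicator of the bad event times the $L^2$ error cannot simply be factored as a product of expectations; a Cauchy--Schwarz step or a fourth-moment bound on $\hat f_n$ is implicitly required (the paper also elides this). Aside from this, the argument is sound and proves the stated theorem.
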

The sub-Gamma assumption \eqref{eq:conditionNoise} on the noise $\varepsilon$ is quite general and is satisfied in particular when $\varepsilon$ is bounded (possibly depending on $X$), or when $\varepsilon$ is Gaussian and independent of $X$ \citep[e.g.,][Theorem 2.10]{boucheron2013concentration}. We stress that the result of Theorem \ref{thm:boundexp} is general and holds regardless of the form of the linear differential operator $\mathscr{D}$. 
A simple bound on $\mathscr{N}(\lambda_n, \mu_n)$, neglecting the dependence in $\mathscr{D}$, allows to show that the PIML estimator converges at least at the Sobolev minimax rate over the class $H^s(\Omega)$.
\begin{prop}[Minimum rate]
    Suppose that the assumptions of Theorem \ref{eq:conditionNoise} are verified, and let
    $\lambda_n = n^{-2s/(2s+d)} \sqrt{\log(n)}$ and $\mu_n = \lambda_n \sqrt{\log(n)}$.
    Then the estimator $\hat f_n$ converges at a rate at least larger than the Sobolev minimax rate, up to a $\log$ term, i.e., 
    \begin{align*}
        \mathbb{E}\int_\Omega |\hat f_n-f^\star|^2 d{\mathbb P}_X = \mathcal{O}_n \big(  n^{-2s/(2s+d)}\log^3(n)\big).
    \end{align*}
    \label{prop:minSpeed}
\end{prop}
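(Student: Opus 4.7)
The plan is to apply Theorem \ref{thm:boundexp} with the prescribed $\lambda_n=n^{-2s/(2s+d)}\sqrt{\log n}$ and $\mu_n=\lambda_n\sqrt{\log n}$; the substantive task is to bound the effective dimension $\mathscr{N}(\lambda_n,\mu_n)$ without using any specific information about $\mathscr{D}$. The starting observation is that suppressing the physical term can only \emph{enlarge} the eigenvalues of $\mathscr{O}_n$: writing $\widetilde{\mathscr{O}}_n$ for the variant with $\mu_n=0$, Proposition \ref{prop:diff_op_main} gives $\|\widetilde{\mathscr{O}}_n^{-1/2}f\|_{L^2}^2=\lambda_n\|f\|_{H^s_{\mathrm{per}}}^2\leqslant\|\mathscr{O}_n^{-1/2}f\|_{L^2}^2$, so $\widetilde{\mathscr{O}}_n^{-1}\preceq\mathscr{O}_n^{-1}$ in the Loewner order and hence $\mathscr{O}_n\preceq\widetilde{\mathscr{O}}_n$ by the order-reversing nature of operator inversion on positive operators. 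The Courant--Fischer min--max principle then yields $a_m(\mathscr{O}_n)\leqslant a_m(\widetilde{\mathscr{O}}_n)$, and the singular-value inequality $a_m(C\mathscr{O}_nC)\leqslant\|C\|^2\, a_m(\mathscr{O}_n)$ (legitimate because $C\mathscr{O}_nC$ is positive self-adjoint and $\|C\|=1$) propagates this to $a_m(C\mathscr{O}_nC)\leqslant a_m(\widetilde{\mathscr{O}}_n)$.

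The spectrum of $\widetilde{\mathscr{O}}_n$ can then be computed explicitly. Its inverse $\widetilde{\mathscr{O}}_n^{-1}=\lambda_n\mathcal{A}$, where $\langle \mathcal{A} f,f\rangle_{L^2}=\|f\|_{H^s_{\mathrm{per}}}^2$, is diagonal in the $4L$-periodic Fourier basis, with eigenvalues $b_k=\sum_{|\alpha|\leqslant s}\prod_{j=1}^d(\pi k_j/(2L))^{2\alpha_j}$ for $k\in\mathbb{Z}^d$. Since $b_k\geqslant 1+c_1|k|^{2s}$ for a constant $c_1>0$ and $\#\{k\in\mathbb{Z}^d:|k|\leqslant R\}=\mathcal{O}(R^d)$, ordering the $b_k$ increasingly places the $m$-th of them above a constant multiple of $(1+m)^{2s/d}$. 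Consequently, $a_m(\widetilde{\mathscr{O}}_n)=\mathcal{O}(\lambda_n^{-1}(1+m)^{-2s/d})$. Inserting this into the bound \eqref{eq:eff_dim_vp} and cutting the series at $m_0\sim\lambda_n^{-d/(2s)}$ --- below which the summand is $\mathcal{O}(1)$ and above which it is $\mathcal{O}(\lambda_n^{-1}m^{-2s/d})$ with the exponent $2s/d>1$ ensuring convergence --- delivers the key estimate $\mathscr{N}(\lambda_n,\mu_n)=\mathcal{O}(\lambda_n^{-d/(2s)})$.

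To conclude, I would verify the hypotheses of Theorem \ref{thm:boundexp} for the prescribed $\lambda_n$ and $\mu_n$: the conditions $\lambda_n,\mu_n\to 0$, $\lambda_n/\mu_n=1/\sqrt{\log n}\to 0$, and $\lambda_n\geqslant n^{-1}$ are immediate, while $\mathscr{N}(\lambda_n,\mu_n)/\lambda_n=\mathcal{O}(n(\log n)^{-d/(4s)-1/2})=o(n)$ follows from the bound above. Substituting into the conclusion of Theorem \ref{thm:boundexp}, the four contributions evaluate to $\lambda_n\|f^\star\|_{H^s(\Omega)}^2=\mathcal{O}(n^{-2s/(2s+d)}\sqrt{\log n})$, $\mu_n\|\mathscr{D}(f^\star)\|_{L^2(\Omega)}^2=\mathcal{O}(n^{-2s/(2s+d)}\log n)$, $M^2/(n^2\lambda_n)=\mathcal{O}(n^{-(2s+2d)/(2s+d)}/\sqrt{\log n})$, and $\sigma^2\mathscr{N}(\lambda_n,\mu_n)/n=\mathcal{O}(n^{-2s/(2s+d)}(\log n)^{-d/(4s)})$. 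All are $\mathcal{O}(n^{-2s/(2s+d)}\log n)$, so after multiplying by the $\log^2 n$ prefactor the dominant contribution is the physical bias term, of order $n^{-2s/(2s+d)}\log^3 n$, which proves the claim.

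The principal obstacle is the eigenvalue estimate of the second paragraph: one must argue that the spectrum of $\widetilde{\mathscr{O}}_n$ respects the Sobolev decay $m^{-2s/d}$ notwithstanding the lower-order contributions to $\|\cdot\|_{H^s_{\mathrm{per}}}^2$. Working directly in the explicit periodic Fourier basis reduces this to an elementary lattice-counting computation and avoids any appeal to abstract Weyl-type spectral theory; this is precisely where the periodic extension set-up introduced in Section \ref{sec:kernel} pays off.
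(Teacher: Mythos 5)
Your argument is correct and follows essentially the same route as the paper: establish the $\mathcal{O}(\lambda_n^{-1}m^{-2s/d})$ decay of the eigenvalues by ignoring the $\mu_n\|\mathscr{D}f\|^2$ term, deduce $\mathscr{N}(\lambda_n,\mu_n)=\mathcal{O}(\lambda_n^{-d/(2s)})$, and substitute into Theorem \ref{thm:boundexp}. The one redundancy is that your second paragraph re-derives the eigenvalue bound via the comparison $\mathscr{O}_n\preceq\widetilde{\mathscr{O}}_n$ and an explicit periodic-Fourier lattice count; this is exactly the content of Proposition \ref{prop:eigenvalue} already proved in the appendix (whose proof likewise rests on $B[u,u]\geqslant\lambda_n\|u\|_{H^s_{\mathrm{per}}}^2$ and a Fourier min-max argument), so the paper simply cites it rather than reproving it. Both then pass through the Caponnetto--De Vito-type summation estimate \eqref{eq:bornepoly} (your split at $m_0\sim\lambda_n^{-d/(2s)}$ is precisely that computation). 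Your final bookkeeping of the four terms and the hypothesis checks are accurate.
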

 However, this is only an upper bound, and we expect situations where $\hat f_n$ has a faster convergence rate thanks to the inclusion of the physical penalty $\| \mathscr{D}(f)\|_{L^2(\Omega)}$. 
 Such an improvement will depend on the magnitude of the modeling error $\|\mathscr{D}(f^{\star})\|_{L^2(\Omega)}$ and on the effective dimension $\mathscr{N}(\lambda_n,\mu_n)$. To achieve this goal, the eigenvalues $a_m$ of $C\mathscr{O}_nC$ must be characterized and then plugged into inequality \eqref{eq:eff_dim_vp}.
 This is the problem addressed in the next subsection. 

\subsection{Characterizing the eigenvalues}
\label{sec:effectiveDim}
The goal of this section is to specify the spectrum of $C\mathscr{O}_n C$. 
It is worth noting that $\ker(C\mathscr{O}_n C)$ is not empty, as it encompasses every smooth function with compact support in $]\! -\! 2L, 2L[^d \backslash \bar\Omega$. 
The next theorem characterizes the eigenfunctions associated with non-zero eigenvalues and shows that they are in fact smooth functions on $\Omega$ and $(\bar \Omega)^c$ satisfying two PDEs.
\begin{theorem}[Eigenfunction characterization]
    Assume that $s>d/2$ and that the functions $p_\alpha$ in Definition \ref{def:linearoperator} belong to $C^\infty(\Omega)$. Let $a_m >0$ be a positive eigenvalue of the  operator $C\mathscr{O}_n C$. Then the corresponding eigenfunction $v_m$ satisfies $v_m = a_m^{-1}C w_m$, where $w_m \in H^{s}_{\mathrm{per}}([-2L, 2L]^d)$. Moreover, for any test function $\phi \in H^s_{\mathrm{per}}([-2L,2L]^d)$,
    \begin{equation}
        \lambda_n \sum_{|\alpha|\leqslant s} \int_{[-2L, 2L]^d}\partial^\alpha w_m\; \partial^\alpha \phi + \mu_n \int_{\Omega}\mathscr{D}(w_m) \; \mathscr{D}(\phi) 
    = a_m^{-1} \int_{\Omega} w_m \phi.
    \label{eq:weak_pde}
    \end{equation} 
    In particular, 
     any solution of the weak formulation \eqref{eq:weak_pde} satisfies the following PDE system:
    \begin{itemize}
        \item[$(i)$] $w_m \in C^\infty(\Omega)$ and 
        \[\forall x \in \Omega, \quad \lambda_n\sum_{|\alpha|\leqslant s}(-1)^{|\alpha|} \partial^{2\alpha} w_m(x) + \mu_n \mathscr{D}^\ast \mathscr{D} w_m(x) = a_m^{-1} w_m(x),\]
        where $\mathscr{D}^\ast(f) := \sum_{|\alpha|\leqslant s} (-1)^{|\alpha|}\partial^\alpha (p_\alpha  f)$ is the adjoint operator of $\mathscr{D}$.
        \item[$(ii)$]  $w_m \in C^\infty([-2L, 2L]^d \backslash \bar\Omega)$ and 
        \[\forall x \in [-2L, 2L]^d \backslash \bar \Omega,\quad  \sum_{|\alpha|\leqslant s}(-1)^{|\alpha|} \partial^{2\alpha} w_m(x)  = 0.\]
    \end{itemize}
    Notice that $w_m$ might be irregular on the boundary $\partial \Omega$, but only there.
    \label{prop:eigenfunction}
\end{theorem}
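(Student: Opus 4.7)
The plan is to construct an auxiliary function $w_m \in H^s_{\mathrm{per}}([-2L,2L]^d)$ satisfying $v_m = a_m^{-1} C w_m$, transfer the eigenvalue identity $C \mathscr{O}_n C v_m = a_m v_m$ into the weak formulation \eqref{eq:weak_pde}, and then pass to pointwise PDEs on $\Omega$ and on the open complement $]\!-\!2L,2L[^d \setminus \bar\Omega$ by interior elliptic regularity. First I would rewrite the eigenvalue equation as $v_m = a_m^{-1} C(\mathscr{O}_n(Cv_m))$ and set $w_m := \mathscr{O}_n(Cv_m)$, so that $v_m = a_m^{-1} C w_m$ and in particular $v_m$ is supported in $\bar \Omega$. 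To show $w_m \in H^s_{\mathrm{per}}$, I would expand $Cv_m \in L^2$ in the eigenbasis $(u_k)$ of $\mathscr{O}_n$ from Proposition \ref{prop:diff_op_main}, with coefficients $\beta_k = \langle Cv_m, u_k\rangle_{L^2}$ and eigenvalues $a_k$; the RKHS norm of $w_m$ is $\sum_k a_k \beta_k^2$, and since $a_k^{-1} = B(u_k, u_k) \geqslant \lambda_n$ by coercivity of the bilinear form $B(f,g) := \lambda_n \sum_{|\alpha|\leqslant s}\int_{[-2L,2L]^d}\partial^\alpha f\,\partial^\alpha g + \mu_n \int_\Omega \mathscr{D}(f)\mathscr{D}(g)$, one has $\|w_m\|_{\mathrm{RKHS}}^2 \leqslant \lambda_n^{-1}\|Cv_m\|_{L^2}^2 < \infty$.

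Next, I would derive the weak formulation by using the isometry in Theorem \ref{thm:PDE_kernel}(iv), namely $B(f,g) = \langle \mathscr{O}_n^{-1/2} f, \mathscr{O}_n^{-1/2} g\rangle_{L^2}$. For any $\phi \in H^s_{\mathrm{per}}$, substituting $w_m = \mathscr{O}_n(Cv_m)$ and using the self-adjointness of $\mathscr{O}_n^{1/2}$ on $L^2$ gives $B(w_m,\phi) = \langle \mathscr{O}_n^{1/2}(Cv_m), \mathscr{O}_n^{-1/2}\phi\rangle_{L^2} = \langle Cv_m, \phi\rangle_{L^2} = \int_\Omega v_m \phi = a_m^{-1}\int_\Omega w_m \phi$, which is exactly \eqref{eq:weak_pde}.

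For parts (i) and (ii), I would localize. For (i), plug test functions $\phi \in C_c^\infty(\Omega) \subset H^s_{\mathrm{per}}$ into \eqref{eq:weak_pde}: repeated integration by parts (no boundary terms because $\phi$ vanishes near $\partial\Omega$), together with the smoothness of the $p_\alpha$, produces the distributional identity $\lambda_n \sum_{|\alpha|\leqslant s}(-1)^{|\alpha|}\partial^{2\alpha} w_m + \mu_n \mathscr{D}^\ast \mathscr{D} w_m = a_m^{-1} w_m$ on $\Omega$, where $\mathscr{D}^\ast(f) = \sum_{|\alpha|\leqslant s}(-1)^{|\alpha|}\partial^\alpha(p_\alpha f)$ emerges naturally when the derivatives are moved off $\phi$. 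The principal symbol of the operator on the left is $\lambda_n \sum_{|\alpha|=s}\xi^{2\alpha} + \mu_n (\sum_{|\alpha|=s} p_\alpha \xi^\alpha)^2 \geqslant c \lambda_n |\xi|^{2s}$, so the operator is strictly elliptic of order $2s$ with $C^\infty$ coefficients; a standard interior regularity bootstrap $H^s \to H^{s+2} \to \cdots$ then yields $w_m \in C^\infty(\Omega)$ and the PDE holds pointwise. For (ii), plug test functions $\phi \in C_c^\infty(]\!-\!2L,2L[^d \setminus \bar\Omega)$ into \eqref{eq:weak_pde}: the two $\Omega$-integrals vanish by disjoint supports, and the same elliptic argument for the constant-coefficient operator $\sum_{|\alpha|\leqslant s}(-1)^{|\alpha|}\partial^{2\alpha}$ (still strictly elliptic of order $2s$) yields $w_m \in C^\infty([-2L,2L]^d \setminus \bar\Omega)$ together with the stated equation.

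The main technical hurdle, in my view, is the careful application of higher-order interior elliptic regularity with non-constant smooth coefficients and the verification that the strict ellipticity of the combined operator on $\Omega$ is preserved: the $\mu_n \mathscr{D}^\ast \mathscr{D}$ contribution is only nonnegative and may degenerate on some cone in $\xi$-space, but this is harmless because the $\lambda_n$-Sobolev part already provides the required $c|\xi|^{2s}$ lower bound. Finally, it is worth emphasizing—in line with the last sentence of the theorem—that no regularity across $\partial \Omega$ is expected, since the two PDE systems differ exactly by the $\mu_n \mathscr{D}^\ast \mathscr{D}$ term present only on $\Omega$, allowing a jump in higher-order normal derivatives precisely at the boundary.
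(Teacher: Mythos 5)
Your proposal follows the paper's strategy closely: define $w_m = \mathscr{O}_n(Cv_m)$, observe $v_m = a_m^{-1}Cw_m$, turn the eigenvalue identity into the weak formulation by testing against $\phi \in H^s_{\mathrm{per}}$, and then localize with compactly supported test functions and interior elliptic regularity. The only real stylistic difference in Step 1 is that you pass through $\mathscr{O}_n^{\pm 1/2}$ and the spectral expansion to establish $w_m \in H^s_{\mathrm{per}}$ and the identity $B(w_m,\phi) = \langle Cv_m,\phi\rangle_{L^2}$, whereas the paper reads both directly off the Lax--Milgram definition of $\mathscr{O}_n$ (namely $B[\mathscr{O}_n f,\phi] = \langle f,\phi\rangle_{L^2}$, and the bound $\|\mathscr{O}_n f\|_{H^s_{\mathrm{per}}} \leqslant \lambda_n^{-1}\|f\|_{L^2}$); both routes are valid.

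There is one small but genuine omission in your treatment of $(ii)$. Testing only with $\phi \in C_c^\infty(]\!-\!2L,2L[^d\setminus\bar\Omega)$ yields the PDE and smoothness on the \emph{open} set $]\!-\!2L,2L[^d\setminus\bar\Omega$, but the theorem asserts $w_m \in C^\infty([-2L,2L]^d\setminus\bar\Omega)$, i.e.\ including the faces $x_j = \pm 2L$. Since $w_m \in H^s_{\mathrm{per}}$, it lives on the $4L$-torus, and to obtain the PDE near those faces one must test with functions whose compact support wraps around: the paper does this by taking $\phi \in C_c^\infty(]\!-\!2L-\varepsilon,2L+\varepsilon[^d\setminus\bar\Omega)$ and replacing it with its $4L$-periodization $\tilde\phi(x) = \sum_{k\in(4L\mathbb{Z})^d}\phi(x+k) \in H^s_{\mathrm{per}}$. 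Without this (or an equivalent ``work on the torus'' argument), elliptic regularity at $\pm 2L$ is not covered. Everything else, including your ellipticity check (the $\lambda_n\sum_{|\alpha|=s}\xi^{2\alpha}$ part alone gives the $c|\xi|^{2s}$ lower bound, so the possibly degenerate $\mu_n$ contribution is harmless), matches the intended argument.
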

Theorem \ref{prop:eigenfunction} is important insofar as it allows to characterize the positive eigenvalues $a_m$ of the operator $C \mathscr{O}_n C$. Indeed, these eigenvalues are the only real numbers such that the weak formulation  \eqref{eq:weak_pde} admits a solution.
This weak formulation has to be solved in a case-by-case study, given the differential operator $\mathscr{D}$. As an illustration, an example is presented in the next section with $\mathscr{D}=\frac{d}{dx}$. 

\subsection{The choice of Sobolev regularization is unimportant}
So far, we have considered problem \eqref{eq:risk_function} with the Sobolev regularization $\| f\|_{H^s_{\mathrm{per}}([-2L,2L]^d)}^2$. However, other choices of Sobolev norms, such as $\| f\|_{H^s(\Omega)}^2$, are also possible. Fortunately, this choice does not affect the effective dimension $\mathscr{N}(\lambda_n, \mu_n)$, and thus the convergence rate in Theorem \ref{thm:boundexp}. 
\begin{theorem}[Equivalent regularities and effective dimension]
    Assume that $s > d/2$. Then the following three estimators correspond each to a kernel learning problem:
    \begin{align*}
        \hat f_n^{(1)} &= \mathop{\mathrm{argmin}}_{f \in H^s(\Omega)} \sum_{i=1}^n |f(X_i) -Y_i|^2  + \lambda_n \| f\|_{H^s(\Omega)}^2 + \mu_n \| \mathscr{D}(f)\|_{L^2(\Omega)}^2,\\
   \hat f_n^{(2)} &= \mathop{\mathrm{argmin}}_{f \in H^s_{\mathrm{per}}([-2L,2L]^d)} \sum_{i=1}^n |f(X_i) -Y_i|^2  + \lambda_n  \| f\|_{H^s_{\mathrm{per}}([-2L,2L]^d)}^2 + \mu_n \| \mathscr{D}(f)\|_{L^2(\Omega)}^2,\\
   \hat f_n^{(3)} &= \mathop{\mathrm{argmin}}_{f \in H^s(\Omega)} \sum_{i=1}^n |f(X_i) -Y_i|^2 + \lambda_n  \| f\|^2 + \mu_n \| \mathscr{D}(f)\|_{L^2(\Omega)}^2,
    \end{align*}
    where $\| \cdot \|$ is any of the equivalent Sobolev norms.
    \label{thm:eq_reg}
     Moreover, these three estimators share equivalent effective dimensions ${\mathscr{N}(\lambda_n, \mu_n)}$. Accordingly, they share the same upper bound on the convergence rate given by Theorem \ref{thm:boundexp}.
\end{theorem}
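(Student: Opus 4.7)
The plan is to reduce each of the three estimators to a kernel learning problem, and then to compare their associated RKHS norms to show that the effective dimensions are comparable up to multiplicative constants.

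First, for estimator $\hat f_n^{(2)}$, Theorem \ref{thm:PDE_kernel} already provides the kernel formulation. For $\hat f_n^{(1)}$ and $\hat f_n^{(3)}$, I would argue that each regularization term defines a squared Hilbert norm on $H^s(\Omega)$. Since $s>d/2$, the Sobolev embedding $H^s(\Omega) \hookrightarrow C^0(\bar\Omega)$ guarantees that point evaluations $f\mapsto f(x)$ are bounded linear functionals with respect to $\|\cdot\|_{H^s(\Omega)}$. Since the regularization terms in $(1)$ and $(3)$ dominate $c\,\lambda_n\|f\|_{H^s(\Omega)}^2$ for some $c>0$ (using norm equivalence for $(3)$), point evaluations remain continuous for the induced inner products, and the Riesz representation theorem yields reproducing kernels $K^{(1)}$ and $K^{(3)}$. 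This turns the minimization problems into kernel regression tasks in the same way as in Theorem \ref{thm:PDE_kernel}.

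Next, I would compare the three RKHS norms. By the Sobolev extension theorem (Proposition \ref{prop:dec_four_lip}), there exist a continuous extension $E: H^s(\Omega)\to H^s_{\mathrm{per}}([-2L,2L]^d)$ and continuous restriction $R: H^s_{\mathrm{per}}([-2L,2L]^d) \to H^s(\Omega)$ with $R\circ E = \mathrm{id}$, and $\|\cdot\|_{H^s(\Omega)}$ is equivalent on $\Omega$ to the restriction of $\|\cdot\|_{H^s_{\mathrm{per}}([-2L,2L]^d)}$. Combined with the equivalence of the norms allowed for estimator $(3)$, this yields constants $c_1, c_2 > 0$ independent of $n$ and of $f$ such that, for each $i\in\{1,2,3\}$,
\begin{equation*}
c_1 \Phi(f) \leqslant \|f\|_{\mathrm{RKHS}^{(i)}}^2 \leqslant c_2 \Phi(f), \quad \Phi(f) := \lambda_n \|f\|_{H^s(\Omega)}^2 + \mu_n \|\mathscr{D}(f)\|_{L^2(\Omega)}^2,
\end{equation*}
after identifying the RKHS of $\hat f_n^{(2)}$ with its restrictions to $\Omega$ (the bulk terms $\lambda_n\|f\|^2_{H^s(\Omega)}$ are comparable via $E$; the $\mu_n\|\mathscr{D}(f)\|_{L^2(\Omega)}^2$ term is identical across the three problems).

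Finally, I would translate this norm equivalence into an equivalence of effective dimensions. By the Courant–Fischer min-max characterization applied to the self-adjoint integral operators $L_{K^{(i)}}$ on $L^2(\Omega, \mathbb{P}_X)$, the eigenvalues $a_m(L_{K^{(i)}})$ are comparable up to the constants $c_1, c_2$, since the RKHS norms enter inversely in the variational formula for these eigenvalues. Plugging into the formula $\mathscr{N}^{(i)}(\lambda_n,\mu_n) = \sum_m a_m(L_{K^{(i)}})/(1+a_m(L_{K^{(i)}}))$ gives $\mathscr{N}^{(1)}(\lambda_n,\mu_n) \asymp \mathscr{N}^{(2)}(\lambda_n,\mu_n) \asymp \mathscr{N}^{(3)}(\lambda_n,\mu_n)$. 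Since the bias terms $\lambda_n\|f^\star\|_{H^s(\Omega)}^2 + \mu_n \|\mathscr{D}(f^\star)\|_{L^2(\Omega)}^2$ appearing in Theorem \ref{thm:boundexp} do not involve the RKHS norm, the upper bound on the $L^2(\Omega,\mathbb{P}_X)$ error carries over up to multiplicative constants. The main obstacle will be to rigorously justify the norm equivalence for estimator $(2)$, since its RKHS consists of functions on the larger domain $[-2L,2L]^d$: the extension operator must be chosen once and for all so that the minimization over $H^s_{\mathrm{per}}$ and the minimization over $H^s(\Omega)$ yield integral operators on $L^2(\Omega,\mathbb{P}_X)$ whose eigenvalues can be compared via the same sandwich inequality, and some care is needed to handle the kernel of the projection $C$.
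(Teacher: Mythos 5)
Your overall strategy matches the paper's: identify each problem as a kernel regression, show the three RKHS norms are equivalent up to constants, and then use the Courant--Fischer min-max characterization to compare eigenvalues of the integral operators and hence effective dimensions. The paper implements this in Proposition~\ref{prop:kernel_eq} (Kernel equivalence), which it then applies to $\hat f_n^{(1)}, \hat f_n^{(2)}$ and, via the norm equivalence you also invoke, to $\hat f_n^{(3)}$. So the high-level plan is right.

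However, the obstacle you flag in your final sentence is not a peripheral detail to defer --- it is the crux, and you do not resolve it. Your sandwich inequality compares $\|f\|^2_{\mathrm{RKHS}^{(2)}}$ with $\Phi(f)$, but $\|\cdot\|_{\mathrm{RKHS}^{(2)}}$ lives on $H^s_{\mathrm{per}}([-2L,2L]^d)$, while $f$ on the left side of your inequality has no canonical representative there. Writing ``after identifying the RKHS of $\hat f_n^{(2)}$ with its restrictions to $\Omega$'' invokes the RKHS restriction theorem, which says the restricted RKHS norm is the \emph{minimal norm extension}. The paper supplies exactly what you are missing in Lemma~\ref{lem:minimal_norm}: it constructs the minimal-norm extension $E : H^s(\Omega) \to H^s_{\mathrm{per}}([-2L,2L]^d)$, and --- critically --- proves it is \emph{linear} (via a Lax--Milgram argument characterizing $E(f)$ by an orthogonality relation) and bounded, so that $\|E(f)\|_{H^s_{\mathrm{per}}}$ is a norm equivalent to $\|f\|_{H^s(\Omega)}$. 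Linearity is what you cannot get from Proposition~\ref{prop:dec_four_lip} alone; a pair $E, R$ with $R \circ E = \mathrm{id}$ does not give the minimal extension, and a non-minimal $E$ does not let you identify the two minimization problems.

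Once $E$ is in hand, the paper's Step 1 in the proof of Proposition~\ref{prop:kernel_eq} makes a further observation that your argument does not contain: the minimizer of $\hat f_n^{(2)}$ over $H^s_{\mathrm{per}}$ is \emph{automatically} of the form $E(g)$ for some $g \in H^s(\Omega)$, because for any competitor $f$, replacing $f$ by $E(f|_\Omega)$ leaves the data-fit term unchanged but weakly decreases the penalty (by minimality). This reduces problem $(2)$ to a minimization over $H^s(\Omega)$ with penalty $\lambda_n \|E(f)\|_{H^s_{\mathrm{per}}}^2 + \mu_n\|\mathscr{D}(f)\|_{L^2(\Omega)}^2$, which is then squarely in the scope of your norm-comparison argument. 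Without this reduction, you are comparing integral operators associated with kernels defined on different domains, and the Courant--Fischer comparison you sketch in your final step does not directly apply. In short: supply Lemma~\ref{lem:minimal_norm} and the Step-1 reduction, and your sketch becomes the paper's proof; as written, the reduction from problem $(2)$ to problems $(1)$ and $(3)$ is asserted rather than established.
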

The incorporation of a Sobolev regularization in the empirical risk function is needed to guarantee that $\hat f_n$ has good statistical properties. For example, even with the simplest PDEs, the minimizer of $\sum_{i=1}^n |f(X_i) -Y_i|^2 + \mu_n \| \mathscr{D}(f)\|_{L^2(\Omega)}^2$ might always be $0$, independently of the data points $(X_i, Y_i)$ \citep[see, e.g.,][Example 5.1]{doumèche2023convergence}.
A way to overcome these statistical issues is to specify the boundary conditions, and to consider regular differential operators $\mathscr{D}$ and smooth domain $\Omega$. For example, \citet{azzimonti2015blood},  \citet{arnone2022spatialRegression}, and \citet{ferraccioli2022some} consider models such that $f^\star|_{\partial \Omega} = 0$, where $\Omega$ is an Euclidean ball of $\mathbb{R}^d$ and $\mathscr{D}$ are second-order elliptic operators. However, these assumptions amount to adding a Sobolev penalty, since, in this case, $ \| \mathscr{D}(f)\|_{L^2(\Omega)} $ and $\|f\|_{H^2_0(\Omega)}$ are equivalent norms \citep[e.g.,][Chapter 6.3, Theorem~4]{evans2010partial}. 
Similar results hold for second order parabolic PDEs \citep[][Chapter 7.1, Theorem 5]{evans2010partial} and for second order hyperbolic PDEs \citep[][Chapter 7.2, Theorem 2]{evans2010partial}.
The need for a Sobolev regularization is explained by the fact that the  Sobolev embedding $H^s(\Omega) \hookrightarrow C^0(\Omega)$ only holds for $s>d/2$. 
In other words, the Sobolev regularization is needed to give a sense to the pointwise evaluations $|f(X_i) -Y_i|$. 
\label{rem:eq_reg}

\section{Application: speed-up effect of the physical penalty}
\label{sec:experiment}
Our objective is to apply the framework presented above to the case $d=1$, $\Omega = [-L, L]$, 
$s = 1$, $f^\star \in H^1(\Omega)$, and $\mathscr{D} = \frac{d}{dx}$.
Of course, assuming that $\mathscr{D}(f^\star) \simeq 0$ is a strong assumption, equivalent to assuming that $f^\star$ is approximately constant. However, the goal of this section is to provide a simple illustration where the kernel $K$ of Theorem \ref{thm:PDE_kernel} can be  analytically computed and
the eigenvalues of the operator $L_K$ can be effectively bounded.
The next result is a consequence of Proposition \ref{prop:kernel_characterization}.
\begin{prop}[One-dimensional kernel]
    Assume that $s = 1$, $\Omega = [-L, L]$, and $\mathscr{D} = \frac{d}{dx}$. Then, letting $\gamma_n = \sqrt{\frac{\lambda_n}{\lambda_n + \mu_n}}$, one has, for all $x,y \in [-L,L]$, 
    \begin{align*}
        K(x,y)& = \frac{\gamma_n}{2\lambda_n \sinh(2 \gamma_n L)}\Big( (\cosh(2\gamma_n L)+\cosh(2\gamma_n x))\cosh(\gamma_n (x-y))\\
        &\qquad \qquad + ((1-2 \times \mathbf{1}_{x > y})\sinh(2\gamma_n L) - \sinh(2 \gamma_n x)) \sinh(\gamma_n(x-y))\Big).
    \end{align*} 
    \label{prop:1dkernel}
\end{prop}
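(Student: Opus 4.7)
The plan is to apply the weak characterization of Proposition \ref{prop:kernel_characterization}. In the present setting $s=1$, $d=1$, $\Omega=[-L,L]$, $\mathscr{D}=d/dx$, that characterization reads: for every $\phi\in H^1_{\mathrm{per}}([-2L,2L])$ and every $x\in\Omega$,
\begin{equation*}
\lambda_n\int_{-2L}^{2L}K(x,y)\phi(y)\,dy+\lambda_n\int_{-2L}^{2L}\partial_y K(x,y)\phi'(y)\,dy+\mu_n\int_{-L}^{L}\partial_y K(x,y)\phi'(y)\,dy=\phi(x).
\end{equation*}
I would convert this into an ODE boundary-value problem for $y\mapsto K(x,y)$ on $[-2L,2L]$, solve it explicitly in hyperbolic functions, and then invoke the uniqueness built into Proposition \ref{prop:kernel_characterization}.

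\textbf{Extracting the strong form.} Testing against $\phi$ localized in different open subregions and integrating by parts yields four families of conditions. On $(-L,L)\setminus\{x\}$, both the $\lambda_n$ and the $\mu_n$ Laplacian contributions are active, giving the interior ODE $(\lambda_n+\mu_n)\partial_y^2 K=\lambda_n K$, i.e.\ $\partial_y^2 K=\gamma_n^2 K$. On the outer regions $(-2L,-L)\cup(L,2L)$, only the $\lambda_n$ term survives and one obtains $\partial_y^2 K=K$. The Dirac on the right-hand side becomes the jump condition $[\partial_y K]_{y=x}=-1/(\lambda_n+\mu_n)$. Finally, the boundary term of the $\mu_n$ integral at $y=\pm L$ produces the interface relations $\lambda_n\partial_y K(x,L^+)=(\lambda_n+\mu_n)\partial_y K(x,L^-)$ and its symmetric counterpart at $-L$, while membership in $H^1_{\mathrm{per}}([-2L,2L])$ forces $K(x,\cdot)$ to be continuous everywhere and $4L$-periodic with matching derivatives at $\pm 2L$.

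\textbf{Solving the boundary-value problem.} On the interior, I would use the reparametrization
\begin{equation*}
K(x,y)=P(x)\cosh(\gamma_n(y-x))+Q(x)\sinh(\gamma_n(y-x))-\frac{\gamma_n}{2\lambda_n}\sinh(\gamma_n|y-x|),
\end{equation*}
which automatically encodes the interior ODE and the prescribed jump at $y=x$, leaving only two unknowns $P(x),Q(x)$. Solving the outer ODE together with $4L$-periodicity and continuity at $\pm 2L$ eliminates the four outer constants in favour of the interface values $K(x,\pm L)$, yielding a Dirichlet-to-Neumann map that expresses $\partial_y K(x,\pm L^\pm)$ as explicit linear functions of $K(x,\pm L)$. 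Plugging these into the two interface relations at $\pm L$ gives a closed $2\times 2$ linear system for $(P(x),Q(x))$, whose resolution and algebraic simplification produce the formula stated in the proposition.

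\textbf{Main obstacle.} The principal difficulty is algebraic: the $2\times 2$ system assembles products of $\cosh(2\gamma_n L)$, $\sinh(2\gamma_n L)$, $\cosh(2\gamma_n x)$ and $\sinh(2\gamma_n x)$, and condensing the solution to the compact form of the proposition requires repeated use of hyperbolic addition identities such as $\cosh(2\gamma_n x)\cosh(\gamma_n(x-y))-\sinh(2\gamma_n x)\sinh(\gamma_n(x-y))=\cosh(\gamma_n(x+y))$, together with an explicit computation of $\sinh(2\gamma_n L)$ as the determinant of the Dirichlet-to-Neumann system. Once a candidate has been derived, the uniqueness provided by Proposition \ref{prop:kernel_characterization} reduces the conclusion to a direct verification that the proposed formula solves the weak formulation.
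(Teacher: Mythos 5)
Your plan derives a boundary-value problem directly for the kernel $K$ of Theorem~\ref{thm:PDE_kernel} on the \emph{periodic} space $H^1_{\mathrm{per}}([-2L,2L])$, using Proposition~\ref{prop:kernel_characterization} verbatim. This is coherent as far as it goes, and the strong form you extract is correct for that kernel: interior ODE $\partial_y^2 K=\gamma_n^2 K$ on $(-L,L)\setminus\{x\}$, outer ODE $\partial_y^2 K=K$ on $(-2L,-L)\cup(L,2L)$, the jump $[\partial_y K]_{y=x}=-1/(\lambda_n+\mu_n)$, the interface relations $\lambda_n\,\partial_y K(x,L^+)=(\lambda_n+\mu_n)\,\partial_y K(x,L^-)$ at $\pm L$ (compare Proposition~\ref{prop:raccord1d}), and $4L$-periodicity.

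However, that BVP does \emph{not} have the proposition's formula as its solution, so carrying out your plan would not prove the statement. The stated expression satisfies the Neumann conditions $\partial_y K(x,\pm L)=0$: one checks that the interface relations plus $\partial_y K(x,L^-)=0$ would force the outer solution to vanish identically, hence $K(x,\pm L)=0$, whereas the stated formula gives $K(x,L)=\gamma_n\cosh(\gamma_n(L+x))/\bigl(\lambda_n\sinh(2\gamma_n L)\bigr)\neq 0$. The kernel in the proposition is therefore \emph{not} the $H^1_{\mathrm{per}}([-2L,2L])$ kernel restricted to $[-L,L]^2$; it is a different kernel. The paper's proof first invokes Theorem~\ref{thm:eq_reg} and Proposition~\ref{prop:kernel_eq} to justify replacing $K^{\mathrm{per}}$ by the kernel of the RKHS $\bigl(H^1([-L,L]),\,\lambda_n\|\cdot\|_{H^1([-L,L])}^2+\mu_n\|\tfrac{d}{dx}\cdot\|_{L^2([-L,L])}^2\bigr)$, which carries the same convergence-rate analysis. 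For that kernel the weak formulation is $\lambda_n\int_{-L}^L K(x,\cdot)\phi+(\lambda_n+\mu_n)\int_{-L}^L \partial_y K(x,\cdot)\,\phi'=\phi(x)$ over $\phi\in H^1([-L,L])$, giving a single ODE on $[-L,L]$ with Neumann boundary conditions and the Dirac jump, a $4\times 4$ linear system rather than the $6\times 6$ periodic one. Your proposal is missing this reduction step, and without it the BVP you set up has a different (more complicated) solution. You would need either to add the appeal to Theorem~\ref{thm:eq_reg}/Proposition~\ref{prop:kernel_eq} before writing down the boundary-value problem, or to solve your periodic system and observe that the result is not the claimed $K$ but a kernel with equivalent effective dimension.
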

An example of kernel $K$ with $L=1$ and $\lambda_n=\mu_n=1$ is shown in Figure \ref{fig:2Dkernel}. Following the strategy of Section \ref{sec:bounds}, it remains to bound the positive eigenvalues $a_m$ of the operator $C\mathscr{O}_nC$ using Theorem~\ref{prop:eigenfunction}. According to the latter, this is achieved by solving the weak formulation
\begin{equation*}
        \forall \phi \in H^1_{\mathrm{per}}([-2L,2L]), \quad \lambda_n \int_{[-2L, 2L]^d}w_m\phi + (\lambda_n +\mu_n) \int_{\Omega}\frac{d}{dx}w_m\;\frac{d}{dx}\phi
    = a_m^{-1} \int_{\Omega} w_m \phi.
    \end{equation*} 
\begin{prop}[One-dimensional eigenvalues]
    Assume that $s = 1$, $\Omega = [-L, L]$, and $\mathscr{D} = \frac{d}{dx}$. Then, for all $m \geqslant 3$, 
    \[ \frac{4L^2}{(\lambda_n + \mu_n)(m+4)^2 \pi^2} \leqslant a_m \leqslant \frac{4L^2}{(\lambda_n + \mu_n)(m-2)^2 \pi^2},\]
    where $a_m$ are the eigenvalues of $C\mathscr{O}_nC$.
    \label{prop:1ddiff}
\end{prop}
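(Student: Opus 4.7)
My approach is to reduce the generalized eigenvalue problem given by the weak formulation displayed just above the proposition to a classical Dirichlet Sturm--Liouville problem on $\Omega$, solve it explicitly, and then extract the stated two-sided bounds by elementary algebra.

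The first step is to pin down the behavior of $w_m$ outside $\Omega$. I would test the weak formulation against functions $\phi \in C_c^\infty((L, 2L))$, extended by zero to a bona fide element of $H^1_{\mathrm{per}}([-2L, 2L])$. Since $\phi$ and $\phi'$ both vanish on $\Omega$, the identity collapses to $\lambda_n \int_{(L, 2L)} w_m \phi = 0$, and a density argument forces $w_m \equiv 0$ on $(L, 2L)$. The same reasoning gives $w_m \equiv 0$ on $(-2L, -L)$, and the Sobolev embedding $H^1 \hookrightarrow C^0$ then yields $w_m(\pm L) = 0$.

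Next, testing against $\phi \in C_c^\infty(\Omega)$ and integrating by parts produces the strong equation $-(\lambda_n + \mu_n) w_m'' + \lambda_n w_m = a_m^{-1} w_m$ on $\Omega$ with Dirichlet boundary data $w_m(\pm L) = 0$. The spectrum of this textbook problem is given by $w_m(x) = \sin(m \pi (x + L)/(2L))$, $m \geq 1$, with eigenvalues $a_m^{-1} = \lambda_n + (\lambda_n + \mu_n) m^2 \pi^2/(4L^2)$, and extending each $w_m$ by zero outside $\Omega$ yields a valid element of $H^1_{\mathrm{per}}([-2L, 2L])$ thanks to the Dirichlet condition. Writing $a_m = 4L^2 / [4L^2 \lambda_n + (\lambda_n + \mu_n) m^2 \pi^2]$, the upper bound $a_m \leq 4L^2/[(\lambda_n + \mu_n)(m-2)^2 \pi^2]$ follows by dropping $4L^2 \lambda_n \geq 0$ and using $m^2 \geq (m-2)^2$ for $m \geq 3$; the lower bound $a_m \geq 4L^2/[(\lambda_n + \mu_n)(m+4)^2 \pi^2]$ is equivalent to $4L^2 \lambda_n \leq 8(m+2)(\lambda_n + \mu_n) \pi^2$, which, using $\lambda_n \leq \lambda_n + \mu_n$, reduces to the mild condition $L^2 \leq 2(m+2)\pi^2$, comfortably satisfied for $L \leq \pi\sqrt{10}$ and $m \geq 3$, and automatic in the paper's asymptotic regime $\lambda_n/\mu_n \to 0$.

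The most delicate point, in my view, is the first step: one must verify carefully that the precise form of the weak formulation used here (with the derivative penalty restricted to $\Omega$ rather than extended over all of $[-2L, 2L]$ as in the general Theorem \ref{prop:eigenfunction}) really does force a Dirichlet constraint at $\pm L$, rather than a transmission condition requiring $w_m$ to extend as a hyperbolic function on $[-2L, 2L] \setminus \bar\Omega$. Once this reduction to a pure Dirichlet problem is secured, the remainder is standard Sturm--Liouville separation of variables followed by routine arithmetic.
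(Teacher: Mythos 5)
Your first step is the one you flagged as delicate, and it is in fact where the argument breaks down. The correct weak formulation associated with $C\mathscr{O}_nC$ is \eqref{eq:weak_pde} (equivalently, Proposition \ref{prop:diff_op}): the Sobolev penalty is the \emph{full} $H^1_{\mathrm{per}}([-2L,2L])$ inner product, so for $s=1$ it reads
\begin{equation*}
\lambda_n \int_{[-2L,2L]}\bigl(w_m\phi + w_m'\phi'\bigr) + \mu_n \int_{\Omega} w_m'\phi' = a_m^{-1}\int_\Omega w_m\phi,
\end{equation*}
which retains the term $\lambda_n\int_{[-2L,2L]\setminus\Omega}w_m'\phi'$. (The display just above the proposition in the main text omits this term; that is a typo, as one sees by comparing with \eqref{eq:weak_pde} or with the proof in the appendix.) Testing against $\phi\in C_c^\infty((L,2L))$ then yields $\lambda_n\int_{(L,2L)}(w_m\phi+w_m'\phi')=0$, i.e., the ODE $-w_m''+w_m=0$ on $(L,2L)$, whose solutions are hyperbolic combinations, \emph{not} the pointwise constraint $w_m\equiv 0$. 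Consequently the boundary data at $\pm L$ are transmission conditions linking $w_m'$ on the two sides (this is precisely Proposition \ref{prop:raccord1d}), not Dirichlet conditions.

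Even granting your premise, the Dirichlet reduction is internally inconsistent. If $w_m=\sin\bigl(m\pi(x+L)/(2L)\bigr)\mathbf{1}_\Omega$, then integrating by parts in the displayed (typo'd) weak formulation against a general test function $\phi\in H^1_{\mathrm{per}}$ leaves the uncancelled boundary term $(\lambda_n+\mu_n)\bigl[w_m'\phi\bigr]_{-L}^{L}$, and $w_m'(\pm L)\neq 0$ for every Dirichlet sine mode. So the extended sine functions do not solve the weak formulation; one would have to impose Dirichlet \emph{and} Neumann conditions simultaneously, which over-determines the ODE and kills all nontrivial solutions. In other words the typo'd form of the problem is not even well posed (the associated bilinear form fails to be coercive on $H^1_{\mathrm{per}}([-2L,2L])$ because it does not control $\int_{[-2L,2L]\setminus\Omega}(u')^2$).

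The paper's proof proceeds quite differently: it exploits the reflection symmetry of the operator (Proposition \ref{prop:sym}) to reduce to symmetric and antisymmetric modes, solves $-w_m''+w_m=0$ outside $\Omega$ under the $4L$-periodic smooth matching at $\pm 2L$ (giving $\cosh$/$\sinh$ profiles there), imposes the transmission conditions of Proposition \ref{prop:raccord1d} at $\pm L$, and derives a transcendental quantization condition of the type $\xi L\tan(\xi L)=\mathrm{const}$ (resp.\ $\cot$), localizing each $\xi$ to an interval of length $\pi/L$. Your Dirichlet eigenvalues happen to land in the same Weyl regime, so your final arithmetic produces numerically similar bounds, but the eigenfunctions, boundary conditions, and quantization rule are all wrong; the proof does not go through.
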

Using inequality \eqref{eq:eff_dim_vp}, we can then bound the effective dimension of the kernel. This allows us, via Theorem \ref{thm:boundexp}, to specify the convergence rate of $\hat f_n$ to $f^{\star}$.
\begin{theorem}[Kernel speed-up] 
   Assume that $f^\star \in H^1([-L,L])$, $\frac{d\mathbb{P}_X}{dx} \leqslant \kappa$ for some $\kappa > 0$, and the noise $\varepsilon$ satisfies the sub-Gamma condition \eqref{eq:conditionNoise}.
   Let $\lambda_n = n^{-1} \log(n)$ and 
    \begin{equation*}
        \mu_n = \left\{ 
    \begin{array}{llll}
        n^{-2/3}/ \|\mathscr{D}(f^\star)\|_{L^2(\Omega)} && \mathrm{ if}  &\|\mathscr{D}(f^\star)\|_{L^2(\Omega)} \neq 0 \\
        1/\log(n) && \mathrm{ if}  &\|\mathscr{D}(f^\star)\|_{L^2(\Omega)} = 0.  \\
    \end{array}\right.
    \end{equation*} 
    Then the estimator $\hat f_n$ of $f^\star$ minimizing the empirical risk function \eqref{eq:risk_function} with $s=1$ and $\mathscr{D} = \frac{d}{dx}$ satisfies
    \begin{align*}
        \mathbb{E}\int_{[-L,L]} |\hat f_n-f^\star|^2 d{\mathbb P}_X &= \|\mathscr{D}(f^\star)\|_{L^2(\Omega)}\;\mathcal{O}_n \big(  n^{-2/3}\log^3(n)\big) \\
        & \quad+ (\|f^\star\|_{H^1(\Omega)}^2 + \sigma^2 + M^2)\mathcal{O}_n \big( n^{-1} \log^3(n)\big) .
    \end{align*}
    \label{prop:kernel_speed_up}
\end{theorem}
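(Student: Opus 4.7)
The plan is to apply Theorem~\ref{thm:boundexp} after bounding the effective dimension $\mathscr{N}(\lambda_n,\mu_n)$ via Proposition~\ref{prop:1ddiff} combined with inequality~\eqref{eq:eff_dim_vp}. The computation is essentially substitution and term-matching, so the real work is producing a sharp enough estimate of $\mathscr{N}(\lambda_n,\mu_n)$ and then fitting every resulting piece into the announced two-term decomposition.

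First I would control $\mathscr{N}(\lambda_n,\mu_n)$. Proposition~\ref{prop:1ddiff} gives $a_m(C\mathscr{O}_nC)\leqslant 4L^2/((\lambda_n+\mu_n)(m-2)^2\pi^2)$ for $m\geqslant 3$, so \eqref{eq:eff_dim_vp} yields
\[
\mathscr{N}(\lambda_n,\mu_n)\;\leqslant\; 2 + \sum_{m\geqslant 3}\frac{1}{1+c_0(\lambda_n+\mu_n)(m-2)^2}, \qquad c_0=\frac{\pi^2}{4L^2\kappa}.
\]
Comparing the tail sum to $\int_0^\infty du/(1+c_0(\lambda_n+\mu_n)u^2)=\pi/(2\sqrt{c_0(\lambda_n+\mu_n)})$ shows that $\mathscr{N}(\lambda_n,\mu_n)=\mathcal{O}\big((\lambda_n+\mu_n)^{-1/2}\big)$. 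With this bound I can verify the hypotheses of Theorem~\ref{thm:boundexp}: $\lambda_n=n^{-1}\log n\geqslant n^{-1}$, both $\lambda_n,\mu_n\to 0$, and $\lambda_n/\mu_n\to 0$ in either case (equal to $n^{-1/3}\log(n)\|\mathscr{D}(f^\star)\|$ or $n^{-1}\log^2 n$), while the effective-dimension requirement is dealt with using the bound just obtained.

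Next I would plug the prescribed $\lambda_n,\mu_n$ into the four terms of Theorem~\ref{thm:boundexp}. In the case $\|\mathscr{D}(f^\star)\|_{L^2(\Omega)}\neq 0$, the choice $\mu_n=n^{-2/3}/\|\mathscr{D}(f^\star)\|_{L^2(\Omega)}$ dominates $\lambda_n$, hence $\mathscr{N}(\lambda_n,\mu_n)=\mathcal{O}\big(\sqrt{\|\mathscr{D}(f^\star)\|_{L^2(\Omega)}}\,n^{1/3}\big)$, and the four terms become of order $n^{-1}\log(n)\|f^\star\|^2_{H^1(\Omega)}$, $\|\mathscr{D}(f^\star)\|_{L^2(\Omega)}\,n^{-2/3}$, $M^2/n$, and $\sigma^2\sqrt{\|\mathscr{D}(f^\star)\|_{L^2(\Omega)}}\,n^{-2/3}$. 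The elementary inequality $\sqrt{a}\leqslant (1+a)/2$ splits the last one into contributions absorbed either by the $\|\mathscr{D}(f^\star)\|_{L^2(\Omega)}\,\mathcal{O}(n^{-2/3}\log^3 n)$ group or by the $\sigma^2\,\mathcal{O}(n^{-1}\log^3 n)$ group. In the case $\|\mathscr{D}(f^\star)\|_{L^2(\Omega)}=0$, the choice $\mu_n=1/\log n$ gives $\lambda_n+\mu_n\sim 1/\log n$, so $\mathscr{N}(\lambda_n,\mu_n)=\mathcal{O}(\sqrt{\log n})$, and every remaining term collapses to $\mathcal{O}(n^{-1}\log^3 n)$ once the prefactor $\log^2 n$ of Theorem~\ref{thm:boundexp} is included.

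I expect the main obstacle to be the bookkeeping in the noise-variance term in Case~1: the contribution $\sigma^2\mathscr{N}(\lambda_n,\mu_n)/n$ scales as $\sigma^2\sqrt{\|\mathscr{D}(f^\star)\|_{L^2(\Omega)}}\,n^{-2/3}$, and reshaping this into the two-term form of the statement requires the Young-type inequality above and a careful tracking of constants. All other pieces are routine substitutions, and the proof ends by combining everything into the two announced $\mathcal{O}$ groups.
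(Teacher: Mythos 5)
Your overall architecture — bound $\mathscr{N}(\lambda_n,\mu_n)$ from Proposition~\ref{prop:1ddiff} and inequality~\eqref{eq:eff_dim_vp}, then plug into Theorem~\ref{thm:boundexp} — is exactly what the paper's one-line proof gestures at, and your estimate $\mathscr{N}(\lambda_n,\mu_n)=\mathcal{O}\big((\lambda_n+\mu_n)^{-1/2}\big)$ is correct (it is identity~\eqref{eq:bornepoly} with $b=1/2$). However, two pieces of your bookkeeping do not hold up.

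First, the hypothesis $\mathscr{N}(\lambda_n,\mu_n)\,\lambda_n^{-1}=o_n(n)$ of Theorem~\ref{thm:boundexp} is \emph{not} ``dealt with'' by the bound you derived; it is actually contradicted by it in the case $\|\mathscr{D}(f^\star)\|_{L^2(\Omega)}\neq 0$. You have $\lambda_n^{-1}=n/\log n$ and $\mathscr{N}(\lambda_n,\mu_n)=\mathcal{O}\big(\mu_n^{-1/2}\big)=\mathcal{O}\big(\sqrt{\|\mathscr{D}(f^\star)\|_{L^2(\Omega)}}\,n^{1/3}\big)$, so the best available bound gives
\[
\mathscr{N}(\lambda_n,\mu_n)\,\lambda_n^{-1}=\mathcal{O}\Big(\sqrt{\|\mathscr{D}(f^\star)\|_{L^2(\Omega)}}\,\tfrac{n^{4/3}}{\log n}\Big),
\]
which grows \emph{faster} than $n$. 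The alternative bound via $\lambda_n^{-1/2}$ fares no better, giving $(n/\log n)^{3/2}$. Since the only control on $\mathscr{N}$ available is this upper bound, the hypothesis cannot be verified, and Theorem~\ref{thm:boundexp} cannot be invoked as stated. (This concern is present, silently, in the paper's own two-line proof as well, but a filled-in proof must address it.)

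Second, even granting the application of Theorem~\ref{thm:boundexp}, the Young-type split $\sqrt{a}\leqslant(1+a)/2$ does not produce the two announced groups. Writing the noise contribution as $\sigma^2\sqrt{\|\mathscr{D}(f^\star)\|_{L^2(\Omega)}}\,n^{-2/3}\log^2 n$, your split yields a residual $\tfrac{\sigma^2}{2}\,n^{-2/3}\log^2 n$ that carries no factor of $\|\mathscr{D}(f^\star)\|_{L^2(\Omega)}$ and is \emph{not} $\mathcal{O}(n^{-1}\log^3 n)$ (the ratio $n^{-2/3}\log^2 n / (n^{-1}\log^3 n)=n^{1/3}/\log n\to\infty$). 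It therefore fits neither into the $\|\mathscr{D}(f^\star)\|_{L^2(\Omega)}\,\mathcal{O}(n^{-2/3}\log^3 n)$ group (wrong factor) nor into the $(\cdots+\sigma^2+\cdots)\,\mathcal{O}(n^{-1}\log^3 n)$ group (wrong rate). The $\sqrt{\|\mathscr{D}(f^\star)\|_{L^2(\Omega)}}$ dependence of the noise term is genuine and cannot be massaged into a linear prefactor by an inequality of this type; any honest statement of the bound would either carry a $\sqrt{\|\mathscr{D}(f^\star)\|_{L^2(\Omega)}}$ factor somewhere, or require $\|\mathscr{D}(f^\star)\|_{L^2(\Omega)}$ to be bounded below.
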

\begin{wrapfigure}[16]{r}{0.35\textwidth}
    \vspace{-1.5em}
    \centering
    \includegraphics[width=0.35\textwidth]{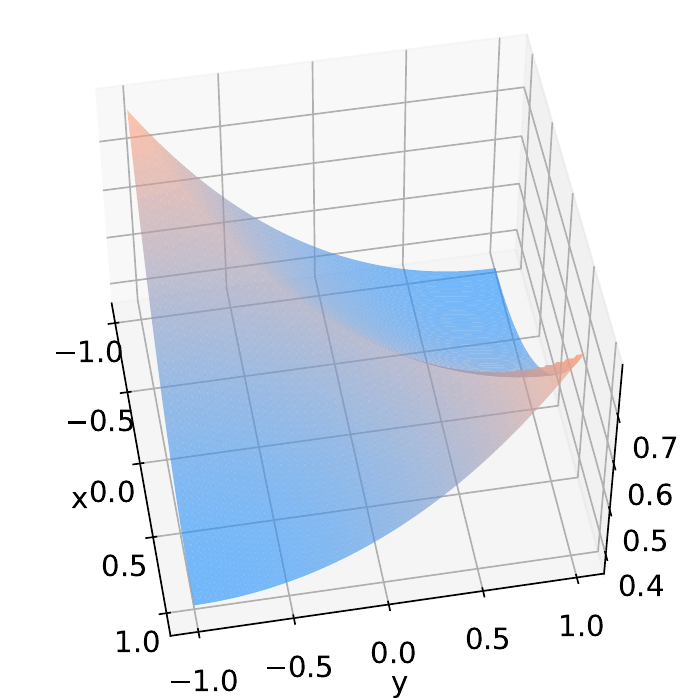}
    \caption{Kernel $K$ of Proposition \ref{prop:1dkernel} with $L=1$, $\lambda_n = \mu_n =1$.}
     \label{fig:2Dkernel}
 \end{wrapfigure}
This bound reflects the benefit of the physical penalty $\|\mathscr{D}(f^\star)\|_{L^2(\Omega)}$ on the performance of the estimator $\hat f_n$. 
Indeed, when $\|\mathscr{D}(f^\star)\|_{L^2(\Omega)} = 0$ (i.e., the physical model is perfect), then $f^\star$ is a constant function, and the PIML method recovers the parametric convergence rate of $n^{-1}$. Here, the physical information directly improves the convergence rate. 
Otherwise, when $\|\mathscr{D}(f^\star)\|_{L^2(\Omega)}>0$, we recover the Sobolev minimax convergence rate in $H^1(\Omega)$ of $n^{-2/3}$ \citep[up to a log factor---see][Theorem 2.11]{tsybakov2009introduction}. 
We emphasize that this rate is also optimal for our problem, since $\|\mathscr{D}(f^\star)\|_{L^2(\Omega)} \leqslant \|f^\star\|_{H^1(\Omega)}$, i.e., it is as hard to learn a function of bounded $\|\mathscr{D}(\cdot)\|_{L^2(\Omega)}$ norm as it is to learn a function of bounded $H^1(\Omega)$ norm. 
In this case, the benefit of physical modeling is carried by the constant $\|\mathscr{D}(f^\star)\|_{L^2(\Omega)}$ in front of the convergence rate, i.e., the better the modeling, the smaller the estimation error. Note however that the parameter $\mu_n$ in Theorem \ref{prop:kernel_speed_up} depends on the unknown physical inconsistency $\|\mathscr{D}(f^\star)\|_{L^2(\Omega)}$. In practice, on may resort to a cross-validation-type strategy to estimate $\mu_n$.

We conclude this section with a small numerical experiment\footnote{The code to reproduce all numerical experiments can be found \href{https://github.com/NathanDoumeche/PIML_as_a_kernel_method}{here}.} illustrating Theorem \ref{prop:kernel_speed_up}. We consider two  problems: a perfect modeling situation  where $Y = 1 + \varepsilon$, and an imperfect modeling one where $Y = 1 + 0.1 |X| + \varepsilon$. In both cases, $X \sim \mathscr{U}([-1,1])$ and $\varepsilon \sim \mathcal{N}(0, 1)$. The difference is that in the perfect modeling case, $\mathscr{D}(f^\star) = 0$, whereas in the imperfect situation $\|\mathscr{D}(f^\star)\|_{L^2([-1,1])}^2 = 2/300$. For each $n$, we let $\mathrm{err}(n) = \mathbb{E}\int_\Omega |\hat f_n-f^\star|^2 d{\mathbb P}_X$.
Figure~\ref{fig:numerical_experiment} shows the values of 
$\log(\mathrm{err})(n)$ as a function of $\log(n)$, for $n$ ranging from $10$ to $10000$ (the quantity $\log(\mathrm{err})(n)$ is estimated by an empirical mean over 500-sample Monte Carlo estimations, repeated ten times). 
The experimental convergence rates obtained by fitting linear regressions are $-1.02$ in the perfect modeling case and $-0.77$ in the imperfect one. These experimental rates are consistent with the results of Theorem~\ref{prop:kernel_speed_up}, insofar as
$-1.02 \leqslant -1$ and $-0.77 \leqslant -2/3$. 
\begin{figure}
    \centering
    \includegraphics[width=0.49\textwidth]{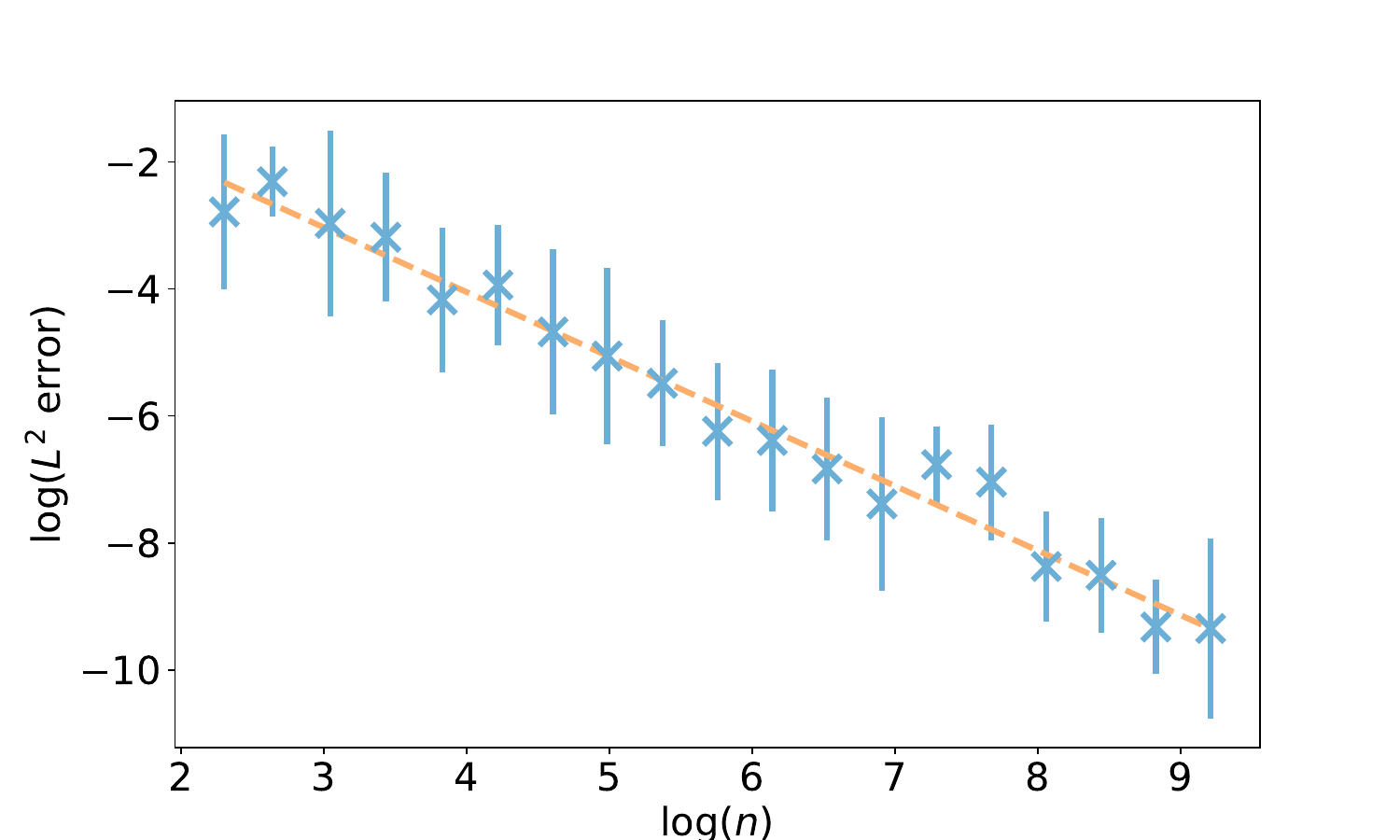}
    \includegraphics[width=0.49\textwidth]{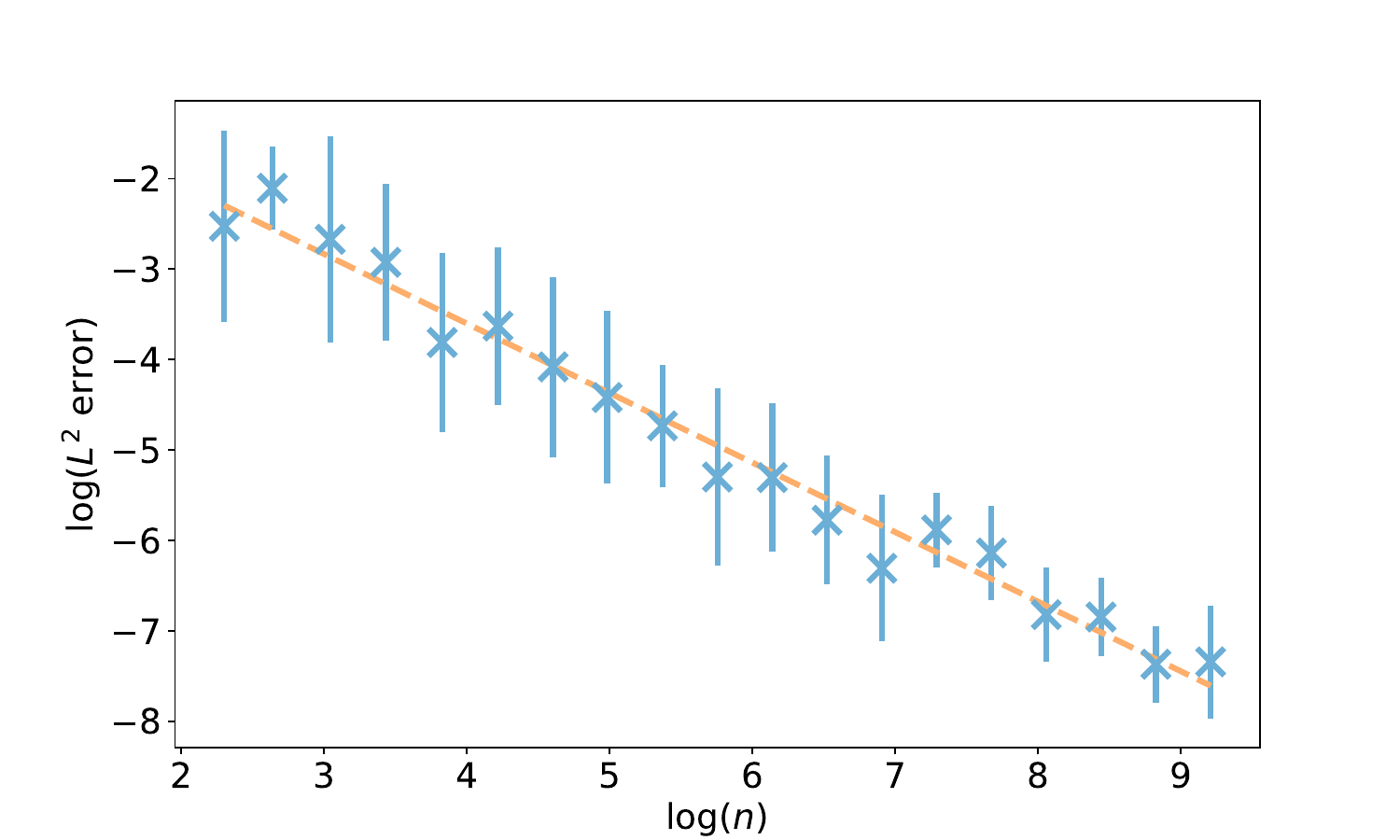}
    \caption{
    Error bounds $\mathrm{err}(n)$ (mean $\pm$ std over 10 runs) of the kernel estimator $\hat{f}_n$ with respect to the sample size $n$, in log-log scale, for the perfect modeling case (left) and the imperfect one (right). 
    The experimental convergence rates, obtained by fitting a linear regression, are displayed in orange dotted. 
    }
    \label{fig:numerical_experiment}
\end{figure}
\section{Conclusion}
From the physics-informed machine learning point of view, we have shown that minimizing the empirical risk regularized by a PDE can be viewed as a kernel method. Leveraging kernel theory, we have explained how to derive convergence rates. In particular, the simple but instructive example $\mathscr{D}=\frac{d}{dx}$ illustrates how to compute both the kernel and the convergence rate of the associated estimator. To the best of our knowledge, this is the first contribution that demonstrates tangible improvements in convergence rates by including a physical penalty in the risk function. 
Thus, the take-home message is that physical information can be beneficial to the statistical performance of the estimators. Note that our work does not include boundary conditions $h$, but they could easily be considered. A first solution is to add another penalty to $R_n$ of the form $\|h-f\|^2_{L^2(\partial \Omega)}$, which would insert the extra term $\int_{\partial \Omega} (K(x,\cdot)-h)\phi$ in Proposition~\ref{prop:kernel_characterization}. 
A second solution is to enforce the conditions at new data points $X^{(b)}_j$ sampled on $\partial \Omega$ \citep[as done, for example, in][]{raissi2019PINN}. Our theorems hold for this extended training sample, provided that $f^\star_{|\partial \Omega}=h$.

An important future research direction is to implement numerical strategies for computing the kernel $K$ in the general case. If successful, such strategies can then be used directly to solve general physics-informed machine learning problems. In order to derive theoretical guarantees, we need to go further by obtaining bounds on the eigenvalues of the operator associated with the problem. The key lies in Theorem \ref{prop:eigenfunction}, which characterizes the eigenvalues by a weak formulation. Once established, such bounds can be employed to obtain accurate rates for related techniques, typically physics-informed neural networks. It would also be interesting to derive rates of convergence in the setting $s\leqslant d/2$ using the so-called source condition \citep[e.g.,][]{blanchard2020kernel}. An even more ambitious goal is to generalize the approach to nonlinear differential systems, for example polynomial. Overall, we believe that our results pave the way for a deeper understanding of the impact of physical regularization on empirical risk minimization performance.
\bibliography{biblio}

\newpage
\noindent
    Supplement to \textit{Physics-informed machine learning as a kernel method}

\tableofcontents
\appendix
\section{Some fundamentals of functional analysis}
\label{sec:appA}
\subsection{Sobolev spaces}
\paragraph{Norms.} The $p$ norm $\|x\|_p$ of a $d$-dimensional vector $x = (x_1,\hdots, x_{d})$ is defined by $ \|x\|_p = (\frac{1}{d}\sum_{i=1}^{d} |x_i|^p)^{1/p}$.
For a function $f : \Omega \rightarrow \mathbb{R}$, we let $ \|f\|_{L^p(\Omega)} = (\frac{1}{|\Omega|}\int_\Omega |f|^p)^{1/p}$. Similarly, $\|f\|_{\infty, \Omega} = \sup_{x \in \Omega} |f(x)|$. For the sake of conciseness, we sometimes write $\|f\|_{\infty}$ instead of~$\|f\|_{\infty, \Omega}$.

\paragraph{Multi-indices and partial derivatives.} For a multi-index $\alpha = (\alpha_1, \hdots, \alpha_{d}) \in \mathbb{N}^{d}$ and a differentiable function $f:\mathbb R^{d}\to \mathbb R$, the $\alpha$ partial derivative of $f$ is defined by \[\partial^\alpha f = (\partial_{1})^{\alpha_1}\hdots (\partial_{d})^{\alpha_{d}} f.\] The set of multi-indices of sum less than $k$ is defined by \[\{|\alpha|\leqslant k\} = \{(\alpha_1, \hdots, \alpha_{d_1}) \in \mathbb{N}^{d}, \alpha_1 + \cdots +\alpha_{d_1} \leqslant k\}.\] If $\alpha = 0$, $\partial^\alpha f = f$. Given two multi-indices $\alpha$ and $\beta$, we write $\alpha \leqslant \beta$ when $\alpha_i \leqslant \beta_i$ for all $1\leqslant i \leqslant d$. 
The set of multi-indices less than $\alpha$ is denoted by $\{\beta \leqslant \alpha\}$. For a multi-index $\alpha$ such that $|\alpha|\leqslant k$, both sets $\{|\beta|\leqslant k\}$ and $\{\beta \leqslant \alpha\}$ are contained in $\{0, \hdots, k\}^{d}$ and are therefore finite.

\paragraph{Hölder norm.} For $K \in \mathbb{N}$, the Hölder norm of order $K$ of a function $f \in C^K(\Omega, \mathbb{R})$ is defined by $\|f\|_{C^K(\Omega)} = \max_{|\alpha|\leqslant K} \|\partial^\alpha f\|_{\infty, \Omega}$. 
This norm allows to bound a function as well as its derivatives. 
The space $C^K(\Omega, \mathbb{R})$ endowed with the Hölder norm $\|\cdot\|_{C^K(\Omega)}$ is a Banach space. 
The space $C^\infty(\bar{\Omega}, \mathbb{R}^{d_2})$ is defined as the subspace of continuous functions $f:\bar{\Omega} \to \mathbb{R}$ satisfying $f|_\Omega \in C^\infty(\Omega, \mathbb{R})$ and, for all $K\in \mathbb{N}$, $\|f\|_{C^K(\Omega)} < \infty$.

\paragraph{Lipschitz function.} Given a normed space $(V, \|\cdot\|)$, the Lipschitz norm of a function $f : V \rightarrow \mathbb{R}^{d}$ is defined by 
\[\|f\|_{\text{Lip}} = \sup_{x,y \in V}\frac{\|f(x)-f(y)\|_2}{\|x-y\|}.\] A function $f$ is Lipschitz if $\|f\|_{\mathrm{Lip}}<\infty$. The mean value theorem implies that for all $f \in C^1(V, \mathbb{R})$, $\|f\|_{\text{Lip}} \leqslant \|f\|_{C^1(V)}$.

\paragraph{Lipschitz surface  and domain.} A surface $\Gamma \subseteq \mathbb{R}^{d}$ is said to be Lipschitz if locally, in a neighborhood $U(x)$ of any point $x \in \Gamma$, an appropriate rotation $r_x$ of the coordinate system transforms $\Gamma$ into the graph of a Lipschitz function $\phi_{x}$, i.e., 
\[r_x(\Gamma \cap U(x)) = \{(x_1, \hdots, x_{d - 1}, \phi_x(x_1, \hdots, x_{d - 1})), \forall (x_1, \hdots, x_d)\in r_x(\Gamma \cap U_x)\}.\]
A domain $\Omega \subseteq \mathbb{R}^{d}$ is said to be Lipschitz if its has Lipschitz boundary and lies on one side of it, i.e., $\phi_x < 0$ or $\phi_x > 0$ on all intersections $\Omega \cap U_x$. All manifolds with $C^1$ boundary and all convex domains are Lipschitz domains \citep[e.g.,][]{agranovich2015lispchitz}.

\paragraph{Sobolev spaces.} Let $\Omega \subseteq \mathbb{R}^{d}$ be an open set. A function $g \in L^2(\Omega, \mathbb{R})$ is said to be the $\alpha$th weak derivative of  $f \in L^2(\Omega, \mathbb{R})$ if, for all $\phi \in C^\infty(\bar{\Omega}, \mathbb{R})$ with compact support in $\Omega$, one has
$\int_\Omega g \phi = (-1)^{|\alpha|} \int_\Omega f \partial^\alpha \phi$. 
This is denoted by $g = \partial^\alpha f$. For $s \in\mathbb{N}$, the Sobolev space $H^s(\Omega)$ is the space of all functions $f \in L^2(\Omega, \mathbb{R})$ such that $\partial^\alpha f$ exists for all $|\alpha|\leqslant s$. This space is naturally endowed with the norm \[\|f\|_{H^s(\Omega)} = \Big(\sum_{|\alpha|\leqslant s} \|\partial^\alpha u\|_{L^2(\Omega)} ^2\Big)^{1/2}.\] Of course, if a function $f$ belongs to the Hölder space $C^K(\bar \Omega, \mathbb{R})$, then it belongs to the Sobolev space $H^K(\Omega)$, and its weak derivatives are the usual derivatives. For more on Sobolev spaces, we refer the reader to \citet[Chapter 5]{evans2010partial}.

\paragraph{Fundamental results on Sobolev spaces.} Let $\Omega \subseteq \mathbb{R}^{d}$ be an open set and let $s \in \mathbb{N}$ be an order of differentiation. It is not straightforward to extend a function $f \in H^s(\Omega)$ to a function $\tilde{f} \in H^s(\mathbb{R}^{d})$ such that 
\[\tilde{f}|_\Omega = f|_\Omega \quad \text{and} \quad \|\tilde{f}\|_{H^s(\mathbb{R}^{d})} \leqslant C_\Omega \|f\|_{H^s(\Omega)},\]
for some constant $C_\Omega$ independent of $f$. This result is known as the extension theorem in \citet[][Chapter 5.4]{evans2010partial} when $\Omega$ is a manifold with $C^1$ boundary. 
However, the simplest domains in PDEs take the form $]0,L[^3\times ]0,T[$, the boundary of which is not $C^1$. Fortunately, \citet[][Theorem 5, Chapter VI.3.3]{stein1970lipschitz} provides an extension theorem for bounded Lipschitz domains. The following two theorems are proved in  \citet{doumèche2023convergence}.
\begin{theorem}[Sobolev inequalities]
    \label{thm:sobIneq}
    Let $\Omega \subseteq \mathbb{R}^{d}$ be a bounded Lipschitz domain and let $s \in \mathbb{N}$. If $s  > d_1/2$, then there is an operator $\tilde \Pi : H^{s}(\Omega) \to C^0(\Omega, \mathbb{R})$ such that, for all $f \in H^{s}(\Omega)$, $\tilde \Pi(f) = f$ almost everywhere. Moreover, there is a constant $C_\Omega >0$, depending only on $\Omega$, such that $\|\tilde \Pi(f)\|_{\infty, \Omega} \leqslant C_\Omega \|f\|_{H^{s}(\Omega)}.$
\end{theorem}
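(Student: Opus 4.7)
The plan is to combine two well-known ingredients: a Stein-type extension from the bounded Lipschitz domain $\Omega$ to all of $\mathbb{R}^d$, and the classical Sobolev embedding $H^s(\mathbb{R}^d)\hookrightarrow C^0_b(\mathbb{R}^d)$ for $s>d/2$ obtained via the Fourier transform. The composition of these two operations will yield the desired continuous representative $\tilde\Pi(f)$ together with the claimed norm inequality.

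First, I would invoke the Stein extension theorem cited in the text (Stein 1970, Ch.~VI.3.3, Thm.~5). Because $\Omega$ is a bounded Lipschitz domain, there exists a bounded linear operator $E:H^s(\Omega)\to H^s(\mathbb{R}^d)$ such that $(Ef)|_\Omega=f$ almost everywhere and $\|Ef\|_{H^s(\mathbb{R}^d)}\leqslant C_1(\Omega)\,\|f\|_{H^s(\Omega)}$, with $C_1(\Omega)$ depending only on $\Omega$. This reduces the problem to proving the claim on $\mathbb{R}^d$, where Fourier analysis is available.

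Next, I would establish the Sobolev embedding on $\mathbb{R}^d$: if $g\in H^s(\mathbb{R}^d)$ with $s>d/2$, then $g$ admits a continuous bounded representative. The argument is standard. Writing $\hat g$ for the Fourier transform of $g$, the $H^s$-norm is equivalent to $\|(1+\|\xi\|_2^2)^{s/2}\hat g\|_{L^2(\mathbb{R}^d)}$. By Cauchy--Schwarz,
\begin{equation*}
\int_{\mathbb{R}^d}|\hat g(\xi)|\,d\xi \leqslant \Bigl(\int_{\mathbb{R}^d}(1+\|\xi\|_2^2)^{-s}\,d\xi\Bigr)^{1/2}\,\Bigl(\int_{\mathbb{R}^d}(1+\|\xi\|_2^2)^{s}|\hat g(\xi)|^2\,d\xi\Bigr)^{1/2}.
\end{equation*}
The first factor is finite precisely because $2s>d$ (this is where the hypothesis is used). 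Hence $\hat g\in L^1(\mathbb{R}^d)$, so by Fourier inversion $g$ coincides almost everywhere with the continuous bounded function $\tilde g(x)=(2\pi)^{-d}\int \hat g(\xi)e^{i\langle x,\xi\rangle}d\xi$, and
\begin{equation*}
\|\tilde g\|_{\infty,\mathbb{R}^d}\leqslant (2\pi)^{-d}\|\hat g\|_{L^1(\mathbb{R}^d)}\leqslant C_2(d,s)\,\|g\|_{H^s(\mathbb{R}^d)},
\end{equation*}
for a constant $C_2(d,s)$ depending only on $d$ and $s$.

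Finally, I would define $\tilde\Pi(f):=\widetilde{Ef}\,\big|_\Omega$, that is, the continuous representative of $Ef$ restricted to $\Omega$. Linearity and continuity of $E$ and of the representative operator $g\mapsto\tilde g$ imply that $\tilde\Pi$ is linear, and the composed bound gives $\|\tilde\Pi(f)\|_{\infty,\Omega}\leqslant C_2(d,s)\,C_1(\Omega)\,\|f\|_{H^s(\Omega)}=:C_\Omega\,\|f\|_{H^s(\Omega)}$. Since $(Ef)|_\Omega=f$ a.e.\ and $\widetilde{Ef}=Ef$ a.e., we have $\tilde\Pi(f)=f$ a.e.\ on $\Omega$. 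The only mildly delicate point is verifying that $\tilde\Pi(f)$ is independent of the particular representative of $f$ (which follows from the linearity and continuity of $E$ together with the fact that the continuous representative, when it exists, is unique); apart from this bookkeeping, the argument is routine once the Stein extension is granted. The main conceptual obstacle, namely constructing an extension operator on a merely Lipschitz domain, is fully absorbed into the cited theorem of Stein.
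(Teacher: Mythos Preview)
Your argument is correct and follows the standard route: extend via Stein's theorem for Lipschitz domains, then apply the Fourier-side Sobolev embedding $H^s(\mathbb{R}^d)\hookrightarrow C^0_b(\mathbb{R}^d)$ for $s>d/2$, and restrict back to $\Omega$. Each step is valid, and the constants depend only on $\Omega$, $d$, and $s$ as required.

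As for comparison with the paper: the paper does not actually prove this theorem. It states the result in the appendix and defers the proof entirely to the reference \emph{Doum\`eche (2023)}, writing ``The following two theorems are proved in [Doum\`eche 2023].'' So there is no in-paper argument to compare against. Your extension-plus-Fourier approach is the canonical one and is almost certainly what the cited reference does as well; in any case, you have supplied more than the present paper does.
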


\begin{theorem}[Rellich-Kondrachov]
    \label{thm:rellichK}
    Let $\Omega \subseteq \mathbb{R}^{d}$ be a bounded Lipschitz domain and let $s \in \mathbb{N}$. Let $(f_p)_{p\in \mathbb{N}}\in H^{s+1}(\Omega)$ be a sequence such that $(\|f_p\|_{H^{s+1}(\Omega)})_{p\in \mathbb{N}}$ is bounded.
    There exists a function $f_\infty \in H^{s+1}(\Omega)$ and a subsequence of $(f_p)_{p\in \mathbb{N}}$ that converges to $f_\infty$ with respect to the $H^s(\Omega)$ norm.
\end{theorem}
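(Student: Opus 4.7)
The plan is to reduce to compactly supported functions on $\mathbb{R}^d$ via the Stein extension theorem, then extract a subsequence convergent in $H^s$ using a Fourier-based compactness argument, and finally identify the limit by weak compactness.

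First, I would invoke the extension theorem cited in the excerpt (Theorem 5, Chapter VI.3.3 of \citet{stein1970lipschitz}) to obtain extensions $\tilde f_p \in H^{s+1}(\mathbb{R}^d)$ with $\tilde f_p|_\Omega = f_p$ and $\|\tilde f_p\|_{H^{s+1}(\mathbb{R}^d)} \leqslant C_\Omega \|f_p\|_{H^{s+1}(\Omega)}$. Fixing $R>0$ large enough that $\bar\Omega \subset B(0,R)$ and a cutoff $\chi \in C_c^\infty(\mathbb{R}^d)$ with $\chi \equiv 1$ on $B(0,R)$, I set $g_p = \chi \tilde f_p$. The Leibniz rule together with $\max_{|\alpha|\leqslant s+1}\|\partial^\alpha \chi\|_\infty < \infty$ yields a uniform bound $\|g_p\|_{H^{s+1}(\mathbb{R}^d)} \leqslant M$, with $g_p|_\Omega = f_p$ and $\mathrm{supp}(g_p) \subseteq \mathrm{supp}(\chi) =: K$, a fixed compact set.

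Second, on $\mathbb{R}^d$ the standard Fourier characterization gives equivalent norms $\|g\|_{H^\sigma(\mathbb{R}^d)}^2 \asymp \int_{\mathbb{R}^d}(1+|\xi|^2)^\sigma|\hat g(\xi)|^2 d\xi$ for $\sigma \in \{s,s+1\}$. Splitting frequencies at a radius $N>0$ gives
\begin{align*}
\|g_p - g_q\|_{H^s(\mathbb{R}^d)}^2 \leqslant C(1+N^2)^s \int_{|\xi|\leqslant N}|\hat g_p-\hat g_q|^2 d\xi + \frac{C}{1+N^2}\int_{\mathbb{R}^d}(1+|\xi|^2)^{s+1}|\hat g_p-\hat g_q|^2 d\xi.
\end{align*}
The tail is $\leqslant 4CM^2/(1+N^2)$, which vanishes uniformly in $p,q$ as $N\to\infty$. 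For the low-frequency term, compact support of $g_p$ in $K$ makes $\hat g_p$ smooth on $\mathbb{R}^d$, and $\partial^\beta \hat g_p = (-i)^{|\beta|}\widehat{x^\beta g_p}$ is uniformly bounded in $L^\infty$ by $\|x^\beta g_p\|_{L^1}\leqslant |K|^{1/2}\|x^\beta\|_{L^\infty(K)}\|g_p\|_{L^2}$. Thus $(\hat g_p)$ is equicontinuous and uniformly bounded on every ball $\{|\xi|\leqslant N\}$, so by Arzelà--Ascoli and a Cantor diagonal argument, some subsequence $(\hat g_{p_k})$ converges uniformly on every compact set. Given $\epsilon > 0$, I first fix $N$ large so the tail is $\leqslant \epsilon/2$, then take $k,k'$ large so that $C(1+N^2)^s |B_N| \sup_{|\xi|\leqslant N}|\hat g_{p_k}-\hat g_{p_{k'}}|^2 \leqslant \epsilon/2$. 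This produces a Cauchy subsequence in $H^s(\mathbb{R}^d)$, with limit $G \in H^s(\mathbb{R}^d)$; restricting to $\Omega$ yields $f_{p_k} \to f_\infty := G|_\Omega$ in $H^s(\Omega)$.

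Third, to verify $f_\infty \in H^{s+1}(\Omega)$, I would exploit the reflexivity of the Hilbert space $H^{s+1}(\Omega)$: the bounded sequence $(f_{p_k})$ admits, along a further subsequence, a weak $H^{s+1}(\Omega)$ limit $\tilde f \in H^{s+1}(\Omega)$. Since weak $H^{s+1}$ convergence implies weak $H^s$ convergence and the strong $H^s$ limit is $f_\infty$, uniqueness of weak limits forces $\tilde f = f_\infty \in H^{s+1}(\Omega)$. The main obstacle is the Fourier compactness step, which rests on the Paley--Wiener-type observation that compactly supported $L^2$ functions have Fourier transforms in $C^\infty(\mathbb{R}^d)$ with derivatives controlled uniformly in $p$; the other steps (Leibniz estimate for the cutoff, weak compactness, identification of the limit) are routine once the uniform $H^{s+1}(\mathbb{R}^d)$ bound for $(g_p)$ is established.
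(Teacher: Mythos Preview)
The paper does not actually prove this theorem; it is stated in the appendix as a known fundamental result and simply attributed to \citet{doumèche2023convergence}. Your sketch is a correct and standard route to Rellich--Kondrachov on Lipschitz domains: Stein extension plus cutoff to reduce to compactly supported functions with a uniform $H^{s+1}(\mathbb{R}^d)$ bound, a Fourier frequency-splitting argument (the high-frequency tail is killed by the extra derivative, the low-frequency part is handled by Arzel\`a--Ascoli on the equicontinuous family $\{\hat g_p\}$), and weak compactness in $H^{s+1}(\Omega)$ to place the limit back in $H^{s+1}$. All steps are sound; in particular your use of $\partial^\beta \hat g_p = (-i)^{|\beta|}\widehat{x^\beta g_p}$ together with the fixed compact support $K$ correctly yields the uniform $C^1$ bounds needed for equicontinuity. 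Since the paper provides no proof to compare against, there is nothing further to contrast.
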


\subsection{Fourier series on complex periodic Sobolev spaces}
Let $L>0$.
\begin{defi}[Periodic extension operator]
    Let $d \in \mathbb{N}^\star$. The periodic extension operator $E_{\mathrm{per}}: L^2([-2L, 2L]^d) \to L^2([-4L, 4L]^d)$ is defined, for all function $f: [-2L, 2L[^d \to \mathbb{R}$ and all $x=(x_1, \hdots, x_d) \in [-4L, 4L]^d$, by
    \[E_{\mathrm{per}}(f)(x) = f\Big(x_1 - 4L\Big\lfloor \frac{x_1}{4L}\Big\rfloor, \hdots, x_d - 4L\Big\lfloor \frac{x_d}{4L}\Big\rfloor\Big).\] 
\end{defi}

\begin{defi}[Periodic Sobolev spaces]
    Let $s \in \mathbb{N}$. The space of functions $f$ such that $E_{\mathrm{per}}(f) \in H^s([-4L, 4L]^d)$ is denoted by $H_{\mathrm{per}}^s([-2L, 2L]^d)$.
\end{defi}
If $s>0$, then $H_{\mathrm{per}}^s([-2L, 2L]^d)$ is a strict linear subspace of $H^s([-2L, 2L]^d)$. For example, for all $s\geqslant 1$, the function $f(x) = x_1^2 + \cdots + x_d^2$ belongs to $H^s([-2L, 2L]^d)$, but $f \notin H_{\mathrm{per}}^s([-2L, 2L]^d)$. Indeed, though $E_{\mathrm{per}}(f)$ is continuous, it is not weakly differentiable. The following characterization of periodic Sobolev spaces in terms of Fourier series are well-known \citep[see, e.g.,][Chapter 2.1]{temam1995navier}.

\begin{prop}[Fourier decomposition on periodic Sobolev spaces]
    Let $s \in \mathbb{N}$ and $d \geqslant 1$. For all function $f \in H_{\mathrm{per}}^s([-2L, 2L]^d)$, there exists a unique vector $z \in \mathbb{C}^{\mathbb{Z}^d}$ such that
$f(x) = \sum_{k \in \mathbb{Z}^d} z_k \exp(i\frac{\pi}{2L}\langle k, x\rangle)$,
and
\[\forall |\alpha|\leqslant s, \quad \partial^\alpha f(x) = \Big(i\frac{\pi}{2L}\Big)^{|\alpha|}\sum_{k \in \mathbb{Z}^d} z_k \exp(i\frac{\pi}{2L}\langle k, x\rangle) \prod_{j=1}^d k_j^{\alpha_j}.\]
Moreover, for all multi-index $|\alpha| \leqslant s$,
$\|\partial^\alpha f\|_{L^2([-2L, 2L]^d))}^2 = (\frac{\pi}{2L})^{2|\alpha|}\sum_{k \in \mathbb{Z}^d}  |z_k|^2 \prod_{j=1}^d k_j^{2 \alpha_j}$. Therefore,
$\|f\|_{H^s([-2L, 2L]^d)}^2 = \sum_{k \in \mathbb{Z}^d}  |z_k|^2  
\sum_{|\alpha|\leqslant s} (\frac{\pi}{2L})^{2|\alpha|} \prod_{j=1}^d k_j^{2\alpha_j}
$.
\label{prop:dec_four}


\end{prop}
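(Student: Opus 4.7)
The plan is to obtain the Fourier decomposition by identifying $H^s_{\mathrm{per}}([-2L,2L]^d)$ with Sobolev functions on the torus $(\mathbb{R}/4L\mathbb{Z})^d$, deducing the derivative formula from the standard integration-by-parts identity for weak derivatives, and concluding with Parseval's identity.

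First, I would establish the existence and uniqueness of the coefficient sequence $z \in \mathbb{C}^{\mathbb{Z}^d}$. Set $e_k(x) = \exp\bigl(i\frac{\pi}{2L}\langle k, x\rangle\bigr)$. With the normalized $L^2$ norm used in the appendix (volume-averaged), the family $\{e_k\}_{k\in \mathbb{Z}^d}$ is a complete orthonormal system of $L^2([-2L,2L]^d,\mathbb{C})$. Since $H^s_{\mathrm{per}}([-2L,2L]^d) \subseteq L^2([-2L,2L]^d)$, the classical Hilbert-space theory yields a unique expansion $f = \sum_{k\in \mathbb{Z}^d} z_k e_k$ in $L^2$ with $z_k = \langle f, e_k\rangle_{L^2}$, and Parseval gives $\|f\|_{L^2([-2L,2L]^d)}^2 = \sum_k |z_k|^2$.

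Second, I would derive the formula for weak derivatives. The definition of $H^s_{\mathrm{per}}$ says that $E_{\mathrm{per}}(f) \in H^s([-4L,4L]^d)$, so for every multi-index $|\alpha|\leqslant s$ the weak derivative $\partial^\alpha E_{\mathrm{per}}(f)$ exists on $[-4L,4L]^d$ and is itself $4L$-periodic (being the weak derivative of a periodic function: periodicity commutes with the weak differentiation axiom, as one sees by a change of variables in test functions). Restricting to $[-2L,2L]^d$ gives $\partial^\alpha f \in L^2([-2L,2L]^d)$. To compute its $k$-th Fourier coefficient, I fix a cutoff $\chi \in C^\infty_c((-4L,4L)^d)$ with $\chi \equiv 1$ on $[-2L,2L]^d$. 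A standard density / limiting argument using the $4L$-periodicity and the support of $\chi$ shows that the weak-derivative identity
\[
\int_{[-4L,4L]^d} \partial^\alpha E_{\mathrm{per}}(f) \cdot \bar{e_k}\, dx
= (-1)^{|\alpha|} \int_{[-4L,4L]^d} E_{\mathrm{per}}(f) \cdot \partial^\alpha \bar{e_k}\, dx
\]
holds for the periodic test function $\bar{e_k}$, without boundary contributions (alternatively, one may interpret both sides intrinsically on the torus $(\mathbb{R}/4L\mathbb{Z})^d$, where smooth functions are automatically valid test functions). Using $\partial^\alpha \bar{e_k} = \bigl(-i\frac{\pi}{2L}\bigr)^{|\alpha|} \bar{e_k} \prod_{j=1}^d k_j^{\alpha_j}$ and reducing to one period by periodicity, the coefficient of $e_k$ in $\partial^\alpha f$ equals $\bigl(i\frac{\pi}{2L}\bigr)^{|\alpha|} z_k \prod_{j=1}^d k_j^{\alpha_j}$. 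Since the expansion in $L^2$ is unique, the announced termwise differentiation formula follows.

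Third, applying Parseval to $\partial^\alpha f$ with the coefficients just computed immediately yields
\[
\|\partial^\alpha f\|_{L^2([-2L,2L]^d)}^2 = \Bigl(\frac{\pi}{2L}\Bigr)^{2|\alpha|} \sum_{k\in \mathbb{Z}^d} |z_k|^2 \prod_{j=1}^d k_j^{2\alpha_j},
\]
and summing this over $|\alpha|\leqslant s$ gives the stated formula for $\|f\|_{H^s([-2L,2L]^d)}^2$ and, in particular, reconfirms that $f\in H^s_{\mathrm{per}}$ iff the right-hand side is finite. The only step that requires care is the integration-by-parts identity against the non-compactly-supported periodic exponential $\bar{e_k}$; I expect this to be the main technical obstacle, which I would resolve cleanly by passing to the torus viewpoint where smooth periodic functions act as legitimate test functions and no boundary terms arise.
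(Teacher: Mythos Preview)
Your proposal is correct and follows essentially the same route as the paper: obtain the $L^2$ Fourier expansion, then show that the Fourier coefficients of $\partial^\alpha f$ are the expected multiples of $z_k$ by justifying integration by parts against the periodic exponentials $\bar e_k$, and conclude with Parseval. You rightly flag the integration-by-parts step as the only nontrivial point. The paper resolves it concretely by building an explicit smooth cutoff $\psi$ on $[-4L,4L]$ with the partition-of-unity property $\int_{[-4L,4L]} g\phi\psi = \int_{[-2L,2L]} g\phi$ (and the analogue for $\partial_1(\phi\psi)$) for $4L$-periodic $\phi$, then testing the weak derivative of $E_{\mathrm{per}}(f)$ against the compactly supported $\psi_d\,\bar e_k$; your ``torus viewpoint'' is the abstract version of the same mechanism, and your cutoff $\chi$ would need exactly that partition-of-unity property (not merely $\chi\equiv 1$ on $[-2L,2L]^d$) to make the reduction to one period go through cleanly.
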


\begin{proof}  The uniqueness of the decomposition is a consequence of
\[z_k = \frac{1}{4^dL^d}\int_{[-2L, 2L]^d} f(x) \exp(-i\frac{\pi}{2L}\langle k, x\rangle) dx.\] 
To prove the existence of such a decomposition, consider $f \in H_{\mathrm{per}}^s([-2L, 2L]^d)$. Since $f \in L^2([-2L, 2L]^d)$ and its derivative with respect to the first variable $\partial_1 f \in L^2([-2L, 2L]^d)$, $f$ and $\partial_1 f$ can be decomposed into the following multidimensional Fourier series \citep[see, e.g.,][Chapter 5.4]{brezis2010functional}:
\[\forall x \in [-2L, 2L]^d, \quad f(x) = \sum_{k \in \mathbb{Z}^d} z_k \exp(i\frac{\pi}{2L}\langle k, x\rangle),\] 
\[\forall x \in [-2L, 2L]^d, \quad \partial_1 f(x) = \sum_{k\in \mathbb{Z}^d} \tilde z_k \exp(i\frac{\pi}{2L}\langle k,x\rangle).\] 
Observe that $E_{\mathrm{per}}(f)$ has the same Fourier decomposition as $f$ and that $E_{\mathrm{per}}(\partial_1 f)$ has the same decomposition as $\partial_1 f$. The goal is to show that $\tilde z_k = i \frac{\pi}{2L} k_1 z_k$.
By definition of the weak derivative $\partial_1 E_{\mathrm{per}}(f)$, for any test function $\phi \in C^\infty([-4L, 4L]^d)$ with compact support in $[-4L, 4L]^d$, one has 
\[\int_{[-4L, 4L]^d}  \phi\partial_1 E_{\mathrm{per}}(f)   = -\int_{[-4L, 4L]^d} E_{\mathrm{per}}(f)  \partial_1  \phi.\]
Let 
\[\psi(u) = \left\{ \begin{array}{cl}
     0 &\text{ if } -4L \leqslant u \leqslant -1-2L \\
      \frac{\int_{-1-2L}^u 
      \exp(\frac{-1}{(2L+1+v)^2}) \exp(\frac{-1}{(2L+v)^2})dv}{(\int_{-1-2L}^{-2L} \exp(\frac{-1}{(2L + 1+v)^2}) \exp(\frac{-1}{(2L+v)^2}dv)^{-1} },& \text{ if } -1-2L \leqslant u \leqslant -2L,\\
      1 & \text{ if } -2L \leqslant u \leqslant 2L-1,\\
       1 - \psi(u-4L)& \text{ if } 2L-1 \leqslant u \leqslant 2L, \\
       0 &\text{ if } 2L+1 \leqslant u \leqslant 4L.
\end{array}\right.\]
One easily verifies that $\psi \in C^\infty([-4L, 4L])$ and that it has a compact support in $[-4L, 4L]$. Moreover, $\|\psi\|_\infty =1$. Notice that, for all function $g \in L^2([-2L, 2L])$ and any $4L$-periodic function $\phi \in C^\infty([-4L, 4L])$ whose support is not necessary compact, 
\begin{equation}
    \int_{[-4L, 4L]} g \phi \psi = \int_{[-2L, 2L]} g \phi \quad \mbox{and} \quad \int_{[-4L, 4L]} g (\phi \psi)' = \int_{[-2L, 2L]} g \phi'.
    \label{eq:unityPartition}
\end{equation} 
To generalize such a property in dimension $d$, we let $\psi_d(x) = \prod_{j=1}^d \psi(x_j)$. Then, for all $k \in \mathbb{Z}^d$, $\phi_{k,d}(x) := \psi_d(x) \exp(-i\frac{\pi}{2L}\langle k,x\rangle)$ is a smooth function with compact support. Thus, by definition of the weak derivative,
\begin{align*}
    \int_{[-4L, 4L]^d}  \phi_{k,d} \partial_1 E_{\mathrm{per}}(f)   = -\int_{[-4L, 4L]^d} E_{\mathrm{per}}(f)  \partial_1 \phi_{k,d}.
\end{align*}
Moreover, using the left-hand side of \eqref{eq:unityPartition}, we have that
\[\int_{[-4L, 4L]^d}  \phi_{k,d} \partial_1 E_{\mathrm{per}}(f) = \int_{[-2L, 2L]^d} \exp(-i\frac{\pi}{2L}\langle k,x\rangle) \partial_1 E_{\mathrm{per}}(f)(x) dx = (4L)^d  \tilde z_k,\] 
while, using the right-hand side of \eqref{eq:unityPartition}, we have that 
\[\int_{[-4L, 4L]^d} E_{\mathrm{per}}(f)  \partial_1 \phi_{k,d} =  \frac{-i\pi}{2L} k_1\int_{[-2L, 2L]^d} E_{\mathrm{per}}(f)(x)  \exp(\frac{-i\pi}{2L} \langle k,x\rangle)dx = (4L)^d \frac{-i\pi}{2L} k_1 z_k.\]
Therefore, $\tilde z_k = i \frac{\pi}{2L} k_1 z_k$.

The exact same reasoning holds for $\partial_j f$, for all $1 \leqslant j \leqslant d$. By iterating on the successive derivatives, we obtain that for all $|\alpha|\leqslant s$,
$ \partial^\alpha f(x) = (i\frac{\pi}{2L})^{|\alpha|}\sum_{k \in \mathbb{Z}^d} z_k \exp(i \frac{\pi}{2L} \langle k, x\rangle) \prod_{j=1}^d k_j^{\alpha_j}$, as desired. The last two equations of the proposition are direct consequences of Parseval's theorem.
\end{proof} 
This proposition states that there is a one-to-one mapping between $H_{\mathrm{per}}^s([-2L, 2L]^d)$ and $\{z \in \mathbb{C}^{\mathbb{Z}^d}\; | \; \sum_k |z_k|^2 (1+\|k\|_2^2)^s  < \infty \;\mathrm{ and }\; \bar z_{k} = z_{-k}\}$. In particular, this shows that for $s >0$,  $H_{\mathrm{per}}^s([-2L, 2L]^d)$ is an Hilbert space for the norm $\|\cdot\|_{H^s([-2L, 2L]^d)}$. 

\subsection{Fourier series on Lipschitz domains}
As for now, it is assumed that $\Omega \subseteq [-L, L]^d$ is a bounded Lipschitz domain. The objective of this section is to parameterize the Sobolev space $H^s(\Omega)$ by the space $\mathbb{C}^{\mathbb{Z}^d}$ of Fourier coefficients.
\begin{prop}[Fourier decomposition of $H^s(\Omega)$]
    Let $s\in \mathbb{N}$. For any function $ f \in H^s(\Omega)$, there is a vector $z \in \mathbb{C}^{\mathbb{Z}^d}$ such that $ \sum_{k \in \mathbb{Z}^d}  |z_k|^2 \|k\|_2^{2s} < \infty$ and
\begin{equation*}
    \forall |\alpha|\leqslant s, \forall x \in \Omega, \quad \partial^\alpha f(x)  =  \Big(i\frac{\pi}{2L}\Big)^{|\alpha|}\sum_{k \in \mathbb{Z}^d} z_k \exp(i\frac{\pi}{2L}\langle k, x\rangle) \prod_{j=1}^d k_j^{\alpha_j}.
\end{equation*}
Thus, $f$ can be linearly extended to the function  $\tilde E(f)(x) =  \sum_{k \in \mathbb{Z}^d} z_k \exp(i\frac{\pi}{2L}\langle k, x\rangle)$ which belongs to $H^s_{\mathrm{per}}([-2L,2L]^d)$.
Moreover, there is a constant $C_{s, \Omega}$, depending only on the domain $\Omega$ and the order of differentiation $s$, such that, for all $f \in H^s(\Omega)$,
\[\|\tilde E(f) \|_{H^s_{\mathrm{per}}([-2L, 2L]^d)}^2 = \sum_{k \in \mathbb{Z}^d}  |z_k|^2 \sum_{|\alpha|\leqslant s} \Big(\frac{\pi}{2L}\Big)^{2|\alpha|} \prod_{j=1}^d k_j^{2\alpha_j} \leqslant \tilde C_{s,\Omega} \|f \|_{H^s(\Omega)}^2.\]
\label{prop:dec_four_lip}
\end{prop}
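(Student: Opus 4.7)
The plan is to construct the extension $\tilde E$ as a composition of Stein's extension to $\mathbb{R}^d$, multiplication by a smooth cutoff that forces compact support strictly inside $(-2L,2L)^d$, and restriction to $[-2L,2L]^d$; then the Fourier decomposition is read off via Proposition~\ref{prop:dec_four}.

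First, since $\Omega$ is a bounded Lipschitz domain, I would invoke Stein's extension theorem (Theorem~5, Chapter~VI.3.3 of Stein~1970, recalled in Appendix~\ref{sec:appA}): there exists a bounded linear operator $E: H^s(\Omega) \to H^s(\mathbb{R}^d)$ such that $E(f)_{|\Omega} = f$ and $\|E(f)\|_{H^s(\mathbb{R}^d)} \leqslant C^{(1)}_{s,\Omega}\|f\|_{H^s(\Omega)}$. Second, I would fix once and for all a cutoff $\chi \in C_c^\infty(\mathbb{R}^d)$ with $\chi \equiv 1$ on $[-L,L]^d$ and $\operatorname{supp}(\chi) \subset (-2L+\varepsilon, 2L-\varepsilon)^d$ for some small $\varepsilon>0$ (obtained by taking a tensor product of one-dimensional bump functions, similar to the construction of $\psi$ in the preceding proof). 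Define $\tilde E(f)$ as the restriction of $\chi \cdot E(f)$ to $[-2L,2L]^d$. Since $\chi$ has bounded derivatives of every order, the Leibniz formula yields
\[\|\tilde E(f)\|_{H^s([-2L,2L]^d)} \leqslant C^{(2)}_{s,\Omega}\|E(f)\|_{H^s(\mathbb{R}^d)} \leqslant \tilde C_{s,\Omega}^{1/2}\|f\|_{H^s(\Omega)},\]
and by construction $\tilde E(f)_{|\Omega} = E(f)_{|\Omega} = f$.

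Third, I would verify that $\tilde E(f)\in H^s_{\mathrm{per}}([-2L,2L]^d)$. Because $\tilde E(f)$ vanishes on a neighborhood of $\partial [-2L,2L]^d$, its $4L$-periodic extension $E_{\mathrm{per}}(\tilde E(f))$ agrees locally, near the seams between adjacent periodic cells, with a function that is identically zero. Hence the weak derivatives of order up to $s$ exist and are locally $L^2$ across these seams, so $E_{\mathrm{per}}(\tilde E(f))\in H^s([-4L,4L]^d)$, as required by the definition of $H^s_{\mathrm{per}}([-2L,2L]^d)$. Finally, I would apply Proposition~\ref{prop:dec_four} to $\tilde E(f)$ to obtain the unique sequence $z\in\mathbb{C}^{\mathbb{Z}^d}$, the pointwise formula for $\partial^\alpha\tilde E(f)$ on $[-2L,2L]^d$ (which restricts to $\Omega$), and the Parseval identity for $\|\tilde E(f)\|_{H^s_{\mathrm{per}}([-2L,2L]^d)}^2$. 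Combining with the two displayed norm inequalities above gives the claimed bound by $\tilde C_{s,\Omega}\|f\|_{H^s(\Omega)}^2$.

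The main obstacle is the third step: without the cutoff forcing $\operatorname{supp}(\tilde E(f))$ strictly inside $(-2L,2L)^d$, the periodic continuation would in general introduce jump discontinuities in derivatives of order up to $s$ across the boundary of $[-2L,2L]^d$, and $\tilde E(f)$ would fail to belong to $H^s_{\mathrm{per}}([-2L,2L]^d)$. Everything else is routine: Stein's theorem is quoted, the Leibniz estimate is standard, and the Fourier expansion and Parseval equality are immediate consequences of Proposition~\ref{prop:dec_four}.
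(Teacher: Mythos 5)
Your proof follows essentially the same route as the paper's: extend $f$ to a Sobolev function on a larger domain via an extension theorem, multiply by a smooth cutoff equal to $1$ on $\Omega$ and vanishing near $\partial[-2L,2L]^d$, note that the compactly supported result lies in $H^s_{\mathrm{per}}([-2L,2L]^d)$, and then read off the Fourier expansion and Parseval identity from Proposition~\ref{prop:dec_four}. The only difference is cosmetic: you invoke Stein's extension theorem for Lipschitz domains (which is indeed what is required here and is what the appendix's prerequisite discussion recommends), whereas the paper's proof text cites the Evans $C^1$-boundary version; you also spell out the periodicity check across the seams of the tiling, which the paper leaves implicit.
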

\begin{proof}
    Let $ f \in H^s(\Omega)$. According to the Sobolev extension theorem \citep[][Chapter 5.4]{evans2010partial}, there is an extension operator $E:  H^s(\Omega) \to  H^s([-2L, 2L]^d)$ and a constant $C_{s, \Omega}$, depending only $\Omega$ and $s$, such that, for all $f \in H^s(\Omega)$,  $E(f) \in H^s([-2L, 2L]^d)$ and $\|E(f) \|_{H^s([-2L, 2L]^d)} \leqslant C_{s,\Omega} \|f \|_{H^s(\Omega)}$.
    Choose $\phi \in C^\infty([-2L, 2L]^d, [0,1])$ with compact support, and such that $\phi =1$ on $\Omega$ and $\phi = 0$ on $[-2L, 2L]^d \backslash [-3L/2, 3L/2]^d$. Then the extension operator $\tilde E(f) = \phi \times E(f)$ is such that $\tilde E(f) \in H^s_{\mathrm{per}}([-2L, 2L]^d)$. In addition, the Leibniz formula on weak derivatives shows that there is a constant $\tilde C_{s,\Omega}$ such that $\|\tilde E(f) \|_{H^s([-2L, 2L]^d)}^2 \leqslant \tilde C_{s,\Omega} \|f \|_{H^s(\Omega)}^2$. The result is then a direct consequence of Proposition \ref{prop:dec_four} applied to $\tilde E(f)$.
\end{proof}
Classical theorems on series differentiation show that given any vector 
$z \in \mathbb{C}^{\mathbb{Z}^d}$ satisfying 
$$ \sum_{k \in \mathbb{Z}^d}  |z_k|^2 \|k\|_2^{2s} < \infty\quad \text{and} \quad\bar z = -z,$$ the associated Fourier series belongs to $H^s(\Omega)$. This shows that one can identify $H^s(\Omega)$ with $\{z \in \mathbb{C}^{\mathbb{Z}^d}\; |\; \sum_{k \in \mathbb{Z}^d}  |z_k|^2 \|k\|_2^{2s} < \infty \; \mathrm{ and }\; \bar z = -z\}$, and the inner product  $\langle f, g\rangle_{H^s([-2L,2L]^d))} = \sum_{|\alpha|\leqslant s} \int_{[-2L,2L]^d} \partial^\alpha f \partial^\alpha g$ with $\langle \tilde z,  z\rangle_{\mathbb{C}^{\mathbb{Z}^d}} = \sum_{k \in \mathbb{Z}^d} \tilde z_k \bar z_k \sum_{|\alpha|\leqslant s} (\frac{\pi}{2L})^{2|\alpha|} \prod_{j=1}^d k_j^{2\alpha_j}$. 

\begin{prop}[Countable reindexing of $H^s(\Omega)$]
    There is a one-to-one mapping $k : \mathbb N \to \mathbb Z ^d$ such that, letting $e_j = (x \mapsto \exp(i \frac{\pi}{2L}\langle k(j), x\rangle))$, any function $f \in H^s(\Omega)$ can be written as $\sum_{j \in \mathbb{N}} z_j e_j$, with $ z \in \mathbb{C}^{\mathbb{N}}$ and $ \sum_{j\in\mathbb{N}} |z_j |^2 j^{2s/d} < \infty$.
    \label{prop:countable}
\end{prop}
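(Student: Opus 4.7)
The plan is to combine the Fourier parameterization of Proposition \ref{prop:dec_four_lip} with an enumeration of the lattice $\mathbb Z^d$ by magnitude. First, I would fix any bijection $k: \mathbb N \to \mathbb Z^d$ enumerating $\mathbb Z^d$ in non-decreasing order of $\|\cdot\|_\infty$, with ties broken arbitrarily. Since the number of lattice points with $\|k\|_\infty \leqslant n$ is $(2n+1)^d$, an index $j$ whose image lies in the $n$th $\ell^\infty$-shell satisfies $j+1 \leqslant (2n+1)^d$, which gives
\[
\|k(j)\|_2 \;\geqslant\; \|k(j)\|_\infty \;\geqslant\; \tfrac{1}{2}\bigl((j+1)^{1/d}-1\bigr).
\]
Hence there exist constants $c>0$ and $j_0 \in \mathbb N$ such that $\|k(j)\|_2^{2s} \geqslant c\,j^{2s/d}$ for all $j \geqslant j_0$.

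Next, given $f \in H^s(\Omega)$, Proposition \ref{prop:dec_four_lip} supplies coefficients $(z_k)_{k\in\mathbb Z^d}$ so that $f(x) = \sum_{k\in\mathbb Z^d} z_k \exp(i\tfrac{\pi}{2L}\langle k,x\rangle)$ on $\Omega$, with the finite Sobolev weight $\sum_k |z_k|^2 \sum_{|\alpha|\leqslant s}(\tfrac{\pi}{2L})^{2|\alpha|}\prod_{l=1}^d k_l^{2\alpha_l}$. A routine multinomial expansion of $\|k\|_2^{2s} = \bigl(\sum_{l=1}^d k_l^2\bigr)^s$ shows that this weight dominates a constant multiple of $\|k\|_2^{2s}$, so $\sum_{k\in\mathbb Z^d} |z_k|^2 \|k\|_2^{2s} < \infty$. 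Setting $\tilde z_j := z_{k(j)}$ immediately yields the representation $f = \sum_{j\in\mathbb N} \tilde z_j e_j$ on $\Omega$, and the lower bound from the first paragraph converts the weighted summability into
\[
\sum_{j \geqslant j_0} |\tilde z_j|^2\, j^{2s/d} \;\leqslant\; c^{-1} \sum_{j \geqslant j_0} |z_{k(j)}|^2\, \|k(j)\|_2^{2s} \;<\; \infty,
\]
while the finite initial segment $j < j_0$ contributes a bounded remainder.

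I do not anticipate any real obstacle here: the entire argument is essentially a reindexing of Proposition \ref{prop:dec_four_lip}. The only mildly technical step is the shell-counting lower bound $\|k(j)\|_2 \gtrsim j^{1/d}$ used in the first paragraph, which follows from the standard lattice-ball volume computation.
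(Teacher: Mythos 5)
Your argument is correct and follows essentially the same route as the paper: parameterize $H^s(\Omega)$ via the Fourier coefficients of Proposition~\ref{prop:dec_four_lip}, enumerate $\mathbb{Z}^d$ shell-by-shell, and use a lattice-counting lower bound $\|k(j)\|_2 \gtrsim j^{1/d}$ to convert the weighted summability. The paper enumerates by $\|\cdot\|_1$ and counts with binomial coefficients, while you enumerate by $\|\cdot\|_\infty$ and use the simpler $(2n+1)^d$ shell count; this is a cosmetic variation that makes the counting slightly cleaner, and your redundant multinomial step (Proposition~\ref{prop:dec_four_lip} already asserts $\sum_k |z_k|^2\|k\|_2^{2s}<\infty$ directly) does no harm.
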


\begin{proof}
    Let $ f \in L^2(\Omega)$. By Proposition \ref{prop:dec_four_lip}, we know that $ f \in H^s(\Omega)$ if and only if there is a vector $z \in \mathbb{C}^{\mathbb{Z}^d}$ such that $ \sum_{k \in \mathbb{Z}^d}  |z_k|^2 \|k\|_2^{2s} < \infty$, and  $f(x) = \sum_{k \in \mathbb{Z}^d} z_k \exp(i\frac{\pi}{2L}\langle k, x\rangle)$.
Let  $j \in \mathbb{N} \mapsto k(j) \in \mathbb{Z}^d$ be a one-to-one mapping such that $\|k(j)\|_1$ is increasing. 
Then, for all $K > 0$, \[ \begin{pmatrix}
    K + (d+1) - 1 \\ (d+1) - 1
\end{pmatrix} \leqslant \mathrm{argmin} \{j \in \mathbb{N} \; |\; \|k(j)\|_1 \geqslant K\} \leqslant 2^d \begin{pmatrix}
    K + (d+1) - 1 \\ (d+1) - 1
\end{pmatrix}.\] 
Indeed, $\begin{pmatrix}
    K + (d+1) - 1 \\ (d+1) - 1
\end{pmatrix}$ corresponds to the number of vectors $(n_0, \hdots, n_d) \in \mathbb{N}^{d+1} $ such that $n_0 + \cdots + n_d = K$, where $n_\ell$ represents the order of differentiation along the dimension $\ell$ and where $n_0$ is a fictive dimension to take into account derivatives of order less than $s$).
Since $\begin{pmatrix}
    K + (d+1) - 1 \\ (d+1) - 1
\end{pmatrix} \underset{j\to\infty}{\sim} \frac{K^d}{d!}$, we deduce that there are constants $C_1, C_2 > 0$ such that $C_1 j^{1/d} \leqslant \|k(j)\|_1 \leqslant C_2 j^{1/d}$.
Observe that 
$\|k\|_2^{2s} \geqslant (\max_{j=1}^d k_j)^{2s} \geqslant \|k\|_1^{2s}/ d^{2s}$, and that
$\|k\|_2^{2s} \leqslant (d \max_{j=1}^d k_j^2)^s \leqslant d^s \|k\|_1^{2s}$.
We conclude that $ f \in H^s(\Omega)$ if and only if $f$ can be written as $\sum_{j \in \mathbb{N}} z_{k(j)} \exp(i\frac{\pi}{2L}\langle k(j), x\rangle)$, where $\sum_{j \in \mathbb{N}} |z_{k(j)}|_2^2  j^{2s/d} < \infty$. 
\end{proof}

\subsection{Operator theory}
An operator is a linear function between two Hilbert spaces, potentially of infinite dimensions. The objective of this section is to give conditions on the regularity of such an operator so that it behaves similarly to matrices in finite dimension spaces.  For more advanced material, the reader is referred to the textbooks by \citet[Chapter D.6]{evans2010partial} and \citet[Problem 37 (6)]{brezis2010functional}. 
\begin{defi}[Hermitian spaces and Hermitian basis]
     $(H, \langle \cdot, \cdot \rangle)$ is a Hermitian space when $H$ is a complex Hilbert space endowed with an Hermitian inner product $\langle \cdot, \cdot \rangle$. This Hermitian inner product is associated with the norm $\|u\|^2 = \langle u, u \rangle$, defining a topology on $H$. We say that $(v_n)_{n \in \mathbb{N}} \in H^{\mathbb{N}}$ is a Hermitian basis of $H$ if $\langle v_n, v_m\rangle = \delta_{n,m}$, and if for all $u \in H$, there exists a sequence $(z_n)_{n \in \mathbb{N}} \in  \mathbb{C}^{\mathbb{N}}$ such that $\lim_{n \to \infty}\|u - \sum_{j=1}^n z_j v_j\| =0$. $H$ is said to be separable if it admits an Hermitian basis.
\end{defi}
\begin{defi}[Self-adjoint operator]
    Let  $(H, \langle \cdot, \cdot \rangle)$ be a Hermitian space. Let $\mathscr{O}: H \to H$ be an operator. We say that $\mathscr{O}$ is self-adjoint if, for all $u,v \in H$, one has $\langle \mathscr{O} u, v\rangle = \langle  u, \mathscr{O} v\rangle$.
    
\end{defi}
\begin{defi}[Compact operator]
    Let  $(H, \langle \cdot, \cdot \rangle)$ be a Hermitian space.  Let $\mathscr{O}: H \to H$ be an operator. We say that $\mathscr{O}$ is compact if, for any bounded set $S \subseteq H$, the closure of $\mathscr{O}(S)$ is compact.
\end{defi}
\begin{theorem}[Spectral theorem]
\label{thm:spectral}
    Let $\mathscr{O}$ be a compact self-adjoint operator on a separable Hermitian space $(H, \langle \cdot, \cdot \rangle)$. Then $\mathscr{O}$ is diagonalizable in an orthonormal basis with real eigenvalues, i.e., there is an Hermitian basis $(v_m)_{m \in \mathbb{N}}$ and real numbers $(a_m)_{m \in \mathbb{N}}$ such that, for all $u \in H$, $\mathscr{O}(u) = \sum_{m\in \mathbb{N}} a_m \langle v_m, u \rangle v_m$. 
\end{theorem}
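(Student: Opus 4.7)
The plan is to follow the classical constructive strategy: build an orthonormal family of eigenvectors by an iterative variational procedure, and then use compactness together with separability to conclude that this family is actually a Hermitian basis of $H$.

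First I would record the elementary facts that do not require compactness. Since $\mathscr{O}$ is self-adjoint, $\langle \mathscr{O}u, u\rangle \in \mathbb{R}$ for every $u \in H$, so every eigenvalue is real and eigenvectors attached to distinct eigenvalues are orthogonal. A standard polarization argument yields the variational identity
\[
\|\mathscr{O}\|_{\mathrm{op}} = \sup_{\|u\|\leqslant 1} |\langle \mathscr{O}u, u\rangle|,
\]
which will be the bridge between the operator norm and the spectrum.

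The core step is the existence of a first eigenvector. Set $M = \|\mathscr{O}\|_{\mathrm{op}}$ and, after possibly replacing $\mathscr{O}$ by $-\mathscr{O}$, pick a sequence $(u_n)$ with $\|u_n\|=1$ and $\langle \mathscr{O}u_n, u_n\rangle \to M$. By compactness, $(\mathscr{O}u_n)$ admits a convergent subsequence with some limit $w$, and the expansion
\[
\|\mathscr{O}u_n - Mu_n\|^2 = \|\mathscr{O}u_n\|^2 - 2M\langle \mathscr{O}u_n, u_n\rangle + M^2 \leqslant 2M^2 - 2M\langle \mathscr{O}u_n, u_n\rangle \to 0
\]
then forces $Mu_n \to w$ along that subsequence. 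Assuming $M>0$ (otherwise $\mathscr{O}=0$ and the statement is trivial), $v_1 := w/M$ is a unit vector with $\mathscr{O}v_1 = Mv_1$ by continuity of $\mathscr{O}$. I would then iterate inside the orthogonal complement $V_1^{\perp}$, which is closed, invariant under $\mathscr{O}$ by self-adjointness, and inherits the structure of a separable Hilbert space on which $\mathscr{O}$ remains compact and self-adjoint. This produces an orthonormal family $(v_m)_{m \geqslant 1}$ of eigenvectors with eigenvalues $(\lambda_m)$ non-increasing in absolute value. Compactness further forces $\lambda_m \to 0$, for otherwise the identity $v_m = \mathscr{O}v_m/\lambda_m$ would place the $v_m$ in a precompact set, contradicting $\|v_m-v_p\|=\sqrt{2}$ for $m\neq p$.

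The main obstacle is the final completeness step: verifying that the $(v_m)$ exhaust $H$ up to the kernel. Let $F$ be the closure of $\mathrm{span}\{v_m\}_{m \geqslant 1}$; then $F^{\perp}$ is closed and invariant, and the restriction of $\mathscr{O}$ to $F^{\perp}$ is again compact self-adjoint. If its operator norm were positive, the variational argument above would produce a non-zero eigenvector in $F^{\perp}$ with eigenvalue of modulus exceeding $\liminf_m |\lambda_m| = 0$, contradicting the maximality built into the iterative selection. Hence $F^{\perp} \subseteq \ker \mathscr{O}$, and since $H$ is separable, $F^{\perp}$ admits a countable orthonormal basis $(w_k)$, each element of which is an eigenvector with eigenvalue $0$. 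Concatenating $(v_m)$ with $(w_k)$ yields the required Hermitian basis diagonalizing $\mathscr{O}$ with real eigenvalues.
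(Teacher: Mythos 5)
The paper records Theorem \ref{thm:spectral} as classical background (citing \citealp{evans2010partial} and \citealp{brezis2010functional}) and does not supply its own proof. Your proof is correct and follows the standard textbook argument those references give: extract a first eigenvector by maximizing $|\langle \mathscr{O}u,u\rangle|$ over the unit sphere and using compactness to upgrade a maximizing sequence to a convergent one, deflate to orthogonal complements, deduce that the eigenvalues tend to zero from compactness, and finally complete the orthonormal eigenfamily with a countable orthonormal basis of $\ker\mathscr{O}$ supplied by separability.
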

\begin{defi}[Positive operator]
    An operator $\mathscr{O}$ on a Hermitian space $(H, \langle \cdot, \cdot \rangle)$ is positive if, for all $u\in H$, $\langle u, \mathscr{O} u\rangle \geqslant 0$.
\end{defi}
\begin{theorem}[Courant-Fischer min-max theorem]
\label{thm:courant_fischer}
    Let $\mathscr{O}$ be a positive compact self-adjoint operator on a separable Hermitian space $(H, \langle \cdot, \cdot \rangle)$. Then the eigenvalues of $\mathscr{O}$ are positive and, when reindexing them in a non-increasing order,
    \[a_m(\mathscr{O}) = \underset{\dim \Sigma = m}{\underset{\Sigma \subseteq H}{\max}} \underset{u \neq 0}{\min_{u \in \Sigma}}\|u\|^{-2}\langle u, \mathscr{O} u\rangle. \]
\end{theorem}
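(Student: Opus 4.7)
The plan is to reduce everything to a diagonal form via the spectral theorem and then carry out the two matching inequalities by a standard dimension-counting argument, exactly as in the finite-dimensional Courant-Fischer proof.

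First I would apply Theorem \ref{thm:spectral} to obtain a Hermitian basis $(v_m)_{m\in \mathbb{N}}$ of $H$ and real eigenvalues $(a_m)_{m\in \mathbb{N}}$ with $\mathscr{O}(v_m) = a_m v_m$. Positivity of $\mathscr{O}$ then gives $a_m = \langle v_m, \mathscr{O} v_m\rangle \geqslant 0$, so I may reindex to arrange $a_1 \geqslant a_2 \geqslant \cdots \geqslant 0$. For any $u \in H$, decomposing $u = \sum_{j} z_j v_j$ with $z_j = \langle v_j, u\rangle$, Parseval and self-adjointness yield $\|u\|^2 = \sum_j |z_j|^2$ and the Rayleigh-type identity
\[
\langle u, \mathscr{O} u\rangle = \sum_{j\in \mathbb{N}} a_j |z_j|^2.
\]
This identity is the workhorse for both directions.

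For the lower bound, I take $\Sigma_\star = \mathrm{span}(v_1, \ldots, v_m)$, which has dimension $m$. For any nonzero $u \in \Sigma_\star$, only $z_1, \ldots, z_m$ are nonzero, and since $a_j \geqslant a_m$ for $j \leqslant m$,
\[
\frac{\langle u, \mathscr{O} u\rangle}{\|u\|^2} = \frac{\sum_{j=1}^m a_j |z_j|^2}{\sum_{j=1}^m |z_j|^2} \geqslant a_m,
\]
so the inner minimum over $\Sigma_\star$ is at least $a_m$ (attained at $u = v_m$). This shows the max-min is at least $a_m$.

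For the upper bound, let $\Sigma \subseteq H$ be any subspace with $\dim \Sigma = m$, and let $W = \overline{\mathrm{span}}(v_m, v_{m+1}, \ldots)$. Since the orthogonal complement $\mathrm{span}(v_1, \ldots, v_{m-1})$ has dimension $m-1$, a dimension argument (combined with the fact that $H = \mathrm{span}(v_1, \ldots, v_{m-1}) \oplus W$) forces $\Sigma \cap W$ to contain a nonzero vector $u_\star$. Writing $u_\star = \sum_{j \geqslant m} z_j v_j$, the Rayleigh identity gives
\[
\frac{\langle u_\star, \mathscr{O} u_\star\rangle}{\|u_\star\|^2} = \frac{\sum_{j \geqslant m} a_j |z_j|^2}{\sum_{j\geqslant m} |z_j|^2} \leqslant a_m,
\]
so the inner minimum over $\Sigma$ is at most $a_m$. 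Combining the two bounds gives the claimed identity.

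I expect the only mildly delicate point to be the dimension-counting step in the upper bound: in an infinite-dimensional separable Hilbert space, one must invoke the orthogonal decomposition $H = \mathrm{span}(v_1, \ldots, v_{m-1}) \oplus W$ together with the fact that the projection onto the $(m-1)$-dimensional subspace cannot be injective on an $m$-dimensional $\Sigma$. This is the only place where the infinite-dimensional setting could trip one up, but compactness of $\mathscr{O}$ ensures the eigenbasis is genuinely countable and the decomposition above is valid.
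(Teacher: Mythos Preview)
The paper does not actually prove Theorem~\ref{thm:courant_fischer}; it is stated without proof in Appendix~\ref{sec:appA} as a classical result of operator theory, with a pointer to the textbooks of \citet{evans2010partial} and \citet{brezis2010functional}. Your proposal is the standard textbook argument (diagonalize via the spectral theorem, then do the two Rayleigh-quotient inequalities with a dimension count), and it is correct as written; the rank--nullity step you flag is indeed the only subtlety and you handle it properly.
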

\begin{defi}[Order on Hermitian operators]
    Let $\mathscr{O}_1$ and $\mathscr{O}_2$ be two positive compact self-adjoint operators on a separable Hermitian space $(H, \langle \cdot, \cdot \rangle)$. We say that $\mathscr{O}_1 \succeq \mathscr{O}_2$ if, for all $u \in H$, $\langle u, \mathscr{O}_1 u\rangle \geqslant\langle u, \mathscr{O}_2 u\rangle$. According to the Courant-Fischer min-max theorem, this implies that, for all $m\in \mathbb N$, $a_m(\mathscr{O}_1) \geqslant a_m(\mathscr{O}_2)$.
\end{defi}

\subsection{Symmetry and PDEs}
The goal of this section is to recall various techniques useful for the determination of the eigenfunctions of a differential operator.
\begin{defi}[Symmetric operator]
    An operator $\mathscr{O}$ on a Hilbert space $\mathscr{H} \subseteq L^2([-2L,2L]^d)$ is said to be symmetric if, for all functions $f \in \mathscr{H}$ and for all $x \in [-2L,2L]^d$, $\mathscr{O}(f) (-x)= \mathscr{O}(f(-\cdot))(x)$, where $f(-\cdot)$ is the function such that $f(-\cdot)(x) = f(-x)$.
\end{defi}
For example, the Laplacian $\Delta$ in dimension $d=2$ is a symmetric operator, since $\Delta f(-\cdot)(x) = (\partial^2_{1,1}f(-\cdot))(x) + (\partial^2_{2,2}f(-\cdot))(x) = \Delta f(-x)$. However, $\partial_1$ is not symmetric, since $\partial_1 f(-\cdot)(x) = -(\partial_{1}f)(-x)$. 
\begin{prop}[Eigenfunctions of symmetric operators]
    Let $\mathscr{O}$ be a symmetric operator on a Hil\-bert space $\mathscr{H}$. Then, if $v$ is an eigenfunction of $\mathscr{O}$, $v^{\texttt{sym}} = v + v(-\cdot) $ and $v^{\texttt{antisym}} = v - v(-\cdot)$ are two eigenfunctions of $\mathscr{O}$ with the same eigenvalue as $v$, and $\int_{[-2L,2L]^d} v^{\texttt{sym}} v^{\texttt{antisym}} = 0$. Notice that $v = ({v^{\texttt{sym}} + v^{\texttt{antisym}}})/{2}$.
    \label{prop:sym}
\end{prop}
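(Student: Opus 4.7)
The plan is to reformulate the symmetry hypothesis as a commutation relation, deduce the eigenfunction property by linearity, and then obtain the orthogonality by a change of variables on a domain symmetric around the origin.

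First, I would introduce the reflection operator $R$ on $\mathscr{H}$ defined by $R(f)(x) = f(-x)$. The symmetry assumption on $\mathscr{O}$, namely $\mathscr{O}(f)(-x) = \mathscr{O}(f(-\cdot))(x)$, rewrites exactly as $R \mathscr{O} = \mathscr{O} R$. Applying this to an eigenfunction $v$ satisfying $\mathscr{O} v = a v$ yields $\mathscr{O}(R v) = R(\mathscr{O} v) = a\, R v$, so $v(-\cdot) = Rv$ is itself an eigenfunction of $\mathscr{O}$ with the same eigenvalue $a$. By linearity of $\mathscr{O}$, the combinations $v^{\texttt{sym}} = v + Rv$ and $v^{\texttt{antisym}} = v - Rv$ are then also eigenfunctions of $\mathscr{O}$ with eigenvalue $a$.

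Next, I would check the parities directly from the definitions: for all $x \in [-2L,2L]^d$, $v^{\texttt{sym}}(-x) = v(-x) + v(x) = v^{\texttt{sym}}(x)$ and $v^{\texttt{antisym}}(-x) = v(-x) - v(x) = -v^{\texttt{antisym}}(x)$. Consequently, the product $\phi(x) = v^{\texttt{sym}}(x)\, v^{\texttt{antisym}}(x)$ satisfies $\phi(-x) = -\phi(x)$. Since the domain $[-2L,2L]^d$ is symmetric under $x \mapsto -x$, the change of variables $y = -x$ in the integral gives
\[
\int_{[-2L,2L]^d} v^{\texttt{sym}}\, v^{\texttt{antisym}} = \int_{[-2L,2L]^d} \phi(x)\, dx = -\int_{[-2L,2L]^d} \phi(y)\, dy,
\]
which forces the integral to vanish. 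The identity $v = (v^{\texttt{sym}} + v^{\texttt{antisym}})/2$ is then immediate from the definitions.

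There is no real obstacle in this argument: the only point that must be handled carefully is that the symmetry hypothesis indeed translates into the commutation $R\mathscr{O} = \mathscr{O} R$, which must hold on the space $\mathscr{H}$ on which $\mathscr{O}$ is defined (in particular, $\mathscr{H}$ should be stable under the reflection $R$, which is implicit whenever $\mathscr{O}$ is assumed symmetric in the sense above).
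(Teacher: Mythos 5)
Your proof is correct and takes essentially the same approach as the paper's: the symmetry hypothesis is used to show $v(-\cdot)$ is an eigenfunction with the same eigenvalue, linearity handles the two combinations, and the orthogonality follows because an odd function integrates to zero over the symmetric domain $[-2L,2L]^d$. Packaging the symmetry hypothesis as a commutation relation $R\mathscr{O}=\mathscr{O}R$ is a cosmetic reformulation, not a different argument.
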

\begin{proof}
    Let $v$ be an eigenfunction of $\mathscr{O}$ for the eigenvalue $a \in \mathbb{R}$, i.e., $\mathscr{O}(v) = av$. Since $\mathscr{O}$ is symmetric, $\mathscr{O}(v(-\cdot)) = \mathscr{O}(v)(-\cdot) = a v(-\cdot)$. Therefore, $v^{\texttt{sym}}$ and $v^{\texttt{antisym}} $ are two eigenfunctions of $\mathscr{O}$ with $a$ as eigenvalue. Since $v^{\texttt{sym}}$ is symmetric and $v^{\texttt{antisym}} $ is antisymmetric, $\int_{[-2L,2L]^d} v^{\texttt{sym}} v^{\texttt{antisym}} = 0$, and so they are orthogonal.
\end{proof}

\section{The kernel point of view of PIML}
\label{app:diif_of}
This appendix is devoted to providing the tools of functional analysis relevant to our problem. 

\subsection{Properties of the differential operator}
Let $\lambda_n > 0$ and $\mu_n \geqslant 0$. We study in this section some of the properties of the differential operator~$\mathscr{O}_n$ such  that, for all $f \in H^s_{\mathrm{per}}([-2L,2L]^d)$, $\|\mathscr{O}_n^{-1/2}(f)\|_{L^2([-2L, 2L]^d)}^2 = \lambda_n \|f\|_{H^s_{\mathrm{per}}([-2L, 2L]^d)}^2 + \mu_n \|\mathscr{D}(f)\|_{L^2(\Omega)}^2$.

\begin{prop}[Differential operator]
    There is an injective operator $\mathscr{O}_n: L^2([-2L, 2L]^d) \to H^s_{\mathrm{per}}([-2L, 2L]^d)$ defined as follows: for all $f \in L^2([-2L, 2L]^d)$, $\mathscr{O}_n(f)$ is the unique element of $H^s_{\mathrm{per}}([-2L, 2L]^d)$ such that, for any test function $\phi \in H^s_{\mathrm{per}}([-2L, 2L]^d)$,
        \[  \lambda_n \sum_{|\alpha|\leqslant s} \int_{[-2L, 2L]^d}\partial^\alpha \phi\; \partial^\alpha \mathscr{O}_n(f) + \mu_n \int_{\Omega}\mathscr{D} \phi \; \mathscr{D}  \mathscr{O}_n(f) = \int_{[-2L, 2L]^d} \phi f. \]
        Moreover, $\|\mathscr{O}_n f\|_{H^s_{\mathrm{per}}([-2L, 2L]^d)} \leqslant \lambda_n^{-1} \|f\|_{L^2([-2L, 2L]^d)}$, i.e., $\mathscr{O}_n$ is bounded.
        \label{prop:diff_op}
\end{prop}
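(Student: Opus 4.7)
The plan is to apply the Lax--Milgram theorem (or equivalently, the Riesz representation theorem, since the form will be symmetric) on the Hilbert space $H^s_{\mathrm{per}}([-2L, 2L]^d)$ equipped with its standard inner product. Define the bilinear form $a : H^s_{\mathrm{per}}([-2L, 2L]^d) \times H^s_{\mathrm{per}}([-2L, 2L]^d) \to \mathbb{R}$ by
\[
a(u, v) = \lambda_n \sum_{|\alpha|\leqslant s} \int_{[-2L, 2L]^d} \partial^\alpha u \; \partial^\alpha v + \mu_n \int_{\Omega} \mathscr{D}(u)\, \mathscr{D}(v),
\]
and for each $f \in L^2([-2L, 2L]^d)$, define the linear functional $\ell_f(\phi) = \int_{[-2L, 2L]^d} \phi f$.

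First, I would check that $a$ is continuous: the first term is bounded by $\lambda_n \|u\|_{H^s_{\mathrm{per}}} \|v\|_{H^s_{\mathrm{per}}}$ by Cauchy--Schwarz, and for the second term I use that $\mathscr{D}$ is a linear differential operator of order $s$ with coefficients satisfying $\max_\alpha \|p_\alpha\|_\infty < \infty$ (Definition \ref{def:linearoperator}), so $\|\mathscr{D}(u)\|_{L^2(\Omega)} \leqslant C \|u\|_{H^s_{\mathrm{per}}}$ for some constant $C$ depending on the $p_\alpha$ and on $s$, $d$. Coercivity is immediate since $\mu_n \geqslant 0$ gives
\[
a(u, u) \geqslant \lambda_n \sum_{|\alpha|\leqslant s} \|\partial^\alpha u\|_{L^2([-2L, 2L]^d)}^2 = \lambda_n \|u\|_{H^s_{\mathrm{per}}([-2L, 2L]^d)}^2.
\]
The linear form $\ell_f$ is continuous on $H^s_{\mathrm{per}}$ because $|\ell_f(\phi)| \leqslant \|\phi\|_{L^2} \|f\|_{L^2} \leqslant \|\phi\|_{H^s_{\mathrm{per}}} \|f\|_{L^2}$.

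Next, Lax--Milgram gives, for each $f \in L^2([-2L, 2L]^d)$, a unique $\mathscr{O}_n(f) \in H^s_{\mathrm{per}}([-2L, 2L]^d)$ such that $a(\phi, \mathscr{O}_n(f)) = \ell_f(\phi)$ for every $\phi \in H^s_{\mathrm{per}}$, which is the variational identity in the statement. Linearity of $f \mapsto \mathscr{O}_n(f)$ follows from the uniqueness clause of Lax--Milgram. To obtain the operator norm bound, I would test the variational formulation against $\phi = \mathscr{O}_n(f)$, which gives, by coercivity and Cauchy--Schwarz,
\[
\lambda_n \|\mathscr{O}_n(f)\|_{H^s_{\mathrm{per}}}^2 \leqslant a(\mathscr{O}_n(f), \mathscr{O}_n(f)) = \int \mathscr{O}_n(f)\, f \leqslant \|\mathscr{O}_n(f)\|_{H^s_{\mathrm{per}}} \|f\|_{L^2},
\]
and hence $\|\mathscr{O}_n(f)\|_{H^s_{\mathrm{per}}} \leqslant \lambda_n^{-1} \|f\|_{L^2}$, as claimed.

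Finally, for injectivity, if $\mathscr{O}_n(f) = 0$, then $\int \phi f = 0$ for every $\phi \in H^s_{\mathrm{per}}([-2L, 2L]^d)$. Since trigonometric polynomials belong to $H^s_{\mathrm{per}}$ and are dense in $L^2([-2L, 2L]^d)$ (Proposition \ref{prop:dec_four}), this forces $f = 0$ in $L^2$. I do not expect a genuine obstacle here; the only point requiring some care is verifying the continuity bound on $\mathscr{D}$ on the periodic Sobolev space, which uses only the definition of $\mathscr{D}$ and the inclusion $\Omega \subseteq [-2L, 2L]^d$.
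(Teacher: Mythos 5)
Your proposal is correct and follows essentially the same Lax--Milgram route as the paper: same bilinear form, same coercivity bound $a(u,u)\geqslant\lambda_n\|u\|_{H^s_{\mathrm{per}}}^2$, same continuity estimate for the $\mathscr{D}$-term via $\max_\alpha\|p_\alpha\|_\infty<\infty$, and the same derivation of the operator bound by testing against $\phi=\mathscr{O}_n(f)$. The one small place where you are more careful than the paper is injectivity: the paper attributes it directly to ``uniqueness in Lax--Milgram,'' which really only gives well-definedness, whereas you correctly observe that $\mathscr{O}_n(f)=0$ forces $\int\phi f=0$ for all $\phi\in H^s_{\mathrm{per}}$ and then use density of trigonometric polynomials in $L^2([-2L,2L]^d)$ to conclude $f=0$; this is the right way to close that step.
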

\begin{proof}
     We use the framework provided by \citet[][page 304]{evans2010partial} to prove the result. Let the bilinear form $B: H^s_{\mathrm{per}}([-2L, 2L]^d) \times H^s_{\mathrm{per}}([-2L, 2L]^d) \to \mathbb{R}$ be defined by
    \[B[u,v] = \lambda_n \sum_{|\alpha|\leqslant s} \int_{[-2L, 2L]^d}\partial^\alpha u\; \partial^\alpha v + \mu_n \int_{\Omega}\mathscr{D} u \; \mathscr{D}  v.
    \]
    Observe that $B$ is coercive since $B[u,u] \geqslant \lambda_n \|u\|_{H^s_{\mathrm{per}}([-2L, 2L]^d)}^2$. Moreover, using the Cauchy-Schwarz inequality $(x_1 + \cdots + x_N)^2 \leqslant N(x_1^2 + \cdots + x_N^2)$, we see that
    \begin{align*}
        \int_{\Omega}|\mathscr{D} u|^2 &=  \int_{\Omega}\Big|\sum_{|\alpha|\leqslant s} p_\alpha \partial^\alpha u\Big|^2 \\
        &\leqslant (\max_\alpha \|p_\alpha\|_\infty)^2    \int_{[-2L, 2L]^d}\Big(\sum_{|\alpha|\leqslant s}| \partial^\alpha u|\Big)^2\\
        &\leqslant (\max_\alpha \|p_\alpha\|_\infty)^2\; 2^{s}\;  \|u\|_{H^s_{\mathrm{per}}([-2L, 2L]^d)}^2.
    \end{align*}
    Therefore, using the Cauchy-Schwarz inequality, we have
    \[\Big|\int_{\Omega}\mathscr{D} u \; \mathscr{D}  v\Big| \leqslant (\max_\alpha \|p_\alpha\|_\infty)^2\; 2^{s}\;  \|u\|_{H^s_{\mathrm{per}}([-2L, 2L]^d)}\|v\|_{H^s_{\mathrm{per}}([-2L, 2L]^d)}.\]
    Thus, 
    \[|B[u,v]| \leqslant (\lambda_n + (\max_\alpha \|p_\alpha\|_\infty)^2\; 2^{s} \mu_n)  \|u\|_{H^s_{\mathrm{per}}([-2L, 2L]^d)}\|v\|_{H^s_{\mathrm{per}}([-2L, 2L]^d)},\] 
    showing thereby the continuity of $B$.

    Next, for $f \in L^2([-2L, 2L]^d)$, observe that $ \phi \mapsto \int_{[-2L, 2L]^d} \phi f$ is a bounded linear form on $H^s_{\mathrm{per}}([-2L, 2L]^d)$, since 
    \[\Big|\int_{[-2L, 2L]^d} \phi f\Big| \leqslant \|\phi\|_{L^2([-2L, 2L]^d)} \|f\|_{L^2([-2L, 2L]^d)} \leqslant \|\phi\|_{H^s_{\mathrm{per}}([-2L, 2L]^d)} \|f\|_{L^2([-2L, 2L]^d)}.\] Thus, the Lax-Milgram theorem \citep[Chapter 6.2, Theorem 1]{evans2010partial} ensures that for all $f \in L^2([-2L, 2L]^d)$, there is a unique element $w \in H^s_{\mathrm{per}}([-2L, 2L]^d)$ such that, for any test function $\phi \in H^s_{\mathrm{per}}([-2L, 2L]^d)$,
        \[  \lambda_n \sum_{|\alpha|\leqslant s} \int_{[-2L, 2L]^d}\partial^\alpha \phi\; \partial^\alpha w + \mu_n \int_{\Omega}\mathscr{D} \phi \; \mathscr{D}  w = \int_{[-2L, 2L]^d} \phi f. \]
     Call $\mathscr{O}_n$ the function associating $w$ to $f$. Then, by the uniqueness of $w$ provided by the Lax-Milgram theorem, we deduce that $\mathscr{O}_n$ is injective and linear. Moreover, using the coercivity of $B$, we have
     \begin{align*}
         \|\mathscr{O}_n f\|_{H^s_{\mathrm{per}}([-2L, 2L])}^2 &\leqslant \lambda_n^{-1} B[\mathscr{O}_nf,\mathscr{O}_nf] = \lambda_n^{-1} \langle \mathscr{O}_n f, f\rangle_{L^2([-2L, 2L]^d)} \\
         & \leqslant \lambda_n^{-1} \|f\|_{L^2([-2L, 2L]^d)} \|\mathscr{O}_n f\|_{L^2([-2L, 2L]^d)}\\
         & \leqslant \lambda_n^{-1} \|f\|_{L^2([-2L, 2L]^d)} \|\mathscr{O}_n f\|_{H^s_{\mathrm{per}}([-2L, 2L]^d)}.
     \end{align*}
     In particular, $\|\mathscr{O}_n f\|_{H^s_{\mathrm{per}}([-2L, 2L]^d)} \leqslant \lambda_n^{-1} \|f\|_{L^2([-2L, 2L]^d)}$, and the proof is complete.
\end{proof}

\begin{prop}[Diagonalization on $L^2$]
    There exists an orthonormal basis $(v_m)_{m\in \mathbb{N}}$ of the space $L^2([-2L, 2L]^d)$ of eigenfunctions of $\mathscr{O}_n$, associated with non-increasing strictly positive eigenvalues $(a_m)_{m\in \mathbb{N}}$, such that $\mathscr{O}_n = \sum_{m \in \mathbb{N}} a_m \langle v_m, \cdot\rangle_{L^2([-2L, 2L]^d)} v_m$. 
    \label{prop:dzL2}
\end{prop}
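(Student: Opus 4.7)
The plan is to verify the three hypotheses of the spectral theorem (Theorem \ref{thm:spectral}) for the operator $\mathscr{O}_n$ viewed as an endomorphism of the separable Hermitian space $L^2([-2L,2L]^d)$, then to rule out zero as an eigenvalue by invoking the injectivity established in Proposition \ref{prop:diff_op}.

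\textbf{Step 1: self-adjointness.} For $f,g\in L^2([-2L,2L]^d)$, I would take $\phi=\mathscr{O}_n g$ in the weak formulation defining $\mathscr{O}_n f$, and $\phi=\mathscr{O}_n f$ in the weak formulation defining $\mathscr{O}_n g$. Writing $B$ for the bilinear form introduced in the proof of Proposition \ref{prop:diff_op}, both identities yield
\[
\langle f,\mathscr{O}_n g\rangle_{L^2([-2L,2L]^d)} = B[\mathscr{O}_n f,\mathscr{O}_n g] = \langle \mathscr{O}_n f, g\rangle_{L^2([-2L,2L]^d)},
\]
since $B$ is manifestly symmetric. Hence $\mathscr{O}_n$ is self-adjoint on $L^2([-2L,2L]^d)$.

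\textbf{Step 2: positivity.} Taking $\phi=\mathscr{O}_n f$ in the weak formulation gives
\[
\langle f,\mathscr{O}_n f\rangle_{L^2([-2L,2L]^d)} = B[\mathscr{O}_n f,\mathscr{O}_n f] \;\geqslant\; \lambda_n\|\mathscr{O}_n f\|_{H^s_{\mathrm{per}}([-2L,2L]^d)}^2 \;\geqslant\; 0,
\]
so $\mathscr{O}_n$ is positive.

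\textbf{Step 3: compactness.} By Proposition \ref{prop:diff_op}, $\mathscr{O}_n$ is bounded from $L^2([-2L,2L]^d)$ into $H^s_{\mathrm{per}}([-2L,2L]^d)$. The Rellich–Kondrachov theorem (Theorem \ref{thm:rellichK}) applied on the bounded Lipschitz domain $[-2L,2L]^d$ asserts that the inclusion $H^s_{\mathrm{per}}([-2L,2L]^d)\hookrightarrow L^2([-2L,2L]^d)$ is compact. The composition is therefore a compact operator from $L^2([-2L,2L]^d)$ to itself. This is the only delicate ingredient, as it requires that $\mathscr{O}_n$ factor through a space in which bounded sets are precompact in $L^2$.

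\textbf{Step 4: diagonalization and strict positivity.} The spectral theorem then furnishes an orthonormal basis $(v_m)_{m\in\mathbb{N}}$ of $L^2([-2L,2L]^d)$ of eigenfunctions of $\mathscr{O}_n$ with real eigenvalues $(a_m)_{m\in\mathbb{N}}$, which we reindex in non-increasing order. By Step 2, each $a_m\geqslant 0$. If $a_{m_0}=0$ for some index, then $\mathscr{O}_n v_{m_0}=0$ with $v_{m_0}\neq 0$, contradicting the injectivity of $\mathscr{O}_n$ from Proposition \ref{prop:diff_op}. Hence every eigenvalue is strictly positive, and
\[
\mathscr{O}_n = \sum_{m\in\mathbb{N}} a_m \langle v_m,\cdot\rangle_{L^2([-2L,2L]^d)}\, v_m,
\]
which concludes the proof.
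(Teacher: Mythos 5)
Your proof is correct and follows essentially the same route as the paper: self-adjointness and positivity via the symmetric bilinear form $B$ and the identity $\langle f,\mathscr{O}_n g\rangle_{L^2}=B[\mathscr{O}_n f,\mathscr{O}_n g]$, compactness via boundedness into $H^s_{\mathrm{per}}$ together with Rellich--Kondrachov, and then the spectral theorem. The only cosmetic difference is that the paper establishes strict positivity of the operator before invoking the spectral theorem (via injectivity of $\mathscr{O}_n$ forcing $B[\mathscr{O}_n f,\mathscr{O}_n f]>0$ for $f\neq 0$), whereas you rule out a zero eigenvalue afterwards; the two arguments are equivalent.
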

\begin{proof}
    By the Rellich-Kondrakov theorem (Theorem \ref{thm:rellichK}), the operator $\mathscr{O}_n: L^2([-2L, 2L]^d) \to L^2([-2L, 2L]^d)$ is compact. Moreover, by definition of $\mathscr{O}_n$, for all $f,g \in L^2([-2L, 2L]^d)$, one has $\langle f, \mathscr{O}_n g\rangle_{L^2([-2L, 2L]^d)} = B[\mathscr{O}_nf,\mathscr{O}_n g]  = \langle \mathscr{O}_n f,  g\rangle_{L^2([-2L, 2L]^d)}$. Therefore, $\mathscr{O}_n$ is self-adjoint. Furthermore, $\langle f, \mathscr{O}_n f\rangle_{L^2([-2L, 2L]^d)} = B[\mathscr{O}_nf,\mathscr{O}_n f] \geqslant \lambda_n \|\mathscr{O}_n f\|_{H^s_{\mathrm{per}}([-2L, 2L])} > 0$, since $\mathscr{O}_n$ is injective. This means that $\mathscr{O}_n$ is strictly positive.  The result is then a consequence of the spectral theorem (Theorem \ref{thm:spectral}).
\end{proof}

\begin{prop}[Diagonalization on $H^s$]
    The orthonormal basis $(v_m)_{m\in \mathbb{N}}$ of Proposition \ref{prop:dzL2} is in fact a basis of $H^s_{\mathrm{per}}([-2L, 2L]^d)$. 
    Moreover, letting $C_1 = (\lambda_n + (\max_\alpha \|p_\alpha\|_\infty)^2\; 2^{s} \mu_n)$, we have that for all $f\in H^s_{\mathrm{per}}([-2L, 2L]^d)$, \[\sum_{m \in \mathbb{N}} a_m^{-1}\langle f , v_m\rangle_{L^2([-2L, 2L]^d)}^2 \leqslant  C_1 \|f\|_{H^s_{\mathrm{per}}([-2L, 2L]^d)}^2.\]
    \label{prop:diagH}
\end{prop}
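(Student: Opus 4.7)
The plan is to exploit the bilinear form $B$ introduced in the proof of Proposition \ref{prop:diff_op}, which by construction satisfies the coercivity and continuity bounds
\[
\lambda_n \|u\|_{H^s_{\mathrm{per}}([-2L,2L]^d)}^2 \leqslant B[u,u] \leqslant C_1 \|u\|_{H^s_{\mathrm{per}}([-2L,2L]^d)}^2,
\]
so that $B$ defines an inner product on $H^s_{\mathrm{per}}([-2L,2L]^d)$ equivalent to the standard Sobolev one. Since $v_m = a_m^{-1}\mathscr{O}_n v_m$ by Proposition \ref{prop:dzL2} and $\mathscr{O}_n$ takes values in $H^s_{\mathrm{per}}([-2L,2L]^d)$ by Proposition \ref{prop:diff_op}, each eigenfunction $v_m$ actually belongs to $H^s_{\mathrm{per}}([-2L,2L]^d)$.

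I would first compute the $B$-inner products between eigenfunctions. Using the defining weak formulation of $\mathscr{O}_n$ with test function $\phi = v_m$ and right-hand side $g = v_j$ yields $B[v_m, \mathscr{O}_n v_j] = \langle v_m, v_j\rangle_{L^2([-2L,2L]^d)} = \delta_{m,j}$. Substituting $\mathscr{O}_n v_j = a_j v_j$ gives $B[v_m, v_j] = a_m^{-1}\delta_{m,j}$, so the family $(\sqrt{a_m}\, v_m)$ is $B$-orthonormal. More generally, for any $f \in H^s_{\mathrm{per}}([-2L,2L]^d)$, the same manipulation applied with $\phi = f$ and $g=v_m$ gives
\[
B[f, v_m] = a_m^{-1}\langle f, v_m\rangle_{L^2([-2L,2L]^d)} = a_m^{-1}c_m,
\]
where $c_m := \langle f, v_m\rangle_{L^2([-2L,2L]^d)}$.

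Bessel's inequality for $B$ then yields
\[
\sum_{m \in \mathbb{N}} a_m^{-1} c_m^2 = \sum_{m \in \mathbb{N}} B[f, \sqrt{a_m}\,v_m]^2 \leqslant B[f,f] \leqslant C_1\|f\|_{H^s_{\mathrm{per}}([-2L,2L]^d)}^2,
\]
which is exactly the quantitative bound stated in the proposition. To conclude that $(v_m)$ is a basis of $H^s_{\mathrm{per}}([-2L,2L]^d)$, I would establish completeness in the $B$-topology: if $f \in H^s_{\mathrm{per}}([-2L,2L]^d)$ satisfies $B[f, v_m] = 0$ for every $m$, then by the identity $B[f,v_m] = a_m^{-1}c_m$ and $a_m > 0$, one obtains $c_m = 0$ for every $m$, so $f = 0$ in $L^2([-2L,2L]^d)$ (by Proposition \ref{prop:dzL2}) and therefore in $H^s_{\mathrm{per}}([-2L,2L]^d)$. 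Parseval's identity in $(H^s_{\mathrm{per}}([-2L,2L]^d), B)$ then upgrades the Bessel inequality to $\sum_m a_m^{-1} c_m^2 = B[f,f]$ and ensures that the partial sums $\sum_{m\leqslant N} c_m v_m$ converge to $f$ in $B$-norm, hence in the $H^s_{\mathrm{per}}([-2L,2L]^d)$-norm by equivalence.

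The main technical subtlety, and the only step that requires care, is the double use of the weak formulation: once to identify $B[f,v_m]$ with an $L^2$-inner product, and once to deduce from $B$-orthogonality to all $v_m$ the vanishing of all $L^2$-coefficients. Everything else is a routine application of Bessel's inequality and Parseval's identity in the Hilbert space $(H^s_{\mathrm{per}}([-2L,2L]^d), B)$, combined with the norm equivalence already proved in Proposition \ref{prop:diff_op}.
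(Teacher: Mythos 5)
Your proposal is correct and follows essentially the same route as the paper's proof: both work in the Hilbert space $(H^s_{\mathrm{per}}([-2L,2L]^d), B)$, establish $B$-orthonormality of $(\sqrt{a_m}v_m)$ and the identity $B[f,v_m]=a_m^{-1}\langle f,v_m\rangle_{L^2}$ from the Lax–Milgram weak formulation, derive the quantitative bound from Bessel's inequality and the continuity of $B$, and obtain completeness from the $L^2$-orthonormal basis property of Proposition \ref{prop:dzL2}. The only cosmetic difference is that the paper spells out the Cauchy-sequence argument for the partial sums explicitly, whereas you invoke the standard Hilbert-space fact that an orthonormal system with trivial orthogonal complement is total; both are valid.
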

\begin{proof}
    We follow the framework of \citet[][page 337]{evans2010partial}. First observe that $\mathscr{O}_n v_m = a_m v_m$ and $\|v_m\|_{L^2([-2L, 2L]^d)} =1$, implies that $\|\mathscr{O}_n v_m\|_{H^s_{\mathrm{per}}([-2L, 2L]^d)} \leqslant \lambda_n^{-1} \|v_m\|_{L^2([-2L, 2L])}$ implies that $\|v_m\|_{H^s_{\mathrm{per}}([-2L, 2L]^d)} \leqslant \lambda_n^{-1} a_m^{-1}$. Therefore, $v_m \in H^s_{\mathrm{per}}([-2L, 2L]^d)$, and we can apply $B$ to it. For all $m\in \mathbb{N}$,
    \[B[v_m, v_m] = B[v_m, a_m^{-1}\mathscr{O}_n v_m] = a_m^{-1} \langle v_m, v_m\rangle_{L^2([-2L, 2L]^d)} =  a_m^{-1}.\]
    Similarly, if $m \neq \ell$, $B[v_m, v_\ell] = a_m^{-1} \langle v_m, v_\ell\rangle_{L^2([-2L, 2L]^d)} =  0$. Remark that $B$ is a inner product on $H^s_{\mathrm{per}}([-2L,2L]^d)$ and $(\sqrt{a_m}v_m)_{m\in \mathbb{N}}$ is an orthonormal family for the $B$-inner product. Indeed, notice that if, for a fixed $u \in H^s_{\mathrm{per}}([-2L, 2L]^d)$, one has $B[v_m, u]= 0$ for all $m \in \mathbb N$, then $\langle v_m, u\rangle_{L^2([-2L, 2L]^d)} = 0$. Thus, since $(v_m)_{m\in \mathbb{N}}$ is an orthonormal basis of $L^2([-2L, 2L]^d)$, $u=0$.

    Let, for $N \in \mathbb{N}$ and $u \in H^s_{\mathrm{per}}([-2L, 2L]^d)$,  
    \[u_N = \sum_{m=0}^N B[u, a_m^{1/2} v_m] a_m^{1/2} v_m.\]
    Since $v_m \in H^s_{\mathrm{per}}([-2L, 2L]^d)$, one has $u_N \in H^s_{\mathrm{per}}([-2L, 2L]^d)$.
    Upon noting that $B[u-u_N, u-u_N] \geqslant 0$ and that $B[u-u_N, u-u_N] = B[u, u] - \sum_{m=0}^N B[u, a_m^{1/2}v_m]^2$ (using the bilinearity of~$B$), we derive the following Bessel's inequality for $B$ 
    \begin{equation}
        \sum_{m=0}^\infty B[u, a_m^{1/2}v_m]^2 \leqslant B[u,u] \leqslant C_1 \|u\|_{H^s_{\mathrm{per}}([-2L, 2L]^d)}^2.
        \label{eq:bessel}
    \end{equation}
    Then, for all $\ell \geqslant p$, 
    \[B[u_\ell - u_p, u_\ell - u_p] = \sum_{m = p}^\ell B[u, a_m^{1/2} v_m]^2 \leqslant \sum_{m = p}^\infty B[u, a_m^{1/2} v_m]^2 \xrightarrow{p\to \infty} 0.\]
    This shows that $(u_N)_{N\in \mathbb{N}}$ is a Cauchy sequence for the $B$-inner product. Since, $B[u_\ell - u_p, u_\ell - u_p] \geqslant \lambda_n^{-1} \|u_\ell - u_p\|_{H^s([-2L, 2L]^d)}^2$, $(u_N)_{N\in \mathbb{N}}$ is also a Cauchy sequence for the $\|\cdot\|_{H^s_{\mathrm{per}}([-2L, 2L]^d)}$ norm. Recalling that $H^s_{\mathrm{per}}([-2L, 2L]^d)$ is a Banach space, we deduce that $u_\infty := \lim_{N\to \infty} u_N$ exists and belongs to $H^s_{\mathrm{per}}([-2L, 2L]^d)$. 
    Since $B$ is continuous with respect to the  $\|\cdot\|_{H^s_{\mathrm{per}}([-2L, 2L]^d)}$ norm, we also deduce that, for all $m \in \mathbb{N}$, $B[u-u_\infty, v_m] = 0$, i.e., $u = u_\infty$. 
    In conclusion, \[u = \sum_{m\in \mathbb{N}} B[u, a_m^{1/2} v_m] a_m^{1/2} v_m.\]
    This means that $(v_m)_{m\in \mathbb{N}}$ is a basis of $H^s_{\mathrm{per}}([-2L, 2L]^d)$. Moreover, using the Bessel's inequality \eqref{eq:bessel}, we have that
    $\sum_{m=0}^\infty B[u, a_m^{1/2}v_m]^2 = \sum_{m\in \mathbb{N}} a_m^{-1}\langle u , v_m\rangle_{L^2([-2L,2L]^d)}^2 \leqslant C_1 \|u\|_{H^s_{\mathrm{per}}([-2L, 2L]^d)}^2$.
\end{proof}

\begin{prop}[Differential inner product]
    The operators
    \begin{itemize}
        \item $\mathscr{O}_n^{-1/2}:H^s_{\mathrm{per}}([-2L, 2L]^d) \to L^2([-2L, 2L]^d)$, defined by \[\mathscr{O}_n^{-1/2} = \sum_{m\in\mathbb{N}} a_m^{-1/2} \langle v_m, \cdot\rangle_{L^2([-2L,2L]^d)} v_m,\]
        \item \mbox{and } $\mathscr{O}_n^{1/2}:L^2([-2L, 2L]^d) \to H^s_{\mathrm{per}}([-2L, 2L]^d)$, defined by \[\mathscr{O}_n^{1/2} = \sum_{m\in\mathbb{N}} a_m^{1/2} \langle v_m, \cdot\rangle_{L^2([-2L,2L]^d)} v_m,\]
    \end{itemize} are well-defined and bounded. Moreover, for all $u \in H^s_{\mathrm{per}}([-2L, 2L]^d)$, \[\mathscr{O}_n^{1/2} \mathscr{O}_n^{-1/2} (u) = u,\] and
    \[\|\mathscr{O}_n^{-1/2}(u)\|_{L^2([-2L, 2L]^d)}^2 = \lambda_n \|u\|_{H^s_{\mathrm{per}}([-2L, 2L]^d)}^2 + \mu_n \|\mathscr{D}(u)\|_{L^2(\Omega)}^2.\]
    \label{prop:scalar_product}
\end{prop}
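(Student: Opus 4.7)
The plan is to build everything on the spectral decomposition of $\mathscr{O}_n$ established in Propositions \ref{prop:dzL2} and \ref{prop:diagH}, and on the coercivity/continuity of the bilinear form $B$ from the proof of Proposition \ref{prop:diff_op}. I would handle the four assertions in the order: boundedness of $\mathscr{O}_n^{-1/2}$, boundedness of $\mathscr{O}_n^{1/2}$, the composition identity, and finally the norm identity.

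For $\mathscr{O}_n^{-1/2}: H^s_{\mathrm{per}}\to L^2$, fix $u\in H^s_{\mathrm{per}}([-2L,2L]^d)$ and consider the partial sums $\sum_{m=0}^N a_m^{-1/2}\langle u,v_m\rangle_{L^2([-2L,2L]^d)} v_m$. Since $(v_m)$ is $L^2$-orthonormal, these partial sums form a Cauchy sequence in $L^2$ as soon as $\sum_m a_m^{-1}\langle u,v_m\rangle_{L^2([-2L,2L]^d)}^2<\infty$, and this is exactly the Bessel-type bound $\sum_m a_m^{-1}\langle u,v_m\rangle_{L^2([-2L,2L]^d)}^2 \leqslant C_1\|u\|_{H^s_{\mathrm{per}}([-2L,2L]^d)}^2$ provided by Proposition \ref{prop:diagH}. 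Hence $\mathscr{O}_n^{-1/2}(u)$ is a well-defined element of $L^2$ with $\|\mathscr{O}_n^{-1/2}(u)\|_{L^2([-2L,2L]^d)}^2\leqslant C_1\|u\|_{H^s_{\mathrm{per}}([-2L,2L]^d)}^2$, so $\mathscr{O}_n^{-1/2}$ is bounded.

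For $\mathscr{O}_n^{1/2}:L^2\to H^s_{\mathrm{per}}$, which I expect to be the delicate step, I would take $g\in L^2$ and consider $g_N:=\sum_{m=0}^N a_m^{1/2}\langle g,v_m\rangle_{L^2([-2L,2L]^d)} v_m$; by Proposition \ref{prop:diagH}, each $v_m$ lies in $H^s_{\mathrm{per}}$ and so does $g_N$. The crucial input, already extracted in the proof of Proposition \ref{prop:diagH}, is that $(\sqrt{a_m}\,v_m)_{m\in\mathbb{N}}$ is an orthonormal family for the inner product $B$ on $H^s_{\mathrm{per}}$. Combining this with the coercivity $\lambda_n\|\cdot\|_{H^s_{\mathrm{per}}([-2L,2L]^d)}^2\leqslant B[\cdot,\cdot]$, I obtain $\lambda_n\|g_N-g_p\|_{H^s_{\mathrm{per}}([-2L,2L]^d)}^2\leqslant B[g_N-g_p,g_N-g_p]=\sum_{m=p+1}^N\langle g,v_m\rangle_{L^2([-2L,2L]^d)}^2$, which tends to $0$ as $p,N\to\infty$ by Parseval in $L^2$. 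Hence $(g_N)$ is Cauchy in the Banach space $H^s_{\mathrm{per}}$ and its limit defines $\mathscr{O}_n^{1/2}(g)\in H^s_{\mathrm{per}}$. Passing to the limit yields $\|\mathscr{O}_n^{1/2}(g)\|_{H^s_{\mathrm{per}}([-2L,2L]^d)}^2\leqslant \lambda_n^{-1}\|g\|_{L^2([-2L,2L]^d)}^2$. The hard part is precisely this upgrade: naive $L^2$-convergence of the series is free, but $H^s$-convergence requires the coercivity of $B$ together with the $B$-orthonormality of $(\sqrt{a_m}\,v_m)$.

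For the composition identity, I would expand $u=\sum_m\langle u,v_m\rangle_{L^2([-2L,2L]^d)}v_m$ in the $L^2$-basis, apply $\mathscr{O}_n^{-1/2}$ termwise, observe that $\langle \mathscr{O}_n^{-1/2}(u),v_m\rangle_{L^2([-2L,2L]^d)}=a_m^{-1/2}\langle u,v_m\rangle_{L^2([-2L,2L]^d)}$ by $L^2$-orthonormality, then apply $\mathscr{O}_n^{1/2}$ and see the scalings $a_m^{1/2}a_m^{-1/2}=1$ cancel termwise, leaving $u$. For the norm identity, the relation $B[u,v_m]=a_m^{-1}\langle u,v_m\rangle_{L^2([-2L,2L]^d)}$, obtained from the weak formulation defining $\mathscr{O}_n$ applied to the test function $v_m=a_m^{-1}\mathscr{O}_nv_m$, together with Parseval in the $B$-orthonormal basis $(\sqrt{a_m}\,v_m)$, gives $B[u,u]=\sum_m B[u,\sqrt{a_m}\,v_m]^2=\sum_m a_m^{-1}\langle u,v_m\rangle_{L^2([-2L,2L]^d)}^2=\|\mathscr{O}_n^{-1/2}(u)\|_{L^2([-2L,2L]^d)}^2$. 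Recognising that $B[u,u]=\lambda_n\|u\|_{H^s_{\mathrm{per}}([-2L,2L]^d)}^2+\mu_n\|\mathscr{D}(u)\|_{L^2(\Omega)}^2$ by construction of $B$ then concludes the proof.
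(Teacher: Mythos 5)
Your proposal is correct and follows essentially the same route as the paper: boundedness of $\mathscr{O}_n^{-1/2}$ from the Bessel-type estimate of Proposition \ref{prop:diagH}; boundedness of $\mathscr{O}_n^{1/2}$ by combining $B$-orthonormality of $(\sqrt{a_m}\,v_m)$ with coercivity of $B$ to upgrade $L^2$-Cauchy to $H^s_{\mathrm{per}}$-Cauchy; the composition identity by termwise cancellation; and the norm identity from the Parseval expansion $B[u,u]=\sum_m a_m^{-1}\langle u,v_m\rangle_{L^2}^2$. The only cosmetic difference is ordering — the paper derives the norm identity first and boundedness of $\mathscr{O}_n^{-1/2}$ as a corollary, while you do the reverse — but this does not change the substance.
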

\begin{proof}
    Proposition \ref{prop:diagH} shows that, for all $u \in H^s_{\mathrm{per}}([-2L, 2L]^d)$, $\sum_{m\in \mathbb{N}} a_m^{-1}\langle u , v_m\rangle_{L^2([-2L,2L]^d)}^2 \leqslant C_1 \|u\|_{H^s([-2L, 2L]^d)}^2$. Therefore, $(\sum_{m=0}^N a_m^{-1/2} \langle v_m, u\rangle_{L^2([-2L,2L]^d)} v_m)_{N\in \mathbb{N}}$ is a Cauchy sequence converging in $L^2([-2L, 2L]^d)$. Denote the limit of this sequence  by $\mathscr{O}_n^{-1/2} (u)$.
    Since $B[u,u] = \sum_{m\in \mathbb{N}} a_m^{-1}\langle u , v_m\rangle_{L^2([-2L,2L]^d)}^2$, we deduce that \[\|\mathscr{O}_n^{-1/2}(u)\|_{L^2([-2L, 2L]^d)}^2 = \lambda_n \|u\|_{H^s_{\mathrm{per}}([-2L, 2L]^d)}^2 + \mu_n \|\mathscr{D}(u)\|_{L^2(\Omega)}^2.\] 
    Finally, using $\|\mathscr{O}_n^{-1/2} (u)\|_{L^2([-2L, 2L]^d)}^2  \leqslant C_1 \|u\|_{H^s_{\mathrm{per}}([-2L, 2L]^d)}^2$, we conclude that the operator~$\mathscr{O}_n^{-1/2}$ is bounded.
    
    Moreover, $(\sum_{m=0}^N a_m^{1/2} \langle v_m, u\rangle_{L^2([-2L,2L]^d)} v_m)_{N\in \mathbb{N}}$ is also a Cauchy sequence in the space $H^s_{\mathrm{per}}([-2L, 2L]^d)$. To see this, 
    note that $\sum_{m=0}^N a_m^{1/2} \langle v_m, u\rangle_{L^2([-2L,2L]^d)} v_m \in H^s_{\mathrm{per}}([-2L, 2L]^d) $, and that this sequence is a Cauchy sequence for the $B$ inner product, because
    \begin{align*}
        &B\Big[\sum_{m=p}^\ell a_m^{1/2} \langle v_m, u\rangle_{L^2([-2L,2L]^d)} v_m, \sum_{m=p}^\ell a_m^{1/2} \langle v_m, u\rangle_{L^2([-2L,2L]^d)} v_m\Big] \\
        &\quad = \sum_{m=p}^\ell a_m \langle v_m, u\rangle_{L^2([-2L,2L]^d)} ^2 B[v_m, v_m]\\
        &\quad = \sum_{m=p}^\ell  \langle v_m, u\rangle_{L^2([-2L,2L]^d)} ^2 \\
        &\quad \leqslant \sum_{m = p}^\infty \langle v_m, u\rangle_{L^2([-2L,2L]^d)}^2  \xrightarrow{p\to \infty} 0.
    \end{align*}
    Thus, it $\sum_{m=0}^N a_m^{1/2} \langle v_m, u\rangle_{L^2([-2L,2L]^d)} v_m$ converges in $H^s_{\mathrm{per}}([-2L, 2L]^d)$ to a limit that we denote by $\mathscr{O}_n^{1/2}(u)$. By the continuity of $B$, $B[\mathscr{O}_n^{1/2} (u),\mathscr{O}_n^{1/2}(u)] \leqslant \|u\|_{L^2([-2L, 2L]^d)}^2$. Therefore, $\|\mathscr{O}_n^{1/2}(u)\|_{H^s_{\mathrm{per}}([-2L, 2L]^d)}^2 \leqslant \lambda_n^{-1} \|u\|_{L^2([-2L, 2L]^d)}^2$, i.e., $\mathscr{O}_n^{1/2}$ is bounded.
    
    To conclude the proof, observe that since $(\sum_{m=0}^N a_m^{-1/2} \langle v_m, u\rangle_{L^2([-2L,2L]^d)} v_m)_{N\in \mathbb{N}}$ converges to $ \mathscr{O}_n^{-1/2}(u)$ in $L^2([-2L, 2L]^d)$, and since the inner product $\langle\cdot, \cdot\rangle$ is continuous with respect to the $L^2([-2L, 2L]^d)$ norm (by the Cauchy-Schwarz inequality), then, for all $u \in H^s_{\mathrm{per}}([-2L, 2L]^d)$, one can write $\langle \mathscr{O}_n^{-1/2}(u), v_m\rangle_{L^2([-2L,2L]^d)} = a_m^{-1/2} \langle v_m, u\rangle_{L^2([-2L,2L]^d)}$. Besides, since the sequence $(\sum_{m=0}^N a_m^{1/2} \langle v_m, \mathscr{O}_n^{-1/2}(u)\rangle_{L^2([-2L,2L]^d)} v_m)_{N\in \mathbb{N}}$ converges to $\mathscr{O}_n^{1/2}  \mathscr{O}_n^{-1/2}(u)$ in $L^2([-2L, 2L]^d)$, one has that $\langle \mathscr{O}_n^{1/2} \mathscr{O}_n^{-1/2}(u), v_m\rangle_{L^2([-2L,2L]^d)} = \langle u, v_m\rangle_{L^2([-2L,2L]^d)}$. Finally, this shows that $\mathscr{O}_n^{1/2} \mathscr{O}_n^{-1/2}(u) = u$, and the proof is complete.
\end{proof}

Recall from Proposition \ref{prop:countable} that there exists a countable re-indexing $k: \mathbb{N} \to \mathbb{Z}^d$ such that $\|k\|_1$ is non-decreasing. Recall that we have let $e_{k(\ell)}(x) := \exp(i\langle k(\ell), x\rangle)$.

\begin{lem}[Non-empty intersection]
    Let $\mathscr{V}$ be a linear subspace of $H^s_{\mathrm{per}}([-2L,2L]^d)$ such that $\dim \mathscr{V} = m+1$.
    Then $\mathscr{V} \cap \mathrm{Span}( e_{k(\ell)})_{\ell \geqslant m} \neq \emptyset$.
    \label{lem:non_empty}
\end{lem}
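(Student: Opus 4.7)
The plan is a short dimension–counting argument via the rank–nullity theorem, using the Fourier representation of $H^s_{\mathrm{per}}([-2L,2L]^d)$ established in Proposition \ref{prop:dec_four} and the reindexing of Proposition \ref{prop:countable}. Note that since both $\mathscr{V}$ and $\mathrm{Span}(e_{k(\ell)})_{\ell \geqslant m}$ contain $0$, the statement must be understood as saying that the intersection contains a \emph{nonzero} vector.

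I would first introduce the truncation map
\[
\pi_m : H^s_{\mathrm{per}}([-2L,2L]^d) \to \mathbb{C}^m, \qquad f \mapsto \big(\langle f, e_{k(\ell)}\rangle_{L^2([-2L,2L]^d)}\big)_{0 \leqslant \ell < m},
\]
which is linear and well-defined, since any $f \in H^s_{\mathrm{per}}([-2L,2L]^d) \subseteq L^2([-2L,2L]^d)$ admits a unique Fourier expansion $f = \sum_{\ell \in \mathbb N} z_\ell e_{k(\ell)}$ by Propositions \ref{prop:dec_four} and \ref{prop:countable}, the coefficients of which are precisely (up to the $L^2$ normalization constant) the inner products appearing in the definition of $\pi_m$.

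Next I would restrict $\pi_m$ to $\mathscr{V}$. This yields a linear map from an $(m+1)$-dimensional space into an $m$-dimensional one, so by rank–nullity its kernel has dimension at least $1$. Pick a nonzero $f \in \mathscr{V}$ with $\pi_m(f) = 0$: by uniqueness of the Fourier decomposition, the first $m$ Fourier coefficients of $f$ vanish, hence $f = \sum_{\ell \geqslant m} z_\ell\, e_{k(\ell)}$ with $z$ satisfying the Sobolev summability condition of Proposition \ref{prop:countable}. This exhibits a nonzero element of $\mathscr{V} \cap \mathrm{Span}(e_{k(\ell)})_{\ell \geqslant m}$, as desired.

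The only mild subtlety I anticipate is a bookkeeping one rather than a real obstacle: one must be consistent about whether $\mathrm{Span}(e_{k(\ell)})_{\ell \geqslant m}$ denotes the algebraic or the closed (Hilbert) span, and one must use the real structure of $H^s_{\mathrm{per}}([-2L,2L]^d)$ (i.e., the constraint $\bar z_k = z_{-k}$) carefully if $\mathscr V$ is viewed as a real subspace. In both interpretations, the same rank–nullity argument applies verbatim, because the kernel of $\pi_m$ is exactly the set of functions whose Fourier expansion is supported on the indices $\{k(\ell) : \ell \geqslant m\}$, which is contained in $\mathrm{Span}(e_{k(\ell)})_{\ell \geqslant m}$ in either sense.
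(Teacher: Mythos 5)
Your proof is correct and takes essentially the same approach as the paper's: both apply rank–nullity to the linear map sending an element of $\mathscr{V}$ to its first $m$ Fourier coefficients (the paper writes this map in coordinates after fixing a basis $z_0,\dots,z_m$ of $\mathscr{V}$, while you define the truncation map $\pi_m$ directly on the function space and restrict it to $\mathscr{V}$). Your remarks on the algebraic-vs.-closed span and real-vs.-complex structure flag genuine imprecisions that the paper also glosses over, but they do not change the argument.
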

\begin{proof}
    Let $z_0, \hdots, z_m$ be a basis of $\mathscr{V}$. 
    Let us consider the linear function \[T: (x_0, \hdots, x_m) \in \mathbb{R}^{m+1} \mapsto \Big( \langle e_{k(j)}, \sum_{\ell = 0}^m x_\ell z_\ell\rangle_{L^2([-2L,2L]^d)}\Big)_{0\leqslant j \leqslant m-1} \in \mathbb{R}^m.\]
    The rank–nullity theorem ensures that the dimension of the kernel of $T$ is at least 1. Thus, there is a linear combination $z = x_0 z_0 + \cdots + x_m z_m$ such that, for all $\ell \leqslant m - 1$, $\langle z, e_{k(\ell)}\rangle_{L^2([-2L,2L]^d)} = 0$ and $z \neq 0$. Since $(e_{k(\ell)})_{\ell \geqslant 0}$ is a basis of $H^s_{\mathrm{per}}([-2L,2L]^d)$, we conclude that $z \in \mathrm{Span}( e_{k(\ell)})_{\ell \geqslant m} \cap \mathscr{V}$.
\end{proof}

\begin{prop}[Eigenvalues of the differential operator]
    There is a constant $C_2>0$, depending only on $d$ and $s$, such that, for all $m\in \mathbb{N}$, 
    \[a_m \leqslant C_2 \lambda_n^{-1} m^{-2s/d}.\]
    In particular, $\sum_{m\in \mathbb{N}} a_m < \infty$ if $s > d/2$.
    \label{prop:eigenvalue}
\end{prop}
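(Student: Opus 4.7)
The plan is to apply the min-max characterization of eigenvalues (Theorem \ref{thm:courant_fischer}) in its dual form, combined with Lemma \ref{lem:non_empty} to reduce matters to a Fourier computation. By Proposition \ref{prop:scalar_product}, the quadratic form $B[u,u] = \lambda_n \|u\|_{H^s_{\mathrm{per}}([-2L,2L]^d)}^2 + \mu_n \|\mathscr{D}(u)\|_{L^2(\Omega)}^2$ satisfies $B[u,u] = \|\mathscr{O}_n^{-1/2} u\|_{L^2([-2L,2L]^d)}^2$ on $H^s_{\mathrm{per}}([-2L,2L]^d)$, and since the $v_m$ form an orthonormal basis of $L^2$ which is simultaneously a basis of $H^s_{\mathrm{per}}$ with $B[v_m, v_m] = a_m^{-1}$ (Proposition \ref{prop:diagH}), the Courant-Fischer theorem applied to $\mathscr{O}_n$ transfers, via the substitution $w = \mathscr{O}_n^{1/2} u$, into the dual formula
\[a_m^{-1} = \min_{\substack{V \subseteq H^s_{\mathrm{per}}([-2L,2L]^d) \\ \dim V = m+1}} \; \max_{\substack{u \in V \\ u \neq 0}} \; \frac{B[u,u]}{\|u\|_{L^2([-2L,2L]^d)}^2}.\]

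To lower bound $a_m^{-1}$ (hence upper bound $a_m$), I need to show that for every $(m+1)$-dimensional subspace $V$ of $H^s_{\mathrm{per}}([-2L,2L]^d)$, the inner maximum is large. Fix such a $V$. By Lemma \ref{lem:non_empty} there exists a non-zero $u \in V \cap \mathrm{Span}(e_{k(\ell)})_{\ell \geqslant m}$, where the enumeration $k: \mathbb{N} \to \mathbb{Z}^d$ of Proposition \ref{prop:countable} orders the Fourier modes by non-decreasing $\|k(\ell)\|_1$. Writing $u = \sum_{\ell \geqslant m} z_\ell e_{k(\ell)}$ and using Proposition \ref{prop:dec_four}, I obtain
\[\frac{B[u,u]}{\|u\|_{L^2}^2} \geqslant \lambda_n \frac{\|u\|_{H^s_{\mathrm{per}}}^2}{\|u\|_{L^2}^2} = \lambda_n \frac{\sum_{\ell \geqslant m} |z_\ell|^2 T(k(\ell))}{\sum_{\ell \geqslant m} |z_\ell|^2} \geqslant \lambda_n \min_{\ell \geqslant m} T(k(\ell)),\]
where $T(k) = \sum_{|\alpha| \leqslant s}\bigl(\frac{\pi}{2L}\bigr)^{2|\alpha|} \prod_{j=1}^d k_j^{2\alpha_j}$.

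To conclude, I bound $T(k(\ell))$ from below. Keeping only multi-indices of the form $\alpha = s e_j$ gives $T(k) \geqslant \bigl(\frac{\pi}{2L}\bigr)^{2s} \max_j k_j^{2s} \geqslant \bigl(\frac{\pi}{2Ld}\bigr)^{2s} \|k\|_1^{2s}$. Since $\|k(\ell)\|_1$ is non-decreasing and (the proof of) Proposition \ref{prop:countable} furnishes a constant $c_1 > 0$ with $\|k(\ell)\|_1 \geqslant c_1 \ell^{1/d}$ for $\ell \geqslant 1$, this yields $\min_{\ell \geqslant m} T(k(\ell)) \geqslant (c_1 \pi / (2Ld))^{2s}\, m^{2s/d}$ for $m \geqslant 1$. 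Combining gives $a_m^{-1} \geqslant C_2^{-1} \lambda_n m^{2s/d}$ for an appropriate $C_2 > 0$, i.e., $a_m \leqslant C_2 \lambda_n^{-1} m^{-2s/d}$; the case $m = 0$ is trivial since $a_0 \leqslant \lambda_n^{-1}$ follows directly from the boundedness of $\mathscr{O}_n$ in Proposition \ref{prop:diff_op}. Summability $\sum_m a_m < \infty$ is then a consequence of $\sum_{m \geqslant 1} m^{-2s/d} < \infty$, which holds exactly when $s > d/2$.

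The main obstacle is the justification of the dual min-max formula, since $\mathscr{O}_n^{-1}$ is unbounded as an operator on $L^2$. This is circumvented by formulating the problem on $H^s_{\mathrm{per}}([-2L,2L]^d)$ via the quadratic form $B[\cdot,\cdot]$ and using that $\mathscr{O}_n^{1/2}$ is a bijection between $L^2([-2L,2L]^d)$ and the span of the eigenfunctions in $H^s_{\mathrm{per}}([-2L,2L]^d)$, so that maximization over $(m+1)$-dimensional subspaces of $L^2$ in the primal Courant-Fischer formula translates into minimization over $(m+1)$-dimensional subspaces of $H^s_{\mathrm{per}}$ after inversion.
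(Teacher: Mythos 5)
Your argument is correct and takes essentially the same route as the paper: both proofs combine Lemma \ref{lem:non_empty} with the Fourier lower bound $\sum_{|\alpha|\leqslant s}\bigl(\tfrac{\pi}{2L}\bigr)^{2|\alpha|}\prod_{j}k_j(\ell)^{2\alpha_j}\geqslant C\,\ell^{2s/d}$ extracted from Proposition \ref{prop:countable}. The paper simply packages this as a proof by contradiction on $\mathrm{Span}(v_0,\ldots,v_m)$ instead of stating the dual Courant--Fischer formula for the form $B$ explicitly, but the two presentations are equivalent; note only that the substitution you invoke should read $w=\mathscr{O}_n^{-1/2}u$ rather than $w=\mathscr{O}_n^{1/2}u$, a slip that does not affect the rest of the argument.
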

\begin{proof}
    From the proof of Proposition \ref{prop:countable}, we know that there exists a constant $C_1 > 0$, depending on $d$ and $s$ such that $(1+\|k(m)\|_2^2)^{s/2} \geqslant \|k(m)\|_2^{s} \geqslant \|k(m)\|_1^{s} / d^{s} \geqslant C_1 m^{s/d}$. 
    Therefore, there exists a constant $C_3 > 0$, depending on $d$ and $s$, such that  \begin{equation}
        \sum_{|\alpha|\leqslant s} \Big(\frac{\pi}{2L}\Big)^{2|\alpha|} \prod_{j=1}^d k_j(m)^{2\alpha_j} \geqslant C_3 m^{2s/d}.
        \label{eq:controlEigenValues}
    \end{equation}
    Let $C_2 = 2C_3^{-1}$, and let us prove Proposition \ref{prop:eigenvalue} by contradiction. Thus, suppose that there is an integer $m$ such that $a_m > 2C_3^{-1} \lambda_n^{-1} m^{-2s/d}$. Then, for all $\ell \leqslant m$, $a_\ell^{-1} < C_3 \lambda_n m^{2s/d}/2$ and $C_3 \lambda_n m^{2s/d}/2 > B[v_\ell, v_\ell] \geqslant \lambda_n \|v_\ell \|_{H^s([-2L,2L]^d)}^2$. Thus, $\mathscr{V} = \mathrm{Span}(v_0, \hdots, v_m)$ is a subspace of $H^s_{\mathrm{per}}([-2L,2L]^d)$ of dimension $m+1$. In particular, for all $z \in \mathscr{V}$, there are weights $\beta_\ell \in \mathbb{R}$ such that $z = \sum_{j=0}^m \beta_\ell v_\ell$, and so $\|z \|_{L^2([-2L,2L]^d)}^2 = \sum_{\ell=0}^m \beta_\ell^2 $. Hence, 
    \begin{equation}
        \lambda_n \|z \|_{H^s([-2L,2L]^d)}^2 \leqslant B[z,z] = \sum_{\ell=0}^m \beta_j^2 a_\ell^{-1} \leqslant  \|z \|_{L^2([-2L,2L]^d)}^2 \lambda_n C_3 m^{2s/d}/2.
        \label{eq:bounded_energy}
    \end{equation}
    Let $S_s : H^s_{\mathrm{per}}([-2L,2L]^d) \to L^2([-2L,2L]^d)$ be the operator such that \[S_s(e_{k(\ell)}) = \Big(\sum_{|\alpha|\leqslant s} \Big(\frac{\pi}{2L}\Big)^{2|\alpha|} \prod_{j=1}^d k_j(\ell)^{2\alpha_j} \Big)^{1/2} e_{k(\ell)}.\] Then, by definition, $S_s$
    is diagonalizable on $H^s_{\mathrm{per}}([-2L,2L]^d)$ with eigenfunctions $e_{k(\ell)}$ and, for all $f \in H^s_{per}([-2L,2L]^d)$, $\|S_s(f)\|_{L^2([-2L,2L]^d)} = \|f\|_{H^s_{per}([-2L,2L]^d)}$. 
    Since $\dim \mathscr{V} = m+1$, Lemma \ref{lem:non_empty} ensures that $\mathscr{V} \cap \mathrm{Span}( e_{k(\ell)})_{\ell \geqslant m} \neq \emptyset$. However, any $z \in \mathrm{Span}( e_{k(\ell)})_{\ell \geqslant m}$ can be written $z = \sum_{j\geqslant m} \beta_\ell e_{k(\ell)}$, for weights $\beta_\ell \in \mathbb{R}$. Thus, using \eqref{eq:controlEigenValues}, we have that
    \begin{align*}
        \|z \|_{H^s([-2L,2L]^d)}^2 = \sum_{j\geqslant m} \beta_\ell^2 \|e_{k(\ell)} \|_{H^s([-2L,2L]^d)}^2 &\geqslant C_3 m^{2s/d}\sum_{\ell \geqslant m} \beta_\ell ^2\\
        &=  C_3 m^{2s/d}\|z \|_{L^2([-2L,2L]^d)}^2.
    \end{align*}
    Since, by assumption, $z \in \mathscr{V}$, this contradicts \eqref{eq:bounded_energy}.
\end{proof}

\begin{rem}[Lower bound on $a_m^{-1}$]
    Using similar arguments but bounding the eigenvalues of $S_s$ by $\sum_{|\alpha|\leqslant s} \Big(\frac{\pi}{2L}\Big)^{2|\alpha|} \prod_{j=1}^d k_j(\ell)^{2\alpha_j} \geqslant 1$, or directly applying the so-called Rayleigh's formula \citep[Chapter 6.5, Theorem 2]{evans2010partial}, one shows that, for all $m \geqslant 0$, $a_m^{-1} \geqslant \lambda_n$.
    \label{rem:rayleigh}
\end{rem}
Let $x \in [-2L,2L]^d$. Let $\delta_x$ be the Dirac distribution, i.e., the linear form on $C^0([-2L,2L]^d)$ such that, for all $f \in C^0([-2L,2L]^d)$, $\langle \delta_x, f\rangle = f(x)$. Notice that $\delta_x$ is continuous with respect to the $\|\cdot \|_\infty$ norm. In the sequel, with a slight abuse of notation, we replace $\delta_x(f)$ by $\langle \delta_x, f\rangle$. In effect, $\delta_x$ can be approximated by a regularizing sequence $(\xi^x_m)_{m\in \mathbb{N}}$ with respect to the $L^2([-2L,2L]^d)$ inner product, i.e., \[\forall f \in C^0([-2L,2L]^d), \quad \lim_{m\to\infty} \langle \xi^x_m, f\rangle_{L^2([-2L,2L]^d)} = f(x).\] Therefore, the action of $\delta_x$ on $f$ behaves like an inner product on $L^2([-2L,2L]^d)$, and this intuition will be fruitful in the next Proposition. Moreover, since $H^s_{\mathrm{per}}([-2L,2L]^d) \subseteq H^s([-2L,2L]^d) \subseteq C^0([-2L,2L]^d)$, when ``applied" to any $f \in H^s_{\mathrm{per}}([-2L,2L]^d)$, $\delta_x$ can be considered as the evaluation at $x$ of the unique continuous representation of $f$. The following proposition shows that $\mathscr{O}_n^{1/2}$ can be extended to $\delta_x$ in such a way that this extension stays self-adjoint.
\begin{prop}[Self-adjoint operator extension]
    Let $s>d/2$ and, for $x \in [-2L, 2L]^d$, let $\mathscr{O}_n^{1/2} (\delta_x) = \sum_{m\in \mathbb{N}} a_m^{1/2} v_m(x) v_m$. Then, almost everywhere in $x$ according to the Lebesgue measure on $[-2L, 2L]^d$, $\mathscr{O}_n^{1/2} (\delta_x) \in L^2([-2L,2L]^d)$ and, for all $f \in L^2([-2L,2L]^d)$,  
    \[\langle \mathscr{O}_n^{1/2}(f), \delta_x\rangle = \langle   f, \mathscr{O}_n^{1/2}(\delta_x) \rangle_{L^2([-2L,2L]^d)}.\] 
    \label{prop:extension}
\end{prop}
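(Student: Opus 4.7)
The plan is to establish the two assertions essentially by a spectral calculation, where the only delicate point is to check that the infinite series involved converge in the correct senses. I would split the argument in two steps matching the two claims.

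\textbf{Step 1 (Almost-everywhere well-definedness).} Since $s>d/2$, Proposition \ref{prop:eigenvalue} yields $\sum_{m\in\mathbb{N}}a_m<\infty$. Because $(v_m)_{m\in\mathbb{N}}$ is orthonormal in $L^2([-2L,2L]^d)$, each $v_m$ satisfies $\int_{[-2L,2L]^d}v_m^2=1$, and Tonelli's theorem gives
\[
\int_{[-2L,2L]^d}\Big(\sum_{m\in\mathbb{N}}a_m v_m(x)^2\Big)dx=\sum_{m\in\mathbb{N}}a_m<\infty.
\]
Hence $\sum_{m\in\mathbb{N}}a_m v_m(x)^2<\infty$ for Lebesgue-a.e.\ $x\in[-2L,2L]^d$. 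For any such $x$, the partial sums $\sum_{m=0}^{N}a_m^{1/2}v_m(x)v_m$ form a Cauchy sequence in $L^2([-2L,2L]^d)$ with limit denoted $\mathscr{O}_n^{1/2}(\delta_x)$, and $\|\mathscr{O}_n^{1/2}(\delta_x)\|^2_{L^2([-2L,2L]^d)}=\sum_{m\in\mathbb{N}}a_m v_m(x)^2$ by orthonormality.

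\textbf{Step 2 (Self-adjointness identity).} Fix an admissible $x$ as above and let $f\in L^2([-2L,2L]^d)$. By Proposition \ref{prop:scalar_product}, $\mathscr{O}_n^{1/2}(f)\in H^s_{\mathrm{per}}([-2L,2L]^d)$ and admits the expansion $\mathscr{O}_n^{1/2}(f)=\sum_{m\in\mathbb{N}}a_m^{1/2}\langle v_m,f\rangle_{L^2([-2L,2L]^d)}v_m$, where the series converges in $H^s_{\mathrm{per}}([-2L,2L]^d)$. Because $s>d/2$, the Sobolev embedding (Theorem \ref{thm:sobIneq}) gives $H^s_{\mathrm{per}}([-2L,2L]^d)\hookrightarrow C^0([-2L,2L]^d)$, so this series also converges uniformly and $\mathscr{O}_n^{1/2}(f)$ has a continuous representative. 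Evaluating pointwise at $x$ yields
\[
\langle\mathscr{O}_n^{1/2}(f),\delta_x\rangle=\mathscr{O}_n^{1/2}(f)(x)=\sum_{m\in\mathbb{N}}a_m^{1/2}\langle v_m,f\rangle_{L^2([-2L,2L]^d)}v_m(x).
\]
On the other hand, by continuity of the $L^2$ inner product with respect to the convergent series defining $\mathscr{O}_n^{1/2}(\delta_x)$ in Step 1,
\[
\langle f,\mathscr{O}_n^{1/2}(\delta_x)\rangle_{L^2([-2L,2L]^d)}=\lim_{N\to\infty}\sum_{m=0}^{N}a_m^{1/2}v_m(x)\langle f,v_m\rangle_{L^2([-2L,2L]^d)},
\]
which is the same series. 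Comparing the two identities proves the claimed equality.

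\textbf{Main obstacle.} The only subtle point is giving a rigorous meaning to the pointwise evaluation $\langle\mathscr{O}_n^{1/2}(f),\delta_x\rangle$ and commuting it with the spectral expansion. This is resolved by Sobolev embedding $H^s_{\mathrm{per}}\hookrightarrow C^0$ at the outset, which simultaneously guarantees that $\mathscr{O}_n^{1/2}(f)$ has a canonical continuous representative and that the partial sums of its spectral expansion converge uniformly, so that one may indeed pass to the limit inside the evaluation functional $\delta_x$. The remaining passage to the limit on the $L^2$ side is a direct consequence of Step 1.
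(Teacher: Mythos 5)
Your proof is correct, and it takes a genuinely different route from the paper's. The paper works on the product space: it shows that $\psi_N(x,y)=\sum_{m\leqslant N}a_m^{1/2}v_m(x)v_m(y)$ is Cauchy in $L^2([-2L,2L]^{2d})$, invokes Fubini--Lebesgue to get $\psi_\infty(x,\cdot)\in L^2$ for a.e.\ $x$, and then proves the adjointness identity by estimating $\int_{[-2L,2L]^d}|\langle f,\psi_\infty(x,\cdot)\rangle-\langle\mathscr O_n^{1/2}(f),\delta_x\rangle|\,dx$ to zero via Cauchy--Schwarz, so that the equality holds a.e.\ in $x$ for each fixed $f$. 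You instead apply Tonelli directly to $\sum_m a_m v_m(x)^2$ to produce a single full-measure set of admissible $x$, and then exploit the $H^s_{\mathrm{per}}$-convergence from Proposition \ref{prop:scalar_product} together with the Sobolev embedding $H^s_{\mathrm{per}}\hookrightarrow C^0$ to turn the spectral expansion of $\mathscr O_n^{1/2}(f)$ into a uniformly convergent series of continuous functions, which may then be evaluated term-by-term at $x$. Both sides of the identity are then literally the same absolutely convergent series. A benefit of your version is that the exceptional null set in $x$ is manifestly independent of $f$, whereas the paper's $L^1$-in-$x$ argument produces, on its face, a null set that may depend on $f$; your approach also surfaces the reason why the identity is so clean (both sides are the same scalar series). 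The price is an extra appeal to the Sobolev embedding, which the paper's proof avoids at this step, using instead a slightly heavier dose of Cauchy--Schwarz and dominated convergence.
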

\begin{proof}
    Let $\psi_N(x,y) = \sum_{m=0}^N \alpha_m^{1/2} v_m(x) v_m(y)$. Then, for all $N_1 \leqslant N_2$,
    \begin{align*}
        &\int_{[-2L, 2L]^d}\int_{[-2L, 2L]^d} |\psi_{N_2}(x,y)-\psi_{N_1}(x,y)|^2dx dy \\
        &= \int_{[-2L, 2L]^d}\int_{[-2L, 2L]^d} \Big|\sum_{m=N_1+1}^{N_2} a_m^{1/2} v_m(x) v_m(y)\Big|^2dx dy\\
        &= \sum_{m, \ell =N_1+1}^{N_2} a_m \int_{[-2L, 2L]^d} v_m(x)v_\ell(x) dx \int_{[-2L, 2L]^d}v_m(y) v_\ell(y) dy\\
        &= \sum_{m=N_1+1}^{N_2} a_m \leqslant \sum_{m=N_1+1}^{\infty} a_m.
    \end{align*}
    Proposition \ref{prop:eigenvalue} shows that $\lim_{N_1 \to \infty}\sum_{m=N_1}^{\infty} a_m = 0$, hence  $(\psi_N)_{N\in \mathbb{N}}$ is a Cauchy sequence. Therefore, $\psi_\infty (x,y) = \sum_{m\in \mathbb{N}} a_m^{1/2} v_m(x) v_m(y)$ converges in $L^2([-2L,2L]^d \times [-2L,2L]^d)$ and 
    \begin{equation*}
        \int_{[-2L, 2L]^{2d}} |\psi_\infty(x,y)|^2dx dy  = \sum_{m\in \mathbb{N}} a_m.
    \end{equation*}
    Thus, by the Fubini-Lebesgue theorem, almost everywhere in $x$ according to the Lebesgue measure on $[-2L, 2L]^d$, one has $\mathscr{O}_n^{1/2}(\delta_x):= \psi_\infty(x,\cdot) \in L^2([-2L,2L]^d)$. Recall that, by definition, \[\mathscr{O}_n^{1/2} (f) = \sum_{m\in \mathbb{N}} a_m^{1/2}\langle f, v_m\rangle_{L^2([-2L,2L]^d)} v_m \in H^s_{\mathrm{per}}([-2L,2L]^d),\] so that $\langle \mathscr{O}_n^{1/2} (f), \delta_x\rangle = \sum_{m\in \mathbb{N}} a_m^{1/2}\langle f, v_m\rangle_{L^2([-2L,2L]^d)} v_m(x)$. Moreover, for any function $f \in L^2([-2L,2L]^d)$, 
    \begin{align*}
        &\int_{[-2L, 2L]^d} |\langle f, \psi_N(x, \cdot)\rangle_{L^2([-2L,2L]^d)} - \langle \mathscr{O}_n^{1/2} (f), \delta_x\rangle| dx\\
        &\quad = \int_{[-2L, 2L]^d}  \Big|\int_{[-2L, 2L]^d} f(y) \sum_{m=0}^N a_m^{1/2} v_m(x) v_m(y)dy \\
        & \qquad \qquad \qquad \quad - \sum_{m\in \mathbb{N}} a_m^{1/2}\langle f, v_m\rangle_{L^2([-2L,2L]^d)} v_m(x)\Big|dx\\
         &\quad = \int_{[-2L, 2L]^d}  \Big|\int_{[-2L, 2L]^d} f(y) \sum_{m > N} a_m^{1/2} v_m(x) v_m(y)\Big|dx dy \xrightarrow{ N \to \infty} 0.
    \end{align*}
    Therefore, since
    \begin{align*}
        &\int_{[-2L, 2L]^d} |\langle f, \psi_\infty(x, \cdot)\rangle_{L^2([-2L,2L]^d)} - \langle \mathscr{O}_n^{1/2} (f), \delta_x\rangle| dx \\
        &\quad \leqslant \int_{[-2L, 2L]^d} |\langle f, \psi_N(x, \cdot)\rangle_{L^2([-2L,2L]^d)} - \langle \mathscr{O}_n^{1/2} (f), \delta_x\rangle| dx  \\
        &\qquad + \int_{[-2L, 2L]^d} |\langle f, \psi_\infty(x, \cdot) - \psi_N(x, \cdot)\rangle_{L^2([-2L,2L]^d)} | dx,
    \end{align*}
    and since 
    \begin{align*}
        &\int_{[-2L, 2L]^d} |\langle f, \psi_\infty(x, \cdot) - \psi_N(x, \cdot)\rangle_{L^2([-2L,2L]^d)} | dx \\
        &\quad \leqslant \Big(\int_{[-2L, 2L]^d}\int_{[-2L, 2L]^d} |f(y)|^2dydx\Big)^{1/2} \\
        &\qquad \times \Big(\int_{[-2L, 2L]^d}\int_{[-2L, 2L]^d}| \psi_\infty(x,y) - \psi_N(x,y)|^2 dydx\Big)^{1/2} \\
        & \quad \xrightarrow{N \to \infty} 0,
    \end{align*} 
    we deduce that $\int_{[-2L, 2L]^d} |\langle f, \psi_\infty(x, \cdot)\rangle_{L^2([-2L,2L]^d)} - \langle \mathscr{O}_n^{1/2} (f), \delta_x\rangle| dx = 0$. Hence, almost everywhere in $x$ according to the Lebesgue measure on $[-2L, 2L]^d$,  we get that  the operator $\mathscr{O}_n^{1/2}$ is self-adjoint, i.e., 
    $\langle \mathscr{O}_n^{1/2} (f), \delta_x\rangle = \langle   f, \mathscr{O}_n^{1/2}(\delta_x)\rangle$.
\end{proof}

\subsection{Proof of Theorem \ref{thm:PDE_kernel}}
Let $s>d/2$, $n\in\mathbb{N}$, $\lambda_n >0$, $\mu_n \geqslant 0$, and consider a linear partial differential operator $\mathscr{D}(u) = \sum_{|\alpha|\leqslant s} p_\alpha \partial^\alpha u$ of order $s$ such that $\max_\alpha \|p_\alpha\|_\infty < \infty$. Proposition \ref{prop:diff_op} and \ref{prop:scalar_product} show that there exists a compact self-adjoint differential operator $\mathscr{O}_n$ such that, for all $f \in H^s_{\mathrm{per}}([-2L,2L]^d)$, \[\|\mathscr{O}_n^{-1/2}(f)\|_{L^2([-2L,2L]^d)}^2 = \lambda_n \|f\|_{H^s([-2L,2L]^d)}^2+ \mu_n \|\mathscr{D}(f)\|_{L^2(\Omega)}^2.\]
Consider any target function $f \in H^s_{\mathrm{per}}([-2L,2L]^d)$. The Sobolev embedding theorem states that $H^s_{\mathrm{per}}([-2L,2L]^d) \subseteq H^s([-2L,2L]^d) \subseteq C^0([-2L,2L]^d)$. Thus, for all $x \in \Omega$, we have that $f(x) = \langle f, \delta_x\rangle$.
Proposition \ref{prop:scalar_product} ensures that $f(x) = \langle \mathscr{O}_n^{1/2}\mathscr{O}_n^{-1/2}(f), \delta_x\rangle$ and Proposition \ref{prop:extension} that, for almost every $x \in \Omega$ with respect to the Lebesgue measure, 
\[f(x) = \langle \mathscr{O}_n^{-1/2}(f), \mathscr{O}_n^{1/2}(\delta_x)\rangle_{L^2([-2L,2L]^d)},\]
with $\mathscr{O}_n^{-1/2}(f) \in L^2([-2L,2L]^d)$ and $\mathscr{O}_n^{1/2}(\delta_x) \in L^2([-2L,2L]^d)$. 
Proposition \ref{prop:scalar_product} shows that 
$\mathscr{O}_n^{1/2}(\delta_x) = \mathscr{O}_n^{-1/2}\mathscr{O}_n(\delta_x)$. Thus, 
\[f(x) = \langle f, \mathscr{O}_n(\delta_x)\rangle_{\mathrm{RKHS}},\]
where the RKHS inner product is defined by $\langle g, h\rangle_{\mathrm{RKHS}} = \langle \mathscr{O}_n^{-1/2}(g), \mathscr{O}_n^{-1/2}(h)\rangle_{L^2([-2L,2L]^d)}$.
Since $\mathscr{O}_n^{1/2}(\delta_x) \in L^2([-2L,2L]^d)$, Proposition \ref{prop:scalar_product} shows that $\mathscr{O}_n(\delta_x) \in H^s_{\mathrm{per}}([-2L,2L]^d)$.
We can therefore define the kernel
\begin{align*}
    K(x,y) &= \langle \mathscr{O}_n(\delta_x), \mathscr{O}_n(\delta_y)\rangle_{\mathrm{RKHS}} \\
    &= \langle \mathscr{O}_n^{1/2}(\delta_x), \mathscr{O}_n^{1/2}(\delta_y)\rangle_{L^2([-2L,2L]^d)}.
\end{align*}
Proposition \ref{prop:extension} ensures that $K(x,y) = \langle \mathscr{O}_n(\delta_x), \delta_y\rangle = \mathscr{O}_n(\delta_x)(y) = \sum_{m\in \mathbb{N}} a_m  v_m(x)v_m(y)$. Therefore, we know that $K(x,\cdot) \in H^s_{\mathrm{per}}([-2L,2L]^d)$, and we recognize the reproducing property stating that, for all $f \in H^s_{\mathrm{per}}([-2L,2L]^d)$ and all $x \in [-2L,2L]^d$, $f(x) = \langle f, K(x, \cdot)\rangle_{\mathrm{RKHS}}$.

\subsection{Proof of Proposition \ref{prop:kernel_characterization}}
 Recall that $K(x,\cdot) = \mathscr{O}_n(\delta_x)$.
 It was proven in Proposition \ref{prop:extension} that $\mathscr{O}_n^{1/2}(\delta_x) \in L^2([-2L,2L]^d)$. By Proposition \ref{prop:scalar_product}, $\sum_{m=0}^N a_m^{1/2}\langle v_m, \mathscr{O}_n^{1/2}(\delta_x)\rangle_{L^2([-2L,2L]^d)} v_m$ converges in $H^s_{\mathrm{per}}([-2L,2L]^d)$ to  $K(x, \cdot)$. Let $\phi \in H^s_{\mathrm{per}}([-2L,2L]^d)$ be a test function. Since $B$ is continuous on $H^s_{\mathrm{per}}([-2L,2L]^d)$, 
 \[\lim_{N\to \infty} B\Big[\sum_{m=0}^N a_m^{1/2}\langle v_m, \mathscr{O}_n^{1/2}(\delta_x)\rangle_{L^2([-2L,2L]^d)}v_m, \phi\Big] = B[K(x, \cdot), \phi].\]
    Then,
    \begin{align*}
        &B\Big[\sum_{m=0}^N a_m^{1/2}\langle v_m, \mathscr{O}_n^{1/2}(\delta_x)\rangle_{L^2([-2L,2L]^d)}v_m, \phi\Big] \\
        &\quad = \sum_{m=0}^N a_m^{1/2}\langle v_m, \mathscr{O}_n^{1/2}(\delta_x)\rangle_{L^2([-2L,2L]^d)}B[v_m, \phi] \quad  \hbox{(by bilinearity)}\\
         &\quad = \sum_{m=0}^N a_m^{-1/2}\langle v_m, \mathscr{O}_n^{1/2}(\delta_x)\rangle_{L^2([-2L,2L]^d)} \langle v_m, \phi \rangle_{L^2([-2L,2L]^d)} \quad  \hbox{(since $v_m$ is an eigenfunction)}\\
         &\quad = \sum_{m=0}^N a_m^{-1/2}\langle \mathscr{O}_n^{1/2}(v_m), \delta_x\rangle \langle v_m, \phi \rangle_{L^2([-2L,2L]^d)} \quad  \hbox{(by Proposition \ref{prop:extension})}\\
         &\quad = \sum_{m=0}^N \langle v_m, \phi \rangle_{L^2([-2L,2L]^d)} v_m(x).
    \end{align*}
    Notice that the expression above is the decomposition of $\phi$ on the $v_m$ basis. We conclude, as desired, that
    $B[K(x, \cdot), \phi] = \lim_{N\to \infty} B[\sum_{m=0}^N a_m^{1/2}\langle v_m, \mathscr{O}_n^{1/2}(\delta_x)\rangle_{L^2([-2L,2L]^d)}v_m, \phi] = \phi(x)$.

\section{Integral operator and eigenvalues}
\subsection{Compactness of $C\mathscr{O}_nC$}
\begin{lem}[Compactness]
    The operator $C\mathscr{O}_nC: L^2([-2L, 2L]^d) \to L^2([-2L, 2L]^d)$ is positive, compact, and self-adjoint.
    \label{lem:compactness}
\end{lem}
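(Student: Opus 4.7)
The plan is to verify each of the three properties (positivity, compactness, self-adjointness) by combining the corresponding properties of $\mathscr{O}_n$ (established in Proposition \ref{prop:diff_op} and Proposition \ref{prop:dzL2}) with the elementary properties of the projector $C$ (boundedness, idempotence, self-adjointness) already recorded in the definition of $C$.

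First I would handle self-adjointness: since both $C$ and $\mathscr{O}_n$ are self-adjoint on $L^2([-2L,2L]^d)$, one computes $(C\mathscr{O}_n C)^\ast = C^\ast \mathscr{O}_n^\ast C^\ast = C\mathscr{O}_n C$ directly. Equivalently, for $f,g \in L^2([-2L,2L]^d)$,
\[
\langle f, C\mathscr{O}_n C g\rangle_{L^2([-2L,2L]^d)} = \langle Cf, \mathscr{O}_n Cg\rangle_{L^2([-2L,2L]^d)} = \langle C\mathscr{O}_n C f, g\rangle_{L^2([-2L,2L]^d)}.
\]

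Next I would check positivity. Using that $\mathscr{O}_n$ is positive (established in the proof of Proposition \ref{prop:dzL2}, where it was shown that $\langle f, \mathscr{O}_n f\rangle_{L^2([-2L,2L]^d)} > 0$ for $f \neq 0$), for any $f \in L^2([-2L,2L]^d)$ one has
\[
\langle f, C\mathscr{O}_n C f\rangle_{L^2([-2L,2L]^d)} = \langle Cf, \mathscr{O}_n (Cf)\rangle_{L^2([-2L,2L]^d)} \geqslant 0.
\]

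Finally, for compactness, I would invoke the standard fact that the compact operators form a two-sided ideal inside the bounded operators on $L^2([-2L,2L]^d)$. The operator $C$ is bounded (with operator norm $\|C\| \leqslant 1$ since $\mathbf{1}_\Omega$ takes values in $\{0,1\}$), and $\mathscr{O}_n$ is compact because it maps $L^2([-2L,2L]^d)$ continuously into $H^s_{\mathrm{per}}([-2L,2L]^d)$ by Proposition \ref{prop:diff_op}, and the Rellich-Kondrachov embedding $H^s_{\mathrm{per}}([-2L,2L]^d) \hookrightarrow L^2([-2L,2L]^d)$ is compact (Theorem \ref{thm:rellichK}). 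Composing on both sides with the bounded operator $C$ preserves compactness, so $C\mathscr{O}_n C$ is compact. No serious obstacle is expected here; all the heavy lifting has already been done upstream. I would simply make sure that each of the three bullet points is supported by the appropriate previously established lemma/proposition.
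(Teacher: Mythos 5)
Your proof is correct and follows essentially the same approach as the paper: verify positivity, self-adjointness, and compactness separately by combining the projector properties of $C$ with the already-established properties of $\mathscr{O}_n$. The only cosmetic difference is in the compactness step, where you invoke the two-sided ideal property of compact operators abstractly, while the paper unwinds that same fact by hand via the bounded-sequence/convergent-subsequence argument.
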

\begin{proof}
    Since $C$ is a self-adjoint projector, then, for all $f \in L^2([-2L, 2L]^d)$, $\|C(f)\|_{L^2([-2L,2L]^d)}^2 \leqslant \|f\|_{L^2([-2L,2L]^d)}^2$. 
    Thus, for any bounded sequence $(f_m)_{m\in \mathbb{N}}$ in $L^2([-2L, 2L]^d)$, the sequence $(C(f_m))_{m\in \mathbb{N}}$ is bounded.
    Since $\mathscr{O}_n$ is compact, upon passing to a subsequence, $(\mathscr{O}_nC(f_m))_{m\in \mathbb{N}}$ converges to $f_\infty \in L^2([-2L, 2L]^d)$.
    Therefore, $\lim_{m\to \infty}\|C\mathscr{O}_nC(f_m) - C(f_\infty)\|_{L^2([-2L,2L]^d)}^2 \leqslant \lim_{m\to \infty}\|\mathscr{O}_nC(f_m) - f_\infty\|_{L^2([-2L,2L]^d)}^2 = 0$, i.e., $(C\mathscr{O}_nC(f_m))_{m\in \mathbb{N}}$ converges to the function $C(f_\infty) \in L^2([-2L, 2L]^d)$. 
    So, $C\mathscr{O}_nC: L^2([-2L, 2L]^d) \to L^2([-2L, 2L]^d)$ is a compact operator.
    Moreover, given any $f \in L^2([-2L, 2L]^d)$, we have that $\langle f, C\mathscr{O}_nC (f)\rangle_{L^2([-2L,2L]^d)} = \| \mathscr{O}_n^{1/2} C(f)\|_{L^2([-2L,2L]^d)}^2 \geqslant 0$, which means that $C\mathscr{O}_nC$ is positive. Finally, $C\mathscr{O}_nC$ is self-adjoint, since $C$ and $\mathscr{O}_n$ are self-adjoint.
\end{proof}

\subsection{Proof of Theorem \ref{thm:eigenvalues}}
For clarity, the proof is divided into 4 steps. Steps 1 and 2 ensures that we can apply the Courant Fischer min-max theorem to the integral operator. 
Step 3 connects the Courant Fischer estimates of $L_K$ and $C\mathscr{O}_nC$. Finally, Step 4 establishes the result on the eigenvalues.

\paragraph{Step 1: Compactness of the integral operator.}
Let $L_{K, \mathscr{U}}$ be the integral operator
associated with the uniform distribution on $\Omega$, i.e.,
    \[\forall f \in L^2(\Omega), \forall x \in \Omega, \quad L_{K, \mathscr{U}}f(x) = \frac{1}{|\Omega|}\int_{\Omega} K(x,y) f(y) dy.\]
Since $K(x,y) = \sum_{m\in \mathbb{N}}a_m(\mathscr{O}_n) v_m(x) v_m(y)$, $\int_{[-2L,2L]^d} v_\ell v_m  = \mathbf{1}_{\ell = m}$, and $\sum_{m\in \mathbb{N}}a_m(\mathscr{O}_n) < \infty$, the Fubini-Lebesgue theorem states that
\[\int_{\Omega^2} |K(x,y)|^2 dx dy \leqslant \int_{[-2L,2L]^{2d}} |K(x,y)|^2 dx dy = \sum_{m\in \mathbb{N}}a_m^2(\mathscr{O}_n) < \infty,\] which implies that 
$L_{K, \mathscr{U}}$ is a Hilbert-Schmidt operator \citep[][Lemma 8.20]{renardy2004an}.
As a consequence, $L_{K, \mathscr{U}}$ is compact
\citep[][Theorem 8.83]{renardy2004an}.
Observe that $L_K f = L_{K, \mathscr{U}} f \frac{d\mathbb{P}_X}{dx}$. Let $C_2 >0$. Given any sequence $(f_n)_{n\in \mathbb{N}}$ such that $\|f_n\|_{L^2(\Omega, \mathbb{P}_X)} \leqslant C_2$, then, clearly, $\|f_n  \frac{d\mathbb{P}_X}{dx}\|_{L^2(\Omega)} \leqslant \kappa C_2$. This shows that the sequence $(f_n \frac{d\mathbb{P}_X}{dx})_{n\in \mathbb{N}}$ is bounded in $L^2(\Omega)$. Thus, since $L_{K, \mathscr{U}}$ is compact, upon passing to a  subsequence, $L_{K, \mathscr{U}}(f_n \frac{d\mathbb{P}_X}{dx}) = L_{K}(f_n)$ converges in $L^2(\Omega)$, and therefore in $L^2(\Omega, \mathbb{P}_X)$. This shows that the integral operator $L_K$ is compact.

\paragraph{Step 2: Courant Fischer min-max theorem.} Using $\frac{d\mathbb{P}_X}{dx} \leqslant \kappa$ and letting $\mathrm{FS}$ be the Fourier series operator, i.e., $\mathrm{FS}(f)(k) = \langle f, \exp(-i\frac{\pi}{2L}\langle k, \cdot\rangle)\rangle_{L^2([-2L,2L]^d)}$, we see that for all $f \in L^2(\Omega, \mathbb{P}_X)$,
\begin{align*}
&\lim_{n \to \infty} \Big\|f - \sum_{\|k\|_2 \leqslant n} \mathrm{FS}(f)(k) \exp(i \pi L^{-1} \langle k, \cdot\rangle)\Big\|_{L^2(\Omega, \mathbb{P}_X)} \\
& \quad \leqslant \kappa \lim_{n \to \infty}\Big\|f - \sum_{k \in \mathbb{Z}^d,\; \|k\| \leqslant n} \mathrm{FS}(f)(k) \exp(i \pi L^{-1} \langle k, \cdot\rangle)\Big\|_{L^2(\Omega)} = 0.
\end{align*}
Therefore, the Gram-Schmidt algorithm applied to the $(\exp(i \pi L^{-1} \langle k, \cdot\rangle))_{k \in \mathbb{Z}^d}$ family provides a Hermitian basis of $L^2(\Omega, \mathbb{P}_X)$. In particular, the space $L^2(\Omega, \mathbb{P}_X)$ is separable.
Since $L_K$ is a positive compact self-adjoint operator on $L^2(\Omega, \mathbb{P}_X)$, Theorem \ref{thm:spectral} and \ref{thm:courant_fischer} show that $L_K$ is diagonalizable with positive eigenvalues $(a_n(L_K))_{n \in \mathbb{N}}$, with 
\[a_n(L_K) = \underset{\dim \Sigma = n}{\underset{\Sigma \subseteq L^2(\Omega, \mathbb{P}_X)}{\max}} 
\underset{f \neq 0}{\min_{f \in \Sigma}} \|f\|_{L^2(\Omega, \mathbb{P}_X)}^{-2}\langle f, L_K f\rangle_{L^2(\Omega, \mathbb{P}_X)} . \]

\paragraph{Step 3: Switching integrals.}
Observe that, for all  $f\in L^2(\Omega, \mathbb{P}_X)$, 
\begin{align*}
    &\int_{\Omega^2} \sum_{m\in \mathbb{N}} a_m(\mathscr{O}_n) |f(x)| |f(y)| |v_m(x)| |v_m(y)| d\mathbb{P}_X(x) d\mathbb{P}_X(y)\\ &= \sum_{m \in \mathbb{N}}  a_m(\mathscr{O}_n) \Big(\int_{\Omega} |f(x)||v_m(x)|d\mathbb{P}_X(x)\Big)^2\\
    &\leqslant \sum_{m \in \mathbb{N}}  a_m(\mathscr{O}_n) \|f\|_{L^2(\Omega, \mathbb{P}_X)}^2 \int_{\Omega} |v_m(x)|^2 d\mathbb{P}_X(x)\\
    &\leqslant \|f\|_{L^2(\Omega, \mathbb{P}_X)}^2  \kappa \sum_{m \in \mathbb{N}}  a_m(\mathscr{O}_n) < \infty.
\end{align*}
In the last inequality, we used the fact that $\int_{\Omega} |v_m(x)|^2 d\mathbb{P}_X(x) \leqslant \kappa \int_{[-2L,2L]^d} |v_m(x)|^2 dx = \kappa$.
Therefore, according to the Fubini-Lebesgue theorem,
\begin{align*}
    \langle f, L_K f\rangle_{L^2(\Omega, \mathbb{P}_X)} &= \int_{\Omega^2} f(x) \Big(\sum_{m\in \mathbb{N}}  a_m(\mathscr{O}_n) v_m(x) v_m(y)\Big)  f(y)d\mathbb{P}_X(y) d\mathbb{P}_X(x)\\
    &= \sum_{m\in \mathbb{N}}  a_m(\mathscr{O}_n) \Big(\int_{\Omega^2} f(x)v_m(x) d\mathbb{P}_X(x)\Big)^2\\
    &= \Big\|\mathscr{O}_n^{1/2} \Big( f\frac{d\mathbb{P}_X}{dx}\Big)\Big\|_{L^2([-2L,2L]^d)}^2.
\end{align*}

\paragraph{Step 4: Comparison using Courant Fischer.}
Let $z =  f\frac{d\mathbb{P}_X}{dx}$. By noting that
 $f \frac{d\mathbb{P}}{dx} = f \frac{d\mathbb{P}}{dx} \mathbf{1}_\Omega$, we see that $Cz = z$ and $\langle f, L_K f\rangle_{L^2(\Omega, \mathbb{P}_X)} = \langle z, C\mathscr{O}_nC(z)\rangle_{L^2([-2L,2L]^d)}$. Therefore, for any $\Sigma \subseteq L^2(\Omega, \mathbb{P}_X)$, we have  
\begin{align*}
    \underset{f \neq 0}{\min_{f \in \Sigma}} \|f\|_{L^2(\Omega, \mathbb{P}_X)}^{-2}\langle f, L_K f \rangle_{L^2(\Omega, \mathbb{P}_X)} &= \underset{f \neq 0}{\min_{f \in \Sigma}}\|f\|_{L^2(\Omega, \mathbb{P}_X)}^{-2}\Big\langle f \frac{d\mathbb{P}_X}{dx}, C\mathscr{O}_nC\Big(f \frac{d\mathbb{P}_X}{dx} \Big)\Big\rangle_{L^2([-2L,2L]^d)}\\
    &\leqslant \underset{z \neq 0}{\min_{z \in \frac{d\mathbb{P}_X}{dx}\Sigma}} \kappa \|z\|_{L^2([-2L,2L]^d)}^{-2}\langle z, C\mathscr{O}_nC(z)\rangle_{L^2([-2L,2L]^d)},
\end{align*}
where the inequality is a consequence of $\|z\|_{L^2([-2L,2L]^d)}^2 = \int_\Omega |f|^2 (\frac{d\mathbb{P}_X}{dx})^2 \leqslant \kappa \|f\|_{L^2(\Omega, \mathbb{P}_X)}^{2}$.
Using $\frac{d\mathbb{P}_X}{dx} L^2([-2L,2L]^d) \subseteq L^2([-2L,2L]^d)$, we conclude that
\begin{align}
    &\underset{\dim \Sigma = m}{\underset{\Sigma \subseteq L^2(\Omega, \mathbb{P}_X)}{\max}} 
\underset{f \neq 0}{\min_{f \in \Sigma}}\|f\|_{L^2(\Omega, \mathbb{P}_X)}^{-2}\langle f
, L_K f\rangle_{L^2(\Omega, \mathbb{P}_X)} \nonumber\\
&\quad \leqslant \kappa \underset{\dim \Sigma = m}{\underset{\Sigma \subseteq L^2([-2L,2L]^d)}{\max}} 
\underset{z \neq 0}{\min_{z \in \Sigma}}\|z\|^{-2}_{L^2([-2L,2L]^d)}\langle z,  C\mathscr{O}_nC(z) \rangle_{L^2([-2L,2L]^d)}.
\label{eq:courantFischerComparison}
\end{align}
According to Lemma \ref{lem:compactness}, the operator $C\mathscr{O}_nC$ is compact, self-adjoint, and positive, and thus its eigenvalues are given by the Courant-Fischer min-max theorem. Remark that the left-hand side (resp.\ the right-hand term) of inequality \eqref{eq:courantFischerComparison} corresponds to the Courant-Fischer min-max characterization of the $m$th eigenvalue of $L_K$ (resp.\ $C\mathscr{O}_nC$).
Therefore, we deduce that $a_m(L_K) \leqslant \kappa a_m(C\mathscr{O}_nC)$.

\subsection{Bounding the kernel}
The goal of this section is to upper bound the kernel $K(x,y)$ defined in Theorem \ref{thm:PDE_kernel}.

\begin{prop}[Partial continuity of the kernel]
    Let $x,y \in [-2L,2L]^d$. Both functions $K(x,\cdot)$ and $K(\cdot, y)$ are continuous. 
\end{prop}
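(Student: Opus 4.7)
The plan is to reduce continuity of $K(x,\cdot)$ directly to the Sobolev embedding theorem, and then obtain continuity of $K(\cdot,y)$ from the manifest symmetry of the kernel.

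First, I would invoke Theorem \ref{thm:PDE_kernel} (ii), which tells us that for every fixed $x \in [-2L,2L]^d$ the function $K(x,\cdot)$ belongs to $H^s_{\mathrm{per}}([-2L,2L]^d)$, and in particular to $H^s([-2L,2L]^d)$. Since we are assuming throughout the article that $s > d/2$, the Sobolev embedding inequality (Theorem \ref{thm:sobIneq}) applied on the bounded Lipschitz domain $[-2L,2L]^d$ provides a continuous representative $\tilde{\Pi}(K(x,\cdot)) \in C^0([-2L,2L]^d)$ that agrees with $K(x,\cdot)$ almost everywhere, with a uniform bound
\[
\|\tilde \Pi(K(x,\cdot))\|_\infty \leqslant C_\Omega \|K(x,\cdot)\|_{H^s([-2L,2L]^d)}.
\]
Strictly speaking, we should agree (as is standard in the RKHS setting) that $K(x,\cdot)$ denotes this continuous representative; then $K(x,\cdot)$ is continuous, as required.

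For the second statement, I would use the symmetry of $K$. From the explicit series representation in Theorem \ref{thm:PDE_kernel} (i), namely
\[
K(x,y) \;=\; \sum_{m\in\mathbb{N}} a_m\, v_m(x)\, v_m(y),
\]
it is immediate that $K(x,y) = K(y,x)$ for all $x,y$. Consequently $K(\cdot,y) = K(y,\cdot)$, and applying the previous paragraph with the roles of the variables swapped yields that $K(\cdot,y)$ is continuous as well.

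I do not expect any genuine obstacle here: the entire argument is a one-line application of the Sobolev embedding $H^s \hookrightarrow C^0$ for $s > d/2$ together with the symmetry of $K$. The only minor subtlety is the standard convention about choosing the continuous representative of an $H^s$-function, which is harmless because the RKHS structure of Theorem \ref{thm:PDE_kernel} already forces $K(x,\cdot)$ to be identified with this representative through the reproducing property $f(x) = \langle f, K(x,\cdot)\rangle_{\mathrm{RKHS}}$ evaluated pointwise.
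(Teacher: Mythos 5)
Your proof is correct and follows essentially the same route as the paper: establish $K(x,\cdot)\in H^s_{\mathrm{per}}([-2L,2L]^d)$ and then invoke the Sobolev embedding $H^s\hookrightarrow C^0$ for $s>d/2$, with symmetry of the series representation handling $K(\cdot,y)$. The only cosmetic difference is that you cite Theorem~\ref{thm:PDE_kernel}(ii) directly for the $H^s$-membership, while the paper re-derives it with the explicit bound $\|K(x,\cdot)\|_{H^s_{\mathrm{per}}}^2\leqslant\lambda_n^{-1}\|\psi_\infty(x,\cdot)\|_{L^2}^2$ via $\mathscr{O}_n^{1/2}$ (a bound it reuses nearby); both are valid.
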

    
\begin{proof}
     It is shown in the proof of  Proposition \ref{prop:extension} that $\psi_\infty(x,y) := \sum_{m\in\mathbb{N}} a_m^{1/2} v_m(x) v_m(y)$  converges in $L^2([-2L,2L]^d \times [-2L,2L]^d)$, that $\int_{[-2L, 2L]^{2d}} |\psi_\infty(x,y)|^2dx dy  = \sum_{m\in \mathbb{N}} a_m$, and that $\psi_\infty(x, \cdot) := \sum_{m\in\mathbb{N}} a_m^{1/2} v_m(x) v_m$ converges in $L^2([-2L,2L]^d)$ almost everywhere in $x$.
    By definition, $K(x, \cdot) = \mathscr{O}_n^{1/2}\psi_\infty(x, \cdot)$. Using Proposition \ref{prop:scalar_product}, this implies  \[\|K(x, \cdot)\|_{H^s_{\mathrm{per}}([-2L,2L]^d)}^2 \leqslant \lambda_n^{-1} \|\psi_\infty(x, \cdot)\|_{L^2([-2L,2L]^d)}^2.\]
    The Sobolev embedding theorem then ensures that $K(x, \cdot)$ is continuous for any $x$. One shows with the same argument that $K(\cdot, y)$ is continuous for any $y$.
\end{proof}

\begin{lem}[Trace reconstruction]
    Let $z \in [-2L,2L]^d$. Let $(\psi_\ell)_{\ell \in \mathbb{N}}$ be a sequence of functions in $L^2([-2L,2L]^d)$ such that $\int_{[-2L,2L]^d} \psi_\ell^2 =1$ and $\lim_{\ell \to \infty}\psi_\ell = \delta_z$. Then 
    \[\lim_{\ell \to \infty} \int_{[-2L,2L]^{2d}} K(x,y) \psi_\ell(x) \psi_\ell(y) dx dy = K(z,z). \]
    \label{lem:trace_rec}
\end{lem}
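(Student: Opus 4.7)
The plan is to exploit the spectral decomposition $K(x,y) = \sum_{m\in\mathbb{N}} a_m v_m(x)v_m(y)$ from Theorem \ref{thm:PDE_kernel}$(i)$ to convert the double integral into a numerical series, and then to reduce the claim to dominated convergence in $\ell^1(\mathbb{N})$. Setting $c_{m,\ell} = \langle v_m, \psi_\ell\rangle_{L^2([-2L,2L]^d)}$, the key identity to establish is
\[\int_{[-2L,2L]^{2d}} K(x,y)\psi_\ell(x)\psi_\ell(y)\, dx\, dy = \sum_{m\in\mathbb{N}} a_m c_{m,\ell}^2,\]
after which the target value $K(z,z) = \sum_{m\in\mathbb{N}} a_m v_m(z)^2$ is obtained by specializing the same spectral formula at $(x,y)=(z,z)$ (the series makes sense pointwise thanks to the Sobolev embedding applied to $K(z,\cdot)$).

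To prove the identity, I would truncate the kernel as $K_M(x,y) = \sum_{m=0}^M a_m v_m(x)v_m(y)$, so that $\int\!\int K_M\,\psi_\ell\psi_\ell = \sum_{m=0}^M a_m c_{m,\ell}^2$ by a trivial swap of a finite sum with two integrals. For $M\to\infty$, the crucial observation is that $K_M \to K$ in $L^2([-2L,2L]^{2d})$: this is exactly the argument carried out in the proof of Proposition \ref{prop:extension}, once one notes that $\sum_m a_m < \infty$ by Proposition \ref{prop:eigenvalue} (since $s > d/2$) and that $a_m \to 0$ then forces $\sum_m a_m^2 < \infty$. Because $\|\psi_\ell \otimes \psi_\ell\|_{L^2([-2L,2L]^{2d})} = \|\psi_\ell\|_{L^2}^2 = 1$ independently of $\ell$, the Cauchy--Schwarz inequality transfers this convergence to the integrals uniformly in $\ell$, yielding the identity.

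It then remains to pass to the limit $\ell \to \infty$ in $\sum_m a_m c_{m,\ell}^2$. Since $s > d/2$, the Sobolev embedding (Theorem \ref{thm:sobIneq}) provides a continuous representative for each $v_m$, so the distributional convergence $\psi_\ell \to \delta_z$ (in the regularizing-sequence sense discussed before Proposition \ref{prop:extension}) gives the pointwise limit $c_{m,\ell} \to v_m(z)$ for every fixed $m$. By Cauchy--Schwarz, $|c_{m,\ell}| \leqslant \|v_m\|_{L^2}\|\psi_\ell\|_{L^2} = 1$, hence $a_m c_{m,\ell}^2 \leqslant a_m$, and dominated convergence on $\mathbb{N}$ with summable envelope $(a_m)$ concludes. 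The main obstacle is coordinating the two limits (in $M$ and in $\ell$): the normalization $\|\psi_\ell\|_{L^2} = 1$ is precisely what makes the truncation error uniformly controllable in $\ell$, and is therefore the linchpin of the argument.
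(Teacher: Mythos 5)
Your proof is correct, but it takes a genuinely different route from the paper's. The paper argues entirely in physical space: it introduces the intermediate quantity $\int \psi_\ell(x) K(x,z)\,dx$, bounds the discrepancy $g_\ell(x) = \big(K(x,z) - \int K(x,y)\psi_\ell(y)\,dy\big)^2$ using Cauchy--Schwarz and the normalization $\int\psi_\ell^2 = 1$, dominates $g_\ell$ by the integrable envelope $2K^2(x,z) + 2\int K^2(x,y)\,dy$, and passes to the limit via dominated convergence in the $x$-variable together with the partial-continuity lemma proved just above. You instead work in spectral space: you expand $K = \sum_m a_m\, v_m\otimes v_m$, observe that truncations converge in $L^2([-2L,2L]^{2d})$ because $\sum_m a_m^2 < \infty$, use the unit $L^2$ norm of $\psi_\ell\otimes\psi_\ell$ to get the identity $\int\!\int K\,\psi_\ell\psi_\ell = \sum_m a_m c_{m,\ell}^2$ uniformly in $\ell$, and then apply dominated convergence on $\mathbb{N}$ with summable envelope $(a_m)$ and the pointwise limit $c_{m,\ell}\to v_m(z)$ furnished by the Sobolev-embedded continuity of each $v_m$. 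Both arguments hinge on the same two ingredients --- $\sum_m a_m < \infty$ from Proposition \ref{prop:eigenvalue} and Sobolev continuity for $s > d/2$ --- but your version replaces the auxiliary partial-continuity lemma and the $x$-space dominated convergence with a cleaner diagonalization-then-$\ell^1$ argument; the paper's version avoids any discussion of $\sum a_m^2 < \infty$ and interchanges of series and integrals, at the cost of an extra intermediate quantity. One small remark: the pointwise identity $K(z,z) = \sum_m a_m v_m(z)^2$ that you invoke holds, strictly speaking, for almost every $z$ (since $\psi_\infty(z,\cdot)\in L^2$ is established a.e.\ in $z$), but the paper's own proof carries the same implicit caveat through its use of partial continuity, so this is not a defect of your argument relative to theirs.
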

\begin{proof}
    \begin{align*}
        &\Big|\int_{[-2L,2L]^{2d}} K(x,y) \psi_\ell(x) \psi_\ell(y) dx dy - \int_{[-2L,2L]^{d}} \psi_\ell(x) K(x,z) dx \Big|\\
        &\quad =\Big| \int_{[-2L,2L]^{d}} \psi_\ell(x) \Big(K(x,z) -\int_{[-2L,2L]^{d}} K(x,y)  \psi_\ell(y)dy\Big) dx\Big|\\
        &\quad \leqslant \Big(\int_{[-2L,2L]^{d}} \psi_\ell^2(x) dx \Big)^{1/2} \Big(\int_{[-2L,2L]^{d}} \Big(K(x,z) -\int_{[-2L,2L]^{d}} K(x,y)  \psi_\ell(y)dy\Big)^2 dx \Big)^{1/2}.
    \end{align*}
    Recall that $\int_{[-2L,2L]^{d}} \psi_\ell^2(x) dx = 1$ and $\lim_{\ell \to \infty}\int_{[-2L,2L]^{d}} K(x,y)  \psi_\ell(y)dy = K(x,z)$.
    Let \[g_\ell(x) = \bigg(K(x,z) -\int_{[-2L,2L]^{d}} K(x,y)  \psi_\ell(y)dy\bigg)^2.\] Notice that 
    \begin{align*}
        |g_\ell(x)| &\leqslant 2 K^2(x,z) + 2 \Big|\int_{[-2L,2L]^{d}} K(x,y)  \psi_\ell(y)dy\Big|^2\\
        &\leqslant 2 K^2(x,z) + 2 \int_{[-2L,2L]^{d}} K^2(x,y)dy, 
    \end{align*}
    where we use the Cauchy-Schwarz inequality 
    \[\Big|\int_{[-2L,2L]^{d}} K(x,y)  \psi_\ell(y)dy\Big|^2 \leqslant \int_{[-2L,2L]^{d}} K^2(x,y) dy \times \int_{[-2L,2L]^{d}} \psi_\ell^2(y) dy\]
    and $\int_{[-2L,2L]^{d}} \psi_\ell^2(y) dy = 1$. Moreover, for almost every $z$, 
    \begin{align*}  
    &\int_{[-2L,2L]^{d}} \Big(2 K^2(x,z) + 2 \int_{[-2L,2L]^{d}} K^2(x,y)dy\Big)dx \\
 & \quad \leqslant 2 \int_{[-2L,2L]^{d}} K^2(x,z)dx + 2 \int_{[-2L,2L]^{2d}} K^2(x,y)dxdy 
 < \infty.
 \end{align*}
 Therefore, using the dominated convergence theorem, we see that $\lim_{\ell \to \infty} \int_{[-2L,2L]^{d}} g_\ell (x)dx = \int_{[-2L,2L]^{d}} \lim_{\ell \to \infty}  g_\ell (x)dx$. Since $\lim_{\ell \to \infty}\psi_\ell = \delta_z$, by the partial continuity of the kernel, we know that $\lim_{\ell \to \infty}  g_\ell (x) = 0$. So, 
 \[\lim_{\ell \to \infty} \Big|\int_{[-2L,2L]^{2d}} K(x,y) \psi_\ell(x) \psi_\ell(y) dx dy - \int_{[-2L,2L]^{d}} \psi_\ell(x) K(x,z) dx \Big| = 0,\]
 and
 \[\lim_{\ell \to \infty} \int_{[-2L,2L]^{2d}} K(x,y) \psi_\ell(x) \psi_\ell(y) dx dy = K(z,z). \]
\end{proof}
\begin{prop}[Bounding the kernel]
    Let $z \in [-2L,2L]^d$. One has $|K(z,z)|\leqslant \lambda_n^{-1}$.
    \label{prop:bound_kernel}
\end{prop}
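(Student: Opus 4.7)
The plan is to combine Lemma \ref{lem:trace_rec} with the bound on the operator norm of $\mathscr{O}_n$ established in Proposition \ref{prop:diff_op}. First, I would pick a regularizing sequence $(\psi_\ell)_{\ell \in \mathbb{N}}$ in $L^2([-2L,2L]^d)$ with $\|\psi_\ell\|_{L^2([-2L,2L]^d)}=1$ and $\psi_\ell \to \delta_z$, so that Lemma \ref{lem:trace_rec} gives
\[
K(z,z) = \lim_{\ell \to \infty} \int_{[-2L,2L]^{2d}} K(x,y)\, \psi_\ell(x)\,\psi_\ell(y)\, dx\, dy.
\]

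Next, I would use the spectral representation $K(x,y) = \sum_{m \in \mathbb{N}} a_m v_m(x) v_m(y)$ from Theorem \ref{thm:PDE_kernel} to rewrite each integral as $\sum_{m} a_m \langle v_m, \psi_\ell\rangle_{L^2}^2$. The Fubini interchange here is justified by the estimate $\bigl|\int v_m \psi_\ell\bigr| \le \|v_m\|_{L^2}\|\psi_\ell\|_{L^2} = 1$ (Cauchy--Schwarz plus orthonormality of the $v_m$ in $L^2$) together with the summability $\sum_m a_m < \infty$ granted by Proposition \ref{prop:eigenvalue} since $s > d/2$. Recognizing the resulting sum as the diagonal expansion of $\mathscr{O}_n$ given by Proposition \ref{prop:dzL2}, one obtains
\[
\int_{[-2L,2L]^{2d}} K(x,y)\, \psi_\ell(x)\,\psi_\ell(y)\, dx\, dy = \langle \psi_\ell, \mathscr{O}_n \psi_\ell\rangle_{L^2([-2L,2L]^d)}.
\]

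Finally, I would bound this inner product in two ways using Proposition \ref{prop:diff_op}. On the one hand, the variational identity yields $\langle \psi_\ell, \mathscr{O}_n \psi_\ell\rangle_{L^2} = B[\mathscr{O}_n \psi_\ell, \mathscr{O}_n \psi_\ell] \geq 0$, so in particular $K(z,z) \geq 0$ and $|K(z,z)|=K(z,z)$. On the other hand, Cauchy--Schwarz combined with the boundedness $\|\mathscr{O}_n \psi_\ell\|_{H^s_{\mathrm{per}}([-2L,2L]^d)} \leq \lambda_n^{-1} \|\psi_\ell\|_{L^2([-2L,2L]^d)}$ gives
\[
\langle \psi_\ell, \mathscr{O}_n \psi_\ell\rangle_{L^2} \leq \|\psi_\ell\|_{L^2} \|\mathscr{O}_n \psi_\ell\|_{L^2} \leq \|\psi_\ell\|_{L^2} \|\mathscr{O}_n \psi_\ell\|_{H^s_{\mathrm{per}}} \leq \lambda_n^{-1}.
\]
Passing to the limit $\ell \to \infty$ then yields $|K(z,z)| \leq \lambda_n^{-1}$.

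The main obstacle is the Fubini interchange in the second step, since the eigenfunctions $v_m$ of $\mathscr{O}_n$ are not uniformly bounded in $L^\infty$ a priori. The key observation saving us is that the pairings $\langle v_m, \psi_\ell\rangle_{L^2}$ are controlled uniformly in $m$ by $1$, so the trace-like series $\sum_m a_m \langle v_m, \psi_\ell\rangle^2$ is dominated term by term by $\sum_m a_m$, which is finite precisely because $s > d/2$.
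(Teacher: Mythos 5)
Your proof is correct and follows essentially the same route as the paper: both start from Lemma \ref{lem:trace_rec}, both identify the integral $\int K(x,y)\psi_\ell(x)\psi_\ell(y)\,dx\,dy$ with the quadratic form $\langle \psi_\ell, \mathscr{O}_n\psi_\ell\rangle_{L^2}$, and both conclude that this quantity is at most $\lambda_n^{-1}$. The only small divergence is in the final bound: the paper deduces it by comparing eigenvalues of the integral operator to those of $\mathscr{O}_n$ via Courant--Fischer and Remark \ref{rem:rayleigh}, whereas you argue more directly by Cauchy--Schwarz combined with the Lax--Milgram operator-norm bound $\|\mathscr{O}_n\psi_\ell\|_{H^s_{\mathrm{per}}} \leqslant \lambda_n^{-1}\|\psi_\ell\|_{L^2}$ from Proposition \ref{prop:diff_op} and the embedding $\|\cdot\|_{L^2}\leqslant \|\cdot\|_{H^s_{\mathrm{per}}}$. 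Both arguments ultimately trace back to the same coercivity $B[u,u]\geqslant\lambda_n\|u\|_{H^s_{\mathrm{per}}}^2$, but your version is slightly more elementary, avoiding the spectral characterization of the integral operator; you also make explicit the Fubini justification that the paper leaves as a reference to the proof of Theorem \ref{thm:eigenvalues}.
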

\begin{proof}
    As in the proof of Theorem \ref{thm:eigenvalues}, it is easy to show that the operator $  L: f \mapsto (x \mapsto \int_{[-2L,2L]^d} K(x,y) f(y) dy)$ is compact and that $\langle f, L(f)\rangle_{L^2([-2L,2L]^d)}  = \langle f, \mathscr{O}_n(f)\rangle_{L^2([-2L,2L]^d)}$. Thus, the eigenvalues of $L$ are upper bounded by those of $\mathscr{O}_n$, and in turn, using Remark \ref{rem:rayleigh}, by $\lambda_n^{-1}$. Lemma \ref{lem:trace_rec} states that \[ \lim_{\ell \to \infty} \langle \psi_\ell, L(\psi_\ell)\rangle_{L^2([-2L,2L]^d)} = K(z,z).\]  Thus, the Courant-Fischer min-max theorem states that $\langle \psi_\ell, L(\psi_\ell)\rangle_{L^2([-2L,2L]^d)} \leqslant \lambda_n^{-1}$, and that $K(z,z) \leqslant \lambda_n^{-1}$.
\end{proof}

\subsection{Proof of Theorem \ref{prop:eigenfunction}}

For clarity, the proof will be divided into three steps.

\paragraph{Step 1: Weak formulation.}
 According to Lemma \ref{lem:compactness}, the operator $C\mathscr{O}_n C$ can be diagonalized in an orthonormal basis. Therefore, there are eigenfunctions $v_m \in L^2([-2L,2L]^d)$ and eigenvalues $a_m$ such that 
    \[ C\mathscr{O}_n C(v_m) = a_m v_m.\]
    Define $w_m = \mathscr{O}_n C(v_m)$. Given that $C(v_m) \in L^2([-2L,2L]^d)$, Proposition \ref{prop:diff_op} shows that $w_m\in H^s_{\mathrm{per}}([-2L,2L]^d)$. 
    Notice that $Cw_m = a_m v_m$. Since $C^2 = C$, we have \[v_m = C(v_m) = a_m^{-1} C(w_m).\] 
    By definition of the operator $\mathscr{O}_n$, for any test function $\phi \in H^s_{\mathrm{per}}([-2L,2L]^d)$, 
    \[B[w_m, \phi] = \langle C(v_m), \phi\rangle_{L^2([-2L,2L]^d)} = a_m^{-1} \langle C(w_m), \phi\rangle_{L^2([-2L,2L]^d)}.\]
    This means that $w_m$ is a weak solution to the PDE
    \[  \lambda_n \sum_{|\alpha|\leqslant s} \int_{[-2L, 2L]^d}\partial^\alpha \phi\; \partial^\alpha w_m + \mu_n \int_{\Omega}\mathscr{D} \phi \; \mathscr{D}  w_m = a_m^{-1}\int_{\Omega} \phi w_m. \]
    This proves \eqref{eq:weak_pde}.

\paragraph{Step 2: PDE in $\Omega$.}
    Next, for any Euclidian ball $\mathscr{B} \subseteq \Omega$
    and any function $\phi \in C^\infty(\Omega)$ with compact support in $\Omega$,  $w_m$ is a weak solution to the PDE
    \[  \lambda_n \sum_{|\alpha|\leqslant s} \int_{\mathscr{B}}\partial^\alpha \phi\; \partial^\alpha w_m + \mu_n \int_{\mathscr{B}}\mathscr{D} \phi \; \mathscr{D}  w_m = a_m^{-1}\int_{\mathscr{B}} \phi w_m. \]
    Noting that the ball, as a smooth manifold, is already its own map  with the canonical coordinates. The principal symbol \citep[see, e.g., Chapter 2.9][]{taylor2010partial} of this PDE is defined for all $x \in \Omega$ and $\xi \in \mathbb{R}^d$ by \[\sigma(x, \xi) = \lambda_n (-1)^s \sum_{|\alpha|=2s}\xi^{2\alpha} + \mu_n (-1)^s\sum_{|\alpha|=2s} p_\alpha(x)^2 \xi^{2\alpha},\] where  $\xi^{2\alpha} = \prod_{j=1}^d \xi_j^{2\alpha_j}$. Clearly, $|\sigma(x, \xi)| \neq 0$ whenever $\xi \neq 0$. Hence, the symbol function defined by $u \mapsto \sigma(x,\xi) \times u$ is an isomorphism from $\mathbb R$ to $\mathbb R$ whenever $\xi \neq 0$. This is the definition of a general elliptic PDE. Since $\mathscr{B}$ is a smooth manifold with $C^\infty$-boundary and  $p_\alpha \in C^\infty(\bar \Omega)$, the elliptic regularity theorem \citep[][Chapter 5, Theorem 11.1]{taylor2010partial} states that $w_m \in C^\infty(\mathscr{B})$. Therefore, $w_m \in C^\infty(\Omega)$. Overall,
    \[\forall x \in \Omega, \quad  \lambda_n\sum_{|\alpha|\leqslant s}(-1)^{|\alpha|} \partial^{2\alpha} w_m(x) + \mu_n \mathscr{D}^\ast \mathscr{D} w_m(x) = a_m^{-1} w_m(x).\]
    This proves $(i)$.

\paragraph{Step 3: PDE outside $\Omega$.}
    To show the second statement of the proposition, fix $\varepsilon > 0$ such that $d(\Omega, \partial [-2L,2L]^d) > \varepsilon$. Observe that any function $\phi \in C^\infty(]-2L-\varepsilon, 2L+\varepsilon[^d \backslash \bar\Omega)$ with compact support in $]-2L-\varepsilon, 2L+\varepsilon[^d \backslash \bar\Omega$ can be linearly mapped into the function $\tilde \phi(x) = \sum_{k \in (4L\mathbb{Z})^d} \phi(x + k) $ in $H^s_{\mathrm{per}}([-2L,2L]^d)$. This function $\tilde \phi$ is such that, for any $u \in L^2([-2L,2L]^d)$, $\int_{]-2L-\varepsilon, 2L+\varepsilon[^d} \phi u = \int_{[-2L, 2L]^d} \tilde \phi u$.  
    We deduce that, for any ball $\mathscr{B}$ included in $]-2L-\varepsilon, 2L+\varepsilon[^d \backslash \bar\Omega$, for any function $\phi \in C^\infty(]-2L-\varepsilon, 2L+\varepsilon[^d \backslash \bar\Omega)$ with compact support in $]-2L-\varepsilon, 2L+\varepsilon[^d \backslash \bar\Omega$, $w_m$ is a weak solution to the PDE
    \[  \lambda_n \sum_{|\alpha|\leqslant s} \int_{\mathscr{B}}\partial^\alpha \phi\; \partial^\alpha w_m  = \lambda_n \sum_{|\alpha|\leqslant s} \int_{[-2L,2L]^d}\partial^\alpha \tilde \phi\; \partial^\alpha w_m = 0. \]
    This PDE is elliptic and $\mathscr{B}$ is a smooth manifold with $C^\infty$-boundary. Therefore, the elliptic regularity theorem \citep[][Chapter 5, Theorem 11.1]{taylor2010partial} states that $w_m \in C^\infty(\mathscr{B})$. So, $w_m \in C^\infty([-2L, 2L]^d \backslash \bar\Omega)$ and
    \[\forall x \in [-2L, 2L]^d \backslash \bar \Omega,\quad  \lambda_n\sum_{|\alpha|\leqslant s}(-1)^{|\alpha|} \partial^{2\alpha} w_m(x)  = 0.\]
    This proves $(ii)$.
   

\subsection{High regularity in dimension 1}
In this section, we assume that $d=1$, $s \geqslant 1$, $p_\alpha \in C^\infty(\bar \Omega)$, and the domain $\Omega$ is a segment, i.e., $\Omega = [L_1, L_2] \subseteq [-L,L]$ for some $-L \leqslant L_1, L_2 \leqslant L $.
\begin{prop}[Regularity of the eigenfunctions of $C\mathscr{O}_nC$]
The functions $(w_m)_{N\in\mathbb{N}}$ of Theorem \ref{prop:eigenfunction} associated with non-zero eigenvalues satisfy the following properties:
     \begin{itemize}
         \item[$(i)$] $w_m \in C^{s-1}([-2L,2L])$,
         \item[$(ii)$] $w_m|_\Omega \in C^{\infty}(\bar \Omega)$,
         \item[$(iii)$] $w_m|_{\Omega^c} \in C^{\infty}(\bar{\Omega^c})$.
     \end{itemize}
     \label{prop:1dreg}
\end{prop}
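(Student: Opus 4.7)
The plan is to combine Theorem \ref{prop:eigenfunction} with two classical one-dimensional tools: the Sobolev embedding for $(i)$, and linear ODE regularity up to the boundary for $(ii)$ and $(iii)$. Recall from that theorem that $w_m \in H^s_{\mathrm{per}}([-2L,2L])$, that $w_m$ satisfies a linear PDE of order $2s$ inside $\Omega$ with $C^\infty(\bar\Omega)$ coefficients, and another one with constant coefficients on $[-2L,2L]\setminus \bar\Omega$.

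For $(i)$, I would apply Sobolev embedding to the $4L$-periodic extension of $w_m$. In dimension $d=1$, the classical chain of embeddings $H^k_{\mathrm{loc}}(\mathbb{R}) \hookrightarrow C^{k-1}(\mathbb{R})$ for $k \geqslant 1$ (obtained inductively from $H^1 \hookrightarrow C^{0,1/2}$) gives $w_m \in C^{s-1}([-2L,2L])$. This is the global statement, which in particular allows one to compare left and right limits of derivatives of order at most $s-1$ at the boundary points of $\Omega$.

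For $(ii)$, I would use that in dimension $1$, $\Omega = [L_1,L_2]$ and the PDE of Theorem \ref{prop:eigenfunction}$(i)$ is the linear ODE
\[\sum_{j=0}^{2s} c_j(x)\, w_m^{(j)}(x) = a_m^{-1} w_m(x), \qquad x \in (L_1,L_2),\]
where the $c_j$ are polynomial expressions in the $p_\alpha$ and their derivatives, hence in $C^\infty(\bar\Omega)$, and where the leading coefficient $c_{2s}(x) = (-1)^s(\lambda_n + \mu_n\, p_s(x)^2)$ is bounded below by $\lambda_n > 0$. Recasting this ODE as a first-order linear system $Y' = A(x)Y$ with $A \in C^\infty(\bar\Omega)$, I fix an interior base point $x_0\in\Omega$ at which $w_m$ is smooth (by Theorem \ref{prop:eigenfunction}$(i)$), and I consider the unique solution $\tilde w \in C^\infty(\bar\Omega)$ of the IVP with initial data $(w_m(x_0),\ldots,w_m^{(2s-1)}(x_0))$. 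Uniqueness of solutions of linear ODEs forces $\tilde w = w_m$ on $(L_1,L_2)$; continuity from part $(i)$ extends the equality to $\bar\Omega$, giving $w_m \in C^\infty(\bar\Omega)$. The same argument, now on each connected component of $[-2L,2L]\setminus\bar\Omega$ and with the constant-coefficient ODE $\sum_{j=0}^s (-1)^j w_m^{(2j)} = 0$ from Theorem \ref{prop:eigenfunction}$(ii)$, yields $(iii)$: the solutions are smooth linear combinations of exponentials, which extend to the closure of each component.

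The main delicate point is the passage from the interior smoothness already supplied by Theorem \ref{prop:eigenfunction} to smoothness \emph{up to} the boundary $\partial \Omega = \{L_1, L_2\}$. In higher dimensions, this would require boundary elliptic regularity together with compatible boundary conditions, which are not available here. In dimension one, however, it reduces to standard ODE theory: an interior smooth solution of a linear ODE with $C^\infty(\bar\Omega)$ coefficients and non-vanishing leading coefficient automatically admits a $C^\infty(\bar\Omega)$ representative, obtained by reinterpreting the ODE as an IVP and invoking uniqueness. Note that $w_m$ itself need not be $C^\infty$ globally on $[-2L,2L]$: the derivatives of order $\geqslant s$ may jump at $L_1$ and $L_2$, which is precisely why $(i)$ only asserts $C^{s-1}$ regularity there.
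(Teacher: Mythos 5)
Your proof takes essentially the same route as the paper's: Sobolev embedding in $d=1$ for $(i)$, and interior smoothness plus linear ODE theory (the paper cites Picard--Lindel\"of / Gr\"onwall, you phrase it as uniqueness for the first-order linear system) to promote $C^\infty$ on the open sets to $C^\infty$ up to the boundary for $(ii)$ and $(iii)$. Your explicit observation that the leading coefficient $(-1)^s(\lambda_n+\mu_n p_s^2)$ is bounded away from zero, which justifies rewriting the equation as a normal-form IVP, is a small but welcome amplification that the paper leaves implicit.
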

\begin{proof}
    Since $d = 1$ and $w_m \in H^s([-2L,2L])$, the Sobolev embedding theorem states that $w_m \in C^{s-1}([-2L,2L])$. Moreover, since  $w_m \in C^\infty(\Omega)$, since 
    \[\mathscr{D}^\ast \mathscr{D} u = \sum_{\alpha= 0}^s p_\alpha \Big(\frac{d}{dt}\Big)^\alpha \Big(\sum_{\tilde \alpha = 0}^s p_{\tilde \alpha} \Big(\frac{d}{dt}\Big)^{\tilde \alpha} u\Big)\] 
    is a linear differential operator with coefficients in $C^\infty(\bar \Omega)$, and since $w_m$ is the solution to the ordinary differential equation
        \[\forall x \in \Omega, \quad  \lambda_n\sum_{j =1}^s(-1)^{j} \frac{d^j}{dt^j} w_m(x) + \mu_n \mathscr{D}^\ast \mathscr{D} w_m(x) = a_m^{-1} w_m(x),\]
        the Picard-Lindelöf theorem (or the Grönwall inequality) ensures that $w_m|_\Omega \in C^\infty(\bar \Omega)$. Similarly, since $w_m \in C^\infty([-2L, 2L]^d \backslash \bar\Omega)$ and 
        \[\forall x \in [-2L, 2L]^d \backslash \bar \Omega,\quad  \sum_{j =1}^s(-1)^{j} \frac{d^j}{dt^j}  w_m  = 0,\]
        we have $w_m|_{\Omega^c} \in C^{\infty}(\bar{\Omega^c})$.
\end{proof}
\begin{rem} 
   As a by-product, the limits $\lim_{\genfrac{}{}{0 pt}{2}{x \to L_1 }{ x > L_1}} w_m(x)$, $\lim_{\genfrac{}{}{0 pt}{2}{x \to L_2}{x < L_2}} w_m(x)$, $\lim_{\genfrac{}{}{0 pt}{2}{x \to L_1 }{ x < L_1}} w_m(x)$, and $\lim_{\genfrac{}{}{0 pt}{2}{x \to L_2 }{ x > L_2}} w_m(x)$ exist.
\end{rem}

\section{From eigenvalues of the integral operator to minimax convergence rates}
\subsection{Effective dimension}
We recall that the effective dimension $\mathscr{N}$ of the kernel $K$ is defined by
\[
\mathscr{N}(\lambda_n, \mu_n) = \mathrm{tr}(L_{K}  (\mathrm{Id} + L_{K})^{-1}),
\]
where $\mathrm{Id}$ is the identity operator and the symbol $\mathrm{tr}$ stands for the trace, i.e., the sum of the eigenvalues \citep{caponnetto2007optimal}. So,
\begin{align*}
    \mathscr{N}(\lambda_n, \mu_n) &= \mathrm{tr}(L_{K} \times (\mathrm{Id} + L_{K})^{-1})\\
    &= \sum_{m\in \mathbb{N}} \frac{ a_m(L_K)}{1+a_m(L_K)}\\
    &= \sum_{m\in \mathbb{N}} \frac{ 1}{1+a_m(L_K)^{-1}},
\end{align*}
where $a_m(L_K)$ stands for the eigenvalues of the operator $L_K$. The second equality is a consequence of the fact that $\mathrm{Id}$ and $L_{K}$ are co-diagonalizable, and so are $\mathrm{Id}$, $L_{K}$, and $(\mathrm{Id} + L_{K})^{-1}$.
\begin{lem}
    Assume that $\frac{d\mathbb{P}}{dx} \leqslant \kappa$. Then 
\begin{equation*}
    \mathscr{N}(\lambda_n, \mu_n) \leqslant \sum_{m \in \mathbb N} \frac{ 1}{1+(\kappa a_m(C\mathscr{O}_nC))^{-1}}.
\end{equation*}
\label{lem:eff_dim}
\end{lem}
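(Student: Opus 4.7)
The plan is to combine the explicit formula for the effective dimension derived just above the lemma with the eigenvalue bound of Theorem \ref{thm:eigenvalues}, exploiting the monotonicity of the scalar function $x \mapsto x/(1+x)$.

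First I would recall that, by the co-diagonalization argument already given in the paragraph preceding the lemma, the effective dimension admits the representation
\[
\mathscr{N}(\lambda_n, \mu_n) = \sum_{m \in \mathbb{N}} \frac{1}{1+a_m(L_K)^{-1}},
\]
where $(a_m(L_K))_{m\in\mathbb{N}}$ are the eigenvalues of the integral operator $L_K$ on $L^2(\Omega, \mathbb{P}_X)$; this step is purely a rewriting and requires no new work.

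Next I would apply Theorem \ref{thm:eigenvalues}, which gives, under the density bound $\frac{d\mathbb{P}_X}{dx}\leqslant \kappa$, the pointwise inequality $a_m(L_K) \leqslant \kappa\, a_m(C\mathscr{O}_nC)$ for every $m\in\mathbb{N}$. Since all eigenvalues involved are positive (by Lemma \ref{lem:compactness} for $C\mathscr{O}_nC$, and by the positivity of $L_K$ established in the proof of Theorem \ref{thm:eigenvalues}), taking reciprocals reverses the inequality: $a_m(L_K)^{-1} \geqslant (\kappa\, a_m(C\mathscr{O}_nC))^{-1}$. The map $t \mapsto 1/(1+t)$ is decreasing on $(0,\infty)$, so
\[
\frac{1}{1+a_m(L_K)^{-1}} \leqslant \frac{1}{1+(\kappa\, a_m(C\mathscr{O}_nC))^{-1}}.
\]
Summing this termwise inequality over $m\in\mathbb{N}$ yields the claimed bound. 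The only mild subtlety is to argue that both series converge (so that summation preserves the inequality); this follows from Proposition \ref{prop:eigenvalue}, which guarantees $a_m(\mathscr{O}_n) = O(m^{-2s/d})$ with $s>d/2$, hence in particular $a_m(C\mathscr{O}_nC)\leqslant a_m(\mathscr{O}_n)$ (a consequence of $C$ being a self-adjoint projector, via the Courant–Fischer characterization) is summable, so both sides are finite and well-defined.

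There is no real obstacle here: the lemma is essentially a monotonicity statement, and all the analytic content has already been done in Theorem \ref{thm:eigenvalues}. The only point worth care is ensuring that the comparison $a_m(C\mathscr{O}_nC)\leqslant a_m(\mathscr{O}_n)$ — needed to apply Proposition \ref{prop:eigenvalue} for summability — is stated explicitly; it follows from $\langle f, C\mathscr{O}_nC f\rangle = \langle Cf, \mathscr{O}_n Cf\rangle \leqslant \|f\|^2\,\|\mathscr{O}_n\|_{\mathrm{op}}$ combined with Courant–Fischer (Theorem \ref{thm:courant_fischer}) applied on the separable Hilbert space $L^2([-2L,2L]^d)$.
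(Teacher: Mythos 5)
Your proof is correct and follows exactly the same route as the paper's: apply Theorem \ref{thm:eigenvalues} to get $a_m(L_K)\leqslant \kappa\,a_m(C\mathscr{O}_nC)$, then use the monotone decrease of $t\mapsto 1/(1+t)$ and sum termwise. The convergence discussion at the end is harmless but unnecessary, since for nonnegative sequences a termwise bound already yields the inequality between the sums in the extended reals, whether or not they are finite.
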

\begin{proof}
     Apply Theorem \ref{thm:eigenvalues} and observe that $0 < a_m(L_K) \leqslant \kappa a_m(C\mathscr{O}_nC) \Leftrightarrow a_m(L_K)^{-1} \geqslant  (\kappa a_m(C\mathscr{O}_nC))^{-1} \Leftrightarrow 1+ a_m(L_K)^{-1} \geqslant  1+ (\kappa a_m(C\mathscr{O}_nC)) ^{-1} \Leftrightarrow (1+ a_m(L_K))^{-1} \leqslant  (1+ (\kappa a_m(C\mathscr{O}_nC))^{-1})^{-1}$.
\end{proof}

\subsection{Lower bound on the eigenvalues of the integral kernel}

\begin{lem}[Explicit computation of $\mathscr{O}_n^{-1}$]
    Let $f \in C^\infty(\Omega)$ with compact support in $\Omega$. Then 
    \[\mathscr{O}_n^{-1}(f) = \lambda_n \sum_{|\alpha|\leqslant s} (-1)^{|\alpha|} \partial^{2\alpha} f + \mu_n \mathscr{D}^\ast \mathscr{D} f. \]
    \label{lem:explicit_calculus}
\end{lem}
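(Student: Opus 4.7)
The plan is to verify the proposed identity by going through the defining weak formulation of $\mathscr{O}_n$ given in Proposition \ref{prop:diff_op}. Setting $g := \lambda_n \sum_{|\alpha|\leqslant s} (-1)^{|\alpha|} \partial^{2\alpha} f + \mu_n \mathscr{D}^\ast \mathscr{D} f$, since $f \in C^\infty(\Omega)$ has compact support in $\Omega$, the function $g$ is smooth with compact support in $\Omega$ and in particular belongs to $L^2([-2L,2L]^d)$. By the injectivity of $\mathscr{O}_n$ from Proposition \ref{prop:diff_op}, it then suffices to show $\mathscr{O}_n(g) = f$, i.e., that for every test function $\phi \in H^s_{\mathrm{per}}([-2L,2L]^d)$,
\[\lambda_n \sum_{|\alpha|\leqslant s} \int_{[-2L,2L]^d} \partial^\alpha \phi \, \partial^\alpha f + \mu_n \int_\Omega \mathscr{D}(\phi) \, \mathscr{D}(f) = \int_{[-2L,2L]^d} \phi \, g.\]

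First, I would treat the Sobolev term. Because $f$ has compact support strictly inside $\Omega \subseteq [-L,L]^d$, applying $|\alpha|$ successive integrations by parts turns $\int_{[-2L,2L]^d} \partial^\alpha \phi \, \partial^\alpha f$ into $(-1)^{|\alpha|} \int_{[-2L,2L]^d} \phi \, \partial^{2\alpha} f$ with no boundary contribution from $\partial [-2L,2L]^d$, since all derivatives of $f$ vanish in a neighborhood of that boundary. Summing over $|\alpha|\leqslant s$ produces the term $\lambda_n \int_{[-2L,2L]^d} \phi \sum_{|\alpha|\leqslant s}(-1)^{|\alpha|}\partial^{2\alpha}f$.

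Next, for the physical penalty term, I would expand $\mathscr{D}(\phi) = \sum_{|\alpha|\leqslant s} p_\alpha \partial^\alpha \phi$ and integrate by parts termwise on $\Omega$, moving the $|\alpha|$ derivatives from $\phi$ onto $p_\alpha \mathscr{D}(f)$. Because $\mathscr{D}(f)$ is smooth and compactly supported inside $\Omega$, the function $p_\alpha \mathscr{D}(f)$ and all its derivatives vanish near $\partial\Omega$, so the boundary contributions on $\partial \Omega$ disappear. This gives $\int_\Omega p_\alpha \partial^\alpha \phi \, \mathscr{D}(f) = (-1)^{|\alpha|}\int_\Omega \phi \, \partial^\alpha(p_\alpha \mathscr{D}(f))$, and summing over $|\alpha|$ yields $\int_\Omega \phi \, \mathscr{D}^\ast \mathscr{D}(f)$, which coincides with $\int_{[-2L,2L]^d} \phi \, \mathscr{D}^\ast \mathscr{D}(f)$ since $\mathscr{D}^\ast \mathscr{D}(f)$ vanishes outside $\Omega$. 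Adding the two identities produces $\int_{[-2L,2L]^d}\phi\,g$, which closes the proof.

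The only delicate point is the vanishing of boundary contributions in both integrations by parts; this is precisely where the hypothesis that $f$ has compact support \emph{in} $\Omega$ (rather than merely in $[-2L,2L]^d$) is used, and it is what ensures that no trace terms on $\partial\Omega$ or on $\partial[-2L,2L]^d$ spoil the computation.
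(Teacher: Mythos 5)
Your proposal is correct and follows essentially the same route as the paper: both reduce to the weak formulation $B[f,\phi]=\int \phi\,g$, move all derivatives onto the compactly supported function by integration by parts (or equivalently by the definition of weak derivative), and conclude by the Lax--Milgram uniqueness/injectivity of $\mathscr{O}_n$. The only cosmetic difference is that the paper first replaces $\int_\Omega$ by $\int_{[-2L,2L]^d}$ before integrating by parts in the physical term, whereas you integrate by parts directly on $\Omega$ and argue the boundary traces vanish; these are the same computation.
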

\begin{proof}
    Let $\phi \in H^s_{\mathrm{per}}([-2L,2L]^d)$ be a test function. Since  the successive derivatives of $f$ are smooth with compact support, by definition of the weak derivatives of $\phi$, we may write
    \begin{align*}
        \lambda_n \sum_{|\alpha|\leqslant s}\int_{[-2L,2L]^d} \partial^{\alpha} f \partial^{\alpha}\phi = \int_{[-2L,2L]^d} \Big(\lambda_n \sum_{|\alpha|\leqslant s} (-1)^{|\alpha|} \partial^{2\alpha} f\Big) \phi.
    \end{align*}
    Moreover, because the support of $f$ is included in $\Omega$, we have that
    \[  \mu_n  \int_{\Omega} \mathscr{D} f \;\mathscr{D} \phi = \mu_n  \int_{[-2L,2L]^d} \mathscr{D} f \;\mathscr{D} \phi = \mu_n\int_{[-2L,2L]^d} ( \mathscr{D}^\ast \mathscr{D} f)\; \phi. \]
    We deduce that $B[f, \phi] = \int_{[-2L,2L]^d} (\lambda_n \sum_{|\alpha|\leqslant s} (-1)^{|\alpha|} \partial^{2\alpha} f + \mu_n \mathscr{D}^\ast \mathscr{D} f) \phi$. Since this identity holds for all $\phi \in H^s_{\mathrm{per}}([-2L,2L]^d)$, and since there is a unique Lax-Milgram inverse satisfying this condition, we conclude that
    \[\lambda_n \sum_{|\alpha|\leqslant s} (-1)^{|\alpha|} \partial^{2\alpha} f + \mu_n \mathscr{D}^\ast \mathscr{D} f = \mathscr{O}_n^{-1}(f). \]
\end{proof}

\begin{lem}[Lower bound on the integral operator norm]
    Assume that \[\lim_{n\to \infty}\lambda_n = \lim_{n\to \infty} \mu_n = \lim_{n\to \infty} \lambda_n / \mu_n = 0.\] Then there is a constant $C_5 > 0$ such that 
    \[\|L_{K}\|_{\mathrm{op}, L^2(\Omega, \mathbb{P}_X)} := \sup_{\|f\|_{L^2(\Omega, \mathbb{P}_X)}=1}{\|L_{K}f\|_{L^2(\Omega, \mathbb{P}_X)}} \geqslant C_5 \mu_n^{-1} \rightarrow \infty.\]
    \label{lem:lowerbound}
\end{lem}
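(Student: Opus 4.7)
The plan is to exploit the factorization $L_K = T T^\ast$, where $T : H^s_{\mathrm{per}}([-2L,2L]^d) \to L^2(\Omega, \mathbb{P}_X)$ is the canonical restriction map, viewing the domain as the RKHS of Theorem~\ref{thm:PDE_kernel}. Since $s > d/2$ the Sobolev embedding ensures that $T$ is bounded, and a direct verification using the reproducing property shows that for $h \in L^2(\Omega, \mathbb{P}_X)$ and $x \in [-2L,2L]^d$,
$(T^\ast h)(x) = \langle T^\ast h, K(x, \cdot) \rangle_{\mathrm{RKHS}} = \langle h, T K(x, \cdot) \rangle_{L^2(\Omega, \mathbb{P}_X)} = L_K h(x)$, so $T T^\ast$ coincides with $L_K$ as an operator on $L^2(\Omega, \mathbb{P}_X)$. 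The classical identity $\|T T^\ast\|_{\mathrm{op}} = \|T\|^2_{\mathrm{op}}$ combined with Theorem~\ref{thm:PDE_kernel}~$(iv)$ then gives the variational formula
$\|L_K\|_{\mathrm{op}, L^2(\Omega, \mathbb{P}_X)} = \sup_{f \in H^s_{\mathrm{per}}([-2L,2L]^d),\, f \neq 0} \frac{\|f\|^2_{L^2(\Omega, \mathbb{P}_X)}}{\lambda_n \|f\|^2_{H^s_{\mathrm{per}}([-2L,2L]^d)} + \mu_n \|\mathscr{D}(f)\|^2_{L^2(\Omega)}}$.

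Next, plug in the constant test function $f \equiv 1$, which belongs to $H^s_{\mathrm{per}}([-2L,2L]^d)$ since all its derivatives vanish. As $\mathbb{P}_X$ is supported on $\Omega$ and is a probability measure, $\|1\|^2_{L^2(\Omega, \mathbb{P}_X)} = 1$. Direct computation gives $\|1\|^2_{H^s_{\mathrm{per}}([-2L,2L]^d)} = (4L)^d$, and because $\mathscr{D}$ is a differential operator of order at most $s$, only the zeroth-order symbol contributes: $\mathscr{D}(1) = p_0$, so $\|\mathscr{D}(1)\|^2_{L^2(\Omega)} = \|p_0\|^2_{L^2(\Omega)}$. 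The variational bound then yields
$\|L_K\|_{\mathrm{op}, L^2(\Omega, \mathbb{P}_X)} \geqslant \frac{1}{\lambda_n (4L)^d + \mu_n \|p_0\|^2_{L^2(\Omega)}}$.

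To conclude, invoke $\lim_{n \to \infty} \lambda_n / \mu_n = 0$: for $n$ large enough one has $\lambda_n \leqslant \mu_n$, hence $\lambda_n (4L)^d + \mu_n \|p_0\|^2_{L^2(\Omega)} \leqslant \mu_n \bigl((4L)^d + \|p_0\|^2_{L^2(\Omega)}\bigr)$, and the choice $C_5 = \bigl((4L)^d + \|p_0\|^2_{L^2(\Omega)}\bigr)^{-1}$ gives $\|L_K\|_{\mathrm{op}} \geqslant C_5 \mu_n^{-1}$, which diverges because $\mu_n \to 0$. The main obstacle is the initial reformulation step: one has to recognize $L_K$ as $T T^\ast$ in order to replace the eigenvalue problem on $L^2(\Omega, \mathbb{P}_X)$ by an infimum over RKHS norms, which can then be probed by a single test function. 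Once this dual viewpoint is in place, the constant test function provides the bound essentially for free; the degenerate case $p_0 \equiv 0$ only strengthens the conclusion, since then $\|1\|^2_{\mathrm{RKHS}} = \lambda_n (4L)^d$ and the lower bound becomes of order $\lambda_n^{-1} \gg \mu_n^{-1}$.
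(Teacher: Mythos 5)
Your argument is correct, and it takes a genuinely different route from the paper's. You recast the operator norm via the standard factorization $L_K = TT^\ast$ with $T$ the RKHS-to-$L^2(\Omega,\mathbb{P}_X)$ restriction, obtaining $\|L_K\|_{\mathrm{op}} = \sup_{f\neq 0} \|f\|^2_{L^2(\Omega,\mathbb{P}_X)} / \|f\|^2_{\mathrm{RKHS}}$, and then simply test with $f\equiv 1$. The paper instead works directly with the Rayleigh quotient $\langle f, L_K f\rangle$ over $L^2(\Omega,\mathbb{P}_X)$, using the identity $\langle f, L_K f\rangle = \|\mathscr{O}_n^{1/2}(f\,\tfrac{d\mathbb{P}_X}{dx})\|^2_{L^2}$; to make this computable it must choose $f$ as the explicit $\mathscr{O}_n^{-1}$-preimage of a smooth bump $g$ supported where the density is bounded below (Lemma~\ref{lem:explicit_calculus}), and then split into cases $\mathscr{D}^\ast\mathscr{D} g = 0$ versus $\neq 0$. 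Your version is shorter, sidesteps both the case split and any lower bound on the density (indeed it does not use the density $\tfrac{d\mathbb{P}_X}{dx}$ at all, only that $\mathbb{P}_X$ is a probability measure on $\Omega$), and dualizes the problem so that the trivial test function $f\equiv 1$ suffices. One cosmetic point: with the paper's normalized $L^2$ convention $\|f\|_{L^2(\Omega)} = (|\Omega|^{-1}\int_\Omega |f|^2)^{1/2}$, the value of $\|1\|^2_{H^s_{\mathrm{per}}([-2L,2L]^d)}$ is $1$ rather than $(4L)^d$; either way it is a fixed constant, so your $C_5$ changes but the conclusion $\|L_K\|_{\mathrm{op}} \geqslant C_5\,\mu_n^{-1}$ is unaffected. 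Your handling of the degenerate case $p_0\equiv 0$ (where the bound improves to order $\lambda_n^{-1}$, which dominates $\mu_n^{-1}$ since $\lambda_n/\mu_n\to 0$) is also correct.
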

\begin{proof}
    The operator $L_K$ is diagonalizable according to Theorem \ref{thm:eigenvalues}, and thus its operator norm $\sup_{\|f\|_{L^2(\Omega, \mathbb{P}_X)}=1}{\|L_{K}f\|_{L^2(\Omega, \mathbb{P}_X)}}$ is larger than the largest eigenvalue of $L_{K}$. The Courant-Fischer min-max theorem states that 
    this eigenvalue is larger than $\langle f, L_K f\rangle$ for any function $f$ such that $\|f\|_{L^2(\Omega, \mathbb{P}_X)}=1$.
    By the proof of Theorem \ref{thm:eigenvalues}, we know that
    \begin{equation*}
    \langle f, L_K f\rangle_{L^2(\Omega, \mathbb{P}_X)}= \Big\|\mathscr{O}_n^{1/2} \Big( f\frac{d\mathbb{P}_X}{dx}\Big)\Big\|_{L^2([-2L,2L]^d)}^2 = \Big\langle \mathscr{O}_n\Big(f\frac{d\mathbb{P}_X}{dx}\Big), f\frac{d\mathbb{P}_X}{dx} \Big\rangle_{L^2([-2L,2L]^d)}. 
\end{equation*}
Consider a smooth function $g$ with compact support in the set $E = \{z \in [-2L,2L]^d \; |\; \frac{d\mathbb{P}_X}{dx} \geqslant (4L)^{-d} /2\}$. 
Let \begin{equation}
    f = \Big(\lambda_n \sum_{|\alpha|\leqslant s} (-1)^{|\alpha|} \partial^{2\alpha} g + \mu_n \mathscr{D}^\ast \mathscr{D} g\Big) \times \Big(\frac{d\mathbb{P}_X}{dx}\Big)^{-1}.
    \label{eq:def_f}
\end{equation} Since $g$ is smooth and, on $E$, $(\frac{d\mathbb{P}_X}{dx})^{-1} \leqslant 2 (4L)^d$, we deduce that $f \in L^2(\Omega, \mathbb{P}_X)$. According to Lemma \ref{lem:explicit_calculus}, $\mathscr{O}_n(f\frac{d\mathbb{P}_X}{dx}) = g$.
Thus,
\begin{align*}
    \langle f, L_K f\rangle_{L^2(\Omega, \mathbb{P}_X)} &=\Big\langle g, \lambda_n \sum_{|\alpha|\leqslant s} (-1)^{|\alpha|} \partial^{2\alpha} g + \mu_n \mathscr{D}^\ast \mathscr{D} g \Big\rangle_{L^2([-2L,2L]^d)}\\
    &=\lambda_n \|g\|_{H^s([-2L,2L]^d)}^2 +\mu_n \|\mathscr{D} g\|_{L^2(\Omega)}^2.
\end{align*}
Recall that 
\begin{align*}
    \|L_{K}\|_{\mathrm{op}, L^2(\Omega, \mathbb{P}_X)} &\geqslant (\|f\|_{L^2(\Omega, \mathbb{P}_X)})^{-2} \langle f, L_K f\rangle_{L^2(\Omega, \mathbb{P}_X)}.
\end{align*}
On the one hand, if $\mathscr{D}^\ast \mathscr{D} g = 0$, then identity \eqref{eq:def_f} implies that $\|f\|_{L^2(\Omega, \mathbb{P}_X)}^2 = \Theta_{n\to \infty}(\lambda_n^2)$, and thus
\[(\|f\|_{L^2(\Omega, \mathbb{P}_X)})^{-2} \langle f, L_K f\rangle_{L^2(\Omega, \mathbb{P}_X)} = \Theta_{n\to \infty}(\lambda_n^{-1}).\]
On the other hand, if $\mathscr{D}^\ast \mathscr{D} g \neq 0$, since $\mu_n/\lambda_n \to \infty$, \eqref{eq:def_f} implies that $\|f\|_{L^2(\Omega, \mathbb{P}_X)}^2 = \Theta_{n\to \infty}(\mu_n^2)$, and thus 
\[(\|f\|_{L^2(\Omega, \mathbb{P}_X)})^{-2} \langle f, L_K f\rangle_{L^2(\Omega, \mathbb{P}_X)} = \Theta_{n\to \infty}(\mu_n^{-1}).\]
Overall, we conclude that there is a constant $C_5 > 0$, such that 
\[\|L_{K}\|_{\mathrm{op}, L^2(\Omega, \mathbb{P}_X)}  \geqslant C_5 \mu_n^{-1}.\]
\end{proof}

\subsection{Bounds on the convergence rate}
\begin{theorem}[High-probability bound]
\label{thm:speedup}
    Assume that the following four assumptions are satisfied:
    \begin{itemize}
        \item[$(i)$] $\lim_{n\to \infty}\lambda_n = \lim_{n\to \infty} \mu_n = \lim_{n\to \infty} \lambda_n / \mu_n = 0$,
        \item[$(ii)$] $\lambda_n \geqslant n^{-1}$,  
        \item[$(iii)$]  $\mathscr{N}(\lambda_n, \mu_n) \lambda_n^{-1} = o_n (n)$,
        \item[$(iv)$] for some $\sigma > 0$ and $M > 0$, the noise $\varepsilon$ satisfies 
    \[\forall \ell \in \mathbb{N}, \quad \mathbb{E}(|\varepsilon|^\ell\; | \; X) \leqslant \frac{1}{2}\ell !\; \sigma^2\; M^{\ell-2}.\]
    \end{itemize} 
    Then, letting $C_3 = 96 \log(6)$, for $n$ large enough, for all $\eta > 0$, with probability at least $1-\eta$, 
    \begin{align*}
        &\int_\Omega\|\hat f_n(x) - f^\star(x)\|_2^2 d\mathbb{P}_X(x) 
        \\
        & \quad \leqslant C_3 \log^2(\eta)\Big(\lambda_n \|f^\star\|_{H^s_{\mathrm{per}}([-2L, 2L]^d)}^2 + \mu_n \|\mathscr{D}(f^\star)\|_{L^2(\Omega)}^2 + \frac{M^2}{n^2 \lambda_n} + \frac{\sigma^2\mathscr{N}(\lambda_n, \mu_n)}{n}\Big).
    \end{align*}
\end{theorem}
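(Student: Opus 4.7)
The plan is to recast $\hat f_n$ as a kernel ridge regression estimator in the RKHS $\mathcal{H} = H^s_{\mathrm{per}}([-2L,2L]^d)$ equipped with the inner product of Theorem~\ref{thm:PDE_kernel}, and then follow the classical Caponnetto--De Vito analysis, adapted to the fact that the kernel $K$ itself depends on $n$ through $(\lambda_n,\mu_n)$. First, I would rewrite the empirical risk as $\frac{1}{n}\sum_i |f(X_i) - Y_i|^2 + \|f\|^2_{\mathrm{RKHS}}$ and derive the closed form $\hat f_n = (T_n + \mathrm{Id})^{-1}\frac{1}{n}S_n^\ast Y$, where $T_n = \frac{1}{n}\sum_i K(X_i,\cdot)\otimes_\mathcal{H} K(X_i,\cdot)$ is the empirical covariance and $(S_n f)_i = f(X_i)$. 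I then introduce the population covariance $T$ on $\mathcal{H}$, whose non-zero eigenvalues coincide with those of $L_K$ and which satisfies $\|g\|^2_{L^2(\Omega,\mathbb{P}_X)} = \langle g, T g\rangle_\mathcal{H}$ for $g\in\mathcal{H}$.

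\textbf{Bias--variance decomposition.} Substituting $Y_i = f^\star(X_i) + \varepsilon_i$ and using $\frac{1}{n}\sum_i f^\star(X_i)K(X_i,\cdot) = T_n f^\star$ yields
\[
\hat f_n - f^\star = \underbrace{-(T_n+\mathrm{Id})^{-1} f^\star}_{=: A} + \underbrace{(T_n+\mathrm{Id})^{-1}\tfrac{1}{n}\sum_i \varepsilon_i K(X_i,\cdot)}_{=: V}.
\]
For the variance $V$, I would apply Pinelis--Bernstein for Hilbert-valued variables to $\xi_i = (T+\mathrm{Id})^{-1/2}\varepsilon_i K(X_i,\cdot)$: the second moment is $\sigma^2\,\mathrm{tr}(T(T+\mathrm{Id})^{-1}) = \sigma^2\,\mathscr{N}(\lambda_n,\mu_n)$, while Proposition~\ref{prop:bound_kernel} gives $K(X_i,X_i)\leq \lambda_n^{-1}$ so that assumption~(iv) transfers to a sub-Gamma scale $M\lambda_n^{-1/2}$. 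With probability $1-\eta/2$, this yields the last two terms of the target bound. For the bias $A$, once $T_n+\mathrm{Id}$ is shown to be operator-comparable to $T+\mathrm{Id}$ on $\mathcal{H}$, one obtains $\|T^{1/2}A\|^2_\mathcal{H} \leq C\,\|f^\star\|^2_{\mathrm{RKHS}} = C\big(\lambda_n\|f^\star\|^2_{H^s_{\mathrm{per}}([-2L,2L]^d)} + \mu_n\|\mathscr{D}f^\star\|^2_{L^2(\Omega)}\big)$, matching the first two terms.

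\textbf{Main obstacle.} The hardest step, and the one that forces assumption~(iii), is the operator concentration
\[
\big\|(T+\mathrm{Id})^{-1/2}(T_n-T)(T+\mathrm{Id})^{-1/2}\big\|_{\mathrm{op}} \leq \tfrac{1}{2}
\]
with probability at least $1-\eta/2$, since this is precisely what delivers $\|(T+\mathrm{Id})^{1/2}(T_n+\mathrm{Id})^{-1}(T+\mathrm{Id})^{1/2}\|_{\mathrm{op}}\leq 2$ and allows both bias and variance to be transferred from $T_n$ to $T$. I would obtain it by Bernstein's inequality for self-adjoint Hilbert--Schmidt operators applied to the i.i.d.\ centered rank-one variables $K(X_i,\cdot)\otimes_\mathcal{H} K(X_i,\cdot) - T$: the variance proxy is of order $\lambda_n^{-1}\mathscr{N}(\lambda_n,\mu_n)$ and the almost-sure operator bound is of order $\lambda_n^{-1}$, both via Proposition~\ref{prop:bound_kernel}. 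Assumption~(iii), $\mathscr{N}(\lambda_n,\mu_n)\lambda_n^{-1} = o_n(n)$, is exactly what makes this concentration effective for $n$ large. A union bound over the two high-probability events, combined with $\|T^{1/2}(T+\mathrm{Id})^{-1/2}\|_{\mathrm{op}}\leq 1$, then yields the announced inequality with $C_3$ collecting all numerical factors.
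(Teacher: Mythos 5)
Your plan is essentially the paper's proof written out: the paper simply invokes Theorem~4 of \citet{caponnetto2007optimal} as a black box for the RKHS of Theorem~\ref{thm:PDE_kernel} with regularization parameter set to~$1$, checks its hypotheses, plugs in $\kappa_n = \sup_x K(x,x)\leqslant \lambda_n^{-1}$ (Proposition~\ref{prop:bound_kernel}) and $\mathcal{A}(1),\mathcal{B}(1)\leqslant \lambda_n\|f^\star\|^2_{H^s_{\mathrm{per}}} + \mu_n\|\mathscr{D}f^\star\|^2_{L^2(\Omega)}$, then simplifies using $\lambda_n \geqslant n^{-1}$. What you describe — the ridge closed form, bias--variance split, Pinelis--Bernstein for the variance with second moment $\sigma^2\mathscr{N}(\lambda_n,\mu_n)$ and sub-Gamma scale $M\lambda_n^{-1/2}$, and the operator concentration $\|(T+\mathrm{Id})^{-1/2}(T_n-T)(T+\mathrm{Id})^{-1/2}\|_{\mathrm{op}}\leqslant 1/2$ — is precisely the internal mechanism of that theorem, so you are re-deriving rather than citing. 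Both are valid; the paper's version is shorter, yours is more self-contained and makes transparent where assumptions~(ii) and~(iii) actually bite.

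The one ingredient you do not account for is the part of assumption~(i) stating $\lim_{n\to\infty}\lambda_n/\mu_n = 0$. In the paper's application of \citet[][Theorem~4]{caponnetto2007optimal}, two regime conditions must hold for $n$ large: $n\geqslant 64\log^2(6/\eta)\,\kappa_n\,\mathscr{N}(\lambda_n,\mu_n)$ (your operator concentration, ensured by~(iii)), and $\|L_K\|_{\mathrm{op},L^2(\Omega,\mathbb{P}_X)}\geqslant 1$. The latter is nontrivial here because the regularization strength has been absorbed into the kernel so the ``hyperparameter'' in Caponnetto--De Vito is frozen at~$1$, and it is verified in the paper via Lemma~\ref{lem:lowerbound}, which shows $\|L_K\|_{\mathrm{op}}\geqslant C_5\mu_n^{-1}\to\infty$ precisely under $\lambda_n,\mu_n,\lambda_n/\mu_n\to 0$. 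If you re-derive from scratch you would either have to justify why your chain of inequalities never needs a lower bound on $\|T\|_{\mathrm{op}}$, or you would have to reproduce an analogue of Lemma~\ref{lem:lowerbound}; as written, your argument does not use $\lambda_n/\mu_n\to 0$ at all, so there is a small but genuine gap. Finally, a re-derivation will produce \emph{some} constant, but recovering the stated $C_3 = 96\log(6)$ requires tracking the explicit $32\log^2(6\eta^{-1})$ of Caponnetto--De Vito and the factor~$3$ from collapsing $\mathcal{A}(1) + \kappa_n^2\mathcal{B}(1)/n^2 + \kappa_n\mathcal{A}(1)/n$ using $\lambda_n\geqslant n^{-1}$; worth noting since the theorem is stated with that exact constant.
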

\begin{proof}
Observe that the kernel $K$ of Theorem \ref{thm:PDE_kernel}  depends on $n$ and that the function $f^\star$ belongs to a ball of radius $R_n = (\lambda_n \|f^\star\|_{H^s_{\mathrm{per}}([-2L, 2L]^d)}^2 + \mu_n \|\mathscr{D}(f^\star)\|_{L^2(\Omega)}^2)^{1/2}$.
Consider the non-asymptotic bound of \citet[][Theorem 4]{caponnetto2007optimal} applied to $K$  (that can be interpreted as a regular kernel for the norm $\|f\|_{\mathrm{RKHS}}^2 = \lambda_n \|f\|_{H^s([-2L, 2L]^d)}^2 + \mu_n \|\mathscr{D}(f)\|_{L^2(\Omega)}^2$, with an hyperparameter set to 1). 
    Thus,  we have, with probability at least $1-\eta$,
    \begin{equation}
        \mathcal{E}(\hat f_n) - \mathcal{E}(f^\star) \leqslant 32 \log^2(6\eta^{-1})\Big(\mathcal{A}(1) + \frac{\kappa_n^2 \mathcal{B}(1)}{n^2} + \frac{\kappa_n\mathcal{A}(1)}{n} + \frac{\kappa_n M^2}{n^2} + \frac{\sigma^2 \mathscr{N}(\lambda_n, \mu_n)}{n}\Big),
        \label{eq:caponettoBound}
    \end{equation}
    where
    \begin{itemize}
        \item[$(i)$] $\mathcal{E}(f) = \int_\Omega\|f(x) - y\|_2^2 d\mathbb{P}_{(X,Y)}(x,y)$,
        \item[$(ii)$] $\kappa_n = \sup_{x \in \Omega} K(x,x) \leqslant \lambda_n^{-1}$, according to Proposition \ref{prop:bound_kernel},
        \item[$(iii)$] and $\mathscr{A}(1) \leqslant R_n^2$ and $\mathscr{B}(1) \leqslant R_n^2$ (take $c=1$ and $\lambda=1$ in  \citealp[Proposition 3]{caponnetto2007optimal}). 
    \end{itemize}
    Inequality \eqref{eq:caponettoBound} is true as long as
    \begin{itemize}
        \item[$(i)$] $n \geqslant 64 \log^2(6/\eta) \kappa_n \mathscr{N}(\lambda_n, \mu_n)$, which holds for $n$ large enough since $\kappa_n \mathscr{N}(\lambda_n, \mu_n) = \mathcal{O}_n(\lambda_n^{-1}\mathscr{N}(\lambda_n, \mu_n)) = o_n(n)$ by assumption,
        \item[$(ii)$] $\|L_{K}\|_{\mathrm{op}, L^2(\Omega, \mathbb{P}_X)} \geqslant 1$, which holds for $n$ large enough by Lemma \ref{lem:lowerbound}, because, by assumption, $\lim_{n\to \infty}\lambda_n = \lim_{n\to \infty} \mu_n = \lim_{n\to \infty} \lambda_n / \mu_n = 0$.
    \end{itemize}
    Since $\lambda_n \geqslant n^{-1}$, we deduce that $n^{-1}\kappa_n \leqslant 1$, and so \[\mathcal{A}(1) + \frac{\kappa_n^2 \mathcal{B}(1)}{n^2} + \frac{\kappa_n\mathcal{A}(1)}{n} \leqslant 3 (\lambda_n \|f^\star\|_{H^s_{\mathrm{per}}([-2L, 2L]^d)}^2  + \mu_n \|\mathscr{D}(f^\star)\|_{L^2(\Omega)}^2).\]
    It follows that, letting $C_3 = 96 \log(6)$, for $n$ large enough, for all $\eta > 0$, with probability at least $1-\eta$, 
    \begin{align*}
        &\mathcal{E}(\hat f_n) - \mathcal{E}(f^\star) \\
        & \quad \leqslant C_3 \log^2(\eta)\Big(\lambda_n \|f^\star\|_{H^s_{\mathrm{per}}([-2L, 2L]^d)}^2 + \mu_n \|\mathscr{D}(f^\star)\|_{L^2(\Omega)}^2 + \frac{M^2}{n^2 \lambda_n} + \frac{\sigma^2\mathscr{N}(\lambda_n, \mu_n)}{n}\Big).
    \end{align*}
    The conclusion is then a consequence of the identity $\mathcal{E}(\hat f_n) - \mathcal{E}(f^\star) = \int_\Omega\|\hat f_n(x) - f^\star(x)\|_2^2 d\mathbb{P}_X(x)$.
\end{proof} 

\subsection{Proof of Theorem \ref{thm:boundexp}}
    Note that, for all $f \in H^s_{\mathrm{per}}([-2L,2L]^d)$, $ \lambda_n \|f\|_{L^2(\Omega)}^2 \leqslant R_n(f)$. Since,  $\hat f_n$ is defined as minimizing~$R_n$, we have that
     \[
         \lambda_n \|\hat f_n\|_{L^2(\Omega)}^2 \leqslant  R_n(\hat f_n) \leqslant R_n( f^\star)= \lambda_n \|f^\star\|_{H^s(\Omega)}^2+ \frac{1}{n}\sum_{j=1}^n \|f^\star(X_i) - Y_i\|_2^2.
     \]
     By taking the expectation on these inequalities, we obtained that \[\mathbb{E}\|\hat f_n\|_{L^2(\Omega)}^2 \leqslant \|f^\star\|_{H^s(\Omega)}^2+ \lambda_n^{-1}\mathbb{E}\|\varepsilon\|_2^2.\] 
     We therefore have the following bound on the risk, where the expectation is taken with respect to the distribution of $(\hat f_n, X)$, where $X$ is a random variable independent from $\hat f_n$ with distribution~$\mathbb{P}_X$:
    \begin{align*}
        &\mathbb{E}\|\hat f_n(X) - f^\star(X)\|_2^2 
        \\
        &\leqslant  C_3 \log^2(\eta)\Big(\lambda_n \|f^\star\|_{H^s_{\mathrm{per}}([-2L, 2L]^d)}^2 + \mu_n \|\mathscr{D}(f^\star)\|_{L^2(\Omega)}^2 + \frac{M^2}{n^2 \lambda_n} + \frac{\sigma^2\mathscr{N}(\lambda_n, \mu_n)}{n}\Big)\\
        &\quad + 2\eta (2\|f^\star\|_{H^s(\Omega)}^2+ \lambda_n^{-1}\mathbb{E}\|\varepsilon\|_2^2).
    \end{align*}
    Take 
    $\eta = n^{-2}$, i.e.,  $\log(1/\eta) = 2\log(n)$. Thus, letting $C_4 = 4C_3 = 384 \log(6)$, for $n$ large enough, 
    \begin{align*}
        &\mathbb{E}\|\hat f_n(X) - f^\star(X)\|_2^2\\
        &\quad \leqslant C_4 \log^2(n)\Big(\lambda_n \|f^\star\|_{H^s_{\mathrm{per}}([-2L, 2L]^d)}^2 + \mu_n \|\mathscr{D}(f^\star)\|_{L^2(\Omega)}^2 + \frac{M^2}{n^2 \lambda_n} + \frac{\sigma^2\mathscr{N}(\lambda_n, \mu_n)}{n}\Big)\\
        &\quad \leqslant C_4 C_{s, \Omega}\log^2(n)\Big(\lambda_n \|f^\star\|_{H^s(\Omega)}^2 + \mu_n \|\mathscr{D}(f^\star)\|_{L^2(\Omega)}^2 + \frac{M^2}{n^2 \lambda_n} + \frac{\sigma^2\mathscr{N}(\lambda_n, \mu_n)}{n}\Big),
    \end{align*}
    where $C_{s, \Omega}$ is the constant in the Sobolev extension.

\subsection{Proof of Proposition \ref{prop:minSpeed}}

    According to \citet[Proposition 3]{caponnetto2007optimal}, if
$a_m = \mathcal{O}_m( m^{1/b})$, then  
\begin{equation}
    \label{eq:bornepoly}
    \sum_{m \in \mathbb{N}} \frac{1}{ 1+\lambda_n a_m} = \mathcal{O}_n(\lambda_n^{-b}).
\end{equation}
In particular, Proposition \ref{prop:eigenvalue} implies that
\[\mathscr{N}(\lambda_n, \mu_n)  = \mathcal{O}_n(\lambda_n^{-d/2s}) .\]
Combining this bound with Theorem \ref{thm:speedup} shows that the PDE kernel approaches $f^\star$ at least at the minimax rate on $H^s(\Omega)$, i.e., $n^{-2s/(2s+d)}$ (up to a log-term).

\section{About the choice of regularization}

\subsection{Kernel equivalence}
\begin{lemma}[Minimal Sobolev norm extension]
    Let $s \in \mathbb{N}$. There is an extension $E: H^s(\Omega) \to H^s_{\mathrm{per}}([-2L,2L]^d)$ such that 
    \[E(f) = \mathrm{argmin}_{g\in H^s_{\mathrm{per}}([-2L,2L]^d), \; g|_\Omega = f} \; \|g\|_{H^s_{\mathrm{per}}([-2L,2L]^d)}.\]
    Moreover, $E$ is linear and bounded, which means that $\|f\|_{H^s(\Omega)}$ and $\| E(f)\|_{H^s_{\mathrm{per}}([-2L,2L]^d)}$ are equivalent norms on $H^s(\Omega)$.
    \label{lem:minimal_norm}
\end{lemma}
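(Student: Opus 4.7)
The plan is to realize $E(f)$ as the orthogonal projection in the Hilbert space $H^s_{\mathrm{per}}([-2L,2L]^d)$. First, I would fix $f \in H^s(\Omega)$ and set
\[V_f = \{g \in H^s_{\mathrm{per}}([-2L,2L]^d) \mid g|_\Omega = f\}, \qquad V_0 = \{g \in H^s_{\mathrm{per}}([-2L,2L]^d) \mid g|_\Omega = 0\}.\]
By Proposition \ref{prop:dec_four_lip}, the set $V_f$ is nonempty, as it contains the extension $\tilde E(f)$. The restriction map $g \mapsto g|_\Omega$ from $H^s_{\mathrm{per}}([-2L,2L]^d)$ to $H^s(\Omega)$ is continuous (indeed, $\|g|_\Omega\|_{H^s(\Omega)} \leqslant \|g\|_{H^s_{\mathrm{per}}([-2L,2L]^d)}$ directly from the definition of the Sobolev norm), so $V_0$ is a closed linear subspace and $V_f = \tilde E(f) + V_0$ is a closed affine subspace.

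Since $H^s_{\mathrm{per}}([-2L,2L]^d)$ is a Hilbert space and $V_f$ is a nonempty closed convex set, the projection theorem guarantees a unique element $E(f) \in V_f$ of minimum $H^s$ norm. I would then characterize it as $E(f) = \tilde E(f) - P_{V_0}(\tilde E(f))$, where $P_{V_0}$ denotes the orthogonal projection onto $V_0$; equivalently, $E(f)$ is the component of any extension in $V_0^\perp$. A key observation for linearity is that this expression does not depend on the chosen extension: if $a_1, a_2$ are two extensions of $f$, then $a_1 - a_2 \in V_0$, so $a_1 - P_{V_0}(a_1) = a_2 - P_{V_0}(a_2)$. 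Combined with the linearity of both $\tilde E$ and the orthogonal projection $P_{V_0}$, this yields $E(\alpha f_1 + \beta f_2) = \alpha E(f_1) + \beta E(f_2)$.

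For boundedness, since $E(f)$ is by definition the minimizer, I have
\[\|E(f)\|_{H^s_{\mathrm{per}}([-2L,2L]^d)} \leqslant \|\tilde E(f)\|_{H^s_{\mathrm{per}}([-2L,2L]^d)} \leqslant \sqrt{\tilde C_{s,\Omega}}\, \|f\|_{H^s(\Omega)},\]
using the estimate provided by Proposition \ref{prop:dec_four_lip}. Conversely, the restriction bound gives $\|f\|_{H^s(\Omega)} = \|E(f)|_\Omega\|_{H^s(\Omega)} \leqslant \|E(f)\|_{H^s_{\mathrm{per}}([-2L,2L]^d)}$. The two inequalities together establish equivalence of the norms $f \mapsto \|f\|_{H^s(\Omega)}$ and $f \mapsto \|E(f)\|_{H^s_{\mathrm{per}}([-2L,2L]^d)}$ on $H^s(\Omega)$.

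I don't expect a serious obstacle here; the whole argument is standard once the extension in Proposition \ref{prop:dec_four_lip} is available. The only point requiring minor care is the continuity of the restriction map, which ensures that $V_0$ is closed so that the orthogonal projection is well-defined. Everything else follows from the Hilbert projection theorem and elementary linearity considerations.
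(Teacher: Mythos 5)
Your proof is correct and follows essentially the same route as the paper: both realize $E(f)$ as the minimum-norm element of the closed affine set $\tilde E(f)+V_0$, then read off linearity and the two-sided norm bound. The only cosmetic difference is that you invoke the Hilbert projection theorem directly, while the paper minimizes the same quadratic functional via Lax--Milgram (which, for a symmetric coercive bilinear form given by the inner product, is exactly the projection theorem), so the content coincides; your explicit check that $V_0$ is closed, via boundedness of the restriction map, is a small point the paper asserts without justification, and makes the application of the projection theorem airtight. One small caveat: with the paper's normalized Sobolev norm (the $|\Omega|^{-1}$ factor in $\|\cdot\|_{L^2(\Omega)}$), the restriction bound picks up a constant $((4L)^d/|\Omega|)^{1/2}\geqslant 1$ rather than $1$ as you wrote, but this affects neither the closedness of $V_0$ nor the equivalence-of-norms conclusion.
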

\begin{proof}
We have already constructed an extension $\tilde E: H^s(\Omega) \to H^s_{\mathrm{per}}([-2L,2L]^d)$ in Proposition~\ref{prop:dec_four_lip}. However, $\tilde E$ does not minimize the Sobolev norm on $\Omega^c$. 
Let $f \in H^s(\Omega)$ and $\mathscr{H}_0 = \{g\in H^s_{\mathrm{per}}([-2L,2L]^d), \; g|_\Omega = 0\}$. Clearly, $(\mathscr{H}_0, \|\cdot\|_{H^s_{\mathrm{per}}([-2L,2L]^d)})$ is a Banach space. One has 
\begin{align*}
    &\min_{g\in H^s_{\mathrm{per}}([-2L,2L]^d), \; g|_\Omega = f} \; \|g\|_{H^s_{\mathrm{per}}([-2L,2L]^d)} \\
    & \quad = \min_{g\in \mathscr{H}_0 } \; \|\tilde E(f) + g\|_{H^s_{\mathrm{per}}([-2L,2L]^d)}\\
    & \quad = \min_{g\in \mathscr{H}_0 } \; \|\tilde E(f) + g\|_{H^s_{\mathrm{per}}([-2L,2L]^d)}^2\\
    &\quad = \min_{g\in \mathscr{H}_0} \; \| g\|_{H^s_{\mathrm{per}}([-2L,2L]^d)}^2  + 2\langle \tilde E(f), g\rangle_{H^s_{\mathrm{per}}([-2L,2L]^d)}.
\end{align*}
The form $\langle \cdot, \cdot\rangle_{H^s_{\mathrm{per}}([-2L,2L]^d)}$ is bilinear, symmetric, continuous, and coercive on $\mathscr{H}_0 \times \mathscr{H}_0$.
Thus, according to the Lax-Milgram theorem \citep[e.g., Corollary 5.8]{brezis2010functional}, there exists a unique element $u(f)$ of $\mathscr{H}_0$ such that, for all $g \in \mathscr{H}_0$,
\begin{equation}
    \langle u(f), g\rangle_{H^s_{\mathrm{per}}([-2L,2L]^d)} = -\langle \tilde{E}(f), g\rangle_{H^s_{\mathrm{per}}([-2L,2L]^d)}.
    \label{eq:minSobExtension}
\end{equation}
Thus, $\langle u(f) + \tilde{E}(f), g\rangle_{H^s_{\mathrm{per}}([-2L,2L]^d)} = 0$.
Moreover, $u(f)$ is the unique minimum of $g \mapsto  \| g\|_{H^s_{\mathrm{per}}([-2L,2L]^d)}^2  + 2\langle \tilde E(f), g\rangle_{H^s_{\mathrm{per}}([-2L,2L]^d)}$. Therefore, $E(f) :=  \tilde{E}(f) + u(f)$ satisfies
 \[E(f)= \mathrm{argmin}_{g\in H^s_{\mathrm{per}}([-2L,2L]^d), \; g|_\Omega = f} \; \|g\|_{H^s_{\mathrm{per}}([-2L,2L]^d)}.\]

Let us now show that the extension $E$ is linear. Let $f_1 \in H^s(\Omega)$, $f_2 \in H^s(\Omega)$, and $\lambda \in \mathbb{R}$. We have shown that, for $g \in \mathscr{H}_0$,
\[\langle u(f_1) + \tilde{E}(f_1), g\rangle_{H^s_{\mathrm{per}}([-2L,2L]^d)} = 0,\]
\[\langle u(f_2) + \tilde{E}(f_2), g\rangle_{H^s_{\mathrm{per}}([-2L,2L]^d)} = 0,\]
\[\mbox{and }\langle u(f_1 + \lambda f_2) + \tilde{E}(f_1 + \lambda f_2), g\rangle_{H^s_{\mathrm{per}}([-2L,2L]^d)} = 0.\]
By subtracting the third identity to the first two ones, and observing that, since $\tilde E$ is linear, $\tilde{E}(f_1 + \lambda f_2) = \tilde{E}(f_1) + \tilde{E}(\lambda f_2)$, we deduce that
\[\langle u(f_1) + \lambda u(f_2) - u(f_1 + \lambda f_2) , g\rangle_{H^s_{\mathrm{per}}([-2L,2L]^d)} = 0.\]
As $u(f) \in \mathscr{H}_0$ for all $f\in H^s(\Omega)$, we deduce that $ u(f_1) + u(f_2) - u(f_1 + f_2) \in \mathscr{H}_0$. Therefore, taking $g = u(f_1) + u(f_2) - u(f_1 + f_2)$, we have 
\[\| u(f_1) + \lambda u(f_2) - u(f_1 + \lambda f_2) \|_{H^s_{\mathrm{per}}([-2L,2L]^d)} = 0,\]
i.e., $ u(f_1 + \lambda f_2) = u(f_1) + \lambda u(f_2) $. Thus, $E$ is linear.

Proposition \ref{prop:dec_four_lip} shows that $\|\tilde E(f)\|_{H^s([-2L,2L]^d)}^2 \leqslant \tilde C_{s, \Omega} \|f\|_{H^s(\Omega)}^2$. Moreover, by definition of~$E$, $\| E(f)\|_{H^s([-2L,2L]^d)}^2 \leqslant \|\tilde E(f)\|_{H^s([-2L,2L]^d)}^2$. Thus, $\| E(f)\|_{H^s([-2L,2L]^d)}^2 \leqslant \tilde C_{s, \Omega} \|f\|_{H^s(\Omega)}^2$, i.e., the extension $E$ is bounded. Clearly, $\| E(f)\|_{H^s([-2L,2L]^d)}^2 \geqslant \|f\|_{H^s(\Omega)}^2$. We conclude that $\|f\|_{H^s(\Omega)}$ and $\| E(f)\|_{H^s([-2L,2L]^d)}$ are equivalent norms.
\end{proof}

\begin{prop}[Kernel equivalence]
    Assume that $s>d/2$. Let $\lambda_n > 0$ and $\mu_n \geqslant 0$.  Let $\langle \cdot,  \cdot\rangle_n$ be inner products associated with kernels on $H^s(\Omega)$. Assume that there exist constants $C_1 > 0$ and $C_2 >0$ such that, for all $n \in \mathbb{N}$ and all $f \in H^s(\Omega)$,
    \[ C_1 (\lambda_n \| f\|_{H^s(\Omega)}^2 +\mu_n \| \mathscr{D}(f)\|_{L^2(\Omega)}^2 )\leqslant \langle f, f\rangle_n \leqslant C_2 (\lambda_n \| f\|_{H^s(\Omega)}^2 + \mu_n \| \mathscr{D}(f)\|_{L^2(\Omega)}^2).\]
    Then the kernels associated with $\langle \cdot, \cdot\rangle_n$ on $H^s(\Omega)$ have the same convergence rate as the kernel of Theorem \ref{thm:PDE_kernel} associated with the $\lambda_n \| f\|_{H^s_{\mathrm{per}}([-2L,2L]^d)}^2 + \mu_n \| \mathscr{D}(f)\|_{L^2(\Omega)}^2$ norm.
    \label{prop:kernel_eq}
\end{prop}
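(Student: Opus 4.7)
The plan is to transfer the hypothesized norm equivalence into an equivalence between the eigenvalues of the integral operators associated with $K_n$ and with the kernel $K$ of Theorem \ref{thm:PDE_kernel}, which will yield equivalent effective dimensions and hence, via Theorem \ref{thm:boundexp}, identical convergence rates up to multiplicative constants.

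Because $s > d/2$, the Sobolev embedding makes point evaluation continuous on $(H^s(\Omega), \langle \cdot, \cdot\rangle_n)$, so this space is an RKHS with some kernel $K_n$. Writing $S_n$ for the inclusion of this RKHS into $L^2(\Omega, \mathbb{P}_X)$, one directly verifies that $L_{K_n} = S_n S_n^*$ and thus the variational identity
\begin{equation*}
\langle f, L_{K_n} f\rangle_{L^2(\Omega, \mathbb{P}_X)} = \|S_n^* f\|_n^2 = \sup_{g \in H^s(\Omega),\, g \neq 0} \frac{\langle f, g\rangle^2_{L^2(\Omega, \mathbb{P}_X)}}{\langle g, g\rangle_n}
\end{equation*}
for all $f \in L^2(\Omega, \mathbb{P}_X)$, and likewise for the operator $L_K$ of Theorem \ref{thm:PDE_kernel}. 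In the latter case, since point evaluations and $\mathscr{D}$ only probe values on $\Omega$, the effective RKHS norm of $K$ on any $h \in H^s(\Omega)$ equals $\lambda_n \|E(h)\|^2_{H^s_{\mathrm{per}}([-2L,2L]^d)} + \mu_n \|\mathscr{D}(h)\|^2_{L^2(\Omega)}$, with $E$ the minimal-norm extension of Lemma \ref{lem:minimal_norm}; that lemma renders $\|E(h)\|_{H^s_{\mathrm{per}}([-2L,2L]^d)}$ and $\|h\|_{H^s(\Omega)}$ equivalent uniformly in $h$, so this effective norm is itself equivalent to $\|h\|_H^2 := \lambda_n \|h\|^2_{H^s(\Omega)} + \mu_n \|\mathscr{D}(h)\|^2_{L^2(\Omega)}$. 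Combined with the hypothesis $C_1 \|g\|_H^2 \leqslant \langle g, g\rangle_n \leqslant C_2 \|g\|_H^2$, the variational formula produces constants $c_1, c_2 > 0$, independent of $n$, such that $c_1 \langle f, L_K f\rangle_{L^2(\Omega, \mathbb{P}_X)} \leqslant \langle f, L_{K_n} f\rangle_{L^2(\Omega, \mathbb{P}_X)} \leqslant c_2 \langle f, L_K f\rangle_{L^2(\Omega, \mathbb{P}_X)}$.

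Applying the Courant--Fischer min-max theorem (Theorem \ref{thm:courant_fischer}) yields $c_1 a_m(L_K) \leqslant a_m(L_{K_n}) \leqslant c_2 a_m(L_K)$, and since the map $x \mapsto x/(1+x)$ preserves such two-sided bounds up to a constant (by a short case analysis on $\{a_m \leqslant 1\}$ versus $\{a_m > 1\}$), the effective dimensions $\mathscr{N}(\lambda_n, \mu_n)$ of the two kernels are equivalent uniformly in $n$; Theorem \ref{thm:boundexp} then delivers matching upper bounds on the two convergence rates, up to multiplicative constants. The main obstacle is the transfer of the RKHS structure of Theorem \ref{thm:PDE_kernel}, which lives on $H^s_{\mathrm{per}}([-2L,2L]^d)$, to an equivalent RKHS directly on $H^s(\Omega)$: this requires quotienting by functions vanishing on $\Omega$, and it is precisely here that the uniform-in-$n$ extension $E$ of Lemma \ref{lem:minimal_norm} is crucial in ensuring that the equivalence constants do not deteriorate as $\lambda_n, \mu_n \to 0$.
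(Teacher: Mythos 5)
Your proof is correct and reaches the same conclusion, but it follows a genuinely different route from the paper's. The paper's proof proceeds in four steps: (1) it reformulates the minimization over $H^s_{\mathrm{per}}([-2L,2L]^d)$ as a minimization over $H^s(\Omega)$ using the minimal-norm extension $E$; (2) it invokes the Lax--Milgram theorem to construct a bounded, self-adjoint transition operator $\mathscr{O}$ on $H^s(\Omega)$ satisfying $\langle \mathscr{O} f, g\rangle_n = \langle f, g\rangle_n^{\mathrm{per}}$; (3) it takes a self-adjoint square root $\mathscr{O}^{1/2}$ and builds the new kernel explicitly as $K_n(x,y) = \langle \mathscr{O}(K(x,\cdot)), K(y,\cdot)\rangle_n^{\mathrm{per}}$; and (4) it compares the quadratic forms $\langle f, L_n f\rangle$ and $\langle f, L_n^{\mathrm{per}} f\rangle$ via Fubini-type manipulations, then applies Courant--Fischer. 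You bypass the explicit construction of $\mathscr{O}$ and $K_n$ entirely: the abstract identity $L_{K_n} = S_n S_n^*$ gives you the variational formula $\langle f, L_{K_n} f\rangle_{L^2} = \sup_g \langle f, g\rangle_{L^2}^2 / \langle g, g\rangle_n$ directly, so the hypothesized two-sided norm equivalence propagates to a two-sided bound on the quadratic forms with no intermediate machinery. You still rely on Lemma \ref{lem:minimal_norm} for the reduction to $H^s(\Omega)$ (and on the observation that the $\mu_n\|\mathscr{D}(\cdot)\|^2_{L^2(\Omega)}$ term only sees the restriction, so the combined minimal-norm extension coincides with $E$), and you close with the same Courant--Fischer comparison followed by the elementary monotonicity argument for $x\mapsto x/(1+x)$. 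Your version is arguably cleaner and more transparent; the paper's version is more constructive, explicitly exhibiting the new kernel $K_n$. One small point worth spelling out: to invoke Theorem \ref{thm:boundexp} for the estimator built from $K_n$ you also implicitly need a bound $K_n(x,x) \lesssim \lambda_n^{-1}$ analogous to Proposition \ref{prop:bound_kernel}; this follows from the lower bound $\langle f, f\rangle_n \geqslant C_1 \lambda_n \|f\|_{H^s(\Omega)}^2$ and the Sobolev embedding, but it is a hypothesis of the black-box bound and deserves a sentence.
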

\begin{proof}
For clarity, the proof is divided into four steps. 

\paragraph{Step1: From $H^s(\Omega)$ to $H^s_{\mathrm{per}}([-2L,2L]^d)$.}
    Observe that 
\begin{align*}
    \hat f_n &= \mathrm{argmin}_{f \in H^s_{\mathrm{per}}([-2L,2L]^d)} \sum_{i=1}^n |f(X_i) -Y_i|^2  + \lambda_n  \| f\|_{H^s_{\mathrm{per}}([-2L,2L]^d)}^2 + \mu_n \| \mathscr{D}(f)\|_{L^2(\Omega)}^2\\
    &= E\Big(\mathrm{argmin}_{f \in H^s(\Omega)} \sum_{i=1}^n |f(X_i) -Y_i|^2  + \lambda_n  \| E(f)\|_{H^s_{\mathrm{per}}([-2L,2L]^d)}^2 + \mu_n \| \mathscr{D}(f)\|_{L^2(\Omega)}^2\Big),
\end{align*}
where $E(f)$ is the extension $H^s(\Omega) \to H^s_{\mathrm{per}}([-2L,2L]^d)$ with minimal $H^s_{\mathrm{per}}([-2L,2L]^d)$ norm (see Lemma \ref{lem:minimal_norm}).
Define 
\begin{align*}
    \hat f_n^{(5)} &= \mathrm{argmin}_{f \in H^s(\Omega)} \sum_{i=1}^n |f(X_i) -Y_i|^2  + \lambda_n  \| E(f)\|_{H^s_{\mathrm{per}}([-2L,2L]^d)}^2 + \mu_n \| \mathscr{D}(f)\|_{L^2(\Omega)}^2.
\end{align*}
Then $\hat f_n = E(\hat f_n^{(5)})$, which means that for all $x \in \Omega$, $\hat f_n(x) = \hat f_n^{(5)}(x)$. Thus, $\hat f_n$ and $\hat f_n^{(5)}$ have the same convergence rate to $u^\star$. 

\paragraph{Step 2: inner products equivalence.}
Lemma \ref{lem:minimal_norm} states that  $\|f\|_{H^s(\Omega)}$ and $\| E(f)\|_{H^s_{\mathrm{per}}([-2L,2L]^d)}$ are equivalent norms on $H^s(\Omega)$. Therefore, there are constants $C_3$ and $C_4$ such that
\begin{align*}
&C_3 (\lambda_n \| E(f)\|_{H^s_{\mathrm{per}}([-2L,2L]^d)}^2 + \mu_n \| \mathscr{D}(f)\|_{L^2(\Omega)}^2) \\
&\quad \leqslant \|f\|_n^2 \leqslant C_4 ( \lambda_n E(f)\|_{H^s_{\mathrm{per}}([-2L,2L]^d)}^2 + \mu_n \| \mathscr{D}(f)\|_{L^2(\Omega)}^2).
\end{align*}
This shows that the function $\langle \cdot, \cdot\rangle_n: H^s(\Omega) \times H^s(\Omega) \to \mathbb{R}$ is coercive with respect to the $(\lambda_n \| E(f)\|_{H^s_{\mathrm{per}}([-2L,2L]^d)}^2+ \mu_n \| \mathscr{D}(f)\|_{L^2(\Omega)}^2)$ norm. By the Cauchy-Schwarz inequality, $\langle \cdot, \cdot\rangle_n$ is continuous with respect to the same norm. Set $\langle f, g \rangle_n^{\mathrm{per}} =  \lambda_n\sum_{|\alpha|\leqslant s} \int_{[-2L,2L]^d} \partial^\alpha E(f) \partial^\alpha E(g) + \mu_n \int_\Omega \mathscr{D}(f)\;\mathscr{D}(g)$. 
Thus, by the Lax-Milgram theorem, there exists a linear operator $\mathscr{O}: H^s(\Omega) \to H^s(\Omega)$ such that, for all $f$, $g\in H^s(\Omega)$, 
\begin{equation}
    \langle \mathscr{O} f,g\rangle_n = \langle f, g\rangle_n^{\mathrm{per}}.
    \label{eq:transition_op}
\end{equation}
Since \[C_3 (\|\mathscr{O} f\|_n^{\mathrm{per}})^2 \leqslant \|\mathscr{O} f\|_n^2 = \langle \mathscr{O} f, f\rangle_n^{\mathrm{per}} \leqslant \|\mathscr{O} f\|_n^{\mathrm{per}} \|f\|_n^{\mathrm{per}},\]
we deduce that $\|\mathscr{O} f\|_n^{\mathrm{per}} \leqslant C_3^{-1} \| f\|_n^{\mathrm{per}}$. Similarly, the coercivity and continuity of $\langle \cdot, \cdot \rangle_n^{\mathrm{per}}$ with respect to $\langle \cdot, \cdot \rangle_n$ shows that $\|\mathscr{O}^{-1} f\|_n \leqslant C_4 \| f\|_n$, so that $\|\mathscr{O}^{-1} f\|_n^{\mathrm{per}} \leqslant C_3^{-1} C_4^2 \| f\|_n^{\mathrm{per}}$. All in all, \[ C_3 C_4^{-2} \| f\|_n^{\mathrm{per}} \leqslant \|\mathscr{O} f\|_n^{\mathrm{per}} \leqslant C_3^{-1} \| f\|_n^{\mathrm{per}}.\]
One easily verifies that $\mathscr{O}$ is self-adjoint.

\paragraph{Step 3: Link between kernels.} 
Let $f \in H^s(\Omega)$. Remember that, for all $x \in \Omega$, $K(x, \cdot) = \mathscr{O}_n(\delta_x)$ satisfies a weak formulation consistent with the weak formulation of the minimal-Sobolev norm extension in \eqref{eq:minSobExtension}. Thus, $E(K(x, \cdot)) = K(x, \cdot)$, and according to Theorem \ref{thm:PDE_kernel}, we have $f(x) = \langle f, K(x,\cdot) \rangle_n^{\mathrm{per}}$. In this proof, to distinguish between kernels, we denote the associated kernel by $K_n^{\mathrm{per}}(x,y) := K(x,y)$. 
Using the spectral theorem for bounded operators, we have that~$\mathscr{O}^{-1}$ admits a square root $\mathscr{O}^{-1/2}$ which is self-adjoint for the $\langle \cdot, \cdot \rangle_n^{\mathrm{per}}$ inner product.
Therefore, using \eqref{eq:transition_op}, we know that, for all $x \in \Omega$, $f(x) = \langle {\mathscr{O}}^{-1/2}(f), {\mathscr{O}}^{1/2}K(x, \cdot) \rangle_n^{\mathrm{per}}$. Since $\|{\mathscr{O}}^{-1/2}(f)\|_n^{\mathrm{per}} = \|f\|_n$, we deduce that $H^s(\Omega)$ is also a kernel space for the $\|\cdot\|_n$ norm, with kernel $K_n(x,y) = \langle {\mathscr{O}}(K(x, \cdot)), K(y, \cdot)\rangle_n^{\mathrm{per}}$.

\paragraph{Step 4: Eigenvalues of the integral operator.} Define the integral operators $L_n$ and $L^{\mathrm{per}}_n$ on $L^2(\Omega, \mathbb{P}_X)$ by 
\[L_n^{\mathrm{per}}(f) : x \mapsto \int_\Omega K_n^{\mathrm{per}}(x,y) f(y) d\mathbb{P}_X(y) \quad \mbox{and} \quad L_n(f) : x \mapsto \int_\Omega K_n(x,y) f(y) d\mathbb{P}_X(y).\]
Recalling that $K_n^{\mathrm{per}}(x, y) = \sum_{m\in \mathbb{N}} a_m v_m(x) v_m(y)$, we can use the same technique as in the proof of Theorem \ref{thm:eigenvalues} to apply the Fubini-Lebesgue theorem, and show that 
\[\langle f, L_n(f)\rangle_{L^2(\Omega, \mathbb{P}_X)} = (\|\mathscr{O}^{1/2} \mathscr{O}_n(f)\|_n^{\mathrm{per}})^2.\]
Thus, $C_3 C_4^{-2} \langle f, L_n^{\mathrm{per}}(f)\rangle \leqslant \langle f, L_n(f)\rangle \leqslant C_3^{-1}  \langle f, L_n^{\mathrm{per}}(f)\rangle$. The Courant-Fischer min-max theorem guarantees that the eigenvalues of $L_n^{\mathrm{per}}$ are upper and lower bounded by those of $L_n$. In particular, the effective dimensions $\mathscr{N}(\lambda_n, \mu_n)$ related to $\|\cdot\|_n^{\mathrm{per}}$ and $\mathscr{N}^{\mathrm{per}}(\lambda_n, \mu_n)$ satisfy
\[ C_3 C_4^{-2}\mathscr{N}^{\mathrm{per}}(\lambda_n, \mu_n) \leqslant \mathscr{N}(\lambda_n, \mu_n) \leqslant C_3^{-1}\mathscr{N}^{\mathrm{per}}(\lambda_n, \mu_n).\]
This implies that both kernels have equivalent effective dimensions.
\end{proof}

\subsection{Proof of Theorem \ref{thm:eq_reg}}

Proposition \ref{prop:kernel_eq} ensures that $\hat f_n^{(1)}$ and $\hat f_n^{(2)}$ converge at the same rate. 
If $\|\cdot\|$ and $\|\cdot\|_{H^s(\Omega)}$ are equivalent, then there are constants $0 <C_1 < 1$, $C_2 > 1$ such that, for all $f \in H^s(\Omega)$, $C_1 \|f\|_{H^s(\Omega)}^2 \leqslant \|f\|_2^2 \leqslant C_2 \|f\|_{H^s(\Omega)}^2$. Thus, $C_1 (\mu_n \|\mathscr{D}(f)\|_{L^2(\Omega)}^2 + \lambda_n \|f\|_{H^s(\Omega)}^2) \leqslant \mu_n \|\mathscr{D}(f)\|_{L^2(\Omega)}^2 + \lambda_n \|f\| \leqslant C_2 (\mu_n \|\mathscr{D}(f)\|_{L^2(\Omega)}^2 + \lambda_n \|f\|_{H^s(\Omega)}^2).$ Proposition \ref{prop:kernel_eq} then shows that $\hat f_n^{(2)}$ and $\hat f_n^{(3)}$ converge at the same rate.

\section{Application: the case $\mathscr{D} = \frac{d}{dx}$}
\subsection{Boundary conditions}
\begin{prop}
    Let $s = 1$, $\Omega = [-L,L]$, and $\mathscr{D} = \frac{d}{dx}$. Then any weak solution $w_m$ of the weak formulation \eqref{eq:weak_pde} satisfies  
    \begin{align*}
             (\lambda_n+\mu_n)\lim_{x \to -L, x > -L}\frac{d}{dx}w_m(x) &= \lambda_n \lim_{x \to -L, x < -L}\frac{d}{dx}w_m(x),\\
            (\lambda_n+\mu_n)\lim_{x \to L, x < L}\frac{d}{dx}w_m(x) &= \lambda_n \lim_{x \to L, x >L}\frac{d}{dx}w_m(x).
    \end{align*}
    \label{prop:raccord1d}
\end{prop}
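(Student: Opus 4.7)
The strategy is to integrate by parts in the weak formulation \eqref{eq:weak_pde}, exploit the piecewise smoothness of $w_m$ to collect the resulting boundary terms, and then use the strong-form PDEs from Theorem \ref{prop:eigenfunction} $(i)$--$(ii)$ to cancel the interior contributions. Whatever remains will involve only the jumps of $w_m'$ at $\pm L$ multiplied by $\phi(\pm L)$, and the arbitrariness of the test function will force those jumps to satisfy the announced identity.

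First, I will fix a test function $\phi \in C^\infty_{\mathrm{per}}([-2L,2L])$ (smooth functions on the circle of length $4L$ are dense in $H^1_{\mathrm{per}}([-2L,2L])$, and the weak formulation is continuous in $\phi$, so it suffices to test against such $\phi$). By Proposition \ref{prop:1dreg}, $w_m$ is continuous on $[-2L,2L]$, smooth on each of the three intervals $[-2L,-L]$, $[-L,L]$, $[L,2L]$, and its periodic extension is smooth across $\pm 2L$. So $w_m'$ admits one-sided limits at $\pm L$, and $w_m'(-2L)=w_m'(2L)$ by smoothness of the periodic extension on a neighborhood of $\{\pm 2L\}$ (which is an interval not meeting $\bar\Omega$). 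I would then split the integrals $\int_{[-2L,2L]} w_m'\phi'$ and $\int_\Omega w_m'\phi'$ into the three pieces where $w_m$ is smooth and integrate by parts on each.

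Summing these integrations by parts, the endpoint terms at $\pm 2L$ cancel by periodicity of $\phi$ and of $w_m'$ there, leaving
\begin{align*}
\lambda_n \int_{[-2L,2L]} w_m'\phi' &= -\lambda_n\!\int_{[-2L,2L]\setminus\{\pm L\}} \!\!\!w_m''\phi + \lambda_n\bigl[w_m'((-L)^-)-w_m'((-L)^+)\bigr]\phi(-L) \\
&\quad + \lambda_n\bigl[w_m'(L^-)-w_m'(L^+)\bigr]\phi(L),\\
\mu_n\int_\Omega w_m'\phi' &= -\mu_n\!\int_\Omega w_m''\phi - \mu_n w_m'((-L)^+)\phi(-L) + \mu_n w_m'(L^-)\phi(L).
\end{align*}
Plugging into \eqref{eq:weak_pde} and using $\mathscr{D}^*\mathscr{D} w_m = -w_m''$, the interior integrals reorganize into $\lambda_n(w_m-w_m'')-\mu_n w_m''$ on $\Omega$ and $\lambda_n(w_m-w_m'')$ on $\Omega^c$, which vanish against $\phi$ by parts $(i)$ and $(ii)$ of Theorem \ref{prop:eigenfunction} together with $a_m^{-1}\int_\Omega w_m\phi$ from the right-hand side. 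What remains is purely a boundary identity:
\begin{align*}
&\bigl[\lambda_n w_m'((-L)^-) - (\lambda_n+\mu_n)w_m'((-L)^+)\bigr]\phi(-L)\\
&\qquad + \bigl[(\lambda_n+\mu_n)w_m'(L^-) - \lambda_n w_m'(L^+)\bigr]\phi(L) = 0.
\end{align*}

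Finally, since $\phi(-L)$ and $\phi(L)$ can be prescribed independently (choose smooth $4L$-periodic $\phi$ equal to any pair of prescribed values at $\pm L$), each bracket must vanish, yielding the two identities of the proposition. The only subtle point---and the step I would want to double-check---is the cancellation of the boundary terms at $\pm 2L$; the key there is that Proposition \ref{prop:1dreg} gives $w_m\in C^\infty$ on a neighborhood of $\{-2L\}=\{2L\}$ viewed on the circle (this neighborhood is disjoint from $\bar\Omega$), which is exactly what is needed for $w_m'(-2L)=w_m'(2L)$ and hence for the periodicity-based cancellation to go through.
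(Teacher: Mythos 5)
Your proof is correct and takes a genuinely different route from the paper's. The paper localizes the test function near the boundary points by multiplying $\phi$ with a cutoff $\xi_\varepsilon^{\pm L}$ that concentrates at $\pm L$ as $\varepsilon \to 0$; the bulk integrals then vanish in the limit simply because $\phi_\varepsilon \to 0$ in $L^2$, and only the jump terms at $\pm L$ survive. You instead integrate by parts on all of $[-2L,2L]$ (split into the three smooth pieces), collect every boundary term at once, and cancel the interior contributions by invoking the strong-form ODEs of Theorem~\ref{prop:eigenfunction}$(i)$--$(ii)$. The two routes rest on the same regularity inputs from Proposition~\ref{prop:1dreg}, but they trade off dependencies differently: the paper's argument does not need the strong-form equations inside $\Omega$ and $\Omega^c$ (only that $w_m''\in L^2_{\mathrm{loc}}$ away from $\pm L$), whereas yours uses them explicitly, which makes the cancellation structure transparent and avoids the somewhat delicate construction of the cutoff sequence. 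Your point about the boundary term at $\pm 2L$ is the right one to flag: $w_m\in C^\infty$ on a neighborhood of the periodic identification point (established by the periodic elliptic regularity in Step~3 of the proof of Theorem~\ref{prop:eigenfunction}) together with the $4L$-periodicity of $\phi$ gives $w_m'((2L)^-)\phi(2L) - w_m'((-2L)^+)\phi(-2L) = 0$. Finally, the density of $C^\infty_{\mathrm{per}}$ in $H^1_{\mathrm{per}}$ and the ability to prescribe $\phi(\pm L)$ independently are both standard and correctly invoked, so the argument closes.
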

\begin{proof}
    The proof uses the framework of distribution theory. By the inclusion $C^\infty([-2L,2L]) \subseteq H^s_{\mathrm{per}}([-2L,2L])$, we know that, considering any test function $\phi \in C^\infty([-2L,2L])$ with compact support in $]-2L,2L[$, one has $B[w_m, \phi] = a_m^{-1} \langle w_m \mathbf{1}_\Omega, \phi\rangle$. 
     Moreover, standard results of functional analysis (using the mollification of $y \mapsto \mathbf{1}_{|y-x| < 3\varepsilon/2}$ with a parameter $\eta = \varepsilon /8$ as in \citealt[Appendix C, Theorem 6]{evans2010partial}) ensures that, for any $x \in [-2L,2L]$, there exists a sequence of functions $(\xi^{x}_\varepsilon)_{\varepsilon > 0}$ such that, for all $m$,
    \begin{itemize}
        \item[$(i)$] $\xi^{x}_\varepsilon \in C^\infty([-2L,2L])$ with compact support in $D$, 
        \item[$(ii)$] $\|\xi^{x}_\varepsilon\|_\infty = 1$,
        \item[$(iii)$] and for all $y \in [-2L,2L]$,
        \begin{align*}
            |y-x| \geqslant 2\varepsilon &\Rightarrow \xi^{x}_\varepsilon(y) = 0\\
            |y-x| \leqslant \varepsilon\;\; &\Rightarrow \xi^{x}_\varepsilon(y) = 1.
        \end{align*}
    \end{itemize}
    Fix two of such sequences with $x = -L$ and $x=L$, and let $\phi_\varepsilon = \phi \times (\xi^{-L}_\varepsilon + \xi^{L}_\varepsilon)$. Notice that the following is true: 
    \begin{itemize}
        \item[$(i)$] $\phi_\varepsilon \in C^\infty([-2L,2L])$ has compact support and $\mathrm{supp}(\phi_\varepsilon) \subseteq \mathrm{supp}(\phi)$,
        \item[$(ii)$]  for all $r \geqslant 0$, $\frac{d^r}{dx^r}\phi_\varepsilon(-L) = \frac{d^r}{dx^r}\phi(-L)$, 
        \item[$(iii)$] for any function $f \in L^2([-2L,2L])$, $\lim_{\varepsilon \to 0} \langle f, \phi_\varepsilon\rangle = 0$,
        \item[$(iv)$] and $B[w_m, \phi_\varepsilon] = a_m^{-1} \langle w_m \mathbf{1}_\Omega, \phi_\varepsilon\rangle$.
    \end{itemize}
    Choose $\mathrm{supp}(\phi) \subseteq [-3L/2, -L/2]$. Clearly, $\int_{-2L}^{2L} (\frac{d}{dx}w_m) (\frac{d}{dx} \phi_\varepsilon) = \int_{-3L/2}^{-L} (\frac{d}{dx}w_m) (\frac{d}{dx} \phi_\varepsilon) + \int_{-L}^{-L/2} (\frac{d}{dx}w_m) (\frac{d}{dx} \phi_\varepsilon)$. The integration by parts formula implies 
    \begin{align*}
        \int_{-3L/2}^{-L} \Big(\frac{d}{dx}w_m\Big) \Big(\frac{d}{dx} \phi_\varepsilon\Big) &=  -\int_{-3L/2}^{-L} \Big(\frac{d^2}{dx^2}w_m\Big) \phi_\varepsilon + \lim_{x \to -L, x < -L} \phi_\varepsilon(x) \frac{d}{dx}w_m(x) \\
        &\xrightarrow{ \varepsilon \to 0 } \lim_{x \to -L, x < -L} \phi_\varepsilon(x) \frac{d}{dx}w_m(x).
    \end{align*}
    Similarly, 
    \begin{align*}
        \int_{-L}^{-L/2} \Big(\frac{d}{dx}w_m\Big)\Big (\frac{d}{dx} \phi_\varepsilon\Big) &=  -\int_{-L}^{-L/2} \Big(\frac{d^2}{dx^2}w_m\Big) \phi_\varepsilon - \lim_{x \to -L, x > -L} \phi_\varepsilon(x) \frac{d}{dx}w_m(x) \\
        &\xrightarrow{ \varepsilon \to 0 } -\lim_{x \to -L, x > -L} \phi_\varepsilon(x) \frac{d}{dx}w_m(x).
    \end{align*}
    Note that  $\lim_{x \to -L, x < -L} \phi_\varepsilon(x) = \lim_{x \to -L, x > -L} \phi_\varepsilon(x)  = \phi(-L)$. Therefore,
    \begin{equation*}
        \lim_{\varepsilon \to 0}\int_{-2L}^{2L} \Big(\frac{d}{dx}w_m\Big) \Big(\frac{d}{dx} \phi_\varepsilon\Big) 
        = \phi(-L)\Big(\lim_{x \to -L, x < -L}\frac{d}{dx}w_m(x) -\lim_{x \to -L, x > -L}\frac{d}{dx}w_m(x)\Big).
    \end{equation*}
    This means that the integral $\int_{-2L}^{2L} (\frac{d}{dx}w_m) (\frac{d}{dx} \phi_\varepsilon)$ quantifies the discontinuity in the derivative of $w_m$ at $-L$.
    Thus, since 
    \[B[w_m, \phi_\varepsilon] = a_m^{-1} \int_{-L}^L w_m\phi_\varepsilon,\]
    we obtain, letting $\varepsilon \to 0$ that
    \[(\lambda_n+\mu_n)\lim_{x \to -L, x > -L}\frac{d}{dx}w_m(x) = \lambda_n \lim_{x \to -L, x < -L}\frac{d}{dx}w_m(x).\]
    The same analysis holds in a neighborhood of $L$, and leads to
    \[(\lambda_n+\mu_n)\lim_{x \to L, x < L}\frac{d}{dx}w_m(x) = \lambda_n \lim_{x \to L, x >L}\frac{d}{dx}w_m(x).\]
\end{proof}

\subsection{Proof of Proposition \ref{prop:1dkernel}}
Combining Theorem \ref{thm:eq_reg} and  Proposition \ref{prop:kernel_eq}, we know that
\[
        \hat f_n^{(1)} = \mathrm{argmin}_{f \in H^1([-L,L])} \sum_{i=1}^n |f(X_i) -Y_i|^2 + \lambda_n \| f\|_{H^1([-L,L])}^2 + \mu_n \| \mathscr{D}(f)\|_{L^2([-L,L])}^2 
\]
and
\[
 \hat f_n^{(2)} = \mathrm{argmin}_{f \in H^1_{\mathrm{per}}([-2L,2L])} \sum_{i=1}^n |f(X_i) -Y_i|^2 + \lambda_n  \| f\|_{H^1_{\mathrm{per}}([-2L,2L])}^2 + \mu_n \| \mathscr{D}(f)\|_{L^2([-L,L])}^2,
\]
converge at the same rate to $f^\star$. Moreover, the $\lambda_n \| f\|_{H^1([-L,L])}^2 + \mu_n \| \mathscr{D}(f)\|_{L^2([-L,L])}^2 $ norm on $H^1([-L,L])$ defines a kernel. This is this particular kernel, denoted by $K$, that we compute in the remaining of the proof. Employing the exact same arguments as for the kernel on $H^1_{\mathrm{per}}([-2L,2L])$, we know that for all $x \in [-L,L]$, the function $f_x: y \mapsto K(x,y) \in H^1([-L,L])$ is a solution to the weak PDE
\[\forall \phi \in H^1([-L,L]), \quad \lambda_n \int_{[-L,L]} f_x \phi + (\lambda_n + \mu_n)\int_{[-L,L]} \frac{d}{dy}f_x \frac{d}{dy}\phi = \phi(x).\]
Using the elliptic regularity theorem as in the proof of Theorem \ref{prop:eigenfunction} and computing the boundary conditions as in Proposition \ref{prop:raccord1d} shows that
$f_x \in C^\infty([-L, x]) \cap C^\infty([x, L])$, $\frac{d}{dy}f_x(-L) = \frac{d}{dy}f_x(L) = 0$, and 
\[\lambda_n  f_x  - (\lambda_n + \mu_n)\frac{d^2}{dy^2}f_x = \delta_x,\]
where $\delta_x$ is the Dirac distribution.
Thus, since $f_x \in H^1([-L,L]) \subseteq C^0([-L,L])$, there are constants $A$ and $B$ such that
\begin{equation}
        \left\{ \begin{array}{cc}
             \forall -L \leqslant y \leqslant x,    &f_x(y) = A \cosh(\gamma_n (x-y)) + B \sinh(\gamma_n (x-y)) ,\\
             \forall x \leqslant y \leqslant L,    &f_x(y) = A \cosh(\gamma_n (x-y)) + (B + \frac{\gamma_n}{\lambda_n}) \sinh(\gamma_n (x-y)).
        \end{array}\right.
        \label{eq:kernelForm}
        \end{equation}
The boundary conditions   $\frac{d}{dy}f_x(-L) = \frac{d}{dy}f_x(L) = 0$ lead to
\[P \begin{pmatrix}
    A \\ B
\end{pmatrix} =   \begin{pmatrix}
    0 \\ - \frac{\gamma_n}{\lambda_n} \cosh(\gamma_n(x-L))
\end{pmatrix}, \]
where \[P = \begin{pmatrix}
    \sinh(\gamma_n(x+L)) & \cosh(\gamma_n(x+L))\\
    \sinh(\gamma_n(x-L)) & \cosh(\gamma_n(x-L))
\end{pmatrix}.\] Notice that $\det P = \sinh(\gamma_n(x+L)) \cosh(\gamma_n(x-L)) - \sinh(\gamma_n(x-L))\cosh(\gamma_n(x+L)) =  \sinh(2 \gamma_n L)$. Thus, 
\[P^{-1} = \sinh(2 \gamma_n L)^{-1} \begin{pmatrix}
    \cosh(\gamma_n(x-L)) & -\cosh(\gamma_n(x+L))\\
    -\sinh(\gamma_n(x-L)) & \sinh(\gamma_n(x+L))
\end{pmatrix}.\]
This leads to 
\begin{align}
    \begin{pmatrix}
    A \\ B
\end{pmatrix} &= \frac{\gamma_n}{\lambda_n\sinh(2 \gamma_n L)} \begin{pmatrix}
    \cosh(\gamma_n(x+L)) \cosh(\gamma_n(x-L)) \\ -\sinh(\gamma_n(x+L))\cosh(\gamma_n(x-L))
\end{pmatrix} \nonumber\\
&= \frac{\gamma_n}{2\lambda_n\sinh(2 \gamma_n L)} \begin{pmatrix}
    \cosh(2\gamma_nL) + \cosh(2\gamma_nx) \\ \sinh(2\gamma_nL))-\sinh(2\gamma_nx)
\end{pmatrix}. \label{eq:invMatrix}
\end{align}  
Combining \eqref{eq:kernelForm} and \eqref{eq:invMatrix}, we are led to
\begin{align*}
        K(x,y)& = \frac{\gamma_n}{2\lambda_n \sinh(2 \gamma_n L)}\Big( (\cosh(2\gamma_n L)+\cosh(2\gamma_n x))\cosh(\gamma_n (x-y))\\
        &\qquad  + ((1-2 \times \mathbf{1}_{x > y})\sinh(2\gamma_n L) - \sinh(2 \gamma_n x)) \sinh(\gamma_n(x-y))\Big).
    \end{align*}
One easily checks that $K(x,y) = K(y,x)$ and that $K(x,x) \geqslant 0$.

\subsection{Proof of Proposition \ref{prop:1ddiff}}
 The strategy of the proof is to characterize the solutions $w_m$ to the weak formulation \eqref{eq:weak_pde} with $\mathscr{D} = \frac{d}{dt}$ and $s = 1 > d/2 = 1/2$. For clarity, the proof is divided into 5 steps.

\paragraph{Step 1: Symmetry.} Recall that $\Omega = [-L, L]$. Using the Lax-Milgram theorem, let us define the operator $\tilde{\mathscr{O}}_n$ as follows. For all $f \in L^2([-2L,2L])$, $\tilde{\mathscr{O}}_n(f)$ is  
the unique function of $H^2_{\mathrm{per}}([-2L,2L])$ such that, for all $\phi \in H^2_{\mathrm{per}}([-2L,2L])$, $B[\tilde{\mathscr{O}}_n(f), \phi] = \langle C f, C \phi \rangle$. Clearly, the eigenfunctions of $\tilde{\mathscr{O}}_n$ associated to non-zero eigenvalues are the $w_m$. Let $\phi \in H^2_{\mathrm{per}}([-2L,2L])$ be a test function. Using \begin{align*}
        \int_{-2L}^{2L} \partial^\alpha \phi(-\cdot)(x) \partial^\alpha \tilde{\mathscr{O}}_n(f)(-\cdot)(x)dx &= (-1)^{2\alpha}\int_{-2L}^{2L} \partial^\alpha \phi(-x) \partial^\alpha \tilde{\mathscr{O}}_n(f)(-x)dx\\
        & = -\int_{-2L}^{2L} \partial^\alpha \phi(x) \partial^\alpha \tilde{\mathscr{O}}_n(f)(x)dx,
    \end{align*} 
    we see that $B[\tilde{\mathscr{O}}_n(f)(-\cdot), \phi(-\cdot)] = \langle Cf(-\cdot), C \phi(-\cdot)\rangle $. Therefore, since $H^2_{\mathrm{per}}([-2L,2L])$ is stable by the action $\phi \mapsto \phi(-\cdot)$, using the uniqueness statement provided by the Lax-Milgram theorem, we deduce that $\tilde{\mathscr{O}}_n(f)(-x) = \tilde{\mathscr{O}}_n(f(-\cdot))(x)$, so that $\tilde{\mathscr{O}}_n(f)$ is symmetric. According to Proposition \ref{prop:sym}, we can therefore assume that $w_m$ is either symmetric or antisymmetric. 

\paragraph{Step 2: PDE system.}
According to Theorem \ref{prop:eigenfunction}, \ref{prop:1dreg}, and \ref{prop:raccord1d}, the following statements are verified: 
 \begin{itemize}
        \item[$(i)$] The function $w_m \in C^\infty([-L,L])$ and 
        \[\forall x \in \Omega, \quad \lambda_n\Big(1- \frac{d^2}{dx^2}\Big)w_m(x)  - \mu_n \frac{d^2}{dx^2} w_m(x) = a_m^{-1} w_m(x).\]
        Since $a_m^{-1} \geqslant \lambda_n$ (see Remark \ref{rem:rayleigh}), the solutions of this ODE are  linear combinations of $\cos(\sqrt{\frac{a_m^{-1}-\lambda_n}{\lambda_n + \mu_n}}x)$ and $\sin(\sqrt{\frac{a_m^{-1}-\lambda_n}{\lambda_n + \mu_n}}x)$.
        \item[$(ii)$] The function $w_m \in C^\infty([-2L, 2L] \backslash [-L,L])$, with a $C^\infty$ junction condition at $-2L$, and 
        \[\forall x \in [-2L, 2L]^d \backslash \bar \Omega,\quad \Big(1- \frac{d^2}{dx^2}\Big)w_m(x)  = 0.\]
        The solutions of this ODE are linear combinations of $\cosh(x)$ and $\sinh(x)$. The $C^\infty$ $4L$-periodic junction condition at $-2L$ guarantees that there are two constants $A$ and $B$ such that 
\begin{align*}
             \forall -2L \leqslant x \leqslant -L,    &w_m(x) = A \cosh(x+2L) + B \sinh(x+2L),\\
              \forall L \leqslant x \leqslant 2L,   &w_m(x) = A \cosh(x-2L) + B \sinh(x-2L).
        \end{align*}
        \item[$(iii)$] The function $w_m \in C^0_{\mathrm{per}}([-2L,2L])$.
        \item[$(iv)$] One has
        \begin{align*}
        (\lambda_n+\mu_n)\lim_{x \to -L, x > -L}\frac{d}{dx}w_m(x) &= \lambda_n \lim_{x \to -L, x < -L}\frac{d}{dx}w_m(x),\\
              (\lambda_n+\mu_n)\lim_{x \to L, x < L}\frac{d}{dx}w_m(x) &= \lambda_n \lim_{x \to L, x >L}\frac{d}{dx}w_m(x).
    \end{align*}
        \item[$(v)$] One has $\int_{-2L}^{2L} w_m^2 = 1$.
    \end{itemize}

\paragraph{Step 3: Symmetric eigenfunctions.} Our goal in this paragraph is to describe the symmetric eigenfunctions, i.e., $w_m(-x) = w_m(x)$. We denote by $a_m^{\mathrm{sym}}$ the eigenvalues of such eigenfunctions. From statements $(i)$ and $(ii)$ above, we deduce that there are two constant $A$ and $C$ such that
\begin{align*}
        \forall -2L \leqslant x \leqslant -L,\;    &w_m(x) = A \cosh(x+2L) ,\\
             \forall -L \leqslant x \leqslant L,\;    &w_m(x) = C\cos\Big(\sqrt{\frac{a_m^{-1}-\lambda_n}{\lambda_n + \mu_n}}x\Big),\\
              \forall L \leqslant x \leqslant 2L,\;    &w_m(x) = A \cosh(x-2L).
        \end{align*}
Applying $(iii)$ at $x = -L$ leads to
\begin{equation}
    A \cosh(L) = C \cos\Big(\sqrt{\frac{(a_m^{\mathrm{sym}})^{-1}-\lambda_n}{\lambda_n + \mu_n}}L\Big).
    \label{eq:continuite}
\end{equation} 
Similarly, statement $(iv)$ applied at $x = -L$ shows that
\begin{equation}
    \lambda_n  A \sinh(L) = - (\lambda_n  + \mu_n) C \sqrt{\frac{(a_m^{\mathrm{sym}})^{-1}-\lambda_n}{\lambda_n + \mu_n}} \sin\Big(-\sqrt{\frac{(a_m^{\mathrm{sym}})^{-1}-\lambda_n}{\lambda_n + \mu_n}}L\Big).
    \label{eq:derivation}
\end{equation}
Dividing \eqref{eq:derivation} by \eqref{eq:continuite} leads to
\[L \sqrt{\frac{(a_m^{\mathrm{sym}})^{-1}-\lambda_n}{\lambda_n + \mu_n}} \tan\Big(\sqrt{\frac{(a_m^{\mathrm{sym}})^{-1}-\lambda_n}{\lambda_n + \mu_n}}L\Big) = L \frac{\lambda_n}{\lambda_n + \mu_n}   \tanh(L).\]
The equation $x \tan(x) = \tilde C$, where $\tilde C$ is constant, has exactly one solution in any interval $[\pi(k - 1/2), \pi(k+1/2)]$ for $k \in \mathbb{Z}$. Therefore, there is only one admissible value of $\sqrt{\frac{(a_m^{\mathrm{sym}})^{-1}-\lambda_n}{\lambda_n + \mu_n}}$ in each of these interval. So,
\[ \lambda_n + (\lambda_n + \mu_n)(m-1/2)^2 \pi^2/L^2 \leqslant (a_m^{\mathrm{sym}})^{-1} \leqslant \lambda_n + (\lambda_n + \mu_n)(m+1/2)^2 \pi^2/L^2.\]

\paragraph{Step 4: Antisymmetric eigenfunctions.} Our goal in this paragraph is to describe the antisymmetric eigenfunctions, i.e., $w_m(-x) = -w_m(x)$. We denote by $a_m^{\mathrm{anti}}$ the eigenvalues of such eigenfunctions. From statements $(i)$ and $(ii)$, we deduce that there are two constant $B$ and $D$ such that
\begin{equation*}
        \left\{ \begin{array}{cc}
             \forall -2L \leqslant x \leqslant -L,    &w_m(x) = B \sinh(x+2L) ,\\
             \forall -L \leqslant x \leqslant L,    &w_m(x) = D\sin\Big(\sqrt{\frac{a_m^{-1}-\lambda_n}{\lambda_n + \mu_n}}x\Big),\\
              \forall L \leqslant x \leqslant 2L,    &w_m(x) = B \sinh(x-2L).
        \end{array}\right.
        \end{equation*}
Applying $(iii)$ at $x = -L$, one has
\begin{equation}
    B \sinh(L) = D \sin\Big(\sqrt{\frac{(a_m^{\mathrm{anti}})^{-1}-\lambda_n}{\lambda_n + \mu_n}}L\Big).
    \label{eq:continuite2}
\end{equation} 
Similarly, applying $(iv)$ at $x = -L$ shows that
\begin{equation}
    \lambda_n  B \cosh(L) =  (\lambda_n  + \mu_n) D \sqrt{\frac{(a_m^{\mathrm{anti}})^{-1}-\lambda_n}{\lambda_n + \mu_n}} \cos\Big(-\sqrt{\frac{(a_m^{\mathrm{anti}})^{-1}-\lambda_n}{\lambda_n + \mu_n}}L\Big).
    \label{eq:derivation2}
\end{equation}
Dividing  \eqref{eq:continuite2} by \eqref{eq:derivation2} leads to
\[L \Big({\frac{(a_m^{\mathrm{anti}})^{-1}-\lambda_n}{\lambda_n + \mu_n}}\Big)^{-1/2} \tan\Big(\sqrt{\frac{(a_m^{\mathrm{anti}})^{-1}-\lambda_n}{\lambda_n + \mu_n}}L\Big) = L (1+\frac{\mu}{\lambda_n})   \tanh(L).\]
The equation $\tan(x)/x = \tilde C$, where $\tilde C$ is constant, has exactly one solution in any interval $[\pi(k - 1/2), \pi(k+1/2)]$ for $k \in \mathbb{Z}$. Therefore, there is only one admissible value of $\sqrt{\frac{(a_m^{\mathrm{anti}})^{-1}-\lambda_n}{\lambda_n + \mu_n}}$ in each of these interval. So,
\[ \lambda_n + (\lambda_n + \mu_n)(m-1/2)^2 \pi^2/L^2 \leqslant (a_m^{\mathrm{anti}})^{-1} \leqslant \lambda_n + (\lambda_n + \mu_n)(m+1/2)^2 \pi^2/L^2.\]
\paragraph{Step 5: Conclusion.} Recall that the sequence 
$(a_m)_{m\in\mathbb{N}}$ is a non-increasing re-indexing of the sequences $(a_m^{\mathrm{sym}})_{m\in\mathbb{N}}$ and $(a_m^{\mathrm{anti}})_{m\in\mathbb{N}}$. Putting the bounds obtained for $a_m^{\mathrm{sym}}$ and $a_m^{\mathrm{anti}}$ together, we obtain 
\[ \lambda_n + (\lambda_n + \mu_n)(m/2-1)^2 \pi^2/L^2 \leqslant a_m^{-1} \leqslant \lambda_n + (\lambda_n + \mu_n)(m/2+1)^2 \pi^2/L^2,\]
and 
\[ (\lambda_n + \mu_n)(m-2)^2 \pi^2/(4L^2) \leqslant a_m^{-1} \leqslant (\lambda_n + \mu_n)(m+4)^2 \pi^2/(4L^2).\]
We conclude that
\[ \frac{4L^2}{(\lambda_n + \mu_n)(m+4)^2 \pi^2} \leqslant a_m \leqslant \frac{4L^2}{(\lambda_n + \mu_n)(m-2)^2 \pi^2}.\]

\subsection{Proof of Theorem \ref{prop:kernel_speed_up}}
This is a straightforward consequence of Proposition \ref{prop:1ddiff}, identity \eqref{eq:bornepoly}, and Theorem \ref{thm:boundexp}.

    


\end{document}